\documentclass[11pt]{article}

\usepackage[utf8]{inputenc}
\usepackage{amssymb}
\usepackage{afterpage}
\usepackage{enumerate}
\usepackage[OT1]{fontenc}
\usepackage{bm}
\usepackage{multirow}
\usepackage{diagbox}
\usepackage{amsmath,amssymb,mathrsfs}
\usepackage{bbm, dsfont}
\usepackage{makecell}
\usepackage{upgreek}
\usepackage{mathtools}
\usepackage{xcolor}
\usepackage{nicefrac}

\usepackage{dsfont,makecell,chngcntr}
\usepackage{smile}
\usepackage{multirow,stfloats,booktabs}
\usepackage{setspace}
\usepackage[english]{babel}
\usepackage{graphicx} 
\usepackage{subcaption}
\usepackage{makecell}
\usepackage[colorlinks,
linkcolor=red,
anchorcolor=blue,
citecolor=blue
]{hyperref}

\usepackage[capitalize,noabbrev]{cleveref}

\usepackage{sf_commands} 

\usepackage{fullpage} 
\usepackage{times} 
\usepackage[style = alphabetic,citestyle=alphabetic,maxbibnames=99,backend=biber,sorting=nyt,natbib=true,backref=true]{biblatex}
\addbibresource{references.bib}
\DefineBibliographyStrings{english}{%
backrefpage = {Cited on page},
backrefpages = {Cited on pages},
}

\def\tr{\mathop{\text{tr}}\kern.2ex}

\def\RR{{\mathbb R}}
\def\P{{\mathbb P}}
\def\E{{\mathbb E}}

\def\supp{\mathop{\text{supp}}}

\long\def\comment#1{}

\def\tr{\mathop{\text{Tr}}}

\def\cA{{\mathcal{A}}}
\def\cB{{\mathcal{B}}}
\def\cC{{\mathcal{C}}}

\def\cE{{\mathcal{E}}}

\def\cN{{\mathcal{N}}}

\def\mP{{\mathtt{P}}}
\def\mPos{{\mathtt{Pos}}}
\def\mN{{\mathtt{N}}}
\def\mNeg{{\mathtt{Neg}}}
\def\mT{{\mathtt{T}}}

\def\zr{{(0)}}
\def\te{{\text{test}}}
\def\data{{\text{data}}}
\def\centroids{\mathsf{centers}}

\newcommand{\bel}{\begin{eqnarray}\label}
\newcommand{\eel}{\end{eqnarray}}
\newcommand{\bes}{\begin{eqnarray*}}
	\newcommand{\ees}{\end{eqnarray*}}

\newcommand{\pmu}{+\mu_1}
\newcommand{\nmu}{-\mu_1}
\newcommand{\pmub}{+\mu_2}
\newcommand{\nmub}{-\mu_2}

\let\emptyset\varnothing

\let\tilde\widetilde
\def\mid{\,|\,}

\def\P{{\mathbb P}}

\def\supp{\mathop{\text{supp}\kern.2ex}}

\def\tr{{\rm{Tr}}}
\def\supp{\mathop{\text{supp}}}

\def\tr{\mathrm{Tr}}

\usepackage{mathrsfs}

\newcommand{\calC}[3]{\ensuremath{\mathcal{C}_{#1,#2}^{(#3)}}}
\newcommand{\calN}[3]{\ensuremath{\mathcal{N}_{#1,#2}^{(#3)}}}

\usepackage{tabularx,array}
\newcolumntype{M}{>{\centering\arraybackslash}m{0.17\textwidth}}

\usepackage{hyperref}
\usepackage[    protrusion=true,
expansion=true,
final,
babel
]{microtype}
\def\##1\#{\begin{align}#1\end{align}}
\def\$#1\${\begin{align*}#1\end{align*}}
\usepackage{enumitem}


\theoremstyle{plain}

\theoremstyle{mytheoremstyle}

\newcommand{\wh}[1]{\textcolor{red}{[wh: #1]}}

\usepackage{thmtools}
\usepackage{thm-restate}

\newtheoremstyle{italicstyle}  
    {\topsep}                    
    {\topsep}                    
    {\itshape}                   
    {}                           
    {\bfseries}                  
    {.}                          
    {.5em}                       
    {}                           

\newtheoremstyle{normalstyle}  
    {\topsep}                    
    {\topsep}                    
    {}                           
    {}                           
    {\bfseries}                  
    {.}                          
    {.5em}                       
    {}                           

\declaretheorem[style=italicstyle,name=Theorem,numberwithin=section]{theorem}
\declaretheorem[style=italicstyle,name=Lemma,sibling=theorem]{lemma}
\declaretheorem[style=normalstyle,name=Remark,sibling=theorem]{remark}

\declaretheorem[style=normalstyle,name=Definition,sibling=theorem]{definition}

\usepackage{etoc}

\hypersetup{colorlinks=true}
\hypersetup{linkcolor=black}
\newlength\tocrulewidth
\setlength{\tocrulewidth}{1.5pt}

\title{\textbf{Benign Overfitting and Grokking\\in ReLU Networks for XOR Cluster Data}}
\author{
Zhiwei Xu \\
University of Michigan\\
\texttt{zhiweixu@umich.edu}
\and Yutong Wang\\
 University of Michigan\\
 \texttt{yutongw@umich.edu}\\
 \and
 Spencer Frei\\
 University of California, Davis\\
 \texttt{sfrei@ucdavis.edu}
 \and
 Gal Vardi\\
 TTI-Chicago and Hebrew University\\
 \texttt{galvardi@ttic.edu}
 \and
 Wei Hu \\
University of Michigan\\
\texttt{vvh@umich.edu}
}
\date{}

\begin{document}

\maketitle

\begin{abstract}
    Neural networks trained by gradient descent (GD) have exhibited a number of surprising generalization behaviors. First, they can achieve a perfect fit to noisy training data and still generalize near-optimally, showing that overfitting can sometimes be benign. Second, they can undergo a period of classical, harmful overfitting---achieving a perfect fit to training data with near-random performance on test data---before transitioning (``grokking'') to near-optimal generalization later in training. In this work, we show that both of these phenomena provably occur in two-layer ReLU networks trained by GD on XOR cluster data where a constant fraction of the training labels are flipped. In this setting, we show that after the first step of GD, the network achieves 100\% training accuracy, perfectly fitting the noisy labels in the training data, but achieves near-random test accuracy. At a later training step, the network achieves near-optimal test accuracy while still fitting the random labels in the training data, exhibiting a ``grokking'' phenomenon. This provides the first theoretical result of benign overfitting in neural network classification when the data distribution is not linearly separable. Our proofs rely on analyzing the feature learning process under GD, which reveals that the network implements a non-generalizable linear classifier after one step and gradually learns generalizable features in later steps.
\end{abstract}

\section{Introduction}
\label{sec:intro}

Classical wisdom in machine learning regards overfitting to noisy training data as harmful for generalization, and regularization techniques such as early stopping have been developed to prevent overfitting. However, modern neural networks can exhibit a number of counterintuitive phenomena that contravene this classical wisdom. Two intriguing phenomena that have attracted significant attention in recent years are \emph{benign overfitting}~\citep{bartlett2020benign} and \emph{grokking}~\citep{power2022grokking}:
\begin{itemize}
\item \textbf{Benign overfitting:} A model perfectly fits noisily labeled training data, but still achieves near-optimal test error.
\item \textbf{Grokking:} A model initially achieves perfect training accuracy but no generalization (i.e. no better than a random predictor), and upon further training, transitions to almost perfect generalization. 
\end{itemize}

Recent theoretical work has established benign overfitting in a variety of settings, including linear regression~\citep{hastie2019surprises,bartlett2020benign}, linear classification~\citep{chatterji2020linearnoise,wang2021binary}, kernel methods~\citep{belkin2019interpolationoptimality,liang2020just}, and neural network classification~\citep{frei22,kou2023benign}. However, existing results of benign overfitting in neural network classification settings are restricted to linearly separable data distributions,
leaving open the question of how benign overfitting can occur in fully non-linear settings.
For grokking, several recent papers~\citep{nanda2023progress,gromov2023grokking,varma2023explaining} have proposed explanations, but to the best of our knowledge, no prior work has established a rigorous proof of grokking in a neural network setting. 


In this work, we characterize a setting in which both benign overfitting and grokking provably occur. We consider a two-layer ReLU network trained by gradient descent on a binary classification task defined by an XOR cluster data distribution (Figure~\ref{fig:decision-boundary}). Specifically, datapoints from the positive class are drawn from a mixture of two high-dimensional Gaussian distributions $\frac12 N(\mu_1, I) + \frac12 N(-\mu_1, I)$, and datapoints from the negative class are drawn from $\frac12 N(\mu_2, I) + \frac12 N(-\mu_2, I)$, where $\mu_1$ and $\mu_2$ are orthogonal vectors. We then allow a constant fraction of the labels to be flipped. In this setting, we rigorously prove the following results:
(i) \textbf{One-step catastrophic overfitting:}
After one gradient descent step, the network perfectly fits every single training datapoint (no matter if it has a clean or flipped label), but has test accuracy close to $50\%$, performing no better than random guessing.
(ii) \textbf{Grokking and benign overfitting:}
After training for more steps, the network undergoes a ``grokking'' period from catastrophic to benign overfitting---it eventually reaches near $100\%$ test accuracy, while maintaining $100\%$ training accuracy the whole time.  This behavior can be seen in \cref{fig:experiment-synthetic-xor-runs}, where we also see that with a smaller step size the same grokking phenomenon occurs but with a delayed time for both overfitting and generalization.

Our results provide the first theoretical characterization of benign overfitting in a truly non-linear setting involving training a neural network on a non-linearly separable distribution.
Interestingly, prior work on benign overfitting in neural networks for linearly separable distributions~\citep{frei22,cao2022benign,xu23k,kou2023benign} have not shown a time separation between catastrophic overfitting and generalization, which suggests that the XOR cluster data setting is fundamentally different.

Our proofs rely on analyzing the feature learning behavior of individual neurons over the gradient descent trajectory.
After one training step, we prove that the network approximately implements a linear classifier over the underlying data distribution, which is able to overfit all the training datapoints but unable to generalize.
Upon further training, the neurons gradually align with the core features $\pm \mu_1$ and $\pm \mu_2$, which is sufficient for generalization. 
See \cref{fig:decision-boundary} for visualizations of the network's decision boundary and neuron weights at different time steps, which confirm our theory.

\begin{figure}
    \centering
    \includegraphics[width=1.0\textwidth]{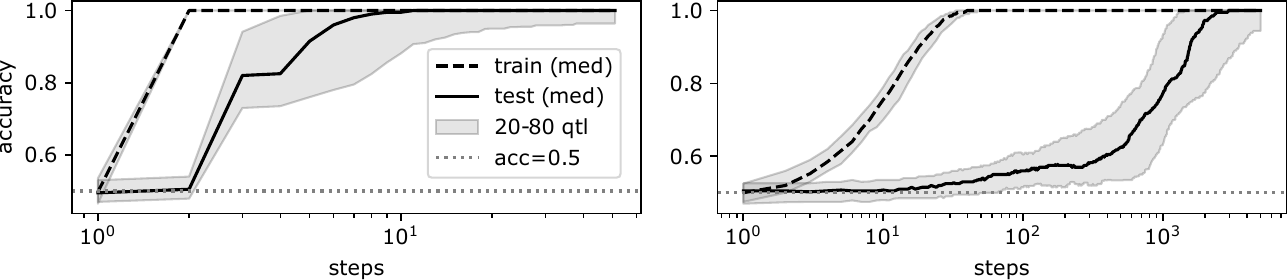}
    \caption{Comparing train and test accuracies of a two-layer neural network \eqref{equation:neural-network} trained on noisily labeled XOR data over 100 independent runs. \emph{Left/right panel} shows benign overfitting and grokking when the step size is larger/smaller compared to the weight initialization scale. For plotting the x-axis, we add \(1\) to time so that the initialization \(t=0\) can be shown in log scale.
    See \cref{appendix:seciton:experiments}
for details of the experimental setup.
    }
    \label{fig:experiment-synthetic-xor-runs}
\end{figure}

\begin{figure}
    \centering
    \includegraphics[width=1.0\textwidth]{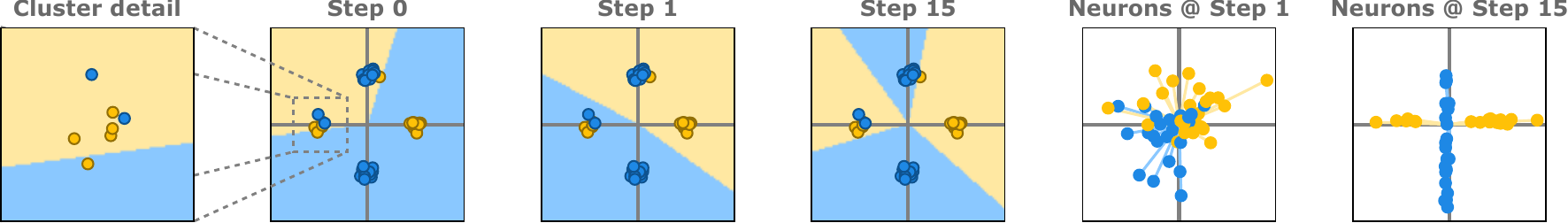}
    \caption{\emph{Left four panels}: 2-dimensional projection of the noisily labeled XOR cluster data (Definition~\ref{definition:xor-cluster-data}) and the decision boundary of the neural network~\eqref{equation:neural-network} classifier restricted to the subspace spanned by the cluster means
    at times \(t = 0, 1\) and \(15\).
    \emph{Right two panels}:
    2-dimensional projection of the neuron weights plotted at times \(t = 1\) and \(15\).
    }
    \label{fig:decision-boundary}
\end{figure}

\subsection{Additonal Related Work}
\label{sec:related}

\paragraph{Benign overfitting.}  The literature on benign overfitting (also known as harmless interpolation) is now immense; for a general overview, we refer the readers to the surveys~\citet{bartlett2021statisticalviewpoint,belkin2021fitwithoutfear,dar2021farewell}.  We focus here on those works on benign overfitting in neural networks.~\citet{frei22} showed that two-layer networks with smooth leaky ReLU activations trained by gradient descent (GD) exhibit benign overfitting when trained on a high-dimensional binary cluster distribution.~\citet{xu23k} extended their results to more general activations like ReLU.~\citet{cao2022benign} showed that two-layer convolutional networks with polynomial-ReLU activations trained by GD exhibit benign overfitting for image-patch data;~\citet{kou2023benign} extended their results to allow for label-flipping noise and standard ReLU activations.  Each of these works used a trajectory-based analysis and none of them identified a grokking phenomenon. \citet{frei2023benign,kornowski2023tempered} showed how stationary points of margin-maximization problems associated with homogeneous neural network training problems can exhibit benign overfitting.  Finally,~\citet{mallinar2022benign} proposed a taxonomy of overfitting behaviors in neural networks, whereby overfitting is ``catastrophic'' if test-time performance is comparable to a random guess, ``benign'' if it is near-optimal, and ``tempered'' if it lies between catastrophic and benign.

\paragraph{Grokking.}  The phenomenon of grokking was first identified by~\citet{power2022grokking} in decoder-only transformers trained on algorithmic datasets.
\citet{liu2022understanding} provided an effective theory of representation learning to understand grokking. 
\citet{thilak2022slingshot} attributed grokking to the slingshot mechanism, which can be measured by the cyclic phase transitions between stable and unstable training regimes.
\citet{vzunkovivc2022grokking} showed a time separation between achieving zero training error and zero test error in a binary classification task on a linearly separable distribution.
\citet{liu2023omnigrok} identified a large initialization scale together with weight decay as a mechanism for grokking.
\citet{barak2023hidden, nanda2023progress} proposed progress metrics to measure the progress towards generalization during training. \citet{davies2023unifying} hypothesized a pattern-learning model for grokking and first reported a model-wise grokking phenomenon. \citet{merrill2023tale} studied the learning dynamics in a two-layer neural network on a sparse parity task, attributing grokking to the competition between dense and sparse subnetworks. 
\citet{varma2023explaining} utilized circuit efficiency to interpret grokking and discovered two novel phenomena called ungrokking and semi-grokking.  

\paragraph{Feature learning for XOR distributions.}  The behavior of neural networks trained on the XOR cluster distribution we consider here, or its variants like the sparse parity problem, have been extensively studied in recent years.~\citet{wei2020regularization} showed that neural networks in the mean-field regime, where neural networks can learn features, have better sample complexity guarantees than neural networks in the neural tangent kernel (NTK) regime in this setting.~\citet{barak2023hidden,telgarsky2023feature} examined the sample complexity of learning sparse parities on the hypercube for neural networks trained by SGD.  Most related to this work,~\citet{frei2022random} characterized the dynamics of GD in ReLU networks in the same distributional setting we consider here, namely the XOR cluster with label-flipping noise.  They showed that by early-stopping, the neural network achieves perfect (clean) test accuracy although the training error is close to the label noise rate; in particular, their network achieved optimal generalization \textit{without} overfitting, which is fundamentally different from our result.  By contrast, we show that the network first exhibits catastrophic overfitting before transitioning to benign overfitting later in training.\footnote{The reason for the different behaviors between our work and \citet{frei2022random} is because they work in a setting with a larger signal-to-noise ratio (i.e., the norm of the cluster means is larger than the one we consider).}



\section{Preliminaries}
\subsection{Notation}
For a vector $x$, denote its Euclidean norm by $\|x\|$. For a matrix $X$, denote its Frobenius norm by $\|X\|_F$ and its spectral norm by $\|X\|$. Denote the indicator function by $\mathbb{I}(\cdot)$. Denote the sign of a scalar $x$ by $\operatorname{sgn}(x)$. Denote the cosine similarity of two vectors $u,v$ by $\cossim(u,v) :=
\f{ \sip uv}{\snorm u  \snorm v}$. Denote a multivariate Gaussian distribution with mean vector $\mu$ and covariance matrix $\Sigma$ by $N(\mu, \Sigma)$.
Denote by $\sum_{j} q_j N(\mu_j, \Sigma_j)$ a mixture of Gaussian distributions, namely, with probability $q_j$, the sample is generated from $N(\mu_j, \Sigma_j)$.
Let $I_p$ be the $p\times p$ identity matrix.
For a finite set $\cA=\{a_i\}_{i=1}^n$, denote the uniform distribution on $\cA$ by $\text{Unif} \cA$. For a random variable $X$, denote its expectation by $\E[X]$. For an integer $d \geq 1$, denote the set $\{1,\cdots, d\}$ by $[d]$. For a finite set $\cA$, let $|\cA|$ be its cardinality. We use $\{\pm \mu\}$ to represent the set $\{+\mu, -\mu\}$. For two positive sequences $\{x_n\}, \{y_n\}$, we say $x_n = O(y_n)$ (respectively $x_n = \Omega(y_n)$), if there exists a universal constant $C>0$ such that
$x_n \leq C y_n$ (respectively $x_n \geq C y_n$) for all $n$, and say $x_n = o(y_n)$ if $\lim_{n \rightarrow \infty} \tfrac{x_n}{y_n} = 0.$ We say $x_n = \Theta(y_n)$ if $x_n = O(y_n)$ and $y_n = O(x_n)$.

\subsection{Data Generation Setting}\label{section:data-generation}
Let $\mu_1, \mu_2 \in \mathbb{R}^p$ be two orthogonal vectors, i.e. $\mu_1^{\top} \mu_2 = 0$.\footnote{Our results hold when $\mu_1$ and $\mu_2$ are near-orthogonal. We assume exact orthogonality for ease of presentation.}
Let \(\eta \in [0,1/2)\) be the label flipping probability.

\begin{definition}[XOR cluster data]\label{definition:xor-cluster-data}
    Define $P_{\text{clean}}$ as the distribution over the space $\RR^p \times \{\pm 1\}$ of labelled data such that a datapoint
    $(x, \tilde{y})\sim P_{\text{clean}}$ is  generated according to the following procedure:
    First, 
    sample the label $\tilde{y} \sim \text{Unif}\{\pm1\}$. Second, generate $x$ as follows:
    \begin{enumerate}
        \item 
        If \(\tilde{y} = 1\), then
    \(x \sim 
    \frac{1}{2}N(+\mu_1, I_p) + \frac{1}{2}N(-\mu_1, I_p)\);
    \item  If 
    \(\tilde{y} = -1\), then
    \(x \sim 
    \frac{1}{2}N(+\mu_2, I_p) + \frac{1}{2}N(-\mu_2, I_p)\).
    \end{enumerate}
    Define $P$ to be the distribution over $\RR^p \times \{\pm 1\}$ which is the \(\eta\)-noise-corrupted version of $P_{\text{clean}}$, namely: to generate a sample \((x,y) \sim P \), first generate \((x,\tilde{y}) \sim P_{\text{clean}}\), and then let \(y = \tilde{y}\) with probability \(1- \eta\), and 
    \( y = -\tilde{y} \) with probability \(\eta\).
\end{definition}

We consider $n$ training datapoints $\{( x_i,y_i)\}_{i=1}^n$ generated i.i.d from the distribution $P$. We assume the sample size $n$ to be sufficiently large (i.e., larger than any universal constant appearing in this paper). Note the \(x_i\)'s are from a mixture of four Gaussians centered at \(\pm \mu_1\) and \(\pm \mu_2\).
We denote
\(
\centroids := 
\{ \pm \mu_1, \pm \mu_2\}
\) for convenience.
For simplicity, we assume $\|\mu_1\| = \|\mu_2\|$, omit the subscripts and denote them by $\|\mu\|$.

\subsection{Neural Network, Loss Function, and Training Procedure}
We consider a two-layer neural network of width $m$ of the form
\begin{equation}
f(x ; W) := \sum_{j=1}^m a_j \phi(\langle w_j, x\rangle),
\label{equation:neural-network}
\end{equation}
where $w_1, \ldots, w_m \in \RR^{p}$ are the first-layer weights, $a_1, \ldots, a_m \in \RR$ are the second-layer weights, and the activation $\phi(z) := \max\{0,z\}$ is the ReLU function.
We denote $W = [w_1, \ldots, w_m] \in \RR^{p\times m}$ and $a = [a_1,\ldots, a_m]^{\top} \in \RR^m$.
We assume the second-layer weights are sampled according to $a_j \stackrel{\text { i.i.d. }}{\sim} \text{Unif}\{\pm \tfrac{1}{\sqrt{m}}\}$ and are fixed during the training process.

We define the empirical risk using the logistic loss function $\ell(z)= \log(1+\exp(-z))$:
$$
\widehat{L}(W):=\frac{1}{n} \sum_{i=1}^n \ell(y_i f(x_i; W)).
$$
We use gradient descent (GD) $W^{(t+1)}=W^{(t)}-\alpha \nabla \widehat{L}\left(W^{(t)}\right)$ to update the first-layer weight matrix $W$, where $\alpha$ is the step size.
Specifically, at time $t=0$ we randomly initialize the weights by $$w^{(0)}_j \stackrel{\text { i.i.d. }}{\sim} {N}\big(0, \omega_{\text {init }}^2 I_p\big), \quad j\in[m], $$ where $\omega_{\text {init }}^2$ is the initialization variance;
at each time step $t=0,1,2,\ldots$, the GD update can be calculated as
\begin{equation}\label{eq:GD_update_newloss}
     w_j^{(t+1)} -  w_j^{(t)}
     = -\alpha \frac{\partial \widehat{L}(W^{(t)})}{\partial w_j}
     = \frac{\alpha a_j}{n}\sum_{i=1}^n g_i^{(t)}\phi^{\prime}(\langle  w_j^{(t)},  x_i \rangle)y_i  x_i, \quad j\in[m],
\end{equation}
where $g_i^{(t)} := -\ell^{\prime}(y_i f(x_i; W^{(t)}))$. 

\section{Main Results}\label{section:main-result}
Given a large enough universal constant $C$, we make the following assumptions: 
    \begin{enumerate}[label=(A\arabic*)]
    \item\label{assump1} The norm of the mean satisfies $ \|\mu\|^2 \geq C n^{0.51}\sqrt{p}$.
    \item\label{assump2} The dimension of the feature space satisfies $p \geq Cn^2 \|\mu\|^2$.
    \item\label{assump4} The noise rate satisfies $\eta \leq 1/C$. 
    \item\label{assump3} The step size satisfies $\alpha \leq 1/(Cnp).$
    \item\label{assump5} The initialization variance satisfies $\omega_{\text {init }} n m^{3/2} p \leq \alpha \|\mu\|^2$.
    \item\label{assump6} The number of neurons satisfies $m \geq C n^{0.02}$.
    \end{enumerate}
Assumption~\ref{assump1} concerns the signal-to-noise ratio (SNR) in the distribution, where the order $0.51$ can be extended to any constant strictly larger than $\tfrac{1}{2}$.
The assumption of high-dimensionality~\ref{assump2} is important for enabling benign overfitting, and implies that the training datapoints are near-orthogonal.  For a given $n$, these two assumptions are simultaneously satisfied if $\|\mu\|=\Theta(p^\beta)$ where $\beta \in (\tfrac 14, \tfrac 12)$ and $p$ is a sufficiently large polynomial in $n$.  Assumption~\ref{assump4} ensures that the label noise rate is at most a constant.  While Assumption~\ref{assump3} ensures the step size is small enough to allow for a variant of smoothness between different steps, Assumption~\ref{assump5} ensures that the step size is large relative to the initialization scale 
so that the behavior of the network after a single step of GD is significantly different from that at random initialization.  Assumption~\ref{assump6} ensures the number of neurons is large enough to allow for concentration arguments at random initialization.

With these assumptions in place, we can state our main theorem which characterizes the training error and test error of the neural network at different times during the training trajectory. 
\begin{restatable}{theorem}{MainResult}\label{thm:generalization}
    Suppose that Assumptions \ref{assump1}-\ref{assump6} hold. With probability at least $1- n^{-\Omega(1)} - O(1/\sqrt{m})$ over the random data generation and initialization of the weights, we have: 
    \begin{itemize}
        \item The classifier $\operatorname{sgn}(f( x;W^{(t)}))$ can correctly classify all training datapoints for $1\le t \le \sqrt{n}$:
        \begin{equation*}
            y_i = \operatorname{sgn} (f(x_i; W^{(t)})), \quad \forall i\in[n].
        \end{equation*}
        \item The classifier $\operatorname{sgn}(f( x;W^{(t)}))$ has near-random test error at $t=1$:
        \begin{equation*}
        \tfrac{1}{2}(1-n^{-\Omega(1)}) \leq \P_{( x, y) \sim P_{\text{clean}}}(y \neq \operatorname{sgn}(f(x ; W^{(1)}))) \leq  \tfrac{1}{2}(1+n^{-\Omega(1)}).
        \end{equation*}
        \item The classifier $\operatorname{sgn}(f( x;W^{(t)}))$ generalizes when $Cn^{0.01} \le t \le \sqrt{n}$:
    \begin{equation*}
        \mathbb{P}_{( x, y) \sim P_{\text{clean}}}(y \neq \operatorname{sgn}(f(x ; W^{(t)})))\leq \exp(-  \Omega(n^{0.99}\|\mu\|^4 / p) ) = \exp(-\Omega(n^{2.01})).
    \end{equation*}
    \end{itemize}
\end{restatable}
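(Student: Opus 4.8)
The plan is to follow the gradient‑descent trajectory while decomposing each first‑layer weight $w_j^{(t)}$ into a \emph{signal} part lying in $\operatorname{span}\{\mu_1,\mu_2\}$ and a \emph{noise‑memorization} part lying in $\operatorname{span}\{z_1,\dots,z_n\}$, where we write $x_i=s_i\mu_{c_i}+z_i$ with $s_i\in\{\pm1\}$ and $c_i\in\{1,2\}$. Everything is carried out on a high‑probability event $\mathcal{E}$ on which the usual concentration facts hold: $\|x_i\|^2=(1\pm o(1))p$; $|\langle x_i,x_{i'}\rangle|=O(\|\mu\|^2+\sqrt{p\log n})$ for $i\ne i'$; the four clusters and the within‑cluster noisy‑label counts are balanced up to $O(\sqrt{n\log n})$; the initial network output is negligible, so $g_i^{(0)}=\tfrac12+o(1)$; and the initial activation sets $S_j^{(0)}:=\{i:\langle w_j^{(0)},x_i\rangle>0\}$ are ``label‑blind'' (because $\langle w_j^{(0)},z_i\rangle$ dominates $\langle w_j^{(0)},\mu_{c_i}\rangle$ when $p\gg\|\mu\|^2$, the set $S_j^{(0)}$ is essentially independent of $(s_i,y_i,c_i)$), with sums over the $m$ neurons concentrating. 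This event has probability at least $1-n^{-\Omega(1)}-O(1/\sqrt m)$ by Assumptions~\ref{assump1}, \ref{assump2}, \ref{assump5} and~\ref{assump6}. Using Assumption~\ref{assump5} to neglect $w_j^{(0)}$ against the first increment and $g_i^{(0)}=\tfrac12+o(1)$, the first GD step becomes the explicit map $w_j^{(1)}=\tfrac{\alpha a_j}{2n}\sum_{i\in S_j^{(0)}}y_ix_i$ up to lower‑order terms.

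From this one‑step formula I read off the $t=1$ conclusions. On a training point $x_k$, $\langle w_j^{(1)},x_k\rangle$ is dominated by the self‑term $\tfrac{\alpha a_j}{2n}y_k\|x_k\|^2\,\mathbb{I}(k\in S_j^{(0)})$, because by Assumptions~\ref{assump1}--\ref{assump2} the cross‑terms sum to $O(n\|\mu\|^2+n\sqrt{p\log n})\ll p$; summing the $\Theta(m)$ neurons with $k\in S_j^{(0)}$ and $a_jy_k>0$ (each contributing $\Theta(\alpha p/(nm))$ with sign $y_k$) gives $y_kf(x_k;W^{(1)})=\Theta(\alpha p/n)>0$, i.e.\ perfect training accuracy. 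On a fresh test point $x=s\mu_c+z$ there is no self‑term, so $\langle w_j^{(1)},x\rangle\approx s\langle w_j^{(1)},\mu_c\rangle+\langle w_j^{(1)},z\rangle$, where the first part equals $\tfrac{\alpha a_j\|\mu\|^2}{2n}\,sT_{j,c}$ with $T_{j,c}:=\sum_{i\in S_j^{(0)},\,c_i=c}y_is_i$, and dominates the $z$‑part by a factor $\gtrsim\|\mu\|^2/\sqrt p\gtrsim n^{0.51}$ (Assumption~\ref{assump1}); crucially $T_{j,c}$ is a mean‑zero random walk whose sign is unrelated to the true label of $x$. Pushing this through the ReLU shows that $f(x;W^{(1)})$ equals, up to lower order, the fixed centered quantity $\tfrac{\alpha\|\mu\|^2}{2nm}\bigl(\sum_{a_j>0}\max(0,sT_{j,c})-\sum_{a_j<0}\max(0,-sT_{j,c})\bigr)$: it has mean zero and fluctuation $\Theta(\alpha\|\mu\|^2/\sqrt{mn})$ by symmetry of the $T_{j,c}$, and since the $c=1$ and $c=2$ cases involve the essentially independent families $\{T_{j,1}\}_j$ and $\{T_{j,2}\}_j$, the sign of $f(x;W^{(1)})$ is asymptotically a fair coin independent of the true label, giving test error $\tfrac12(1\pm n^{-\Omega(1)})$; the $O(1/\sqrt m)$ term absorbs the central‑limit error over neurons.

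For $1\le t\le\sqrt n$ I would run an induction on $t$, using that the step size is tiny (Assumption~\ref{assump3}) and that $g_i^{(t)}=\tfrac12+o(1)$ throughout this window (since $y_if(x_i;W^{(t)})=\Theta(\alpha pt/n)\ll1$ when $t\le\sqrt n$). The induction maintains: (i) for every $k$ the neurons active on $x_k$ at $t=1$ remain active, their noise coefficient $\rho_{j,k}^{(t)}$ growing like $\Theta(\alpha pt/n)$, the inactive ones remain inactive, so the self‑term stays dominant and $y_kf(x_k;W^{(t)})=\Theta(\alpha pt/n)>0$, i.e.\ $100\%$ training accuracy for all $1\le t\le\sqrt n$; (ii) the signal coordinates obey $\langle w_j^{(t+1)},\mu_c\rangle-\langle w_j^{(t)},\mu_c\rangle=\tfrac{\alpha a_j\|\mu\|^2}{n}\sum_{c_i=c}g_i^{(t)}\phi'(\langle w_j^{(t)},x_i\rangle)\,y_is_i$, and the key mechanism is that the $c_i=c$ points \emph{outside} $S_j^{(0)}$ are gated by $\operatorname{sgn}\bigl(s_i\langle w_j^{(t)},\mu_c\rangle\bigr)$, so once $\langle w_j^{(t)},\mu_c\rangle$ has a definite sign the neuron fires on exactly one sub‑cluster $\{s_i=\operatorname{sgn}\langle w_j^{(t)},\mu_c\rangle\}$, and the sum of $g_iy_i$ over that sub‑cluster is \emph{coherent} of size $\Theta\bigl(n(1-2\eta)\bigr)$ (Assumption~\ref{assump4} keeps it bounded away from $0$), with sign $\operatorname{sgn}(T_{j,1})$ for $a_j>0$ (coordinate $c=1$) and $-\operatorname{sgn}(T_{j,2})$ for $a_j<0$ (coordinate $c=2$), while the opposite‑cluster coordinate is attracted to $0$. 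Hence, for $t\ge Cn^{0.01}$, $|\langle w_j^{(t)},\mu_{c(j)}\rangle|=\Theta\bigl(\alpha t\|\mu\|^2(1-2\eta)/\sqrt m\bigr)$ with a data‑determined sign, where $c(j)=1$ if $a_j>0$ and $c(j)=2$ if $a_j<0$ --- exactly the neuron--feature alignment with $\centroids$ described in the introduction. For generalization, on a fresh $x=s\mu_c+z$ the active neurons of the ``correct'' class ($a_j>0$ if $c=1$, $a_j<0$ if $c=2$) now move $f$ in the same direction through their signal parts, so their contributions \emph{add} rather than cancel (the ReLU rectifying the per‑neuron signs), yielding $yf(x;W^{(t)})=\Omega\bigl(\alpha t\|\mu\|^2(1-2\eta)\bigr)$, whereas the only fresh randomness $\langle\sum_{j\text{ active}}a_jw_j^{(t)},z\rangle$ is Gaussian with standard deviation $O(\alpha t\sqrt p/\sqrt n)$; a Gaussian tail bound then gives $\mathbb{P}_{(x,y)\sim P_{\text{clean}}}(y\ne\operatorname{sgn} f(x;W^{(t)}))\le\exp(-\Omega(n^{0.99}\|\mu\|^4/p))=\exp(-\Omega(n^{2.01}))$, the last step by Assumption~\ref{assump1}.

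The technical heart, and the step I expect to be the main obstacle, is part (ii): isolating the coherent $\Theta(n)$ drift from the $\Theta(\sqrt n)$ fluctuations and, above all, controlling the activation patterns $\phi'(\langle w_j^{(t)},x_i\rangle)$ \emph{uniformly over $1\le t\le\sqrt n$}. One must show that on training points the noise inner product $\langle w_j^{(t)},z_i\rangle$ always dominates $s_i\langle w_j^{(t)},\mu_c\rangle$ so that training activations never flip (this uses $p\gg n^2\|\mu\|^2$); that on the points $i\notin S_j^{(0)}$ the tiny $s_i$‑correlated term $\tfrac{\alpha a_j\|\mu\|^2 s_iT_{j,c}}{2n}$ nevertheless controls the gate (this uses $\|\mu\|^2\gg\sqrt p$); and that the sign of $\langle w_j^{(t)},\mu_{c(j)}\rangle$, once acquired, persists --- all while the off‑diagonal cross‑term errors stay negligible against the linearly growing main terms. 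Closing this induction is where Assumptions~\ref{assump1}, \ref{assump2} and~\ref{assump3}, together with the horizon $t\le\sqrt n$, are used simultaneously.
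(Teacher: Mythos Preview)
Your overall architecture matches the paper's: one-step overfitting via self-term domination, the $t=1$ network behaving as a non-generalizable classifier on the signal subspace, and later generalization via neuron--feature alignment with $\centroids$. The training-accuracy argument and the final Gaussian-tail step for generalization are essentially the paper's. Two places, however, rest on an incorrect or incomplete mechanism.

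\textbf{The $t=1$ test error.} Your formula for $f(x;W^{(1)})$ is right, but the conclusion ``mean zero, fluctuation $\Theta(\alpha\|\mu\|^2/\sqrt{mn})$, sign is a fair coin'' misreads what is happening. By the ReLU law of large numbers over neurons (your own observation that $a_j$ is independent of $T_{j,c}$), the quantity $\sum_j a_j\phi(a_j sT_{j,c})$ concentrates near $\tfrac{s}{2}\,\E[T_{j,c}\mid X]=\tfrac{s}{4}\sum_{c_i=c}y_is_i$, which is a \emph{fixed nonzero number} depending only on the training data. Its sign flips with $s$, so the classifier is correct on exactly one of $\{+\mu_c,-\mu_c\}$ and wrong on the other: test error is $1/2$ because a linear functional cannot solve XOR, not because a sign is random. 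To make this rigorous you need an \emph{anti-concentration} step you do not mention: $\bigl|\sum_{c_i=c}y_is_i\bigr|\ge n^{1/2-\varepsilon}$ with high probability (this is condition~\ref{E_main_4}, proved via Berry--Esseen). Without it the approximation errors of order $\alpha C_n\sqrt p$ and $\alpha\|\mu\|^2\log m/\sqrt{mn}$ could dominate the signal $\alpha\|\mu\|^2|e_{\nu}|/n$, and the sign of $f(x;W^{(1)})$ would be uncontrolled.

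\textbf{Activation dynamics and the ``forgetting'' of the wrong feature.} Your invariant ``the noise inner product $\langle w_j^{(t)},z_i\rangle$ always dominates, so training activations never flip'' is false for the points with $a_jy_i<0$, and this is not a minor technicality but the heart of the forgetting mechanism. Such a point is driven negative by its self-term at $t=1$; after deactivation there is no self-term, and its gate is controlled by the \emph{signal} part $s_i\langle w_j^{(t)},\mu_{c_i}\rangle$, exactly as for your $i\notin S_j^{(0)}$ case. For a negative neuron $j$ and clean points in $\cC_{\pm\mu_1}$ this produces an oscillation: whenever $|\langle w_j^{(t)},\mu_1\rangle|$ gets large, the points in one sub-cluster re-activate and (because $a_jy_i<0$) push $\langle w_j^{(t+1)},\mu_1\rangle$ back toward zero. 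The paper captures this by tracking the running sum $\sum_{s\le t}D_{\nu,j}^{(s)}$ and proving it stays in $[-n-\Delta_{\mu_1}(t-2),\,n+\Delta_{\mu_1}(t-2)]$ with $\Delta_{\mu_1}=O(\sqrt{n\log n})$; this is what keeps $|\langle w_j^{(t)},\mu_1\rangle|$ at order $\alpha\|\mu\|^2(1+t\sqrt{\log n/n})/\sqrt m$ rather than $\Theta(\alpha\|\mu\|^2 t/\sqrt m)$. Your proposed invariant (gates frozen by noise) would, if it held, actually \emph{prevent} forgetting: with a fixed activation set the drift $\sum_{\text{active},\,c_i=c} g_i y_i s_i$ is a fixed number of order $\sqrt n$ (not zero), and $\langle w_j^{(t)},\mu_c\rangle$ would grow linearly for negative neurons too, killing the separation between $\cJ_1$ and $\cJ_2$ that your generalization step relies on. So the induction hypothesis you sketch does not close; you need to track the oscillating activation pattern on the $a_jy_i<0$ samples and bound the cumulative signed drift, as the paper does.
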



Theorem~\ref{thm:generalization} shows that at time $t=1$, the network achieves 100\% training accuracy despite the constant fraction of flipped labels in the training data. The second part of the theorem shows that this overfitting is catastrophic as the test error is close to that of a random guess.  On the other hand, by the first and third parts of the theorem, as long as the time step $t$ satisfies $C n^{0.01}\leq t \leq \sqrt n$, the network continues to overfit to the training data while simultaneously achieving test error $\exp(-\Omega(n^{2.01}))$, which guarantees a near-zero test error for large $n$.  In particular, the network exhibits benign overfitting, and it achieves this by grokking.   Notably, Theorem \ref{thm:generalization} is the first 
guarantee 
for benign overfitting in neural network classification for a nonlinear data distribution, 
in contrast to prior works which required 
linearly separable distributions \citep{frei22,frei2023benign,cao2022benign,xu23k,kou2023benign,kornowski2023tempered}. 

We note that \cref{thm:generalization} requires an upper bound on the number of iterations of gradient descent, i.e. it does not provide a guarantee as $t\to\infty$.  At a technical level, this is needed so that we can guarantee that the ratio of the sigmoid losses between all samples $r(t) := \max_{i,j\in[n]} \tfrac{g_i^{(t)}}{g_j^{(t)}}$ is close to $1$, and we show that this holds if $t \leq \sqrt n$.  This property prevents the training data with flipped labels from having an out-sized influence on the feature learning dynamics.  Prior works in other settings have shown that $r(t)$ is at most a large constant for any step $t$ for a similar purpose~\citep{frei22,xu23k}, however the dynamics of learning in the XOR setting are more intricate and require a tighter bound on $r(t)$.  We leave the question of generalizing our results to longer training times for future work. 
In \cref{sec:proof-sketch}, we provide an overview of the key ingredients to the proof of \cref{thm:generalization}.

\section{Proof Sketch}
\label{sec:proof-sketch}
We first introduce some additional notation. 
For $i\in [n]$, let $\bar x_i \in \centroids = \{ \pm\mu_1, \pm\mu_2 \}$ be the mean of the Gaussian from which the sample $(x_i, y_i)$ is drawn.
 For each
 $\nu \in \centroids$,
define  
  \(\cI_{\nu} = \{i \in [n]: \bar x_i = \nu\}\), i.e., the set of indices \(i\) such that \(x_i\) belongs to the cluster centered at \(\nu\).
 Thus, $\{\cI_{\nu}\}_{\nu \in \centroids}$ is a partition of \([n]\).
Moreover, define
 $\cC = \{i \in [n] : y_i = \tilde{y}_i\}$ and $\cN = \{i \in [n] : y_i \ne \tilde{y}_i\}$ to be the set of  
 clean and noisy samples, respectively. 
 Further we define for each \(\nu \in \centroids\) the following sets:
\begin{equation*}
    \cC_{\nu} := \cC \cap \cI_{\nu}\quad \mbox{and} \quad  \cN_{\nu} := \cN \cap \cI_{\nu}.
\end{equation*}
Let $c_{\nu}  = |\cC_{\nu}|$ and $n_{\nu} = |\cN_{\nu} |$. Define the training input data matrix $X = [x_1, \ldots, x_n]^{\top}$. Let $\varepsilon \in (0,10^{-3}/4)$ be a universal constant.




In \cref{subsec:proof-sketch/properties-data-and-init}, we present several properties satisfied with high probability by the training data and random initialization, which are crucial in our proof.
In \cref{subsec:proof-sketch/main-intuition}, we outline the major steps in the proof of \cref{thm:generalization}.

\subsection{Properties of the Training Data and Random Initialization}
\label{subsec:proof-sketch/properties-data-and-init}

\begin{restatable}[Properties of training data]{lemma}{LemmaGoodTrainingData}\label{lem:train_set_good_run}
Suppose Assumptions \ref{assump1} and \ref{assump2} hold. 
Let the training data  \(\{(x_i, y_i)\}_{i=1}^n\) be sampled i.i.d from \(P\) as in
Definition~\ref{definition:xor-cluster-data}.
With probability at least $1 - O(n^{-\varepsilon})$ the training data satisfy properties \ref{E_main_1}-\ref{E_main_4} defined below.

\begin{enumerate}[label=(B\arabic*)]
    \item\label{E_main_1} For all $k\in [n]$,
\(
\max\limits_{\nu \in \centroids}
    \langle  x_k - \bar x_k, \nu \rangle \leq 10\sqrt{\log n}\|\mu\|
    \) and \(|\|x_k\|^2 - p - \|\mu\|^2| \leq 10\sqrt{p \log n}\).
    \item\label{E_main_2} For each \(i,k \in [n]\) such that \(i \ne k\),
we have
\(|\langle  x_i,  x_k \rangle - \langle \bar{x}_i, \bar{x}_k \rangle| \leq 10\sqrt{p \log n}.
\)

    \item\label{E_main_3} For $\nu \in \centroids$,
we have \(
    |c_{\nu}+n_{\nu} - n/4| \leq \sqrt{\varepsilon n \log n}\) and \(|n_{\nu} - \eta (c_{\nu}+n_{\nu})| \leq \sqrt{\varepsilon \eta n \log n}\).
    \item\label{E_main_4} For $\nu \in \centroids$, we have
\(
    |c_{\nu}+n_{\nu} - c_{-\nu}-n_{-\nu}| \geq n^{1/2-\varepsilon}\) and  \( |n_{\nu}-n_{-\nu}| \geq \eta n^{1/2-\varepsilon}
\).
    \end{enumerate}

Denote by $\cG_{\text{data}}$ the set of training data satisfying conditions \emph{\ref{E_main_1}-\ref{E_main_4}}. Thus, the result can be stated succinctly as $\P(X \in \cG_{\text{data}}) \geq 1 - O(n^{-\varepsilon})$.
\end{restatable}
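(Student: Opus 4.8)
The plan is to prove \cref{lem:train_set_good_run} by establishing each of the four properties \ref{E_main_1}--\ref{E_main_4} separately via standard concentration inequalities, and then union-bounding over all of them. Throughout, I write $x_i = \bar x_i + z_i$ where $z_i \sim N(0, I_p)$ are independent, and the cluster assignments and noise flips are independent of the $z_i$'s. The quantities in \ref{E_main_1}--\ref{E_main_2} depend only on the Gaussian parts $z_i$ and the (fixed, conditionally on cluster assignment) means; the quantities in \ref{E_main_3}--\ref{E_main_4} depend only on the combinatorics of the cluster assignments and label flips. So the two groups can be handled independently, and I would condition on the cluster/noise assignment when proving the first group.

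For \ref{E_main_1}: the inner product $\langle x_k - \bar x_k, \nu\rangle = \langle z_k, \nu\rangle \sim N(0, \|\mu\|^2)$, so a Gaussian tail bound gives $|\langle z_k,\nu\rangle| \le 10\sqrt{\log n}\,\|\mu\|$ with probability $1 - O(n^{-c})$ for a large constant $c$; union-bounding over $k \in [n]$ and the four choices of $\nu$ costs a factor $4n$, which is absorbed. For the norm bound, $\|x_k\|^2 = \|z_k\|^2 + 2\langle z_k, \bar x_k\rangle + \|\bar x_k\|^2$; concentration of $\chi^2$ (e.g. Bernstein/Laurent--Massart) gives $\big|\|z_k\|^2 - p\big| \lesssim \sqrt{p\log n}$, and the cross term is $O(\sqrt{\log n}\,\|\mu\|) = o(\sqrt{p\log n})$ under \ref{assump1}--\ref{assump2} (indeed $\|\mu\| \ll \sqrt p$), and $\|\bar x_k\|^2 = \|\mu\|^2$ is exactly the deterministic term, so the stated bound $\big|\|x_k\|^2 - p - \|\mu\|^2\big| \le 10\sqrt{p\log n}$ follows after a union bound over $k$. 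For \ref{E_main_2}: writing $\langle x_i, x_k\rangle - \langle \bar x_i, \bar x_k\rangle = \langle z_i, z_k\rangle + \langle z_i, \bar x_k\rangle + \langle \bar x_i, z_k\rangle$; conditionally on $z_k$ the first two terms are jointly Gaussian in $z_i$ with variance $\|z_k\|^2 + \|\bar x_k\|^2 \le 2p$ (using the norm bound just established), so a Gaussian tail plus the symmetric treatment of $\langle \bar x_i, z_k\rangle \sim N(0,\|\mu\|^2)$ gives $|\langle x_i,x_k\rangle - \langle\bar x_i,\bar x_k\rangle| \le 10\sqrt{p\log n}$ with probability $1 - O(n^{-c})$ per pair, and a union bound over the $\binom n2$ pairs is absorbed.

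For \ref{E_main_3}--\ref{E_main_4}: the sizes $c_\nu + n_\nu = |\cI_\nu|$ are the counts in a multinomial with four cells each of probability $1/4$ (the cluster is determined by $\tilde y$ and an independent fair coin), so $|\cI_\nu| - n/4$ is a centered sum of $n$ bounded independent indicators; Hoeffding gives $\big||\cI_\nu| - n/4\big| \le \sqrt{\varepsilon n \log n}$ with the right probability, union-bounded over the four $\nu$. Similarly, conditionally on the cluster assignments, $n_\nu \mid |\cI_\nu|$ is $\mathrm{Binomial}(|\cI_\nu|, \eta)$, so $|n_\nu - \eta|\cI_\nu|| \le \sqrt{\varepsilon \eta n \log n}$ by a Bernstein/Chernoff bound for binomials (the variance is $\eta(1-\eta)|\cI_\nu| \le \eta n$). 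Property \ref{E_main_4} is an \emph{anti}-concentration statement: $|\cI_\nu| - |\cI_{-\nu}|$ is a centered sum of $n$ i.i.d.\ $\pm$-type increments with constant-order variance $\Theta(n)$, and I would invoke a Berry--Esseen or a direct small-ball/Paley--Zygmund argument to conclude $\big||\cI_\nu| - |\cI_{-\nu}|\big| \ge n^{1/2 - \varepsilon}$ with probability $1 - O(n^{-\varepsilon})$ (since $n^{1/2-\varepsilon}$ is well below one standard deviation, the event of being this small has probability $O(n^{-\varepsilon})$); the analogous statement for $|n_\nu - n_{-\nu}|$, whose fluctuation scale is $\Theta(\sqrt{\eta n})$, follows the same way. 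Finally, union-bound all the failure events; the dominant contribution is the $O(n^{-\varepsilon})$ from the anti-concentration events in \ref{E_main_4}, giving the claimed $1 - O(n^{-\varepsilon})$.

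The main obstacle is the anti-concentration in \ref{E_main_4}: unlike the other parts, it cannot be handled by a generic tail inequality and requires a lower bound on the probability that a sum of i.i.d.\ increments stays near zero is \emph{small}, i.e.\ one needs the increments to be genuinely spread out (nonlattice concerns, or a careful Berry--Esseen constant). One must also be slightly careful that $\eta$ can be as small as a constant fraction of $1/C$ but is bounded below by a constant in the relevant regime, so that $\eta n^{1/2-\varepsilon}$ is the correct scale and the binomial anti-concentration has enough variance; this is where \ref{assump4} and the standing assumption that $n$ exceeds all universal constants are used.
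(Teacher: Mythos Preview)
Your proposal is correct and follows essentially the same approach as the paper: Gaussian tail and $\chi^2$ concentration for \ref{E_main_1}--\ref{E_main_2}, Hoeffding for \ref{E_main_3}, and Berry--Esseen for the anti-concentration in \ref{E_main_4}, with a union bound to combine. The paper indeed uses Berry--Esseen (your nonlattice worry is unnecessary since Berry--Esseen applies regardless), and your scaling analysis for the $|n_\nu - n_{-\nu}|$ part matches what is needed.
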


The proof of \cref{lem:train_set_good_run} can be found in \cref{appendix:section:proof-of-good-training-data}. 
Conditions \ref{E_main_1} and \ref{E_main_2} are essentially the same as \citet[Lemma 4.3]{frei22} or \citet[Lemma 10]{niladri21}.
Conditions \ref{E_main_3} and \ref{E_main_4} concern the number of clean and noisy examples in each cluster, and can be proved by concentration and anti-concentration arguments, respectively.

\cref{lem:train_set_good_run} has an important corollary.

\begin{restatable}[Near-orthogonality of training data]{corollary}{CorollaryNearOrthogalityTrainingData}\label{corollary:near-orthogonality-of-train}
    Suppose Assumptions \ref{assump1}, \ref{assump2}, and Conditions \ref{E_main_1}, \ref{E_main_2} from Lemma~\ref{lem:train_set_good_run} all hold. Then 
    \[
    |\cossim (x_i,x_k)| \leq \frac{2}{Cn^2}
    \]
for all \( 1 \leq i \neq k \leq n\).
\end{restatable}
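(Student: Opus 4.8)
The statement to prove is \cref{corollary:near-orthogonality-of-train}: under Assumptions \ref{assump1}, \ref{assump2} and Conditions \ref{E_main_1}, \ref{E_main_2}, we have $|\cossim(x_i,x_k)| \leq \tfrac{2}{Cn^2}$ for all $1 \leq i \neq k \leq n$.

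\textbf{Proof proposal.} The plan is to bound the numerator $|\langle x_i, x_k\rangle|$ from above and the denominator $\|x_i\|\|x_k\|$ from below, using the two deterministic conditions together with the scaling implied by the assumptions. First I would control the numerator: by Condition \ref{E_main_2}, $|\langle x_i, x_k\rangle| \leq |\langle \bar x_i, \bar x_k\rangle| + 10\sqrt{p\log n}$. Since the cluster means lie in $\centroids = \{\pm\mu_1,\pm\mu_2\}$ with $\mu_1\perp\mu_2$ and $\|\mu_1\|=\|\mu_2\|=\|\mu\|$, the inner product $\langle \bar x_i,\bar x_k\rangle$ is either $0$ (different axes) or $\pm\|\mu\|^2$ (same axis). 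So $|\langle x_i,x_k\rangle| \leq \|\mu\|^2 + 10\sqrt{p\log n}$. Next I would lower-bound the norms: by Condition \ref{E_main_1}, $\|x_k\|^2 \geq p + \|\mu\|^2 - 10\sqrt{p\log n} \geq p - 10\sqrt{p\log n} \geq p/2$ for $n$ large (since $p$ is a large polynomial in $n$, $\sqrt{p\log n} = o(p)$). Hence $\|x_i\|\|x_k\| \geq p/2$.

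Combining, $|\cossim(x_i,x_k)| \leq \tfrac{\|\mu\|^2 + 10\sqrt{p\log n}}{p/2} = \tfrac{2\|\mu\|^2}{p} + \tfrac{20\sqrt{p\log n}}{p}$. Now I invoke the assumptions to show each term is at most $\tfrac{1}{Cn^2}$. For the first term: Assumption \ref{assump2} states $p \geq Cn^2\|\mu\|^2$, so $\tfrac{2\|\mu\|^2}{p} \leq \tfrac{2}{Cn^2}$ — wait, this already gives $\tfrac{2}{Cn^2}$ alone, so I should be slightly more careful and either absorb the second term by using the full strength of $p \geq C n^2 \|\mu\|^2$ with room to spare, or note that the bound in the corollary has a ``$2$'' in the numerator precisely to leave slack. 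For the second term: $\tfrac{20\sqrt{\log n}}{\sqrt p} \leq \tfrac{1}{Cn^2}$ requires $p \geq C^2 \cdot 400\, n^4 \log n$; this follows since Assumption \ref{assump1} gives $\|\mu\|^2 \geq Cn^{0.51}\sqrt p$, hence $\sqrt p \geq C n^{0.51}$... actually the cleaner route is that Assumptions \ref{assump1}--\ref{assump2} together force $p$ to be a sufficiently large polynomial in $n$ (as the paper notes after the assumptions, $p = \mathrm{poly}(n)$ with a large exponent), so $\sqrt{p\log n}/p = o(n^{-2})$. I would therefore split: show $\tfrac{2\|\mu\|^2}{p} \leq \tfrac{1}{Cn^2}$ possibly by using that $\|\mu\|^2/p \le 1/(Cn^2)$ and that actually we want to combine; the honest bookkeeping is to write $|\cossim(x_i,x_k)| \le \tfrac{\|\mu\|^2 + 10\sqrt{p\log n}}{p - 10\sqrt{p\log n}}$ and bound each piece.

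\textbf{Main obstacle.} There is no real obstacle here; this is a routine calculation chaining the deterministic conditions with the assumed parameter scalings. The only mild care needed is the bookkeeping of constants — making sure that both the ``signal'' contribution $\|\mu\|^2/p$ (controlled by Assumption \ref{assump2}) and the ``cross-noise'' contribution $\sqrt{p\log n}/p$ (controlled by the fact that Assumptions \ref{assump1}--\ref{assump2} force $p$ to be a large polynomial in $n$, so that $\sqrt{p \log n} \ll p/n^2$) each land below $\tfrac{1}{Cn^2}$, and that the denominator lower bound $\|x_i\|\|x_k\| \ge p/2$ is valid, so that their sum is at most $\tfrac{2}{Cn^2}$. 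I would also double-check the regime: we need $C$ large enough (universal) and $n$ large enough relative to universal constants, both of which are granted by the standing assumptions.
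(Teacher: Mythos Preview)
Your proposal is correct and follows essentially the same approach as the paper: bound the numerator by $\|\mu\|^2 + 10\sqrt{p\log n}$ via \ref{E_main_2}, bound the denominator from below via \ref{E_main_1}, then invoke Assumptions \ref{assump1}--\ref{assump2}. The only difference is that your constant bookkeeping is messier than necessary: the paper observes that Assumption \ref{assump1} alone forces $10\sqrt{p\log n} \leq \|\mu\|^2$ (since $\|\mu\|^2 \geq Cn^{0.51}\sqrt{p} \gg 10\sqrt{\log n}\cdot\sqrt{p}$), so the numerator is $\leq 2\|\mu\|^2$ and the denominator $p+\|\mu\|^2-10\sqrt{p\log n} \geq p$, giving $|\cossim(x_i,x_k)| \leq 2\|\mu\|^2/p \leq 2/(Cn^2)$ directly from \ref{assump2} --- no need to separately control $20\sqrt{\log n}/\sqrt{p}$.
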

This near-orthogonality comes from the high dimensionality of the feature space (i.e., Assumption~\ref{assump2}) and will be crucially used throughout the proofs on optimization and generalization of the network. 
The proof of 
Corollary~\ref{corollary:near-orthogonality-of-train}
can be found in 
\cref{appendix:section:proof-of-good-training-data}.

Next, we divide the neuron indices into two sets according to the sign of the corresponding second-layer weight:
\begin{equation*}
    \cJ_{\mPos} := \{j\in [m]: a_j>0\}; \quad \cJ_{\mNeg} := \{j\in [m]: a_j<0\}.
\end{equation*}
We will conveniently call them positive and negative neurons. Our next lemma shows that some properties of the random initialization hold with a large probability. The proof details can be found in \cref{subsec:good_initialization_W}.

\begin{restatable}[Properties of the random weight initialization]{lemma}{LemmaGoodInitialWeights}\label{lem:inital_weight_matrix}
    Suppose Assumptions \ref{assump1}, \ref{assump2} and \ref{assump6} hold. The followings hold with probability at least $1-O(n^{-\varepsilon})$ over the random initialization:
    \begin{enumerate}[label=(C\arabic*)]
    \item\label{D_main_1} \(
\big\|W^{(0)}\big\|_F^2 \leq \frac{3}{2} \omega_{\text {init }}^2 m p.
\)
    \item\label{D_main_2} \(
    |\cJ_{\mPos}| \geq m/3 \) and \(|\cJ_{\mNeg}| \geq m/3
\).
    \end{enumerate}

Denote the set of $W^{(0)}$ satisfying condition \ref{D_main_1} by $\cG_{W}$. Denote the set of $a = (a_j)_{j=1}^m$ satisfying condition \ref{D_main_2} by $\cG_{A}$. Then $\P(a\in\cG_A , W^{(0)}\in\cG_W) \geq 1 - O(n^{-\varepsilon})$.
\end{restatable}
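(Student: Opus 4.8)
The plan is to treat the two claims \ref{D_main_1} and \ref{D_main_2} separately, as each is a standard concentration statement for a sum of independent random variables, and then combine them by a union bound at the end. Throughout I would use that $W^{(0)}$ and $a$ are independent of each other and of the data.

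For \ref{D_main_1}, I would write $\|W^{(0)}\|_F^2 = \sum_{j=1}^m \|w_j^{(0)}\|^2$ and observe that, since the $w_j^{(0)}$ are i.i.d.\ $N(0,\omega_{\text{init}}^2 I_p)$, the rescaled quantity $\|W^{(0)}\|_F^2/\omega_{\text{init}}^2$ follows a $\chi^2$ distribution with $mp$ degrees of freedom. Applying a standard $\chi^2$ upper-tail bound, e.g.\ the Laurent--Massart inequality $\P(\chi^2_k \ge k + 2\sqrt{kt} + 2t) \le e^{-t}$ with $k = mp$ and $t = mp/25$ (so that $k + 2\sqrt{kt} + 2t < \tfrac32 mp$), gives $\|W^{(0)}\|_F^2 \le \tfrac32\,\omega_{\text{init}}^2 mp$ except with probability $e^{-\Omega(mp)}$. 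Since Assumptions~\ref{assump2} and~\ref{assump6} force $mp$ to be at least a fixed polynomial in $n$, this failure probability is $n^{-\omega(1)}$, which is $o(n^{-\varepsilon})$.

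For \ref{D_main_2}, I would note that $|\cJ_{\mPos}| = \sum_{j=1}^m \mathbb{I}(a_j > 0)$ is $\mathrm{Binomial}(m,\tfrac12)$ because the $a_j$ are i.i.d.\ $\mathrm{Unif}\{\pm 1/\sqrt m\}$, and that $|\cJ_{\mNeg}| = m - |\cJ_{\mPos}|$. Hence the event $\{|\cJ_{\mPos}| \ge m/3\} \cap \{|\cJ_{\mNeg}| \ge m/3\}$ is exactly $\{\,\big||\cJ_{\mPos}| - m/2\big| \le m/6\,\}$, and Hoeffding's inequality bounds its complement by $2\exp(-m/18)$. By Assumption~\ref{assump6} ($m \ge Cn^{0.02}$) this is again $n^{-\omega(1)} = o(n^{-\varepsilon})$.

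Finally, a union bound over the two (independent) failure events yields $\P(a \in \cG_A,\ W^{(0)} \in \cG_W) \ge 1 - O(n^{-\varepsilon})$, indeed with room to spare. There is no substantive obstacle in this lemma; the only point requiring any care is checking that the exponential tails above dominate the target rate $n^{-\varepsilon}$, and this is immediate from the lower bounds on $mp$ and $m$ supplied by Assumptions~\ref{assump2} and~\ref{assump6}.
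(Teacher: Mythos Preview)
Your proposal is correct and essentially matches the paper's approach: the paper cites an earlier reference for \ref{D_main_1} (which is exactly a $\chi^2$ concentration as you wrote) and proves \ref{D_main_2} via the same Hoeffding bound $\exp(-m/18)\le n^{-\varepsilon}$ using Assumption~\ref{assump6}. Your self-contained Laurent--Massart argument for \ref{D_main_1} is a fine substitute for the citation, and your union-bound wrap-up is identical in spirit to the paper's.
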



We say that the sample \(i\) \emph{activates} neuron \(j\) at time \(t\) if 
\(\langle w_j^{(t)}, x_i \rangle >0\).
Now, for each
neuron \(j \in [m]\), time \(t \ge 0\) and \(\nu  \in \centroids\), define the set of indices \(i\) of samples \(x_i\) with clean (resp.\ noisy) labels from the cluster centered at \(\nu\) that activates neuron \(j\) at time \(t\):
\begin{equation}
\calC{\nu}{j}{t}:= \{ i \in \cC_{\nu} : \langle w_j^{(t)} , x_i \rangle > 0 \} \quad \mbox{(resp. } \calN{\nu}{j}{t}:= \{ i \in \cN_{\nu} : \langle w_j^{(t)} , x_i \rangle > 0 \}\mbox{)}.
\label{equation:definition:active-clean-sample-from-a-cluster}
\end{equation}
Moreover, we define 
\[d_{\nu,j}^{(t)}:=|\mathcal{C}_{\nu,j}^{(t)}| -|\mathcal{N}_{\nu,j}^{(t)}|, \quad \mbox{and} \quad D_{\nu,j}^{(t)} := d_{\nu,j}^{(t)} - d_{-\nu,j}^{(t)}.\] 

For \(\kappa \in [0,1/2)\) and \(\nu \in \centroids\), a neuron \(j\) is said to be \((\nu,\kappa)\)-\emph{aligned} if
\begin{equation}
 D_{\nu,j}^{(0)} > n^{1/2 - \kappa}, \quad \mbox{and} \quad \max \{d_{-\nu,j}^{(0)},d_{\nu,j}^{(0)}\} < \min \{c_{\nu}, c_{-\nu} \} - 2 (n_{+\nu} + n_{-\nu}) - \sqrt{n}
 \label{equation:definition:nu-kappa-aligned-neurons}
 \end{equation}
 The first condition ensures that at initialization, there are at least $n^{1/2-\kappa}$ many more samples from cluster $\nu$ activating the $j$-th neuron than from cluster $-\nu$ after accounting for cancellations from the noisy labels.  
 The second is a technical condition necessary for trajectory analysis. 
A neuron \(j\) is said to be \((\pm\nu,\kappa)\)-\emph{aligned} if
it is either \((\nu,\kappa)\)-aligned or \((-\nu,\kappa)\)-aligned.

\begin{restatable}[Properties of the interaction between training data and initial weights]{lemma}{LemmaGoodInitialInteraction}\label{lem:initialization_good_run}
Suppose Assumptions \ref{assump1}-\ref{assump4} and \ref{assump6} hold. Given $a \in \cG_A, X \in \cG_{\text{data}}$, the followings hold with probability at least $1-O(n^{-\varepsilon})$ over the random initialization $W^{(0)}$:

\begin{enumerate}[label=(D\arabic*)]
    \item\label{B_main_1} For all $i\in [n]$, the sample \(x_i\) activates a large proportion of positive and negative neurons, i.e.,
\(| \{ 
j \in \cJ_{\mPos} : 
\langle w_j^{(0)}, x_i \rangle >0
\}|
\geq m / 7
\) and \(
|
\{
j \in \cJ_{\mNeg} : 
\langle w_j^{(0)}, x_i \rangle >0
\}
| \geq m/7\) both hold.
    \item\label{B_main_2} For all $\nu \in \centroids$ and \(\kappa \in [0,\tfrac{1}{2})\), both
\(
|\{ j \in \cJ_{\mPos} : \mbox{\(j\) is \((\nu, \kappa)\)-aligned}
\}| \geq mn^{-10\varepsilon}\), and
\(
|\{ j \in \cJ_{\mNeg} : \mbox{\(j\) is \((\nu, \kappa)\)-aligned}
\}| \geq mn^{-10\varepsilon}\).
    \item\label{B_main_3} For all $\nu \in \centroids$, we have \(\big| 
    \{j \in \cJ_{\mPos}: 
    \mbox{\(j\) is \((\pm\nu, 20\varepsilon)\)-aligned}
    \}
    \big| \geq (1-10 n^{-20\varepsilon})|\cJ_{\mPos}|\). Moreover, the same statement holds if ``\(\cJ_{\mPos}\)'' is replaced with ``\(\cJ_{\mNeg}\)'' everywhere.
    \item\label{B_main_4} For all $\nu \in \centroids$ and \(\kappa \in [0,\tfrac{1}{2})\), let \(\cJ_{\nu,\mPos}^{\kappa} := \{j \in \cJ_{\mPos}: 
    \mbox{\(j\) is \((\nu, \kappa)\)-aligned}
    \}\). Then
\(
    \sum_{j \in \cJ_{\nu,\mPos}^{\kappa}}(c_{\nu}-n_{\nu}- d_{-\nu,j}^{(0)}) \geq \frac{n}{10}|\cJ_{\nu,\mPos}^{\kappa}|
    \). Moreover, the same statement holds if ``\(\cJ_{\mPos}\)'' is replaced with ``\(\cJ_{\mNeg}\)'' everywhere.
\end{enumerate}

\end{restatable}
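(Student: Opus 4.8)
The plan is to treat each of the four statements as a concentration/anti-concentration problem over the Gaussian initialization $W^{(0)}$, conditioned on the good data event $X \in \cG_{\text{data}}$ and the good sign event $a \in \cG_A$. The key observation is that for a fixed neuron $j$ with $w_j^{(0)} \sim N(0, \omega_{\text{init}}^2 I_p)$, the vector of inner products $(\langle w_j^{(0)}, x_i\rangle)_{i\in[n]}$ is jointly Gaussian with covariance $\omega_{\text{init}}^2 X X^\top$, which by Corollary~\ref{corollary:near-orthogonality-of-train} is close to a diagonal matrix $\omega_{\text{init}}^2 \,\mathrm{diag}(\|x_i\|^2)$ up to off-diagonal entries of relative size $O(1/(Cn^2))$. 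So to leading order the activations $\mathbb{I}(\langle w_j^{(0)}, x_i\rangle > 0)$ behave like independent fair coins across $i\in[n]$ (and are exactly symmetric individually since the Gaussian is centered), with correlations small enough to be absorbed by a Gaussian comparison / Slepian-type argument or by a direct second-moment computation. First I would make this ``almost independent fair coins'' picture precise: bound the pairwise correlation of the indicator variables using the near-orthogonality bound, so that for any fixed subset of coordinates the joint activation pattern is within total variation $o(1)$ of the i.i.d.\ $\mathrm{Bernoulli}(1/2)^n$ law, uniformly over $j$.

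For \ref{B_main_1}, each sample $x_i$ activates neuron $j$ with probability exactly $1/2$ independently of $a_j$, so $|\{j \in \cJ_{\mPos}: \langle w_j^{(0)}, x_i\rangle > 0\}|$ is (approximately) $\mathrm{Binomial}(|\cJ_{\mPos}|, 1/2)$; since $|\cJ_{\mPos}| \ge m/3$ by $a\in\cG_A$ and $m \ge Cn^{0.02}$ by \ref{assump6}, a Chernoff bound gives this count is $\ge m/7$ except with probability $\exp(-\Omega(m))$, and a union bound over the $n$ samples and the two sign classes costs only a factor $2n$, which is absorbed since $\exp(-\Omega(m)) \cdot n = \exp(-\Omega(n^{0.02}))\cdot n = O(n^{-\varepsilon})$ for $\varepsilon$ a small enough constant. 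For \ref{B_main_2} and \ref{B_main_3}, the point is to estimate, for a fixed neuron $j$, the probability that $j$ is $(\nu,\kappa)$-aligned as defined in \eqref{equation:definition:nu-kappa-aligned-neurons}. The quantity $D_{\nu,j}^{(0)} = d_{\nu,j}^{(0)} - d_{-\nu,j}^{(0)}$ is, conditionally on the data, a difference of sums of roughly independent centered $\pm$-type contributions; using \ref{E_main_3}--\ref{E_main_4} (the cluster-size balance/imbalance estimates) one shows $\E[d_{\nu,j}^{(0)}] \approx (c_\nu - n_\nu)/2$ and that $D_{\nu,j}^{(0)}$ has standard deviation $\Theta(\sqrt{n})$, so by a Berry--Esseen / anti-concentration argument $\P(D_{\nu,j}^{(0)} > n^{1/2-\kappa}) = \Theta(1)$ for any constant $\kappa$, and more precisely $\ge n^{-10\varepsilon}$ (with room to spare) after also intersecting with the second, deterministic-looking condition in \eqref{equation:definition:nu-kappa-aligned-neurons} which holds with probability $1 - O(n^{-20\varepsilon})$ by a one-sided tail bound on $\max\{d_{-\nu,j}^{(0)}, d_{\nu,j}^{(0)}\}$ (note $c_\nu, c_{-\nu} = \Theta(n)$ while $n_{+\nu}+n_{-\nu} = O(\eta n)$, so the threshold $\min\{c_\nu,c_{-\nu}\} - 2(n_{+\nu}+n_{-\nu}) - \sqrt n$ is a constant fraction of $n$ and exceeds the mean $(c_\nu-n_\nu)/2$ comfortably when $\eta \le 1/C$). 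Summing these per-neuron probabilities over $j \in \cJ_{\mPos}$ and applying a concentration bound (Chernoff for the binomial-like sum, using $m \ge Cn^{0.02}$ to control deviations) yields the claimed lower bounds $mn^{-10\varepsilon}$ in \ref{B_main_2} and $(1-10n^{-20\varepsilon})|\cJ_{\mPos}|$ in \ref{B_main_3} for the $(\pm\nu,20\varepsilon)$-aligned count, the latter because $\P(j \text{ not } (\pm\nu,20\varepsilon)\text{-aligned}) \le O(n^{-20\varepsilon})$ and then a Markov/Chernoff bound on the count of bad neurons. Finally \ref{B_main_4} is obtained by conditioning on the alignment event: for $j \in \cJ_{\nu,\mPos}^\kappa$ one shows $c_\nu - n_\nu - d_{-\nu,j}^{(0)} \ge n/10$ holds deterministically given the second inequality in \eqref{equation:definition:nu-kappa-aligned-neurons} together with \ref{E_main_3} (which controls $c_\nu, n_\nu$), so the sum is trivially at least $\tfrac{n}{10}|\cJ_{\nu,\mPos}^\kappa|$; alternatively, if the per-neuron bound is only ``on average'', one uses a conditional concentration argument over the neurons in $\cJ_{\nu,\mPos}^\kappa$.

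The main obstacle I anticipate is \ref{B_main_2}--\ref{B_main_3}: getting a \emph{sharp} lower bound (rather than merely a positive constant) on $\P(j \text{ is } (\nu,\kappa)\text{-aligned})$, uniformly over all $\kappa \in [0,1/2)$ including $\kappa$ arbitrarily close to $1/2$, while simultaneously intersecting with the delicate second condition in \eqref{equation:definition:nu-kappa-aligned-neurons}. This requires a two-sided quantitative CLT for $D_{\nu,j}^{(0)}$ with an explicit Berry--Esseen rate, carefully handling the near-independence of activations (the $O(1/(Cn^2))$ correlations must be shown not to corrupt the $\Theta(\sqrt n)$-scale fluctuations of $D_{\nu,j}^{(0)}$, which is where Corollary~\ref{corollary:near-orthogonality-of-train} and Assumption~\ref{assump2} do the heavy lifting), and also using the cluster-imbalance lower bound \ref{E_main_4} to guarantee the mean of $d_{\nu,j}^{(0)} - d_{-\nu,j}^{(0)}$ does not accidentally sit exactly at the threshold. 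Once the per-neuron probability estimates are in hand, the aggregation over neurons is a routine Chernoff argument leveraging \ref{assump6}, and the union bound over the four cluster directions $\nu \in \centroids$ and the two sign classes costs only a constant factor.
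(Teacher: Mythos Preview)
Your plan is essentially the same as the paper's: near-orthogonality makes the activation indicators behave like almost-independent fair coins (the paper formalizes this via a pointwise comparison lemma bounding the joint probability of any sign pattern by $(1/2 \pm 2/(Cn))^n$), then \ref{B_main_1} is Hoeffding over neurons, \ref{B_main_2}--\ref{B_main_3} are Berry--Esseen on $D_{\nu,j}^{(0)}$ followed by Hoeffding over neurons, and \ref{B_main_4} is an averaging argument.

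Two small corrections. First, your worry about uniformity in $\kappa$ is unnecessary: since $\cJ_{\nu,\mPos}^{\kappa_1} \subseteq \cJ_{\nu,\mPos}^{\kappa_2}$ for $\kappa_1 < \kappa_2$, it suffices to prove \ref{B_main_2} at $\kappa = 0$, so there is no ``arbitrarily close to $1/2$'' difficulty. Second, your first option for \ref{B_main_4} does not work: the second inequality in \eqref{equation:definition:nu-kappa-aligned-neurons} only gives $d_{-\nu,j}^{(0)} < \min\{c_\nu,c_{-\nu}\} - 2n_{\pm\nu} - \sqrt n$, which yields $c_\nu - n_\nu - d_{-\nu,j}^{(0)} > (c_\nu - \min\{c_\nu,c_{-\nu}\}) + n_\nu + 2n_{-\nu} + \sqrt n$, and this can be as small as $O(\eta n + \sqrt n)$, not $n/10$. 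Your fallback is the right move and is exactly what the paper does: it observes that the alignment conditions only impose \emph{upper} bounds on $d_{-\nu,j}^{(0)}$, so by stochastic monotonicity the average of $d_{-\nu,j}^{(0)}$ over $\cJ_{\nu,\mPos}^\kappa$ is at most its unconditional average over $\cJ_{\mPos}$; one then shows $\E[d_{-\nu,j}^{(0)}] = (c_{-\nu}-n_{-\nu})/2$ and applies Hoeffding over $\cJ_{\mPos}$ to get the unconditional average below $c_\nu - n_\nu - n/10$.
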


Condition \ref{B_main_1} makes sure that the neurons spread uniformly at initialization so that each datapoint activates at least a constant fraction of positive and negative neurons.
Condition \ref{B_main_2} guarantees that for each $\nu \in \centroids$, there are a fraction of neurons aligning with $\nu$ more than $-\nu$.
Condition \ref{B_main_3} shows that most neurons will somewhat align with either $\nu$ or $-\nu$.
Condition \ref{B_main_4} is a technical concentration result.
For proof details, see \cref{subsec:proof_of_lem4.4}. 

Define the set \(\cG_{\text{good}}\) as
\[
\cG_{\text{good}} := \{(a,W^{(0)},X) : a \in \cG_{A}, X \in \cG_{\text{data}}, W^{(0)} \in \cG_{W} \text{ and  conditions \ref{B_main_1}-\ref{B_main_4} hold} \},
\]
whose probability is lower bounded by \(
        \P((a,W^{(0)},X) \in \cG_{\text{good}}) \ge 1 - O(n^{-\varepsilon})
\).
This is a consequence of \cref{lem:train_set_good_run,lem:inital_weight_matrix,lem:initialization_good_run} (see \cref{appendix:section:proof-of-good-interaction-whp}).


\begin{definition}
If the training data $X$ and the initialization $a, W^{(0)}$ belong to $\cG_{\text{good}}$, we define this circumstance as a ``good run.''
\end{definition}

\newcommand{\approptoinn}[2]{\mathrel{\vcenter{
  \offinterlineskip\halign{\hfil$##$\cr
    #1\propto\cr\noalign{\kern2pt}#1\sim\cr\noalign{\kern-2pt}}}}}

\newcommand{\appropto}{\mathpalette\approptoinn\relax}

\subsection{Proof Sketch for \cref{thm:generalization}}
\label{subsec:proof-sketch/main-intuition}

In order for the network to learn a generalizable solution for the XOR cluster distribution, we would like positive neurons' (i.e., those with $a_j>0$) weights $w_j$ to align with $\pm\mu_1$, and negative neurons' weights to align with $\pm\mu_2$;
we prove that this is satisfied for $t\in[ Cn^{0.01}, \sqrt n ]$.
However, for $t=1$, we show that the network only approximates a linear classifier, which can fit the training data in high dimension but has trivial test error.
\cref{fig:experiment-histograms} plots the evolution of the distribution of positive neurons' projections onto both $\mu_1$ and $\mu_2$, confirming that these neurons are much more aligned with $\pm\mu_1$ at a later training time, while they cannot distinguish $\pm\mu_1$ and $\pm\mu_2$ at $t=1$.

Below we give a sketch of the proofs, and details are in \cref{sec:proof_main_theorem}.


\subsubsection{One-Step Catastrophic Overfitting}
Under a good run, we have the following approximation for each neuron after the first iteration: 
\[
w_j^{(1)}  \approx  \frac{\alpha a_j}{2n}\sum_{i=1}^n\mathbb I(\langle w_j^{(0)}, x_i\rangle >0) y_i x_i,\quad j \in [m].
\]
For details of this approximation, see \cref{sec:traj_analysis}.

Let $s_{ij}:= \mathbb I(\langle w_j^{(0)}, x_i\rangle >0)$. 
Then, for sufficiently large $m$, we can approximate the neural network output at $t=1$ as
\begin{equation}
    \begin{split}
        \sum_{j=1}^m a_j \phi(\langle w_j^{(1)}, x\rangle) &\approx 
        \frac{\alpha}{2n}\sum_{j=1}^m a_j \phi(a_j \langle \sum_{i=1}^n s_{ij} y_i x_i, x\rangle) \\
    &\stackrel{a.s.}{\rightarrow} \frac{\alpha}{4n}\langle \sum_{i=1}^n\E[s_{ij}] y_i x_i, x\rangle = \frac{\alpha}{8n}\langle \sum_{i=1}^n y_i x_i, x\rangle.
    \end{split}
\end{equation}
The convergence above follows from \cref{lem:relu-lln} below and that the first-layer weights and second-layer weights are independent at initialization.  
This implies that the neural network classifier $\operatorname{sgn}(f(\cdot;W^{(1)}))$ 
behaves similarly to 
the linear classifier $\operatorname{sgn}(\langle \sum_{i=1}^n y_i x_i, \cdot \rangle) $.   It can be shown that this linear classifier achieves 100\% training accuracy whenever the training data are near orthogonal~\citep[Appendix D]{frei2023implicit},
but because each class has two clusters with opposing means, linear classifiers only achieve 50\% test error for the XOR cluster distribution.  Thus at time $t=1$, the network is able to fit the training data but is not capable of generalizing. 
\begin{figure}
    \centering
    \includegraphics[width=1.0\textwidth]{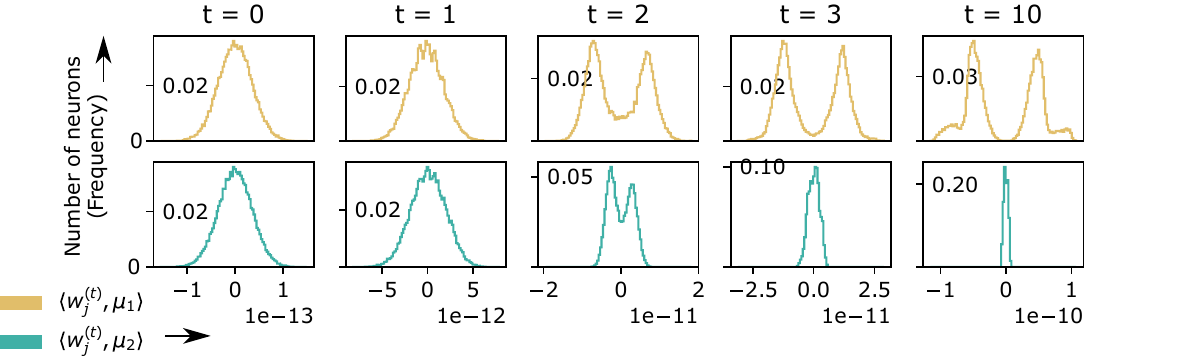}
    \caption{Histograms of inner products between positive neurons 
    and \(\mu_1\) or $\mu_2$ pooled over 100 independent runs
        under the same setting as in \cref{fig:experiment-synthetic-xor-runs}. 
    \emph{Top (resp.\ bottom) row:}\ Inner products between positive neurons and \(\mu_1\) (resp.\ \(\mu_2\)).
    While the distributions of the projections of positive neurons $w_j^{(t)}$ onto the $\mu_1$ and $\mu_2$ directions are nearly the same at times $t=0,1$, they become significantly more aligned with $\pm\mu_1$ over time. 
    See \cref{appendix:seciton:experiments}
for details of the experimental setup.
    }
    \label{fig:experiment-histograms}
\end{figure}


 \begin{lemma}\label{lem:relu-lln}
     Let $\{a_j\}$ and $\{b_j\}$ be two independent sequences of random variables with $a_j \stackrel{i.i.d.}{\sim}\text{Unif } \{\pm\tfrac{1}{\sqrt{m}}\}$, and $\E[b_j]=b, \E[|b_j|]<\infty$. Then $\sum_{j=1}^m a_j \phi(a_j b_j) \rightarrow b/2$ almost surely as $m \rightarrow \infty$.
 \end{lemma}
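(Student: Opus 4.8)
The plan is to use the strong law of large numbers (SLLN) for i.i.d. sequences after reducing the sum to a genuine average. The key observation is that $a_j \in \{\pm 1/\sqrt m\}$, so $a_j^2 = 1/m$ is deterministic, and hence $a_j \phi(a_j b_j) = a_j \max\{0, a_j b_j\}$. Writing $a_j = \sigma_j/\sqrt m$ with $\sigma_j \sim \mathrm{Unif}\{\pm 1\}$ i.i.d. and independent of $\{b_j\}$, we get
\[
a_j \phi(a_j b_j) = \frac{\sigma_j}{\sqrt m}\,\max\Bigl\{0, \frac{\sigma_j b_j}{\sqrt m}\Bigr\} = \frac{\sigma_j}{\sqrt m}\cdot\frac{1}{\sqrt m}\max\{0,\sigma_j b_j\} = \frac{1}{m}\,\sigma_j\,\phi(\sigma_j b_j),
\]
using positive homogeneity of $\phi$. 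Therefore $\sum_{j=1}^m a_j\phi(a_j b_j) = \frac1m \sum_{j=1}^m Z_j$ where $Z_j := \sigma_j \phi(\sigma_j b_j)$.

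Next I would check that $\{Z_j\}$ is an i.i.d. sequence with finite mean, so that SLLN applies. Independence across $j$ is immediate since $(\sigma_j, b_j)$ are independent across $j$; identical distribution follows because $\sigma_j$ are identically distributed and, although the $b_j$ need not be identically distributed, we only need $\E[Z_j]$ to be the same for all $j$ and the $Z_j$ to be independent with a uniform integrability/moment condition — more cleanly, I would invoke Kolmogorov's SLLN for independent (not necessarily identically distributed) summands, which requires $\sum_j \mathrm{Var}(Z_j)/j^2 < \infty$; since $|Z_j| = |b_j|\,\mathbb I(\sigma_j b_j > 0) \le |b_j|$ and $\E|b_j|$ is uniformly bounded (the hypothesis $\E|b_j| < \infty$ should be read as a uniform bound, which is what makes the statement meaningful), a truncation argument gives the needed control. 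Then $\frac1m\sum_{j=1}^m Z_j - \frac1m\sum_{j=1}^m \E[Z_j] \to 0$ a.s.

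It remains to compute $\E[Z_j]$. Conditioning on $b_j$ and averaging over $\sigma_j \in \{\pm1\}$ with probability $1/2$ each:
\[
\E[Z_j \mid b_j] = \tfrac12 \phi(b_j) + \tfrac12(-1)\phi(-b_j) = \tfrac12\bigl(\phi(b_j) - \phi(-b_j)\bigr) = \tfrac12 b_j,
\]
since $\phi(z) - \phi(-z) = z$ for all real $z$. Hence $\E[Z_j] = \tfrac12\E[b_j] = b/2$, and $\frac1m\sum_{j=1}^m \E[Z_j] = b/2$ exactly. Combining, $\sum_{j=1}^m a_j\phi(a_j b_j) = \frac1m\sum_j Z_j \to b/2$ almost surely.

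The only genuine subtlety — the main obstacle — is that the $b_j$ are not assumed identically distributed, only to have common mean $b$ and (what must be intended as) uniformly bounded first absolute moment; so one cannot quote the classical i.i.d. SLLN directly for the $Z_j$ but must instead use a version for independent summands (Kolmogorov's criterion after truncating $b_j$ at level $\sqrt j$, or Etemadi's approach), and verify the variance-sum condition using $\mathrm{Var}(Z_j) \le \E[Z_j^2] \le \E[b_j^2]$, which needs a uniform second-moment bound. If the paper intends the $b_j$ to be i.i.d. (as the notation "$\E[b_j]=b$" loosely suggests), this obstacle evaporates and the classical SLLN finishes the argument immediately; I would state the lemma's proof under whichever reading matches the intended application, noting that in the application $b_j = \langle \sum_i s_{ij} y_i x_i, x\rangle$ indexed by $j$ are indeed i.i.d. because the $w_j^{(0)}$ are.
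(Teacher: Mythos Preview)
Your proof is correct and follows essentially the same route as the paper: rewrite $a_j\phi(a_jb_j)=\tfrac{1}{m}\sigma_j\phi(\sigma_jb_j)$, use the ReLU identity $\phi(x)-\phi(-x)=x$ to get $\E[Z_j]=b/2$, and invoke the strong law of large numbers. Your discussion of the non-i.i.d.\ subtlety for $\{b_j\}$ is more careful than the paper's one-line appeal to the SLLN, but in the intended application the $b_j$ are indeed i.i.d.\ (as you note), so the classical version suffices.
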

 \begin{proof}
     Note that the ReLU function satisfies $x = \phi(x)-\phi(-x)$, and $\E[a_j \phi(a_j b_j)] = \E[\phi(b_j)-\phi(-b_j)]/2m = \E[b_j]/2m$. Then the result follows from the strong law of large number.
 \end{proof}

\subsubsection{Multi-Step Generalization}
Next, we show that positive (resp. negative) neurons gradually align with one of $\pm\mu_1$ (resp. $\pm \mu_2$), and forget both of $\pm \mu_2$ (resp. $\pm \mu_1$), making the network generalizable. Taking the direction $\pmu$ as an example,
we define sets of neurons 
\begin{equation*}
    \cJ_1 = \{ j \in \cJ_{\mPos} : \mbox{\(j\) is \((\pmu, 20\varepsilon)\)-aligned}\};\quad \cJ_2 = \{ j \in \cJ_{\mNeg} : \mbox{\(j\) is \((\pm \mu_1, 20\varepsilon)\)-aligned} \}.
\end{equation*}
We have by conditions \ref{B_main_2}-\ref{B_main_3} of Lemma \ref{lem:initialization_good_run} that under a good run,
\begin{equation*}
   |\cJ_1| \geq m n^{-10\varepsilon}, \quad |\cJ_2| \geq (1- 10 n^{-20\varepsilon})|\cJ_{\mNeg}|,
\end{equation*}
which implies that $\cJ_1$ contains a certain proportion of $\cJ_{\mPos}$ and $\cJ_2$ covers most of $\cJ_{\mNeg}$.
The next lemma shows that neurons in $\cJ_1$ will keep aligning with $\pmu$, but neurons in $\cJ_2$ will gradually forget $\pmu$.

\begin{restatable}{lemma}{LemIntuitionOfHowGeneralize}
    Suppose that Assumptions \ref{assump1}-\ref{assump6} hold. Under a good run, we have that for $1 \leq t \leq \sqrt{n}$,
    \begin{equation*}
        \frac{1}{|\cJ_1|}\sum_{j\in \cJ_1}\langle w_j^{(t)}, +\mu_1 \rangle  = \Omega\left( \frac{\alpha \|\mu\|^2 }{\sqrt{m}} t\right);
    \end{equation*}
    \begin{equation*}
        \frac{1}{|\cJ_2|}\sum_{j\in \cJ_2} |\langle  w_j^{(t)},\mu_1 \rangle|  = O\left(\frac{\alpha \|\mu\|^2}{\sqrt{m}}+ \frac{\alpha \|\mu\|^2 \sqrt{\log(n)}}{\sqrt{mn}} t \right).
    \end{equation*}
\end{restatable}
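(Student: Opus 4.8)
The plan is to track the two quantities $A_t := \frac{1}{|\cJ_1|}\sum_{j\in\cJ_1}\langle w_j^{(t)},+\mu_1\rangle$ and $B_t := \frac{1}{|\cJ_2|}\sum_{j\in\cJ_2}|\langle w_j^{(t)},\mu_1\rangle|$ across one GD step, using the update rule \eqref{eq:GD_update_newloss}. Dotting the update with $\mu_1$ and using $\mu_1^\top\mu_2 = 0$ gives, for a single neuron $j$,
\[
\langle w_j^{(t+1)} - w_j^{(t)}, \mu_1\rangle = \frac{\alpha a_j}{n}\sum_{i=1}^n g_i^{(t)}\phi'(\langle w_j^{(t)},x_i\rangle)\, y_i\langle x_i,\mu_1\rangle.
\]
Now decompose $\langle x_i,\mu_1\rangle = \langle \bar x_i,\mu_1\rangle + \langle x_i - \bar x_i,\mu_1\rangle$. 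The first term is $\pm\|\mu\|^2$ when $i\in\cI_{\pm\mu_1}$ and $0$ otherwise; the second is bounded by $10\sqrt{\log n}\,\|\mu\|$ via Condition \ref{E_main_1}. So the increment splits into a ``signal'' part supported on the clusters $\cI_{\pm\mu_1}$ and a ``noise'' part of size $O(\alpha\|\mu\|\sqrt{\log n}/\sqrt m)$ per sample, i.e.\ $O(\alpha\|\mu\|\sqrt n\sqrt{\log n}/\sqrt m)$ after summing (with a $\sqrt n$ rather than $n$ once one exploits sign cancellations in $y_i$, which accounts for the $\sqrt{\log n}/\sqrt{mn}\cdot t$ term in the $\cJ_2$ bound). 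For the signal part, $y_i\langle\bar x_i,\mu_1\rangle = \pm\|\mu\|^2$ and the activation pattern $\phi'(\langle w_j^{(t)},x_i\rangle) = \mathbb I(i\in\calC{\nu}{j}{t}\cup\calN{\nu}{j}{t})$, so the signal contribution to neuron $j$ is exactly $\frac{\alpha a_j}{n}g^{(t)}$-weighted combinations of $d_{+\mu_1,j}^{(t)}$ and $d_{-\mu_1,j}^{(t)}$, up to the factor $r(t)\approx 1$ controlling the spread of the $g_i^{(t)}$.

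**Why $\cJ_1$ grows.** For $j\in\cJ_1$ (positive, $(+\mu_1,20\varepsilon)$-aligned), the key is that $a_j > 0$ and $D_{+\mu_1,j}^{(t)} = d_{+\mu_1,j}^{(t)} - d_{-\mu_1,j}^{(t)}$ stays positive and of order $\Omega(n)$ (or at least large enough) throughout $t\le\sqrt n$. The alignment condition \eqref{equation:definition:nu-kappa-aligned-neurons} gives this at $t=0$ via $D_{+\mu_1,j}^{(0)} > n^{1/2-\kappa}$ together with Condition \ref{B_main_4}, which lower bounds $\sum_{j\in\cJ_1}(c_{+\mu_1} - n_{+\mu_1} - d_{-\mu_1,j}^{(0)})$ by $\frac{n}{10}|\cJ_1|$ — this is exactly what converts the per-neuron estimate into the $\Omega(\alpha\|\mu\|^2 t/\sqrt m)$ averaged bound. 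One must also argue the activation sets don't degrade: as $w_j^{(t)}$ moves toward $+\mu_1$, samples in $\cI_{+\mu_1}$ stay activated (the signal $\langle w_j^{(t)},x_i\rangle$ only increases for those), and the noise terms are too small to flip the sign — this needs Assumptions \ref{assump1}–\ref{assump3} and the near-orthogonality Corollary \ref{corollary:near-orthogonality-of-train}. Summing the one-step increment $\Omega(\alpha\|\mu\|^2/\sqrt m)$ over $t$ steps gives the claimed $\Omega(\alpha\|\mu\|^2 t/\sqrt m)$, provided the increments do not shrink — which follows because $g_i^{(t)}$ stays $\Theta(1)$ while the loss has not been driven down on the $\mu_1$-clusters uniformly (again using $t\le\sqrt n$).

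**Why $\cJ_2$ stays small.** For $j\in\cJ_2$ (negative neurons, $a_j < 0$, $(\pm\mu_1,20\varepsilon)$-aligned), the point is that these neurons have no reason to develop a large $\mu_1$-component: the signal term they receive along $\mu_1$ is $\frac{\alpha a_j}{n}g^{(t)}(\pm)(d_{+\mu_1,j}^{(t)} - d_{-\mu_1,j}^{(t)}\text{-type quantity})$, and because the neuron is being pulled toward $\pm\mu_2$ instead, the clusters $\cI_{\pm\mu_1}$ become deactivated over time (their contribution to $\langle w_j^{(t)},x_i\rangle$ is essentially frozen while the $\pm\mu_2$ component grows, so eventually $\phi'=0$ on $\cI_{\pm\mu_1}$). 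Until then, the $\mu_1$-signal per step is $O(\alpha\|\mu\|^2 D^{(t)}/(n\sqrt m))$ with $|D^{(t)}|$ controlled, contributing the $O(\alpha\|\mu\|^2/\sqrt m)$ constant piece, while the accumulated noise contributes $O(\alpha\|\mu\|^2\sqrt{\log n}\, t/\sqrt{mn})$. Taking absolute values and averaging over $\cJ_2$, using triangle inequality and the concentration of $\sum_i y_i\langle x_i - \bar x_i,\mu_1\rangle$-type sums, yields the stated bound.

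**Main obstacle.** The hard part will be maintaining control of the activation sets $\calC{\nu}{j}{t},\calN{\nu}{j}{t}$ — equivalently the signs of $\langle w_j^{(t)},x_i\rangle$ — along the trajectory, so that the signal term $d_{\nu,j}^{(t)}$ keeps its favorable sign and magnitude for $j\in\cJ_1$ while the $\pm\mu_1$-clusters genuinely deactivate for $j\in\cJ_2$. This requires a careful simultaneous induction on $t$ coupling (a) the growth of $\langle w_j^{(t)},\pm\mu_1\rangle$ (resp.\ $\pm\mu_2$), (b) the near-constancy of the cross-cluster inner products via near-orthogonality, and (c) the bound $r(t)\le 1 + o(1)$ on the loss-derivative ratios, which is itself the delicate ingredient flagged in the main-theorem discussion and is what forces the restriction $t\le\sqrt n$. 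I would set this induction up first (it is presumably \cref{sec:traj_analysis}) and then read off both displays of the lemma as corollaries of the maintained invariants.
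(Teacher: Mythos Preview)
Your treatment of $\cJ_1$ is essentially on the right track and mirrors the paper's approach: positive $(+\mu_1)$-aligned neurons have $D_{+\mu_1,j}^{(t)}$ staying of order $\Omega(n)$ once the activation sets stabilize (the paper makes this precise in Lemma~\ref{lem:key_lem1}, showing $\cC_{+\mu_1,j}^{(t)}=\cC_{+\mu_1}$, $\cN_{-\mu_1,j}^{(t)}=\emptyset$, and $D_{+\mu_1,j}^{(t)}>c_{+\mu_1}-n_{+\mu_1}-d_{-\mu_1,j}^{(0)}$ for all $t\ge 1$), and Condition~\ref{B_main_4} does convert this into the averaged lower bound.

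Your $\cJ_2$ analysis, however, misidentifies the mechanism. Neurons in $\cJ_2$ are \emph{negative} neurons that are $(\pm\mu_1,20\varepsilon)$-aligned; this is a statement about their initial activation pattern on the $\pm\mu_1$ clusters, not an assertion that they are being pulled toward $\pm\mu_2$. The $\pm\mu_1$ clusters do \emph{not} simply deactivate for these neurons. What actually happens (paper's Lemma~\ref{lem:key_lem2}) is that for $j\in\cJ_2$ the clean $\pm\mu_1$ samples have $a_jy_k<0$, so by Corollary~\ref{cor:key_properties} they repeatedly switch between active and inactive; meanwhile the noisy $\pm\mu_1$ samples (with $a_jy_k>0$) stay active from $t=2$ onward. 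The upshot is an oscillation: $D_{\nu,j}^{(t)}$ swings in sign, and the crucial estimate is on the \emph{cumulative} sum,
\[
\Bigl|\sum_{s=0}^t D_{\nu,j}^{(s)}\Bigr|\le n+\Delta_{\mu_1}(t-2),\qquad \Delta_{\mu_1}:=|n_{+\mu_1}-n_{-\mu_1}|+\sqrt n=O(\sqrt{n\log n}),
\]
where the $\sqrt{\log n}$ comes from the label-noise imbalance bound in~\ref{E_main_3}. Plugging this into \eqref{key_property2_4.6} gives $|\langle w_j^{(t)},\mu_1\rangle|\le \frac{\alpha\|\mu\|^2}{2n\sqrt m}(n+\Delta_{\mu_1}t)+\text{(lower order)}$, which is exactly the stated $O(\alpha\|\mu\|^2/\sqrt m+\alpha\|\mu\|^2\sqrt{\log n}\,t/\sqrt{mn})$. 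Note in particular that the $\sqrt{\log n}/\sqrt{mn}\cdot t$ term arises from this signal oscillation, not from sign cancellations in the Gaussian noise $\langle x_i-\bar x_i,\mu_1\rangle$ as you suggest (that noise is already absorbed into the $5\alpha\|\mu\|^2/(n^{3/2}\sqrt m)$ error in~\eqref{key_property2_4.6}, which is lower order). Your proposed induction would not produce the correct bound because it relies on eventual deactivation rather than bounded cumulative oscillation.
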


We can see that when $t$ is large, $\sum_{j\in \cJ_2} |\langle  w_j^{(t)},\mu_1 \rangle|/|\cJ_2| = o(\sum_{j\in \cJ_1}\langle w_j^{(t)}, +\mu_1 \rangle/|\cJ_1|)$, thus for $x \sim N(\pmu,I_p)$, neurons with $j\in \cJ_1$ will dominate the output of $f(x; W^{(t)})$. For the other three clusters centered at $-\mu_1, +\mu_2, -\mu_2$ we have similar results, which then lead the model to generalization. 
Formally, we have the following theorem on generalization. 


\begin{restatable}{theorem}{TheoremGenBound}\label{thm:generalization_bound}
    Suppose that Assumptions \ref{assump1}-\ref{assump6} hold. Under a good run, for $Cn^{10\varepsilon} \leq t \leq \sqrt{n}$, the generalization error of classifier $\operatorname{sgn}(f( x,W^{(t)}))$ has an upper bound
    \begin{equation*}
        \mathbb{P}_{( x, y) \sim P_{\text{clean}}}(y \neq \operatorname{sgn}(f( x ; W^{(t)})))\leq \exp\left(-\Omega\left(\frac{n^{1-20\varepsilon}\|\mu\|^4}{p}\right) \right).
    \end{equation*}
\end{restatable}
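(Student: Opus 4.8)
The plan is to reduce the clean test error of $\operatorname{sgn}(f(\cdot;W^{(t)}))$ to a Gaussian tail bound via Lipschitz concentration, and then use the preceding lemma (the aligned positive neurons grow a $\pm\mu_1$-signal while the negative neurons forget it) to show that the mean of the network output on a fresh sample is positive and much larger than the Lipschitz constant. By the four-fold symmetry of $\{\pm\mu_1,\pm\mu_2\}$ it suffices to treat a test point from the cluster centered at $+\mu_1$, which under $P_{\text{clean}}$ has label $y=+1$ (the clusters $\pm\mu_2$ are handled identically with the roles of $\cJ_{\mPos}$ and $\cJ_{\mNeg}$ swapped). Write such a point as $x=\mu_1+z$ with $z\sim N(0,I_p)$ independent of $W^{(t)}$. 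Since $\|\nabla_x f(x;W^{(t)})\|=\|\sum_{j}a_j\phi'(\langle w_j^{(t)},x\rangle)w_j^{(t)}\|\le \tfrac{1}{\sqrt m}\sum_j\|w_j^{(t)}\|\le\|W^{(t)}\|_F$ by Cauchy--Schwarz, the map $z\mapsto f(\mu_1+z;W^{(t)})$ is $\|W^{(t)}\|_F$-Lipschitz, so Gaussian concentration gives
\[
\mathbb{P}_z\big(f(\mu_1+z;W^{(t)})\le 0\big)\le \exp\Big(-\tfrac{(\mathbb{E}_z[f(\mu_1+z;W^{(t)})])^2}{2\|W^{(t)}\|_F^2}\Big)
\]
as soon as $\mathbb{E}_z[f(\mu_1+z;W^{(t)})]>0$. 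It then remains to (i) upper bound $\|W^{(t)}\|_F$ and (ii) lower bound the mean output.

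\textbf{Step 1 (norm bound).} From the GD update, $w_j^{(t)}-w_j^{(0)}=\tfrac{\alpha a_j}{n}\sum_i\beta_{ij}^{(t)}x_i$ with $\beta_{ij}^{(t)}:=y_i\sum_{s<t}g_i^{(s)}\phi'(\langle w_j^{(s)},x_i\rangle)$, and $|\beta_{ij}^{(t)}|\le t$ since $g_i^{(s)}\in(0,1)$. Expanding $\|\sum_i\beta_{ij}^{(t)}x_i\|^2$ and using \ref{E_main_1}, \ref{E_main_2} (so $\|x_i\|^2=O(p)$ and $\sum_{i\ne k}|\langle x_i,x_k\rangle|=O(n^2\|\mu\|^2)$), Assumptions \ref{assump1}, \ref{assump2} (so $p/n\gg\|\mu\|^2$), and the negligibility of $w_j^{(0)}$ from \ref{assump5} and \ref{D_main_1}, one gets $\|w_j^{(t)}\|=O(\alpha t\sqrt{p/(mn)})$ for all $j$, hence $\|W^{(t)}\|_F=O(\alpha t\sqrt{p/n})$.

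\textbf{Step 2 (mean lower bound).} Writing $c_j:=\langle w_j^{(t)},\mu_1\rangle$ and $\sigma_j:=\|w_j^{(t)}\|$, and using $\langle w_j^{(t)},z\rangle\sim N(0,\sigma_j^2)$,
\[
\mathbb{E}_z[f(\mu_1+z;W^{(t)})]=\sum_{j=1}^m a_j\,\mathbb{E}_{g\sim N(0,1)}[\phi(c_j+\sigma_j g)],
\]
together with the elementary inequalities $\tfrac{c}{2}\le \mathbb{E}_g[\phi(c+\sigma g)]\le |c|+\sigma$ for $\sigma\ge 0$. Positive neurons outside $\cJ_1$ are dropped (their contribution is $\ge 0$). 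For $j\in\cJ_1$ we use $a_j\mathbb{E}_g[\phi(c_j+\sigma_j g)]\ge a_j c_j/2$, and the preceding lemma together with $|\cJ_1|\ge mn^{-10\varepsilon}$ (from \ref{B_main_2}) gives $\sum_{j\in\cJ_1}a_jc_j/2=\tfrac{1}{2\sqrt m}\sum_{j\in\cJ_1}c_j=\Omega(\alpha\|\mu\|^2 t\,n^{-10\varepsilon})$. For the negative neurons, $a_j\mathbb{E}_g[\phi(c_j+\sigma_j g)]\ge -\tfrac{1}{\sqrt m}(|c_j|+\sigma_j)$, and one bounds three pieces: (a) $\tfrac{1}{\sqrt m}\sum_{j\in\cJ_{\mNeg}}\sigma_j=O(\alpha t\sqrt{p/n})$, which is $o(\alpha\|\mu\|^2 t\, n^{-10\varepsilon})$ by Assumption \ref{assump1}; (b) $\tfrac{1}{\sqrt m}\sum_{j\in\cJ_2}|c_j|$, which by the forgetting estimate of the preceding lemma and $|\cJ_2|\le m$ is $O(\alpha\|\mu\|^2+\alpha\|\mu\|^2\sqrt{\log n}\,t/\sqrt n)=o(\alpha\|\mu\|^2 t\,n^{-10\varepsilon})$ --- this is exactly where the hypothesis $t\ge Cn^{10\varepsilon}$ is used, making the constant-in-$t$ term $\alpha\|\mu\|^2$ of the forgetting bound negligible against the signal; (c) the $O(n^{-20\varepsilon}m)$ negative neurons outside $\cJ_2$ (from \ref{B_main_3}), for which one must again show $|\langle w_j^{(t)},\mu_1\rangle|$ is small --- the crude bound $|c_j|\le\sigma_j\|\mu\|$ is too lossy since $\sqrt p$ is large by Assumption \ref{assump2}, so instead one argues (via the same trajectory analysis as the preceding lemma) that a non-$(\pm\mu_1,20\varepsilon)$-aligned neuron has $|D_{\mu_1,j}^{(0)}|=O(n^{1/2-20\varepsilon})$ and hence accumulates almost no net $\mu_1$-signal. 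Combining, $\mathbb{E}_z[f(\mu_1+z;W^{(t)})]=\Omega(\alpha\|\mu\|^2 t\,n^{-10\varepsilon})>0$.

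\textbf{Step 3 (conclusion) and the main obstacle.} Plugging Steps 1--2 into the tail bound gives $\mathbb{P}_z(f(\mu_1+z;W^{(t)})\le 0)\le\exp(-\Omega((\alpha\|\mu\|^2 t n^{-10\varepsilon})^2/(\alpha t\sqrt{p/n})^2))=\exp(-\Omega(n^{1-20\varepsilon}\|\mu\|^4/p))$, and averaging over the four clusters with their $P_{\text{clean}}$-weights yields the claim. I expect the mean lower bound in Step 2 to be the hard part: one must show that the negative neurons, which carry the opposite sign $a_j<0$, cannot cancel the $\Omega(\alpha\|\mu\|^2 t n^{-10\varepsilon})$ signal produced by the aligned positive neurons. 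This requires the forgetting phenomenon for $\cJ_2$ \emph{and} a separate quantitative control of $\langle w_j^{(t)},\mu_1\rangle$ for all the remaining non-aligned negative neurons, and it crucially exploits the high signal-to-noise ratio of Assumption \ref{assump1} to dominate the Lipschitz constant $O(\alpha t\sqrt{p/n})$, which grows with $\sqrt p$.
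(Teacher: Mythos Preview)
Your overall architecture matches the paper's proof almost exactly: the reduction to Gaussian Lipschitz concentration is the paper's Lemma~\ref{lem:risk_upperbound}, the norm bound $\|W^{(t)}\|_F=O(\alpha t\sqrt{p/n})$ is derived the same way, and the decomposition of $\E_x[f(\mu_1+z;W^{(t)})]$ into a positive $\cJ_1$-signal minus negative-neuron contributions (split into $\cJ_2$, $\cJ_{\mNeg}\setminus\cJ_2$, and the $\sigma_j$-term) is identical in spirit. Parts (a) and (b) of your Step~2 are handled just as the paper does.

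The one place where you diverge from the paper is part~(c), and here you make things harder than necessary and introduce a small gap. You correctly observe that $|c_j|\le\sigma_j\|\mu\|$ is too lossy, but your proposed fix---arguing that a non-$(\pm\mu_1,20\varepsilon)$-aligned negative neuron has $|D_{\mu_1,j}^{(0)}|=O(n^{1/2-20\varepsilon})$ and therefore ``accumulates almost no net $\mu_1$-signal''---has two problems. First, failure of alignment in \eqref{equation:definition:nu-kappa-aligned-neurons} can come from the second condition (the bound on $d_{\pm\nu,j}^{(0)}$), not just from $|D_{\mu_1,j}^{(0)}|$ being small, and the good-run conditions do not rule this out neuron-by-neuron. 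Second, even if $|D_{\mu_1,j}^{(0)}|$ is small, the trajectory control you are invoking (Lemma~\ref{lem:key_lem2}) is proved only for \emph{aligned} neurons in $\cJ_2$; extending it to non-aligned neurons would require new work.

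The paper sidesteps this entirely. For every neuron, \eqref{key_property2_4.6} gives $|\langle w_j^{(s+1)}-w_j^{(s)},\mu_1\rangle|\le \tfrac{\alpha}{2n\sqrt m}|D_{\mu_1,j}^{(s)}|\,\|\mu\|^2+o(\cdot)$, and the trivial bound $|D_{\mu_1,j}^{(s)}|\le n$ yields $|c_j|=O(\alpha t\|\mu\|^2/\sqrt m)$---as large as the per-neuron signal from $\cJ_1$. What kills this term is purely the small \emph{count} $|\cJ_{\mNeg}\setminus\cJ_2|\le 10n^{-20\varepsilon}|\cJ_{\mNeg}|$ from \ref{B_main_3}, giving a total of $O(\alpha t\|\mu\|^2 n^{-20\varepsilon})$, which is an extra factor $n^{-10\varepsilon}$ below the signal $\Omega(\alpha t\|\mu\|^2 n^{-10\varepsilon})$. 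So no per-neuron trajectory refinement is needed for~(c); the crude $|D|\le n$ bound plus the count suffices.
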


\section{Discussion}
We have shown that two-layer neural networks trained on XOR cluster data with random label noise by GD reveal a number of interesting phenomena.  First, early in training, the network interpolates all of the training data but fails to generalize to test data better than random chance, displaying a familiar form of (catastrophic) overfitting.  Later in training, the network continues to achieve a perfect fit to the noisy training data but groks useful features so that it can achieve near-zero error on test data, thus exhibiting both grokking and benign overfitting simultaneously.  Notably, this provides an example of benign overfitting in neural network classification for a distribution which is not linearly separable. 

In contrast to prior works on grokking which found the usage of weight decay to be crucial for grokking~\citep{liu2022understanding,liu2023omnigrok}, we observe grokking without any explicit forms of regularization, revealing the significance of the implicit regularization of GD.  In our setting, the catastrophic overfitting stage of grokking occurs because early in training, the network behaves similarly to a linear classifier.  This linear classifier is capable of fitting the training data due to the high-dimensionality of the feature space but fails to generalize as linear classifiers are not complex enough to achieve test performance above random chance for the XOR cluster. Later in training, the network groks useful features, corresponding to the cluster means, which allow for good generalization.   

There are a few natural questions for future research.  First, our analysis requires an upper bound on the number of training steps due to technical reasons; it is intriguing to understand the generalization behavior as time grows to infinity.  Second, our proof crucially relies upon the assumption that the training data are nearly-orthogonal which requires that the ambient dimension is large relative to the number of samples.  Prior work has shown with experiments that overfitting is less benign in this setting when the dimension is small relative to the number of samples~\citep[Fig. 2]{frei2022random}; a precise characterization of the effect of high-dimensional data on generalization remains open. 


\printbibliography

\clearpage

\appendix

\section{Appendix organization}
\begingroup
\parindent=0em
\etocsettocstyle{\rule{\linewidth}{\tocrulewidth}\vskip0.5\baselineskip}{\rule{\linewidth}{\tocrulewidth}}
\localtableofcontents 
\endgroup

\subsection{Additional Notation}


Denote the c.d.f of standard normal distribution by $\Phi(\cdot)$ and the p.d.f. of standard normal distribution by $\Phi'(\cdot)$. Denote $\bar \Phi(\cdot) = 1 - \Phi(\cdot)$. Denote the Bernoulli distribution which takes $1$ with probability $p \in (0,1)$ by $\text{Bern}(p)$. Denote the Binomial distribution with size $n$ and probability $p$ by $\text{B}(n,p)$. For a random variable $X$, denote its variance by $\text{Var}(X)$; and its absolute third central moment by $\rho(X)$.

\subsection{Properties of the training data}

\subsubsection{Proof of Lemma~\ref{lem:train_set_good_run}}\label{appendix:section:proof-of-good-training-data}
\LemmaGoodTrainingData* 

\begin{proof}
Before proceeding with the proof, we recall that
\(\centroids = \{\pm \mu_1, \pm\mu_2\}\).
We first show that \ref{E_main_1} holds with large probability. To this end,    fix $k \in [n]$. We have by the construction of \(x_k\) in Section~\ref{section:data-generation} that $x_k \sim N(\bar{x}_k, I_p)$ for some $\bar{x}_k \in \{\pm \mu_1, \pm \mu_2\}$. Let $\xi_k = x_k - \bar{x}_k$.
    By Lemma \ref{basic_concentrations}, we have
    \begin{equation}\label{ineq:4.1.1}
            \P\big(\|\xi_k\| > \sqrt{p(t+1)}\big) \leq 
        \P\big(\big|\|\xi_k\|^2 - p\big| > pt\big) \leq 2\exp(-pt^2/8), \quad \forall t \in (0,1).
    \end{equation}
Note that for any fixed non-zero vector $\nu \in \mathbb R^p$, we have $\langle \nu,\xi_k \rangle \sim  N(0, \|\nu\|^2)$. Therefore, again by Lemma~\ref{basic_concentrations}, we have
\begin{equation}\label{ineq:4.1.2}
    \P(|\langle\nu, \xi_k\rangle| > t\|\nu\|) \leq \exp(-t^2/2), \quad \forall t \geq 1
\end{equation}
where the parameter \(t\) in both inequality will be chosen later.
To show that the first inequality of \ref{E_main_1} holds w.h.p, we show the complement event $\cF_k := \{ \max_{\nu \in \centroids}\langle  \xi_k, \nu \rangle > t\|\mu\|\}$ has low probability. 
Applying the union bound,
\begin{equation*}
    \begin{split}
        \P(\cF_k) &\leq \sum_{\nu\in \{ \pm \mu_1, \pm \mu_2\}} \P(|\langle  \xi_k, \nu \rangle| > t\|\mu\|) 
        \quad \because 
        \mbox{Union bound}
        \\
        &\leq 4\exp(-t^2/2)
        \quad \because
        \mbox{Inequality \eqref{ineq:4.1.2}.}
    \end{split}
\end{equation*}
 
 Let $\delta := n^{-\varepsilon}$. Picking $t = \sqrt{2\log(16n/\delta)}$ in inequality \eqref{ineq:4.1.2} and
 applying the union bound again, we have
\begin{equation}\label{ineq:E1}
    \textstyle\P(\bigcup_{k=1}^n \cF_k) \leq 4n\exp(-t^2/2) \leq \delta/4.
\end{equation}
Next, fix $t_1\in (0,1)$ and $t_2\geq 1$ arbitrary. To show that the second inequality of \ref{E_main_1} holds w.h.p, we first prove an intermediate step: the complement event   $\cE_k := \{|\|x_k\|^2 - p - \|\mu\|^2 | > pt_1 + 2\|\mu\|t_2\}$ has low probability. 
Towards this, first note that since
\[
\|x_k\|^2=
\|\bar{x}_k\|^2 + \|\xi_k\|^2
+2 \langle \bar{x}_k, \xi_k \rangle
=
\|\mu\|^2 + \|\xi_k\|^2
+2 \langle \bar{x}_k, \xi_k \rangle
\]
we have the alternative characterization of \(\cE_k\) as
\[\cE_k = \{|\|\xi_k\|^2-p
+2 \langle \bar{x}_k, \xi_k \rangle   | > pt_1 + 2\|\mu\|t_2\}.\]
Next, recall the fact:  if \(X,Y \in \mathbb{R}\) are random variables and \(a,b \in \mathbb{R}\) are constants, then 
\begin{equation}
\label{equation:lemma.4.1.simplefact}\P(|X+Y| > a+b) \leq \P(|X|>a) + \P(|Y|>b).\end{equation}
To see this, first note that \(|X+Y | \le |X| + |Y|\) by the triangle inequality. 
From this we deduce that
\(\P(|X+Y| > a+b) \le
\P(|X|+|Y| > a+b)
\). Now, by the union bound, we have
\[\P(|X|+|Y| > a+b) \leq \P(\{|X| > a\} \cup \{|Y| > b\}) \leq \P(|X|>a) + \P(|Y|>b)\] 
which proves \eqref{equation:lemma.4.1.simplefact}.
Now, to upper bound $\P(\cE_k)$, note that
\begin{align}
        \P(\cE_k) &=
        \P(|\|\xi_k\|^2-p
+2 \langle \bar{x}_k, \xi_k \rangle   | > pt_1 + 2\|\mu\|t_2)
\nonumber
        \\
        &\leq \P\big(\big|\|\xi_k\|^2 - p\big| > pt_1\big) + \P(|\langle \bar{x}_k, \xi_k \rangle| > t_2\|\mu\|) 
        \quad \because\mbox{Inequality \eqref{equation:lemma.4.1.simplefact}}
        \nonumber \\
        &\leq 2\exp(-pt_1^2/8) + \exp(-t_2^2/2).
        \quad \because
        \mbox{Inequalities  \eqref{ineq:4.1.1} and \eqref{ineq:4.1.2}}
        \label{ineq:4.1.3}
\end{align}
Inequality \eqref{ineq:4.1.3} is the crucial intermediate step to proving the second inequality of \ref{E_main_1}. It will be convenient to complete the proof  of the second inequality of \ref{E_main_1} simultaneously with that of \ref{E_main_2}.
To this end, we next prove an analogous intermediate step to \ref{E_main_2}.

Fix $s_1,s_2\geq 1$ to be chosen later. Define the event $\cE_{ij} := \{ |\langle x_i, x_j\rangle - \langle \bar{x}_i,\bar{x}_j \rangle| >  s_1 \sqrt{p} + 2t_2\|\mu\| \}$ for each pair $i,j \in [n]$ such that $1\leq i \neq j \leq n$.
We upper bound 
\(\P(\cE_{ij})\) in similar fashion as in \eqref{ineq:4.1.3}.
To this end, fix \(i,j \in [n]\) such that  \(i \ne j\). Note that the identity $\langle x_i, x_j\rangle = \xi_i^{\top} \xi_j + \bar{x}_i^{\top}\bar{x}_j + \xi_i^{\top}\bar{x}_j + \xi_j^{\top}\bar{x}_i$ implies that
\(
|\langle x_i, x_j\rangle - \langle \bar{x}_i,\bar{x}_j \rangle|
=
|
 \xi_i^{\top} \xi_j + \xi_i^{\top}\bar{x}_j + \xi_j^{\top}\bar{x}_i
|
\).
Now, we claim that
\begin{align}
        \P(\cE_{ij}) 
        &=
        \P(
|
 \xi_i^{\top} \xi_j + \xi_i^{\top}\bar{x}_j + \xi_j^{\top}\bar{x}_i
|
\ge s_1 \sqrt{p} + 2t_2\|\mu\|
        )
        \nonumber
       \\ 
        &\leq \P(|\xi_i^{\top}\xi_j| >  s_1 \sqrt{p}) + \P(|\xi_i^{\top}\bar{x}_j|>t_2\|\mu\|) + \P(|\xi_j^{\top}\bar{x}_i|>t_2\|\mu\|)
        \nonumber
        \\
        &\leq \exp(-s_1^2/2s_2) + 2\exp(-p(s_2-1)^2/8) + 2\exp(-t_2^2/2),
        \label{ineq:4.1.4}
\end{align} 
The first inequality simply follows from applying \eqref{equation:lemma.4.1.simplefact} twice.
Moreover, 
\(
\P(|\xi_i^{\top}\bar{x}_j|>t_2\|\mu\|) \) and \(\P(|\xi_j^{\top}\bar{x}_i|>t_2\|\mu\|)
\le \exp(-t_2^2/2)
\) follows from \eqref{ineq:4.1.2}.
To prove the claim, it remains to prove
\begin{align}
            &\P(|\langle \xi_i,\xi_j \rangle | > s_1\sqrt{p}) \nonumber
            \\
            &\leq \P\big(|\langle \xi_i,\xi_j \rangle| > s_1\sqrt{p} \,\,\big| \,\, \|\xi_j\| \leq  \sqrt{s_2 p}\big) + \P(\|\xi_j\| > \sqrt{s_2 p})
            \quad \because \mbox{law of total expectation}
            \nonumber\\
&\leq \exp(-s_1^2/2s_2) + 2\exp(-p(s_2-1)^2/8).
        \label{ineq:tmp1}
\end{align}
To prove the inequality at \eqref{ineq:tmp1}, first we  get \(\P(\|\xi_j\| > \sqrt{s_2 p}) 
\le 
2\exp(-p(s_2-1)^2/8)
\) by applying \eqref{ineq:4.1.1} to upper  bounds the second summand of  the left-hand side of \eqref{ineq:tmp1}. For upper bounding the first summand, first let \(\P\big(|\langle \xi_i,\xi_j \rangle| > s_1\sqrt{p} \,\,\big| \,\,  \xi_j\big)\) be the conditional probability conditioned on a realization of \(\xi_j\) (while \(\xi_i\) remains random). Then by definition
\begin{equation}
 \P\big(|\langle \xi_i,\xi_j \rangle| > s_1\sqrt{p} \,\,\big| \,\, \|\xi_j\| \leq  \sqrt{s_2 p}\big)
 =
 \E_{\xi_j }
 [
 \P\big(|\langle \xi_i,\xi_j \rangle| > s_1\sqrt{p} \,\,\big| \,\,  \xi_j\big)
 \,\, \big| \,\,  \|\xi_j\| \leq  \sqrt{s_2 p} \,
 ].
\end{equation}
For fixed \(\xi_j\) such that \( \|\xi_j\| \leq  \sqrt{s_2 p}\), we have
by  \eqref{ineq:4.1.2} that
\[
\P\big(|\langle \xi_i,\xi_j \rangle| > s_1\sqrt{p} \,\,\big| \,\,  \xi_j\big)
=
\P\big(|\langle \xi_i,\xi_j \rangle| > {\|\xi_j\|} (s_1\sqrt{p}/{\|\xi_j\|}) \,\,\big| \,\,  \xi_j\big)
\le 
\exp(-(s_1\sqrt{p}/{\|\xi_j\|})^2/2).
\]
Continue to assume fixed \(\xi_j\) such that \( \|\xi_j\| \leq  \sqrt{s_2 p}\), note that \(s_1\sqrt{p}/{\|\xi_j\|}
\ge 
s_1\sqrt{p}/ \sqrt{s_2 p}
= s_1 / \sqrt{s_2}\)
implies 
\[
\exp(-(s_1\sqrt{p}/{\|\xi_j\|})^2/2)
\le 
\exp(-(s_1 / \sqrt{s_2})^2/2).
\]
Hence, 
\(
\P\big(|\langle \xi_i,\xi_j \rangle| > s_1\sqrt{p} \,\,\big| \,\,  \xi_j\big)
\le \exp(-s_1^2 / 2s_2)
\). Applying
\(
\E_{\xi_j }
 [
 \,\,
 \cdot
 \,\, \big| \,\,  \|\xi_j\| \leq  \sqrt{s_2 p} \,
 ]
\) to both side of the preceding inequality, we get
\(
 \P\big(|\langle \xi_i,\xi_j \rangle| > s_1\sqrt{p} \,\,\big| \,\, \|\xi_j\| \leq  \sqrt{s_2 p}\big)
\le 
\exp(-s_1^2 / 2s_2)
\)
which upper bounds the first summand of  the left-hand side of \eqref{ineq:tmp1}.
We now choose the values for
 $t_1 = \sqrt{8\log(16n/\delta)/p}$, $t_2 = \sqrt{2\log(16n^2/\delta)}$,  $s_1 = 2\sqrt{\log(8n^2/\delta)}$, and $s_2 = 1 + \sqrt{8\log(16n^2/\delta)/p}.$ Recall that $\delta = n^{-\varepsilon}$ and $n$ is sufficiently large, then we have 
 $$
 \sqrt{\log(16n^2/\delta)/p} = \sqrt{\log(16n^{2+\varepsilon})/p} \leq \sqrt{3\log(16n)/p} \leq 1
 $$
 by Assumptions \ref{assump1} and \ref{assump2}. 
 Combining \eqref{ineq:4.1.3}
and         \eqref{ineq:4.1.4}
then applying the union bound, we have
\begin{equation}\label{ineq:E2}
    \begin{split}
        \textstyle\P&((\cup_{k=1}^n \cE_k) \cup (\cup_{i,j \in [n]:i \neq j}\cE_{ij}))  \leq \textstyle\sum_{k=1}^n \P(\cE_k) + \sum_{i,j \in [n]:i \neq j}\P(\cE_{ij}) \\
        &\leq 2n\exp(-\tfrac{pt_1^2}{8}) + n^2[2\exp(-\tfrac{t_2^2}{2}) + \exp(-\tfrac{s_1^2}{2s_2}) + 2\exp(-\tfrac{p(s_2-1)^2}{8})] \leq \delta.
    \end{split}
\end{equation}
Moreover, plugging the above values of \(t_1\), \(t_2\) and \(s_1\) into the definition of \(\cE_k\) and \(\cE_{ij}\), we see that \ref{E_main_1} and \ref{E_main_2} are satisfied since they contain the complement of the event in \eqref{ineq:E2}.

Next, show that \ref{E_main_3} holds with large probability.
We prove the inequality involving \(|c_{\nu}+n_{\nu} - n/4| \) portion of \ref{E_main_3}. Proofs for the rest of the inequalities in \ref{E_main_3} follow analogously using the same technique below.
 Recall from the data generation model, for each $k \in [n]$, $\bar{x}_k $ is sampled $ \text{  i.i.d}\sim\text{Unif}\{\pm \mu_1, \pm \mu_2\}$. Define the following indicator random variable:
\[
\mathbb I_{\nu}(k) = 
\begin{cases} 
1 & \text{if } \bar{x}_k = \nu  \\
0 & \text{otherwise,}
\end{cases}
\quad \mbox{for each } k \in [n], \mbox{ and } \nu \in \{\pm \mu_1, \pm \mu_2\}
\]
Then we have $\sum_{\nu}\mathbb I_{\mu}(k) = 1$ for each $k$, and $\E[\mathbb I_{\nu}(k)] = n/4$ for each $\nu$. Applying Hoeffding's inequality, we obtain
\begin{equation*}
\textstyle
    \P(|\sum_{k=1}^n \mathbb I_{\nu}(k) - n/4| > t\sqrt{n}) \leq 2\exp(-2t^2).
\end{equation*}
Applying the union bound, we have
\begin{equation}\label{ineq:E3}
\textstyle
    \P(\max_{\nu}|\sum_{k=1}^n \mathbb I_{\nu}(k) - n/4| > t\sqrt{n}) \leq 8\exp(-2t^2).
\end{equation}
Thus we can bound the above tail probability by $O(\delta)$ by letting $t = \sqrt{\log(1/\delta)/2}$, and the upper bound $t \sqrt{n} \leq \sqrt{n\log(1/\delta)} = \sqrt{n\varepsilon \log(n)}$. 

Next, show that \ref{E_main_4} holds with large probability.
We prove the inequality involving \(|c_{\nu}+n_{\nu} - c_{-\nu}-n_{-\nu}| \) portion of \ref{E_main_4}. Proofs for the rest of the inequalities in \ref{E_main_4} follow analogously using the same technique below. Note that for each $k$,
\begin{equation*}
    \E[\mathbb I_{\nu}(k) - \mathbb I_{-\nu}(k)] = 0; \quad \E[|\mathbb I_{\nu}(k) - \mathbb I_{-\nu}(k)|^l] = \frac{1}{4} \text{ for any } l \geq 1.
\end{equation*}
It yields that
\begin{equation*}
    \rho(\mathbb I_{\nu}(k) - \mathbb I_{-\nu}(k))/\text{Var}(\mathbb I_{\nu}(k) - \mathbb I_{-\nu}(k))^{3/2} = 2.
\end{equation*}
Applying the Berry-Esseen theorem (Lemma \ref{lem:BE}), we have
\begin{equation*}
     \P(|c_{\nu}+n_{\nu} - c_{-\nu}-n_{-\nu}| > t\sqrt{n}) = \P(|\sum_{k=1}^n (\mathbb I_{\nu}(k) - \mathbb I_{-\nu}(k))| > t\sqrt{n}) \geq 2\bar\Phi(2t) - \frac{12}{\sqrt{n}}. 
\end{equation*}
Let $t = n^{-\varepsilon}$. By $\Phi(t) \leq 1/2 + \Phi'(0)t$, we have
\begin{equation}\label{ineq:E4}
    \P(|\sum_{k=1}^n (\mathbb I_{\nu}(k) - \mathbb I_{-\nu}(k))| > t\sqrt{n}) \geq 1 - \frac{4}{\sqrt{2\pi}n^{\varepsilon}} - \frac{12}{\sqrt{n}} = 1 - O(\delta).
\end{equation}
Combining \eqref{ineq:E1}, \eqref{ineq:E2}-\eqref{ineq:E4}, we prove that conditions \ref{E_main_1}-\ref{E_main_4} hold with probability at least $1-O(\delta)$ over the randomness of the training data. As a consequence of \ref{E_main_1}, we have
\begin{equation*}
    p/2 \leq p + \|\mu\|^2 - 10\sqrt{p\log(n)}\leq \|x_k\|^2 \leq p + \|\mu\|^2 + 10\sqrt{p\log(n)} \leq 2p
\end{equation*}
by Assumption \ref{assump1} and \ref{assump2}.
\end{proof}

\subsubsection{Proof of Corollary~\ref{corollary:near-orthogonality-of-train}}
\CorollaryNearOrthogalityTrainingData*

\begin{proof}
    By Lemma \ref{lem:train_set_good_run}, we have that under \ref{E_main_1} and \ref{E_main_2}, when $i \neq j$,
    \begin{equation*}
        \frac{|\langle x_i ,x_j \rangle|}{\|x_i\|\cdot \|x_j\|} \leq \frac{\|\mu\|^2 + 10\sqrt{p\log(n)}}{p+\|\mu\|^2 - 10\sqrt{p\log(n)}} \leq \frac{2\|\mu\|^2}{p}  \leq \frac{2}{C n^2},
    \end{equation*}
for sufficiently large $p$. Here the second inequality comes from Assumption \ref{assump1}; and the last inequality comes from Assumption \ref{assump2}.
\end{proof}

\subsection{Properties of the initial weights and activation patterns}\label{appendix:section:properties-of-the-initial-weights-and-activation-patterns}
We begin with additional notations that is used for the proofs of Lemmas~\ref{lem:inital_weight_matrix} and \ref{lem:initialization_good_run}.
Following the notations in \cite{xu23k}, we simplify the notation of $\cJ_{\mPos}$ and $\cJ_{\mNeg}$ defined in
Section~\ref{sec:proof-sketch}
as
\begin{equation*}
    \cJ_{\mP} := \cJ_{\mPos}= \{j\in [m]: a_j>0\}; \quad \cJ_{\mN} :=   \cJ_{\mNeg}=\{j\in [m]: a_j<0\}.
\end{equation*}

We denote the set of pairs $(i, j)$ such that the neuron $j$ is active with respect to the sample $x_i$ at time \(t\) by $\cA^{(t)}$, i.e., define
\begin{equation*}
    \mathcal{A}^{(t)}:=\{(i, j) \in[n] \times[m]:\langle w_j^{(t)}, x_i\rangle>0\}.
\end{equation*}
Define subsets $\mathcal{A}^{i,(t)}$ and $\mathcal{A}_j^{(t)}$ of $\cA^{(t)}$ where \(i\) (resp.\ \(j\)) is a sample (resp.\ neuron) index:
\begin{equation*}
    \mathcal{A}^{i,(t)}:=\{j \in [m]:\langle w_j^{(t)}, x_i\rangle>0\},
\end{equation*}
\begin{equation*}
    \mathcal{A}_j^{(t)}:=\{i \in [n]:\langle w_j^{(t)}, x_i\rangle>0\}.
\end{equation*}

Define
\begin{equation*}
    \calC{\nu}{j}{t}= \cC_{\nu} \cap \cA_j^{(t)}; \quad \calN{\nu}{j}{t}= \cN_{\nu} \cap \cA_j^{(t)}, \text{ for }j \in [m], \nu \in \centroids.
\end{equation*}
Note that the above definition is equivalent to \eqref{equation:definition:active-clean-sample-from-a-cluster} from the main text.

Let $n_{\pm \nu} := n_{\nu}+n_{-\nu}$.
For $\nu \in \centroids$, we denote the sets of indices $j$ of 
\((\nu,\kappa)\)-\emph{aligned} neurons (see \eqref{equation:definition:nu-kappa-aligned-neurons} in the main text for the definition of \((\nu,\kappa)\)-aligned-ness)
 with parameter $\kappa \in [0, \tfrac{1}{2})$:
\begin{equation*}
    \cJ_{\nu}^{\kappa} := \{j \in [m]: 
    D_{\nu,j}^{(0)} > n^{1/2 - \kappa}, \,\, \mbox{and} \,\,
    d_{-\nu,j}^{(0)} < \min \{c_{\nu}, c_{-\nu} \} - 2 n_{\pm\nu} - \sqrt{n}\}.
\end{equation*}
Thus, we have by definition that
\[
\cJ_{\nu}^{\kappa} = \{
j \in \cJ_{\mP} : \mbox{neuron \(j\) is \((\nu,\kappa)\)-aligned}
\}
 \]
Further we denote
\begin{equation}
    \cJ_{\mP}^{i,(t)} = \cJ_{\mP} \cap \mathcal{A}^{i,(t)}; \quad  \cJ_{\mN}^{i,(t)} = \cJ_{\mN} \cap \mathcal{A}^{i,(t)}.
    \label{definition:neurons-activated-by-i-at-time-t}
\end{equation}

Finally, we denote
\begin{align}\label{def_of_Jnu}
    \cJ_{\nu, \mP}^{\kappa} = \cJ_{\mP} \cap \cJ_{\nu}^{\kappa}; \quad \cJ_{\nu, \mN}^{\kappa} = \cJ_{\mN} \cap \cJ_{\nu}^{\kappa}.
\end{align}

\subsubsection{Proof of Lemma~\ref{lem:inital_weight_matrix}}\label{subsec:good_initialization_W}
\LemmaGoodInitialWeights*

\begin{proof}
Recall earlier for simplicity, we defined for simplicity \(\cJ_{\mP}
=\cJ_{\mPos}\)
and
\(\cJ_{\mN}
=\cJ_{\mNeg}\).
    Let $\delta = n^{-\varepsilon}$. Then \ref{D_main_1} is proved to hold with probability $1 - O(\delta)$ in the Lemma 4.2 of \cite{frei22}. For \ref{D_main_2}, since $|\cJ_{\mP}|$ and $|\cJ_{\mN}|$ both follow distribution $\text{B}(m, 1/2)$, it suffices to show that $\P(|\cJ_{\mP}| \geq m/3) \geq 1 - \delta$. Applying Hoeffding's inequality, we have
    \begin{equation*}
        \P(|\cJ_{\mP}| \leq m/3) = \P(|\cJ_{\mP}| - m/2 \leq -m/6) \leq \exp(-m/18) \leq \delta,
    \end{equation*}
    where the last inequality comes from Assumption \ref{assump6}.
\end{proof}

\subsubsection{Proof of Lemma~\ref{lem:initialization_good_run}}\label{subsec:proof_of_lem4.4}
\LemmaGoodInitialInteraction*

Before we proceed with the proof of Lemma~\ref{lem:initialization_good_run}, we consider the following restatements of  \ref{B_main_1} through \ref{B_main_4}:

(D'1) For each $i\in [n]$, $x_i$ activates a constant fraction of neurons initially, i.e.
for each $i \in [n]$ the sets \(\cJ_{\mP}^{i,(0)}\)
and 
\(\cJ_{\mN}^{i,(0)}\) defined at 
\eqref{definition:neurons-activated-by-i-at-time-t} satisfy 
$$
|\cJ_{\mP}^{i,(0)}| \geq m/7 \quad \text{ and } \quad |\cJ_{\mN}^{i,(0)}| \geq m/7.$$

(D'2) For $\nu \in \centroids $ and $ \kappa \in [0,1/2)$, we have
\(
     \min\{|\cJ_{\nu,\mP}^{\kappa}|, |\cJ_{\nu,\mN}^{\kappa}|\} \geq mn^{-10\varepsilon}.
\)

(D'3) For $\nu \in \centroids$, we have
\(
    \big| \cJ_{\nu,\mP}^{20\varepsilon} \cup \cJ_{-\nu,\mP}^{20\varepsilon}\big| \geq (1-10 n^{-20\varepsilon})|\cJ_{\mP}| 
    \) and \(
    \big| \cJ_{\nu,\mN}^{20\varepsilon} \cup \cJ_{-\nu,\mN}^{20\varepsilon}\big| \geq (1-10 n^{-20\varepsilon})|\cJ_{\mN}|.
\)

(D'4) For $\nu \in \centroids$ and \(\kappa \in [0,\tfrac{1}{2})\), we have
\(
    \sum_{j \in \cJ}(c_{\nu}- d_{-\nu,j}^{(0)}) \geq \frac{n}{10}|\cJ|, 
    \) where 
    \(\cJ \in \{\cJ_{\nu,\mP}^{\kappa}, \cJ_{\nu,\mN}^{\kappa}\}.
    \)

Unwinding the definitions, we note that the (D'1) through (D'4) are equivalent to the \ref{B_main_1} through \ref{B_main_4} of Lemma~\ref{lem:initialization_good_run}

\begin{proof}
Let $\delta = n^{-\varepsilon}$. Throughout this proof, we implicitly condition on the fixed $\{a_j\} \in \cG_{A}$ and $\{x_i\} \in \cG_{\text{data}}$, i.e., when writing a probability and expectation  we write $\P(\, \cdot \,|\{a_j\},\{x_i\})$ and $\E[\, \cdot \,|\{a_j\},\{x_i\}]$ to denote $\P(\, \cdot \,)$ and $\E[\, \cdot \,]$ respectively.

\textbf{Proof of condition \ref{B_main_1}: }Define the following events for each $i \in [n]$:
    \begin{equation*}
        \cP_{i} := \{|\cJ_{\mP}^{i,(0)}| \geq m/7\}; \quad \cN_{i} := \{|\cJ_{\mN}^{i,(0)}| \geq m/7\}.
    \end{equation*}
    We first show that \(\cap_{i=1}^n(\cP_{i} \cap \cN_{i})\) occurs with large probability.
To this end, applying the union bound, we have
    \begin{equation*}
    \P\big(\cap_{i=1}^n(\cP_{i} \cap \cN_{i})\big) = 1 - \P\big(\cup_{i=1}^n(\cP_{i}^c \cup \cN_{i}^c)\big) \geq 1 - \sum_{i=1}^n\big(\P\big(\cP_{i}^c \big) + \P\big(\cN_{i}^c \big) \big).
    \end{equation*}
    Note that $\cP_i$ and $\cN_i$ are defined completely analogously corresponding to when $a_j > 0$ and $a_j < 0$, respectively. Thus, to prove \ref{B_main_1}, it suffices to show that $\P(\cP_i^c) \leq \delta/(4n)$ for each $i$, or equivalently,
    \[
    \P \big(\sum_{j\in \cJ_{\mP}} U_j \leq \frac{m}{7} \big) \leq \frac{\delta}{4n}
    \]
    holds for each $i\in [n]$, where $U_j := \mathbb I(\langle w_j^{(0)}, x_i\rangle > 0)$. Note that given $x_i$ and $\cJ_{\mP}$, $\{U_j\}_{j\in \cJ_{\mP}}$ are i.i.d Bernoulli random variables with mean $1/2$, thus we have
    \begin{equation*}
        \P \big(\sum_{j \in \cJ_{\mP}} U_j \leq \frac{m}{7} \big) \leq \P \big(\sum_{j \in \cJ_{\mP}} (U_j-\frac{1}{2}) \leq (\frac{1}{7}-\frac{1}{6})m \big) \leq \exp(-2m(\frac{1}{6}-\frac{1}{7})^2) \leq \frac{\delta}{4n},
    \end{equation*}
where the first inequality uses $|\cJ_{\mP}| \geq m/3$; the second inequality comes from Hoeffding's inequality; and the third inequality uses Assumption \ref{assump6}. Now we have proved that \ref{B_main_1} holds with probability at least $1 - \delta/2$.

\textbf{Proof of condition \ref{B_main_2}: } Without loss of generality, we only prove the results for $\cJ_{\nu,\mP}^{\kappa}$. Note that $\cJ_{\nu,\mP}^{\kappa_1} \subseteq \cJ_{\nu,\mP}^{\kappa_2}$ for $\kappa_1 < \kappa_2$. Thus we only consider the case $\kappa = 0$. It suffices to show that for each $j \in [m]$,
\begin{equation}\label{key_B2}
    \P(D_{\nu,j}^{(0)} > \sqrt{n} ) \geq 8n^{-10\varepsilon} \quad \text{and}\quad \P(d_{\mu,j}^{(0)} \geq \min \{c_{\nu}, c_{-\nu}\} - 2n_{\pm \nu} - \sqrt{n}) \leq n^{-10\varepsilon}, \mu \in \{\pm \nu\}.
\end{equation}
Suppose \eqref{key_B2} holds for any $\nu \in \{\pm\mu_1,\pm\mu_2\}$. Applying the inequality $P(A\cap B) \geq 1 - P(A^c) - P(B^c)$, we have
\begin{equation*}
    \P(D_{\nu,j}^{(0)} > \sqrt{n},  d_{\mu,j}^{(0)} < \min \{c_{\nu}, c_{-\nu}\} - 2n_{\pm \nu} - \sqrt{n}, \mu\in \{\pm \nu\}) \geq 8n^{-10\varepsilon} - 2n^{-10\varepsilon} = 6n^{-10\varepsilon}.
\end{equation*}
Then we have
\[
\E[|\cJ_{\nu,\mP}| ] \geq 6 n^{-10\varepsilon} |\cJ_{\mP}| \geq \frac{2m}{n^{10\varepsilon}},
\]
where the last inequality uses $\min\{|\cJ_{\mP}|, |\cJ_{\mN}| \} \geq m/3$, which comes from the definition of $\cG_A$. Note that given $\{a_j\}$ and $\{x_i\}$, $|\cJ_{\nu,\mP}|$ is the summation of i.i.d Bernoulli random variables. Applying Hoeffding's inequality, we obtain
\begin{equation*}
    \P(|\cJ_{\nu,\mP}| \leq \frac{m}{n^{10\varepsilon}}) \leq \P(|\cJ_{\nu,\mP}| - \E[|\cJ_{\nu,\mP}| ] \leq -\frac{m}{n^{10\varepsilon}}) \leq \exp(-\frac{2 m^2}{ n^{20\varepsilon}|\cJ_{\mP}|}) \leq n^{-\varepsilon},
\end{equation*}
where the last inequality uses $|\cJ_{\mP}| = m - |\cJ_{\mN}| \leq 2m/3$, $20\varepsilon \leq 0.01$, and Assumption \ref{assump6}. Applying the union bound, we have
\begin{equation*}
    \P(\cap_{\nu \in \{\pm\mu_1,\pm\mu_2\}} \{ |\cJ_{\nu,\mP}| > m/ n^{10\varepsilon} \}) \geq 1 - 4n^{-\varepsilon}.
\end{equation*}

 Thus it remains to show \eqref{key_B2}. Without loss of generality, we will only prove \eqref{key_B2} for $\nu = +\mu_1$, which can be easily extended to other $\nu$'s. Recall that $X = [x_1, \ldots, x_n]^{\top}$ is the given training data. Let $V = Xw_j^{(0)}$, then $V \sim N(0, XX^{\top})$. Let $Z = [z_1, \cdots, z_n]^{\top}, z_i = v_i / \|x_i\|, i \in [n]$. Denote $\Sigma = \text{Cov}(Z)$. Then $Z \sim N(0,\Sigma)$. By Corollary \ref{corollary:near-orthogonality-of-train}, we have
 \begin{equation*}
     \Sigma_{ii} = 1; \quad |\Sigma_{ij}| \leq \frac{2}{Cn^2}
 \end{equation*}
 for $1 \leq i \neq j \leq n.$ Denote
 \begin{equation*}
     \cA_1 = \cC_{+\mu_1}\cup \cN_{-\mu_1}; \quad \cA_2 = \cC_{-\mu_1}\cup \cN_{+\mu_1}.
 \end{equation*}
By the definition of $\cG_{\text{data}}$ and \ref{E_main_3} in Lemma \ref{lem:train_set_good_run}, we have
\begin{equation}\label{lem:b2.0.5}
    ||\cA_1|-|\cA_2|| \leq |c_{+\mu_1} - c_{-\mu_1}| + |n_{+\mu_1} - n_{-\mu_1}| \leq (1+\eta)\sqrt{n\varepsilon \log(n)};
\end{equation}
\begin{equation}\label{lem:b2.0.6}
    |\cA_1|+|\cA_2| = c_{+\mu_1} + n_{+\mu_1} + c_{-\mu_1}  + n_{-\mu_1} \geq \frac{n}{2} - 2\sqrt{n\varepsilon\log(n)} = \frac{n}{2} - o(n)
\end{equation}
for sufficiently large $n$. Note that equivalently, we can rewrite $D_{+\mu_1,j}^{(0)}$ as
\begin{equation}\label{eq:rewrite-Dj0}
    \sum_{i \in \cA_1} \mathbb I(z_i > 0) - \sum_{i \in \cA_2} \mathbb I(z_i > 0).
\end{equation}
Since we want to give a lower bound for $D_{+\mu_1,j}^{(0)}$, below we only consider the case when $|\cA_1|<|\cA_2|$. With the new expression of $D_{+\mu_1,j}^{(0)}$, we have
\begin{equation}\label{lem:b2.1}
    \P(D_{+\mu_1,j}^{(0)} > \sqrt{n}) = \sum_{k=0}^{\lfloor |\cA_1|- \sqrt{n}\rfloor} \sum_{\substack{\cB_2 \subseteq \cA_2 \\|\cB_2| = k}}\sum_{\substack{\cB_1 \subseteq \cA_1\\ |\cB_1| > k+\sqrt{n}}}\E\Big[\prod_{i \in \cB_1\cup \cB_2}\mathbb{I}(z_i>0)\cdot \prod_{i \in (\cA_1 \backslash \cB_1)\cup (\cA_2\backslash \cB_2)}\mathbb{I}(z_i\leq 0) \Big].
\end{equation}
By Lemma \ref{joint_prob_dependent_gaussian}, we have 
\begin{equation}\label{lem:b2.2}
    \E\Big[\prod_{i \in \cB_1\cup \cB_2}\mathbb{I}(z_i>0)\cdot \prod_{i \in (\cA_1 \backslash \cB_1)\cup (\cA_2\backslash \cB_2)}\mathbb{I}(z_i\leq 0) \Big] \geq \gamma^{|\cA_1|+|\cA_2|},
\end{equation}
where $\gamma = 1/2 - 4/(Cn)$. Let $Z^{\prime} = [z_1^{\prime}, \cdots, z_n^{\prime}]^{\top} \sim N(0, I_n)$. Denote $\Delta_j := \sum_{i \in \cA_1} \mathbb I(z_i^{\prime} > 0) - \sum_{i \in \cA_2} \mathbb I(z_i^{\prime} > 0)$, and $n_{\Delta} = |\cA_1|+|\cA_2|$. Then we have $\Delta_j \sim B(|\cA_1|, 1/2) - B(|\cA_2|, 1/2)$, $\E[\Delta_j] = (|\cA_1| - |\cA_2|)/2$, and
\begin{equation}\label{lem:b2.3}
    \frac{\E[\Delta_j]}{\sqrt{n_{\Delta}}} \geq \frac{- (1+\eta)\sqrt{n\varepsilon \log(n)}}{2\sqrt{n/2-o(n)}} \geq -\sqrt{n\varepsilon \log(n)}
\end{equation}
by \eqref{lem:b2.0.5} and \eqref{lem:b2.0.6}. Here the last inequality comes from Assumption \ref{assump4}. Combining \eqref{lem:b2.1} and \eqref{lem:b2.2}, we have
\begin{equation}\label{lem:b2.4}
    \begin{split}
        \P(D_{+\mu_1,j}^{(0)} > \sqrt{n}) &\geq \sum_{k=0}^{\lfloor |\cA_1|- \sqrt{n}\rfloor} \sum_{\substack{\cB_2 \subseteq \cA_2 \\|\cB_2| = k}}\sum_{\substack{\cB_1 \subseteq \cA_1\\ |\cB_1| > k+\sqrt{n}}} \gamma^{|\cA_1|+|\cA_2|}\\
        &= (2\gamma)^{|\cA_1|+|\cA_2|} \sum_{k=0}^{\lfloor |\cA_1|- \sqrt{n}\rfloor} \sum_{\substack{\cB_2 \subseteq \cA_2 \\|\cB_2| = k}}\sum_{\substack{\cB_1 \subseteq \cA_1\\ |\cB_1| > k+\sqrt{n}}} (\frac{1}{2})^{|\cA_1|+|\cA_2|}\\
        &= (2\gamma)^{|\cA_1|+|\cA_2|}\P(\Delta_j > \sqrt{n}) \\
        &\geq (1 - \frac{8}{Cn})^n \P(\Delta_j > \sqrt{n}) \geq (1-\frac{8}{C})\P(\Delta_j > \sqrt{n}),
    \end{split}
\end{equation}
where the second equation uses the decomposition of $\P(\Delta_j > \sqrt{n})$; the second inequality uses $|\cA_1|+|\cA_2| \leq n$; and the last inequality uses $f(n) = (1-8/(Cn))^n$ is a monotonically increasing function for $n \geq 1.$ Note that
\begin{align*}
        &\P(\Delta_j > \sqrt{n}) = \P\Big(\frac{\Delta_j - \E[\Delta_j]}{\sqrt{n_{\Delta}}/2} > \frac{\sqrt{n} - \E[\Delta_j]}{\sqrt{n_{\Delta}}/2}\Big) \\
        &\geq \bar \Phi \Big( \frac{\sqrt{n} - \E[\Delta_j]}{\sqrt{n_{\Delta}}/2} \Big)  - O(\frac{1}{\sqrt{n}}) \geq \bar \Phi (2(\sqrt{3}+\sqrt{\varepsilon \log(n)})) - O(\frac{1}{\sqrt{n}}),
\end{align*}
where the first inequality uses Berry-Esseen theorem (Lemma \ref{lem:BE}), and the second inequality is from \eqref{lem:b2.0.6} and \eqref{lem:b2.3}. If $\sqrt{\varepsilon \log(n)} \leq \sqrt{3}$, then $\bar \Phi (2(\sqrt{3}+\sqrt{\varepsilon\log(n)})) - O(1/\sqrt{n}) = \Omega(1)$, which gives a constant lower bound for $\P(\Delta_j > \sqrt{n})$. If $\sqrt{\varepsilon \log(n)} > \sqrt{3}$, we have
\begin{equation*}
        \begin{split}
            \bar \Phi (2(\sqrt{3}+\sqrt{\varepsilon \log(n)})) &\geq \bar \Phi (4\sqrt{\varepsilon \log(n)}) \geq \frac{1}{8\sqrt{2\pi \varepsilon \log(n)}} \exp(-8  \varepsilon \log(n)) \\
            &= \frac{1}{8\sqrt{2\pi\varepsilon \log (n)} n^{8 \varepsilon}} \geq \frac{17}{n^{10\varepsilon}},
        \end{split}
\end{equation*}
for sufficiently large $n.$
Here the second inequality uses $\bar\Phi(x) \geq \Phi'(x)/(2x)$ for $x \geq 1$. Combining both situations, we have
\begin{equation}\label{lem:b2.5}
    \P(\Delta_j > \sqrt{n}) \geq \frac{17}{n^{10\varepsilon}} - \frac{C_{\text{BE}}}{\sqrt{n/3}} \geq \frac{16}{n^{10\varepsilon}}
\end{equation}
for sufficiently large $n$. Combining \eqref{lem:b2.4} and \eqref{lem:b2.5}, we have
\begin{equation*}
    \P(D_{+\mu_1,j}^{(0)} > \sqrt{n}) \geq (1-\frac{8}{C})\frac{16}{ n^{10\varepsilon}} \geq \frac{8}{n^{10\varepsilon}}
\end{equation*}
for $C\geq 16$. It remains to prove
\begin{equation*}
    \P(d_{\mu,j}^{(0)} \geq  \min \{c_{+\mu_1},c_{-\mu_1}\} - 2n_{\pm \mu_1} - \sqrt{n}) \leq \frac{1}{n^{10\varepsilon}}, \mu \in \{\pm\mu_1\}.
\end{equation*}
Without loss of generality, below we prove it for $\mu=+\mu_1$. According to condition \ref{E_main_3} in Lemma \ref{lem:train_set_good_run}, we have
\begin{equation}\label{ineq:lower_bound_of_f_n}
    \min \{c_{+\mu_1},c_{-\mu_1}\} - 2n_{\pm \mu_1} - \sqrt{n} \geq (\frac{1}{4} - 5\eta)n - 6 \sqrt{n\varepsilon\log(n)} - \sqrt{n} \geq (\frac{1}{5} - \frac{5}{C})n \geq \frac{n}{6}
\end{equation}
for $C\geq 150$ and sufficiently large $n$. Here the second inequality is from Assumption \ref{assump4}. Thus it suffices to prove $\P(d_{+\mu_1,j}^{(0)} \geq  n/6) \leq n^{-10\varepsilon}$.
Note that
\begin{equation*}
    d_{+\mu_1,j}^{(0)} = \sum_{i \in \cC_{+\mu_1}} \mathbb I (z_i>0) - \sum_{i \in \cN_{+\mu_1}} \mathbb I (z_i>0).
\end{equation*}
Denote 
\begin{equation*}
    \Delta_j^{\prime} := \sum_{i \in \cC_{+\mu_1}} \mathbb I (z_i^{\prime}>0) - \sum_{i \in \cN_{+\mu_1}} \mathbb I (z_i^{\prime}>0).
\end{equation*}
Following the same proof procedure for the anti-concentration result of $D_{+\mu_1,j}^{(0)}$, we have
\begin{equation*}
    \P(d_{+\mu_1,j}^{(0)} \geq \frac{n}{6} ) \leq (2\gamma_2)^{c_{+\mu_1}+n_{+\mu_1}}\P(\Delta_j^{\prime} \geq \frac{n}{6}),
\end{equation*}
where $\gamma_2 = 1/2 + 4/(Cn)$. According to condition \ref{E_main_3} in Lemma \ref{lem:train_set_good_run}, we have $c_{\pmu} - n_{\pmu} \leq (1/4-2\eta)n + 2 \sqrt{n\varepsilon \log(n)}$. It yields that
\begin{equation*}
    \E[\Delta_j^{\prime}] = \frac{c_{\pmu} - n_{\pmu}}{2}\leq (1/8 - \eta)n + \sqrt{n\varepsilon\log(n)} \leq n/7.
\end{equation*}
Applying Hoeffding's inequality, we have
\begin{equation*}
        \P(\Delta_j^{\prime} \geq n/6) \leq \P(\Delta_j^{\prime}-\E[\Delta_j^{\prime}] \geq n/42)  \leq \exp(-\Omega(n)).
\end{equation*}
Combining the inequalities above, we have
\begin{equation}\label{ineq:B2_1}
    \P(d_{+\mu_1,j}^{(0)} \geq  n/6 ) \leq (1 + \frac{8}{Cn})^{c_{+\mu_1}+n_{+\mu_1}}\P(\Delta_j^{\prime} \geq n/6) =\exp(-\Omega(n)) \leq \frac{1}{n^{10\varepsilon}},
\end{equation}
where the equation uses $(1 + 8/(Cn))^{c_{+\mu_1}+n_{+\mu_1}} \leq (1 + 8/(Cn))^n \leq \exp(8/C).$ 
Now we have completed the proof for \ref{B_main_2}.

\textbf{Proof of condition \ref{B_main_3}: } Without loss of generality, we only prove the results for $\cJ_{+\mu_1,\mP}^{20\varepsilon}\cup \cJ_{-\mu_1,\mP}^{20\varepsilon}$. By Berry-Essen theorem, we have
\begin{equation*}
    \begin{split}
        \P(|\Delta_j| \leq n^{1/2-20\varepsilon}) &= \P\big(\frac{\Delta_j - \E[\Delta_j]}{\sqrt{n_{\Delta}}/2}  \in [-\frac{\E[\Delta_j]}{\sqrt{n_{\Delta}}/2}-\frac{2}{n^{20\varepsilon}},-\frac{\E[\Delta_j]}{\sqrt{n_{\Delta}}/2}+\frac{2}{n^{20\varepsilon}} ]\big) \\
        &\leq 2[\Phi(\frac{2}{n^{20\varepsilon}}) - \Phi(0)] + O(\frac{1}{\sqrt{n}}) \leq 4n^{-20\varepsilon},
    \end{split}
\end{equation*}
where the first inequality uses $\Phi(b)-\Phi(a) \leq 2(\Phi((b-a)/2)-\Phi(0)), b \geq a$; the second inequality uses $\Phi(x)-\Phi(0) \leq \Phi'(0)x, x\geq 0$ and $20\varepsilon < 1/2$. It yields that
\begin{equation*}
    \P(|D_{+\mu_1,j}^{(0)}| \leq n^{1/2-20\varepsilon}) \leq 2\P(|\Delta_j| \leq n^{1/2-20\varepsilon}) \leq 8n^{-20\varepsilon},
\end{equation*}
where the first inequality is from Lemma \ref{function_of_joint_dependent_gaussian}.
Combined with \eqref{ineq:lower_bound_of_f_n} and \eqref{ineq:B2_1}, we have
\begin{equation*}
    \begin{split}
        \P&(|D_{\nu,j}^{(0)}| > n^{1/2-20\varepsilon}, d_{\nu,j}^{(0)} <  \min \{c_{\nu}, c_{-\nu}\} - 2n_{\pm \nu} - \sqrt{n}, \nu \in \{\pm \mu_1\}) \\
        &\geq \P(|D_{\nu,j}^{(0)}| > n^{1/2-20\varepsilon}, d_{\nu,j}^{(0)} <  n/6, \nu \in \{\pm \mu_1\})\\
        &\geq 1 - 8n^{-20\varepsilon} - 2\exp(-\Omega(n)) \geq 1 - 9n^{-20\varepsilon},
    \end{split}
\end{equation*}
where the second inequality uses $D_{\nu,j}^{(0)}=-D_{-\nu,j}^{(0)}$ and $\P(\cap_{i=1}^n A_i) = 1 - \P(\cup_{i=1}^n A_i^c) \geq 1 - \sum_{i=1}^n \P(A_i^c)$. Note that given $\{a_j\}$ and $\{x_i\}$, $|\cJ_{\nu,\mP} \cup \cJ_{-\nu,\mP}|$ is the summation of i.i.d Bernoulli random variables with expectation larger than $1 - 9n^{-20\varepsilon}$. Applying Hoeffding's inequality, we obtain
\begin{equation*}
    \begin{split}
        \P&(|\cJ_{+\mu_1,\mP}^{20\varepsilon} \cup \cJ_{-\mu_1,\mP}^{20\varepsilon}| < |\cJ_{\mP}|(1 - 10n^{-20\varepsilon})) \\
        &\leq \P(|\cJ_{+\mu_1,\mP}^{20\varepsilon}\cup \cJ_{-\mu_1,\mP}^{20\varepsilon}| - \E[|\cJ_{+\mu_1,\mP}^{20\varepsilon}\cup \cJ_{-\mu_1,\mP}^{20\varepsilon}|] < - |\cJ_{\mP}|n^{-20\varepsilon})\\
        &\leq \exp(-2|\cJ_{\mP}|n^{-40\varepsilon}) \leq n^{-\varepsilon},
    \end{split}
\end{equation*}
where the first inequality uses $\E[|\cJ_{+\mu_1,\mP}^{20\varepsilon}\cup \cJ_{-\mu_1,\mP}|] \geq |\cJ_{\mP}^{20\varepsilon}|(1 - 9n^{-20\varepsilon})$ and the last inequality is from Assumption \ref{assump6} and $40 \varepsilon < 0.01$.

\textbf{Proof of condition \ref{B_main_4}: } Lastly we show that \ref{B_main_4} also holds with probability at least $1 - O(n^{-\varepsilon})$. Without loss of generality, we only prove it for $\cJ_{+\mu_1, \mP}^{\kappa}$. Referring back to the definition of $\cJ_{+\mu_1, \mP}^{\kappa}$ in equation \eqref{def_of_Jnu}, it is crucial to note that it solely imposes upper bounds on $d_{-\mu_1,j}^{(0)}$. Consequently, the average of  $d_{-\mu_1,j}^{(0)}$ in $\cJ_{+\mu_1, \mP}^{\kappa}$ is no more than the average of $d_{-\mu_1,j}^{(0)}$ in $\cJ_{\mP}$, which imposes no constraints on $d_{-\mu_1,j}^{(0)}$. Armed with this understanding, when $|\cJ_{+\mu_1,\mP}^{\kappa}| > 0$, we have that with probability $1$,
\begin{equation*}
    \frac{1}{|\cJ_{+\mu_1,\mP}^{\kappa}|}\sum_{j \in \cJ_{+\mu_1,\mP}^{\kappa}}(c_{+\mu_1} - n_{\pmu}- d_{-\mu_1,j}^{(0)}) \geq \frac{1}{|\cJ_{\mP}|}\sum_{j \in \cJ_{\mP}} (c_{+\mu_1} - n_{\pmu}- d_{-\mu_1,j}^{(0)}).
\end{equation*}
Thus it suffices to show that
\begin{equation}\label{ineq:concent_of_c-d_on_Jp}
   \frac{1}{|\cJ_{\mP}|}\sum_{j \in \cJ_{\mP}} (c_{+\mu_1}-n_{\pmu}- d_{-\mu_1,j}^{(0)}) \geq \frac{n}{10} 
\end{equation}
with probability at least $1 - O(\delta)$. Note that given the training data $X$, $\{d_{-\mu_1,j}^{(0)}\}_{j=1}^m$ are i.i.d random variables with $\E[d_{-\mu_1,j}^{(0)}] = (c_{-\mu_1}-n_{-\mu_1})/2$, which comes from the symmetry of the distribution of $w_j^{(0)}$. Then we have
\begin{equation}\label{ineq:lower_bound_cmu-d-mu}
    \E[c_{+\mu_1}-n_{\pmu}- d_{-\mu_1,j}^{(0)}] = c_{+\mu_1} -n_{\pmu}  (c_{-\mu_1}-n_{-\mu_1})/2 \geq (\frac{1}{8} - 5\eta)n - 5\sqrt{n\varepsilon\log(n)} \geq \frac{n}{9}.
\end{equation}
Here the first inequality uses \ref{E_main_3} in Lemma \ref{lem:train_set_good_run} and the second inequality uses Assumption \ref{assump4}.
Applying Hoeffding's inequality, we obtain
\begin{equation*}
    \begin{split}
        &\P\Big( \frac{1}{|\cJ_{\mP}|}\sum_{j \in \cJ_{\mP}} (c_{+\mu_1}-n_{\pmu}- d_{-\mu_1,j}^{(0)}) < \frac{n}{10} \Big) \\
        &= \P\Big(\sum_{j \in \cJ_{\mP}} (d_{-\mu_1,j}^{(0)} - \E[d_{-\mu_1,j}^{(0)}])> \big(c_{+\mu_1}-n_{\pmu} - \frac{n}{10} - \E[d_{-\mu_1,j}^{(0)}]\big)|\cJ_{\mP}| \Big)\\
        &\leq \P\Big(\sum_{j \in \cJ_{\mP}} (d_{-\mu_1,j}^{(0)} - \E[d_{-\mu_1,j}^{(0)}])> \frac{n}{90}|\cJ_{\mP}| \Big) \leq \exp\Big(- \frac{ n^2|\cJ_{\mP}|}{4050(c_{-\mu_1}+n_{-\mu_1})^2} \Big) \leq \delta,
    \end{split}
\end{equation*}
where the first inequality uses \eqref{ineq:lower_bound_cmu-d-mu}, the second inequality uses Hoeffding's inequality and the bounds of $d_{-\mu_1,j}^{(0)}$, i.e.  $-n_{-\mu_1} \leq d_{-\mu_1,j}^{(0)} \leq c_{-\mu_1}$, and the last inequality uses Assumption \ref{assump6}. It proves \eqref{ineq:concent_of_c-d_on_Jp}.
\end{proof}

\begin{remark}
In the proof of \ref{B_main_2}, note that when $\Sigma = I_n$, $z_i$ are independent with each other. Then \eqref{key_B2} can be proved by applying Hoeffding's inequality. In our setting, $\Sigma$ is close to the identity matrix, which means that $\{z_i\}$ are weakly dependent and inspires us to prove similar results.
\end{remark}

\subsubsection{Proof of the Probability bound of the ``Good run" event}\label{appendix:section:proof-of-good-interaction-whp}
Combining the probability lower bound parts of Lemma \ref{lem:train_set_good_run},\ref{lem:inital_weight_matrix} and \ref{lem:initialization_good_run}, we have
\begin{equation*}
    \begin{split}
        \P&((a,W^{(0)},X) \in \cG_{\text{good}}) \\
        &\geq \P(a \in \cG_{A}, X \in \cG_{\text{data}}, \text{\ref{B_main_1}-\ref{B_main_4} are satisfied}) - \P(W^{(0)} \notin \cG_{W}) \\
        &\geq \P(\text{\ref{B_main_1}-\ref{B_main_4} are satisfied} \mid a \in \cG_{A}, X \in \cG_{\text{data}})\P(a \in \cG_{A}, X \in \cG_{\text{data}}) - O(n^{-\varepsilon})\\
        &\geq (1 - O(n^{-\varepsilon}))(1 - O(n^{-\varepsilon}))  - O(n^{-\varepsilon}) = 1 - O(n^{-\varepsilon}),
    \end{split}
\end{equation*}
as desired.

\subsection{Trajectory Analysis of the Neurons}\label{sec:traj_analysis}

Let \(t \ge 0\) be an arbitrary step. Denote $z_i^{(t)} := y_if(x_i;W^{(t)})$, and $h_i^{(t)} := g_i^{(t)} - 1/2$. Then we can decompose \eqref{eq:GD_update_newloss} as
\begin{equation}\label{eq:GD_update_newloss_decom}
\textstyle
    w_j^{(t+1)} -  w_j^{(t)} = \frac{\alpha a_j}{2n}\sum_{i=1}^n \phi^{\prime}(\langle  w_j^{(t)},  x_i \rangle)y_i  x_i + \frac{\alpha a_j}{n}\sum_{i=1}^n h_i^{(t)}\phi^{\prime}(\langle  w_j^{(t)},  x_i \rangle)y_i  x_i.
\end{equation}
\begin{remark}
     When $|z_i^{(t)}|$ is sufficiently small, we can use $1/2$ as an approximation for the negative derivative of the logistic loss by first-order Taylor's expansion and we will show that the training dynamics is nearly the same in the first $O(p)$ steps.
\end{remark}

\begin{restatable}{lemma}{LemmaDerivativeBound}\label{lem:bound_of_diff_log_unhinge}
    Suppose that Assumptions \ref{assump1}-\ref{assump6} hold. Under a good run, for $0 \leq t \leq 1/(\sqrt{n} p \alpha) - 2$, we have
     \(
         \max_{i\in [n]}|h_i^{(t)}| \leq 2/n^{3/2}.
     \)
\end{restatable}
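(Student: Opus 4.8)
The plan is to reduce the claim to a uniform bound on the network outputs and then control those outputs by a crude, \emph{trajectory-independent} bound on the per-step growth of each pre-activation. Since $g_i^{(t)} = -\ell'(z_i^{(t)}) = \sigma(-z_i^{(t)})$ where $\sigma$ is the logistic sigmoid and $z_i^{(t)} = y_i f(x_i;W^{(t)})$, and $\sigma$ is $\tfrac14$-Lipschitz with $\sigma(0)=\tfrac12$, we have $|h_i^{(t)}| = |\sigma(-z_i^{(t)})-\sigma(0)| \le \tfrac14|z_i^{(t)}| = \tfrac14|f(x_i;W^{(t)})|$. So it suffices to prove $|f(x_i;W^{(t)})| \le 8 n^{-3/2}$ for all $i\in[n]$ and all $t$ in the stated range.

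The key point is that the per-step change of any pre-activation $\langle w_j^{(t)},x_k\rangle$ can be bounded \emph{without} any inductive control on the trajectory, because $0 < g_i^{(t)} < 1$ and $\phi'(\cdot)\in\{0,1\}$ always hold. Taking the inner product of the GD update \eqref{eq:GD_update_newloss} with $x_k$ gives $\langle w_j^{(t+1)}-w_j^{(t)},x_k\rangle = \tfrac{\alpha a_j}{n}\sum_{i=1}^n g_i^{(t)}\phi'(\langle w_j^{(t)},x_i\rangle)y_i\langle x_i,x_k\rangle$, hence $|\langle w_j^{(t+1)}-w_j^{(t)},x_k\rangle| \le \tfrac{\alpha|a_j|}{n}\sum_{i=1}^n|\langle x_i,x_k\rangle|$. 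Using $\|x_k\|^2\le 2p$ (from condition \ref{E_main_1}) together with $|\langle x_i,x_k\rangle|\le \tfrac{2}{Cn^2}\|x_i\|\|x_k\| \le \tfrac{4p}{Cn^2}$ for $i\ne k$ (Corollary~\ref{corollary:near-orthogonality-of-train}), the sum is at most $2p + \tfrac{4p}{Cn}\le 3p$, so $|\langle w_j^{(t+1)}-w_j^{(t)},x_k\rangle|\le \tfrac{3\alpha p}{n\sqrt m}$ since $|a_j|=m^{-1/2}$. Telescoping over steps yields $|\langle w_j^{(t)},x_k\rangle|\le |\langle w_j^{(0)},x_k\rangle| + \tfrac{3t\alpha p}{n\sqrt m}$ for every $j,k$.

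Now bound $f$ by summing the pre-activations against $|a_j|=m^{-1/2}$: since $\phi$ is $1$-Lipschitz with $\phi(0)=0$, $|f(x_i;W^{(t)})| \le \tfrac{1}{\sqrt m}\sum_{j=1}^m|\langle w_j^{(t)},x_i\rangle| \le \tfrac{1}{\sqrt m}\sum_{j=1}^m|\langle w_j^{(0)},x_i\rangle| + \tfrac{3t\alpha p}{n}$. By Cauchy--Schwarz and condition \ref{D_main_1}, $\tfrac{1}{\sqrt m}\sum_j|\langle w_j^{(0)},x_i\rangle| \le \|x_i\|\,\|W^{(0)}\|_F \le \sqrt{2p}\cdot\sqrt{\tfrac32}\,\omega_{\text{init}}\sqrt{mp} \le 2\omega_{\text{init}}p\sqrt m$, which by Assumption~\ref{assump5} is at most $\tfrac{2\alpha\|\mu\|^2}{nm}$, and then by Assumptions~\ref{assump3} and \ref{assump2} is $O(n^{-4})$ --- negligible against $n^{-3/2}$. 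For $t\le \tfrac{1}{\sqrt n\,p\alpha}$ the drift term satisfies $\tfrac{3t\alpha p}{n}\le \tfrac{3}{n^{3/2}}$. Combining, $|f(x_i;W^{(t)})|\le \tfrac{3}{n^{3/2}} + O(n^{-4}) \le \tfrac{4}{n^{3/2}}$ for $n$ large, so $|h_i^{(t)}|\le \tfrac14\cdot\tfrac{4}{n^{3/2}} = n^{-3/2}\le 2n^{-3/2}$, as claimed.

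I do not expect a genuine obstacle: the whole argument is elementary bookkeeping using only ``good run'' properties already established (Corollary~\ref{corollary:near-orthogonality-of-train} and conditions \ref{E_main_1}, \ref{D_main_1}) plus the trivial fact $0<g_i^{(t)}<1$. The only points needing care are that the cross-correlations $\sum_{i\ne k}|\langle x_i,x_k\rangle|$ are $O(p/n)$ rather than $O(p)$ --- precisely the role of high-dimensional near-orthogonality --- and that the initialization contribution to $f$ is pushed below $n^{-3/2}$, which is exactly what the scale separation in Assumption~\ref{assump5} buys; the upper bound on $t$ is what keeps the accumulated drift below $3n^{-3/2}$ (the extra ``$-2$'' in the hypothesis only provides slack for applying the bound at times $t+1$, $t+2$ elsewhere).
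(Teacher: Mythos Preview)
Your proposal is correct and follows the same approach as the paper---bound the per-step change in each pre-activation by $O(\alpha p/(n\sqrt m))$, telescope, control the initialization contribution via $\|W^{(0)}\|_F$ and Assumption~\ref{assump5}, then convert to a bound on $|h_i^{(t)}|$ via the Lipschitz property of the sigmoid. The paper frames this as an induction on $t$, using the inductive hypothesis only to conclude $\max_i g_i^{(\tau)}\le 1$; your observation that $g_i^{(t)}=-\ell'(z_i^{(t)})\in(0,1)$ holds unconditionally lets you dispense with the induction entirely, which is a small but genuine simplification.
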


\begin{restatable}{lemma}{LemmaSufficientIncrease}\label{lem:4.6}
    Suppose that Assumptions \ref{assump1}-\ref{assump6} hold. Under a good run, for $0 \leq t \leq 1/(\sqrt{n} p \alpha) - 2$, we have that for each $k \in [n]$,
     \begin{align}
         &\Big| \langle w_j^{(t+1)} - w_j^{(t)}, x_k \rangle - \frac{\alpha a_j}{2n}\big[ y_k \phi^{\prime}(\langle w_j^{(t)}, x_k\rangle)p + y_{\bar x_k} D_{\bar x_k, j}^{(t)}\|\mu\|^2 \big] \Big| \nonumber
         \\
         &\qquad\qquad\qquad \leq \frac{4\alpha}{n^{5/2}\sqrt{m}} \big[ \phi^{\prime}(\langle  w_j^{(t)},  x_k \rangle)p + \frac{C_n n^{1.99}\|\mu\|^2}{3C}\big], \,\, \mbox{and}
         \label{key_property1_4.6}
         \\&
         \Big| \langle w_j^{(t+1)} - w_j^{(t)}, \nu \rangle - \frac{\alpha a_j}{2n}y_{\nu} D_{\nu, j}^{(t)}\|\mu\|^2 \Big| \leq  \frac{5\alpha}{n^{3/2}\sqrt{m}}\|\mu\|^2.
         \label{key_property2_4.6}
     \end{align}
     
where $C_n := 10\sqrt{\log(n)}$, $\bar x_k \in \centroids$ is defined as the cluster mean for sample $(x_k,y_k)$, and $y_{\nu}$ is defined as the clean label for cluster centered at $\nu$ (i.e. $y_{\nu} = 1$ for $\nu \in \{\pm \mu_1\}$, $y_{\nu} = -1$ for $\nu \in \{\pm \mu_2\}$).
\end{restatable}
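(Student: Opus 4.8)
The plan is to pair the exact one-step decomposition \eqref{eq:GD_update_newloss_decom} with $x_k$ and with a cluster mean $\nu$, then show that the ``$h$-term'' is negligible on the stated range of $t$ while the leading ``$\tfrac12$-term'' reproduces the claimed expressions up to errors controlled by the data conditions \ref{E_main_1}--\ref{E_main_2} and the SNR/dimension assumptions. The only input about $t$ is the uniform bound $\max_i|h_i^{(t)}|\le 2/n^{3/2}$ of \cref{lem:bound_of_diff_log_unhinge}, which holds for $0\le t\le 1/(\sqrt n p\alpha)-2$, so this is a one-step estimate and needs no induction.

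For \eqref{key_property1_4.6}, the $\tfrac12$-sum is $\frac{\alpha a_j}{2n}\sum_i\phi'(\langle w_j^{(t)},x_i\rangle)y_i\langle x_i,x_k\rangle$. Isolating $i=k$, \ref{E_main_1} gives $\|x_k\|^2=p+\Theta(\|\mu\|^2)+O(C_n\sqrt p)$, hence the term $y_k\phi'(\langle w_j^{(t)},x_k\rangle)p$ up to an error $O(\phi'(\langle w_j^{(t)},x_k\rangle)(\|\mu\|^2+C_n\sqrt p))$. For $i\ne k$, replace $\langle x_i,x_k\rangle$ by $\langle\bar x_i,\bar x_k\rangle$ at cost $\le C_n\sqrt p$ per index (Condition \ref{E_main_2}), i.e.\ $\le nC_n\sqrt p$ in total; then group the survivors by cluster, using $\langle\bar x_i,\bar x_k\rangle\in\{\|\mu\|^2,-\|\mu\|^2,0\}$ and the identity $\sum_{i\in\calC{\nu}{j}{t}}y_i+\sum_{i\in\calN{\nu}{j}{t}}y_i=y_\nu d_{\nu,j}^{(t)}$ (clean labels in cluster $\nu$ equal $y_\nu$, flipped ones equal $-y_\nu$), so that — using $y_{-\bar x_k}=y_{\bar x_k}$ — the same-cluster and opposite-cluster contributions collapse to $y_{\bar x_k}\|\mu\|^2(d_{\bar x_k,j}^{(t)}-d_{-\bar x_k,j}^{(t)})=y_{\bar x_k}\|\mu\|^2 D_{\bar x_k,j}^{(t)}$, with a self-exclusion correction of size $O(\|\mu\|^2)$ from removing $i=k$ from its own cluster count. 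For the $h$-sum, bound $|\langle x_k,x_k\rangle|\le 2p$ and $|\langle x_i,x_k\rangle|\le C_n\sqrt p+\|\mu\|^2$ ($i\ne k$) via \ref{E_main_1}--\ref{E_main_2} together with $|h_i^{(t)}|\le 2/n^{3/2}$, giving a contribution $\le\frac{2\alpha}{n^{5/2}\sqrt m}(2\phi'(\langle w_j^{(t)},x_k\rangle)p+n(C_n\sqrt p+\|\mu\|^2))$. Using $|a_j|=m^{-1/2}$, it remains to check each error lands inside $\frac{4\alpha}{n^{5/2}\sqrt m}(\phi'(\langle w_j^{(t)},x_k\rangle)p+\frac{C_n n^{1.99}\|\mu\|^2}{3C})$: the $p$-scale pieces are fine since $\|\mu\|^2+C_n\sqrt p\ll p/n^{3/2}$ (by \ref{assump2}, plus $p\gg n^3\log n$, which follows by combining \ref{assump1} and \ref{assump2}), and the $\|\mu\|^2$-scale pieces, the largest being $nC_n\sqrt p$, are fine since $\|\mu\|^2\ge Cn^{0.51}\sqrt p$ is exactly \ref{assump1} (with \ref{assump4} absorbing $\eta$-dependent constants in the $n_\nu$'s).

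For \eqref{key_property2_4.6}, pairing with $\nu$ and writing $\langle x_i,\nu\rangle=\langle\bar x_i,\nu\rangle+\langle\xi_i,\nu\rangle$ with $\xi_i=x_i-\bar x_i$, the same clustering of the mean parts produces exactly $\frac{\alpha a_j}{2n}y_\nu\|\mu\|^2 D_{\nu,j}^{(t)}$ (there is now no diagonal or self-exclusion issue, $\nu$ being a fixed center), the deviation parts contribute $\le\frac{\alpha}{2n\sqrt m}\cdot nC_n\|\mu\|$ by \ref{E_main_1}, and the $h$-sum contributes $\le\frac{\alpha}{n\sqrt m}\cdot n\cdot\frac{2}{n^{3/2}}\cdot 2\|\mu\|^2$ using $|\langle x_i,\nu\rangle|\le 2\|\mu\|^2$; both are $\le\frac{5\alpha}{n^{3/2}\sqrt m}\|\mu\|^2$ once one uses $\|\mu\|\gg C_n n^{3/2}$, again a consequence of \ref{assump1}--\ref{assump2}.

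The genuinely delicate point is the \emph{accumulation} of near-orthogonality errors rather than any single bound: each of the $n-1$ off-diagonal terms carries an $O(\sqrt p)$ error, so the aggregate is $O(n\sqrt p)$, and for this to be dominated by the ``signal'' $\|\mu\|^2 D_{\bar x_k,j}^{(t)}$ one must invoke \ref{assump1} in the sharp form $\|\mu\|^2\ge Cn^{0.51}\sqrt p$ — which is precisely why the exponent $0.51$ appears (equivalently the $n^{1.99}$ in the statement, since $n^{5/2}/n^{1.99}=n^{0.51}$). A second subtlety is that the $h$-term must be negligible at the $\|\mu\|^2$-scale, not merely at the $p$-scale, which is exactly why the tighter-than-constant bound $\max_i|h_i^{(t)}|\le 2/n^{3/2}$ from \cref{lem:bound_of_diff_log_unhinge} — as opposed to the $O(1)$ bounds that suffice in prior linearly-separable analyses — is essential here.
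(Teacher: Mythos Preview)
Your proposal is correct and follows essentially the same route as the paper: pair the decomposition \eqref{eq:GD_update_newloss_decom} with $x_k$ (resp.\ $\nu$), control the $h$-term via \cref{lem:bound_of_diff_log_unhinge}, replace $\langle x_i,x_k\rangle$ by $\langle\bar x_i,\bar x_k\rangle$ using \ref{E_main_1}--\ref{E_main_2}, and collapse the cluster sums into $y_{\bar x_k}\|\mu\|^2 D_{\bar x_k,j}^{(t)}$, with the aggregate $O(nC_n\sqrt p)$ error absorbed by \ref{assump1} exactly as you note. Your identification of the two subtleties---the accumulated off-diagonal error forcing the $n^{0.51}$/$n^{1.99}$ scaling, and the need for $|h_i^{(t)}|\le 2/n^{3/2}$ rather than $O(1)$---matches the paper's implicit reasoning.
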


Taking a closer look at \eqref{key_property1_4.6}, we see that if $a_j y_k > 0$, and $x_k$ activates neuron $w_j$ at time $s$, then $x_k$ will activate neuron $w_j^{(t)}$ for any $t \in [s, 1/(\sqrt{n}p\alpha) - 2]$. Moreover, if $a_j y_k < 0$, and $x_k$ activates neuron $w_j$ at time $s$, then $x_k$ will not activate neuron $w_j$ at time $s+1$, which implies that there is an upper bound for the inner product $\langle w_j^{(t)}, x_k \rangle$. These observations are stated as the corollary below:

\begin{restatable}{corollary}{CorollaryWellAlignedActivations}\label{cor:key_properties}
    Suppose that Assumptions \ref{assump1}-\ref{assump6} hold. Under a good run, for any pair $(j,k) \in [m] \times [n]$, the following is true:
    \begin{enumerate}[label=(E\arabic*)]
    \item\label{F_main_1} When $a_j y_k > 0$, if there exists some $0 \leq s < 1/(\sqrt{n} p \alpha) - 2$ such that $\langle w_j^{(s)}, x_k \rangle > 0$, then for any $s \leq t \leq 1/(\sqrt{n} p \alpha) - 2$, we have $\langle w_j^{(t)}, x_k \rangle > 0$.
    \item\label{F_main_2} When $a_j y_k < 0$, for any $0 \leq t \leq 1/(\sqrt{n} p \alpha) - 2$, we have   that
    \(
        \langle w_j^{(t)}, x_k \rangle \leq \frac{\alpha}{\sqrt{m}} \|\mu\|^2.
    \)
    \item\label{F_main_3} When $a_j y_k < 0$, for any $0 \leq t \leq 1/(\sqrt{n} p \alpha) - 3$, we have that $\langle w_j^{(t)}, x_k \rangle > 0$ implies $\langle w_j^{(t+1)}, x_k \rangle < 0.$
    \end{enumerate}
\end{restatable}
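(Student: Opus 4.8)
The plan is to derive all of \ref{F_main_1}--\ref{F_main_3} from the one-step estimate \eqref{key_property1_4.6} of \cref{lem:4.6} by a short induction on $t$; the whole point is to track which of the three terms on the right-hand side of \eqref{key_property1_4.6} dominates. First I would record the crude bounds to be used repeatedly: $|a_j|=1/\sqrt m$; $|D_{\bar x_k,j}^{(t)}|\le n$ (it is a difference of signed counts over the at most $n$ samples in the clusters $\pm\bar x_k$), so the ``cross-cluster'' term obeys $\bigl|\tfrac{\alpha a_j}{2n}y_{\bar x_k}D_{\bar x_k,j}^{(t)}\|\mu\|^2\bigr|\le\tfrac{\alpha\|\mu\|^2}{2\sqrt m}$, while the ``signal'' term $\tfrac{\alpha a_jy_k}{2n}p$ has magnitude exactly $\tfrac{\alpha p}{2n\sqrt m}$; and by Assumptions~\ref{assump1}--\ref{assump2}, $\|\mu\|^2\le p/(Cn^2)$ and $C_n=10\sqrt{\log n}=o(n^{0.01})$, so the error bracket in \eqref{key_property1_4.6} is at most $2p$ when $\phi'=1$, and the whole error term is $\le\tfrac{\alpha\|\mu\|^2}{2\sqrt m}$ when $\phi'=0$. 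Writing $T:=1/(\sqrt n p\alpha)-2$, for $0\le t\le T$ the estimate \eqref{key_property1_4.6} then reduces to: if $\langle w_j^{(t)},x_k\rangle>0$ then $\langle w_j^{(t+1)}-w_j^{(t)},x_k\rangle=\tfrac{\alpha a_jy_k}{2n}p+E$ with $|E|\le\tfrac{\alpha\|\mu\|^2}{2\sqrt m}+\tfrac{8\alpha p}{n^{5/2}\sqrt m}$; and if $\langle w_j^{(t)},x_k\rangle\le0$ then $\bigl|\langle w_j^{(t+1)}-w_j^{(t)},x_k\rangle\bigr|\le\tfrac{\alpha\|\mu\|^2}{\sqrt m}$.

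For \ref{F_main_1} I would take $a_jy_k>0$, so the signal term equals $+\tfrac{\alpha p}{2n\sqrt m}$, and induct on $t\in[s,T]$ with base case the hypothesis $\langle w_j^{(s)},x_k\rangle>0$. If $\langle w_j^{(t)},x_k\rangle>0$, the first reduced inequality gives $\langle w_j^{(t+1)},x_k\rangle\ge\langle w_j^{(t)},x_k\rangle+\tfrac{\alpha p}{2n\sqrt m}\bigl(1-\tfrac{n\|\mu\|^2}{p}-\tfrac{16}{n^{3/2}}\bigr)\ge\langle w_j^{(t)},x_k\rangle+\tfrac{\alpha p}{4n\sqrt m}>0$, using $n\|\mu\|^2/p\le1/(Cn)$ from Assumption~\ref{assump2}; this closes the induction.

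For \ref{F_main_2} I would take $a_jy_k<0$ and induct on $t\in[0,T]$ to prove $\langle w_j^{(t)},x_k\rangle\le\tfrac{\alpha\|\mu\|^2}{\sqrt m}$. The base case $t=0$ I would get from Cauchy--Schwarz with $\|w_j^{(0)}\|\le\|W^{(0)}\|_F\le\omega_{\text{init}}\sqrt{\tfrac32 mp}$ (condition~\ref{D_main_1}), $\|x_k\|^2\le2p$ (condition~\ref{E_main_1}), and $\omega_{\text{init}}nm^{3/2}p\le\alpha\|\mu\|^2$ (Assumption~\ref{assump5}), giving $|\langle w_j^{(0)},x_k\rangle|\le\sqrt3\,\omega_{\text{init}}p\sqrt m\le\tfrac{\alpha\|\mu\|^2}{\sqrt m}$. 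For the inductive step I would split on the sign of $\langle w_j^{(t)},x_k\rangle$: if it is $\le0$ it can grow by at most $\tfrac{\alpha\|\mu\|^2}{\sqrt m}$ and hence stays below the threshold; if it lies in $(0,\tfrac{\alpha\|\mu\|^2}{\sqrt m}]$ then the signal term is $-\tfrac{\alpha p}{2n\sqrt m}$ and the first reduced inequality gives $\langle w_j^{(t+1)}-w_j^{(t)},x_k\rangle\le-\tfrac{\alpha p}{2n\sqrt m}+\tfrac{\alpha\|\mu\|^2}{2\sqrt m}+\tfrac{8\alpha p}{n^{5/2}\sqrt m}<0$ (again since $p\ge Cn^2\|\mu\|^2$), so the inner product strictly decreases. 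Statement \ref{F_main_3} then falls out with no new work: under its hypotheses, $\langle w_j^{(t)},x_k\rangle\le\tfrac{\alpha\|\mu\|^2}{\sqrt m}$ by \ref{F_main_2}, while the last display shows the decrease is at least $\tfrac{\alpha p}{2n\sqrt m}-\tfrac{\alpha\|\mu\|^2}{2\sqrt m}-\tfrac{8\alpha p}{n^{5/2}\sqrt m}\ge\tfrac{\alpha p}{4n\sqrt m}\ge\tfrac{\alpha\|\mu\|^2}{\sqrt m}$ (since $p/(4n)\ge Cn\|\mu\|^2/4\ge\|\mu\|^2$), forcing $\langle w_j^{(t+1)},x_k\rangle<0$.

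The main obstacle is just the bookkeeping of the inequalities: one must verify that the signal contribution $\tfrac{\alpha p}{2n\sqrt m}$ genuinely dominates both the cross-cluster contribution $\tfrac{\alpha\|\mu\|^2}{2\sqrt m}$ and the accumulated error $\tfrac{8\alpha p}{n^{5/2}\sqrt m}$, which is precisely the role of the high-dimensionality Assumption~\ref{assump2} ($p\ge Cn^2\|\mu\|^2$); separately, the small-initialization Assumption~\ref{assump5} is needed only to seat the base case of \ref{F_main_2} below the threshold $\tfrac{\alpha\|\mu\|^2}{\sqrt m}$, and the small-step Assumption~\ref{assump3} is what makes \cref{lem:4.6} applicable on the whole range $[0,T]$. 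Beyond this term-by-term comparison there is no conceptual difficulty: \eqref{key_property1_4.6} does all the heavy lifting and each of \ref{F_main_1}--\ref{F_main_3} is a one-line induction.
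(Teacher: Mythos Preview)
Your proposal is correct and follows essentially the same route as the paper: all three parts are deduced from \eqref{key_property1_4.6} by a one-step induction, using Assumption~\ref{assump2} to make the signal term $\tfrac{\alpha p}{2n\sqrt m}$ dominate the cross-cluster and error contributions, and Assumption~\ref{assump5} to seat the base case of \ref{F_main_2}. The paper's proof carries out the same case split (active vs.\ inactive) with the same inequalities; your only cosmetic difference is that you pre-package the ``reduced inequalities'' once rather than redoing the arithmetic inline.
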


\subsubsection{Proof of Lemma~\ref{lem:bound_of_diff_log_unhinge}}
\LemmaDerivativeBound*

\begin{proof}
It suffices to show that for $0 \leq t \leq 1/(\sqrt{n} p \alpha) - 2$,
\begin{equation*}
         \max_{i\in [n]}|h_i^{(t)}| \leq \frac{2\alpha p}{n}(t+2).
\end{equation*}
We prove the result by an induction on $t$. Denote
    \begin{equation*}
        P(t): \quad \max_{i\in [n]} |h_i^{(\tau)}| \leq \frac{2\alpha p}{n}(t+2), \quad \forall \tau \leq t.
    \end{equation*}
When $t = 0$, we have
\begin{equation*}
    |h_i^{(0)}| \leq \frac{p\omega_{\text{init}}\sqrt{3m}}{2} \leq \frac{\sqrt{3}\alpha\|\mu\|^2}{4nm} \leq \frac{4\alpha p}{n}
\end{equation*}
by Lemma \ref{lem:W1_W1T_diff}, Assumption \ref{assump2} and \ref{assump5}. Thus $P(0)$ holds. Suppose $P(t)$ holds and $t \leq 1/(\sqrt{n} p \alpha) - 3$, then we have
\begin{equation*}
    |h_i^{(\tau)}| \leq \frac{2\alpha p}{\sqrt{n}}(\tau + 2) \leq  \frac{2}{\sqrt{n}} ; \quad \frac{1}{2} - \frac{2}{\sqrt{n}} \leq g_i^{(\tau)} \leq \frac{1}{2} + \frac{2}{\sqrt{n}},\quad \forall \tau \leq t,
\end{equation*}
which yields that $\max_{i \in [n]} g_i^{(\tau)} \leq 1$.
Further we have that for each pair $(j,k) \in  [m] \times [n]$,
\begin{equation*}
        \begin{split}
            |\langle w_j^{(\tau+1)} - w_j^{(\tau)}, x_k \rangle| &= \big | \frac{\alpha a_j}{n}\sum_{i=1}^n g_i^{(\tau)}\phi^{\prime}(\langle w_j^{(\tau)}, x_i\rangle)y_i \langle x_i, x_k \rangle \big | \\
            &\leq \frac{\alpha}{n\sqrt{m}} \max_{i \in [n]}g_i^{(\tau)}(2p + 2n\|\mu\|^2) \leq  \frac{4\alpha p}{n\sqrt{m}},
        \end{split}
\end{equation*}
where the first inequality uses $\|x_i\|^2 \leq 2p$, $|\langle x_i, x_j\rangle| \leq 2\mu^2$, which comes from Lemma \ref{lem:train_set_good_run}, and the second inequality uses Assumption \ref{assump2}. It yields that for each pair $(j,k) \in  [m] \times [n]$,
\begin{equation*}
    |\langle w_j^{(t+1)} , x_k \rangle| \leq \sum_{\tau = 0}^{t}  |\langle w_j^{(\tau+1)} - w_j^{(\tau)}, x_k \rangle| + |\langle w_j^{(0)}, x_k \rangle| \leq  \frac{4\alpha p}{n\sqrt{m}}(t+1) + \sqrt{2p}\|w_j^{(0)}\| \leq \frac{4\alpha p}{n\sqrt{m}}(t+2),
\end{equation*}
where the last inequality uses Lemma \ref{lem:inital_weight_matrix} and Assumption \ref{assump5}. Then we have that for each $k \in [n]$,
\begin{equation*}
    |f(x_k; W^{(t+1)})| \leq \sum_{j=1}^m |a_j \langle w_j^{(t+1)}, x_k\rangle| \leq \sqrt{m}\max_{j \in [m]}|\langle w_j^{(t+1)} , x_k \rangle| \leq \frac{4
    \alpha p}{n}(t+2).
\end{equation*}
By $|1/(1+\exp(z)) - 1/2| \leq |z|/2, \forall z$, we have for each $i \in [n]$,
\begin{equation*}
    |h_i^{(t+1)}| \leq \frac{1}{2}|z_i^{(t+1)}| = \frac{1}{2}|f(x_i; W^{(t+1)})| \leq \frac{2\alpha p}{n}(t+2).
\end{equation*}
Thus $P(t+1)$ is proved.
\end{proof}
As a consequence of Lemma \ref{lem:bound_of_diff_log_unhinge}, we have $g_i^{(t)} \in [1/4,1]$ for $0 \leq t \leq 1/(\sqrt{n}p\alpha) - 2$.

\subsubsection{Proof of Lemma~\ref{lem:4.6}}
\LemmaSufficientIncrease*

\begin{proof}
First we have
\begin{equation}\label{ineq:approximation_part1}
    \begin{split}
        \Big| \frac{\alpha a_j}{n}\sum_{i=1}^n h_i^{(t)}\phi^{\prime}(\langle  w_j^{(t)},  x_i \rangle)y_i  \langle x_i, x_k \rangle \Big| &\leq \frac{2\alpha}{n^{5/2}\sqrt{m}}  \sum_{i=1}^n \phi^{\prime}(\langle  w_j^{(t)},  x_i \rangle)|  \langle x_i, x_k \rangle | \\
        &\leq \frac{2\alpha}{n^{5/2}\sqrt{m}} \big[ \phi^{\prime}(\langle  w_j^{(t)},  x_k \rangle)\|x_k\|^2 + \sum_{i \neq k} |  \langle x_i, x_k \rangle | \big]\\
        &\leq \frac{4\alpha}{n^{5/2}\sqrt{m}} \big[ \phi^{\prime}(\langle  w_j^{(t)},  x_k \rangle)p + n \|\mu\|^2 \big],
    \end{split}
\end{equation}
where the first inequality uses $\max_i h_i^{(t)} \leq 2n^{-3/2}$, which is from Lemma \ref{lem:bound_of_diff_log_unhinge}; the third inequality uses $\|x_k\|^2 \leq 2p, |\langle x_i, x_k \rangle| \leq 2\|\mu\|^2$, which is induced by Lemma \ref{lem:train_set_good_run}.
Next we have the following decomposition:
\begin{equation}\label{ineq:approximation_part2}
    \begin{split}
        &\sum_{i=1}^n \phi^{\prime}(\langle w_j^{(t)}, x_i\rangle)\langle y_i x_i, x_k \rangle  \\
        =& y_k\phi^{\prime}(\langle w_j^{(t)}, x_k\rangle)(\|x_k\|^2 - p - \|\mu\|^2) + \sum_{i\neq k} \phi^{\prime}(\langle w_j^{(t)}, x_i\rangle)y_i(\sip{x_i}{x_k} - \sip {\bar x_i}{ \bar x_k }) \\
        &+ y_k\phi^{\prime}(\langle w_j^{(t)}, x_k\rangle)(p + \|\mu\|^2) + \sum_{i\neq k} \phi^{\prime}(\langle w_j^{(t)}, x_i\rangle)y_i\sip {\bar x_i}{ \bar x_k }\\
        =& y_k\phi^{\prime}(\langle w_j^{(t)}, x_k\rangle)(\|x_k\|^2 - p - \|\mu\|^2) + \sum_{i\neq k} \phi^{\prime}(\langle w_j^{(t)}, x_i\rangle)y_i(\sip{x_i}{x_k} - \sip {\bar x_i}{ \bar x_k }) \\
        &+ y_k\phi^{\prime}(\langle w_j^{(t)}, x_k\rangle)p + y_{\bar x_k}D_{\bar x_k, j}^{(t)}\|\mu\|^2 + \sum_{i: \bar x_i \notin \{ \pm \bar x_k\}} \phi^{\prime}(\langle w_j^{(t)}, x_i\rangle)y_i\sip {\bar x_i}{ \bar x_k },
    \end{split}
\end{equation}
where the second equation uses the definition of $D_{\nu,j}^{(t)}$. Recall that $C_n=10\sqrt{\log(n)}$. Combining with results in Lemma \ref{lem:train_set_good_run}, \eqref{ineq:approximation_part2} yields that
\begin{equation}\label{ineq:approximation_part3}
    \Big|\sum_{i=1}^n \phi^{\prime}(\langle w_j^{(t)}, x_i\rangle)\langle y_i x_i, x_k \rangle - \big[ y_k\phi^{\prime}(\langle w_j^{(t)}, x_k\rangle)p + y_{\bar x_k}D_{\bar x_k, j}^{(t)}\|\mu\|^2 \big] \Big| \leq n C_n \sqrt{p} + 2n \|\mu\| \leq 2nC_n \sqrt{p},
\end{equation}
where the first inequality uses \ref{E_main_1} and \ref{E_main_2} in Lemma \ref{lem:train_set_good_run} and the second inequality uses Assumption \ref{assump2}.
Recall the decomposition \eqref{eq:GD_update_newloss_decom} of the gradient descent update, we have
\begin{equation}\label{ineq:approximation_part4}
    \sip{w_j^{(t+1)} -  w_j^{(t)}}{x_k} = \frac{\alpha a_j}{2n}\sum_{i=1}^n \phi^{\prime}(\langle  w_j^{(t)},  x_i \rangle) \langle y_i  x_i, x_k \rangle + \frac{\alpha a_j}{n}\sum_{i=1}^n h_i^{(t)}\phi^{\prime}(\langle  w_j^{(t)},  x_i \rangle)\langle y_i  x_i, x_k \rangle
\end{equation}
Then combining \eqref{ineq:approximation_part1}, \eqref{ineq:approximation_part3}, and \eqref{ineq:approximation_part4}, we have
\begin{equation*}
    \begin{split}
        &\Big| \sip{w_j^{(t+1)} -  w_j^{(t)}}{x_k} - \frac{\alpha a_j}{2n}\big[ y_k\phi^{\prime}(\langle w_j^{(t)}, x_k\rangle)p + y_{\bar x_k}D_{\bar x_k, j}^{(t)}\|\mu\|^2 \big] \Big| \\
        &\leq \frac{4\alpha}{n^{5/2}\sqrt{m}} \big[ \phi^{\prime}(\langle  w_j^{(t)},  x_k \rangle)p + n \|\mu\|^2 \big] + \frac{\alpha C_n \sqrt{p}}{\sqrt{m}}\\
        &\leq \frac{4\alpha}{n^{5/2}\sqrt{m}} \big[ \phi^{\prime}(\langle  w_j^{(t)},  x_k \rangle)p + n \|\mu\|^2 + \frac{C_n n^{2-0.01}\|\mu\|^2}{4C}\big]\\
        &\leq \frac{4\alpha}{n^{5/2}\sqrt{m}} \big[ \phi^{\prime}(\langle  w_j^{(t)},  x_k \rangle)p + \frac{C_n n^{2-0.01}\|\mu\|^2}{3C}\big],
    \end{split}
\end{equation*}
where the second inequality uses Assumption \ref{assump1} and the last inequality holds for large enough $n$.

Now we turn to prove \eqref{key_property2_4.6}. Similar to \eqref{ineq:approximation_part4}, we have a decomposition for $\langle w_j^{(t+1)}- w_j^{(t)}, \nu \rangle$:
\begin{equation*}
    \sip{w_j^{(t+1)} -  w_j^{(t)}}{\nu} = \frac{\alpha a_j}{2n}\sum_{i=1}^n \phi^{\prime}(\langle  w_j^{(t)},  x_i \rangle) \langle y_i  x_i, \nu \rangle + \frac{\alpha a_j}{n}\sum_{i=1}^n h_i^{(t)}\phi^{\prime}(\langle  w_j^{(t)},  x_i \rangle)\langle y_i  x_i, \nu \rangle.
\end{equation*}
Similar to \eqref{ineq:approximation_part1}, we have
\begin{equation*}
    \Big| \frac{\alpha a_j}{n}\sum_{i=1}^n h_i^{(t)}\phi^{\prime}(\langle  w_j^{(t)},  x_i \rangle)y_i  \langle x_i, \nu \rangle \Big| \leq \frac{4\alpha}{n^{3/2}\sqrt{m}}\|\mu\|^2
\end{equation*}
by Lemma \ref{lem:bound_of_diff_log_unhinge} and $|\langle x_i, \nu \rangle| \leq 2\|\mu\|^2$, which induced by \ref{E_main_1} in Lemma \ref{lem:train_set_good_run}. Similar to \eqref{ineq:approximation_part3}, we have
\begin{equation}
    \Big|\sum_{i=1}^n \phi^{\prime}(\langle w_j^{(t)}, x_i\rangle)\langle y_i x_i, \nu \rangle - y_{\nu}D_{\nu, j}^{(t)}\|\mu\|^2 \Big| = \Big|\sum_{i=1}^n \phi^{\prime}(\langle w_j^{(t)}, x_i\rangle) y_i \langle  x_i - \bar x_i, \nu \rangle \Big| \leq nC_n \|\mu\|
\end{equation}
by \ref{E_main_1} in Lemma \ref{lem:train_set_good_run}. Combining the inequalities above, we have
\begin{equation*}
    \Big| \sip{w_j^{(t+1)} -  w_j^{(t)}}{\nu} - \frac{\alpha a_j}{2n}y_{\nu}D_{\nu, j}^{(t)}\|\mu\|^2 \Big| \leq \frac{4\alpha}{n^{3/2}\sqrt{m}}\|\mu\|^2 + \frac{\alpha C_n}{2\sqrt{m}} \|\mu\| \leq \frac{5\alpha}{n^{3/2}\sqrt{m}}\|\mu\|^2
\end{equation*}
for large enough $n$. Here the last inequality uses 
\begin{equation*}
    \|\mu\|^2 \geq C n^{0.51}\sqrt{p} \geq C^{3/2}n^{1.51}\|\mu\|,
\end{equation*}
which comes from Assumptions \ref{assump1}-\ref{assump2}.
\end{proof}

\subsubsection{Proof of Corollary~\ref{cor:key_properties}}
\CorollaryWellAlignedActivations*
\begin{proof}
\textbf{\ref{F_main_1}:} It suffices to show the result holds for $t = s+1$, then by induction we can prove it for all $s \leq t \leq 1/(\sqrt{n} p \alpha) - 2$. Note that $a_j y_k = 1/\sqrt{m}$ and $\langle w_j^{(s)}, x_k \rangle > 0$, by \eqref{key_property1_4.6}, we have
    \begin{equation}\label{cor:key_ineq}
        \langle w_j^{(s+1)} - w_j^{(s)}, x_k \rangle  \geq \frac{\alpha }{2n\sqrt{m}}(p - n\|\mu\|^2) - \frac{4\alpha}{n^{5/2}\sqrt{m}} \big[p + \frac{C_n n^{1.99}\|\mu\|^2}{3C}\big] \geq \frac{\alpha p}{4n\sqrt{m}} > 0,
    \end{equation}
    where the second inequality uses Assumption \ref{assump2}.
    
\textbf{\ref{F_main_2}:} We prove \ref{F_main_2} by induction. Denote
\begin{equation*}
    Q(t): \quad \langle w_j^{(t)}, x_k \rangle \leq \frac{\alpha}{\sqrt{m}} \|\mu\|^2.
\end{equation*}
When $t=0$, by the definition of a good run, we have
\begin{equation}\label{ineq:ip_up_bound_init}
    |\langle  w_j^{(0)},  x_k \rangle| \leq \| w_j^{(0)}\|\cdot \| x_k\| \leq \|W^{(0)}\|_F \cdot \sqrt{2p} \leq  \omega_{\text{init}} p\sqrt{3m} \leq \frac{\alpha}{Cn\sqrt{m}}\|\mu\|^2,
\end{equation}
where the second inequality uses Lemma \ref{lem:train_set_good_run}; the third inequality uses Lemma \ref{lem:inital_weight_matrix}; and the last inequality is from Assumption \ref{assump5}. Thus $Q(0)$ holds. Suppose $Q(t)$ holds and $t \leq 1/(\sqrt{n}p\alpha) - 3$. If $\langle  w_j^{(t)},  x_k \rangle < 0$, we have
\begin{equation*}
        \langle  w_j^{(t+1)},  x_k \rangle \leq \langle  w_j^{(t+1)}- w_j^{(t)},  x_k \rangle \leq  \frac{\alpha a_j y_{\bar x_k}}{2n}D_{\bar x_k, j}^{(t)}\|\mu\|^2 + \frac{4\alpha C_n}{3C n^{0.51}\sqrt{m}} \|\mu\|^2 \leq \frac{\alpha}{\sqrt{m}} \|\mu\|^2,
\end{equation*}
where the second inequality uses \eqref{key_property1_4.6} and $\phi^{\prime}(\langle  w_j^{(t)},  x_k \rangle) = 0$; and the third inequality uses $D_{\nu,j}^{(t)} \leq n$ and $n$ is large enough. If $\langle  w_j^{(t)},  x_k \rangle > 0$, we have
\begin{equation*}
    \begin{split}
        \langle  w_j^{(t+1)}- w_j^{(t)},  x_k \rangle &\leq -\frac{\alpha}{2n\sqrt{m}}(p - n\|\mu\|^2) + \frac{4\alpha}{n^{5/2}\sqrt{m}} \big[ p + \frac{C_n n^{1.99}\|\mu\|^2}{3C}\big] \\
        &\leq -\frac{\alpha}{2n\sqrt{m}}(p - n\|\mu\|^2) + \frac{8\alpha p}{n^{5/2}\sqrt{m}},
    \end{split}
\end{equation*}
where the first inequality uses \eqref{key_property1_4.6} and $\phi^{\prime}(\langle  w_j^{(t)},  x_k \rangle) = 1$; and the second inequality uses Assumption \ref{assump2}. Combined with the inductive hypothesis, we have
\begin{equation*}
    \langle  w_j^{(t+1)},  x_k \rangle = \langle  w_j^{(t)},  x_k \rangle + \langle  w_j^{(t+1)}- w_j^{(t)},  x_k \rangle \leq \frac{\alpha}{\sqrt{m}} \|\mu\|^2  - \frac{\alpha}{2n\sqrt{m}}(p - n\|\mu\|^2) + \frac{8\alpha p}{n^{5/2}\sqrt{m}} < 0
\end{equation*}
by Assumption \ref{assump2}. Thus $Q(t+1)$ holds. And \ref{F_main_3} is also proved by the last inequality. 
\end{proof}

\subsubsection{Proof of Lemma~\ref{lem:key_lem1}}
Since the analysis on one cluster can be similarly replicated on other clusters, below we will focus on analyzing the cluster centered at $\pmu$. Given the training set, $D_{\pmu, j}^{(0)}$ is a function of the random initialization $ w_j^{(0)}$. $D_{\pmu, j}^{(0)}$ plays an important role in determining the direction that $w_j^{(t)}, t\geq 1$ aligns with and the sign of the inner product $\langle  w_j^{(t)},  x_k \rangle$. For $\bar x_k \in \{\pm \mu_1\}$, $y_{\bar x_k} = 1$. Then for each $t \leq 1/(\sqrt{n}p\alpha)-2$, \eqref{key_property1_4.6} is simplified to
\begin{equation}\label{useful_ineq_for_pmu_eq1}
    \Big| \langle w_j^{(t+1)} - w_j^{(t)}, x_k \rangle - \frac{\alpha a_j y_k p}{2n} \Big| \leq \frac{4\alpha p}{n^{5/2}\sqrt{m}} + \frac{\alpha}{2\sqrt{m}}\|\mu\|^2, \quad \text{ when } \langle  w_j^{(t)},  x_k \rangle > 0;
\end{equation}
\begin{equation}\label{useful_ineq_for_pmu_eq2}
    \Big| \langle w_j^{(t+1)} - w_j^{(t)}, x_k \rangle - \frac{\alpha a_j}{2n}D_{\bar x_k, j}^{(t)}\|\mu\|^2 \Big| \leq \frac{4\alpha C_n}{3 C n^{0.01}\sqrt{m n}} \|\mu\|^2, \quad \text{ when } \langle  w_j^{(t)},  x_k \rangle \leq 0.
\end{equation}
Here $C_n = 10\sqrt{\log(n)}$ is defined in Lemma \ref{lem:4.6}.
We will elaborate on the outcomes for neurons with $a_j>0$ and $a_j<0$ separately in the following lemmas.
\begin{restatable}{lemma}{LemmaPersistenceOfCmuplus}\label{lem:key_lem1}
    Suppose that Assumptions \ref{assump1}-\ref{assump6} hold. Under a good run, we have that for any $j \in \cJ_{\pmu,\mP}^{20\varepsilon}$ (\textcolor{black}{or equivalently, for any neuron \(j \in \cJ_{\mPos}\)  that is \((\mu_1, 20\varepsilon)\)-aligned)
    }), the followings hold for $1 \leq t \leq 1/(\sqrt{n}p\alpha) - 2$:

\begin{enumerate}[label=(F\arabic*)]
    \item\label{G_main_1} \begin{equation*}
        \cC_{+\mu_1, j}^{(t)} = \cC_{+\mu_1}; \quad \cC_{-\mu_1, j}^{(t)} = \cC_{-\mu_1, j}^{(0)}; \quad \cN_{-\mu_1, j}^{(t)}=\emptyset; \quad D_{\pmu, j}^{(t)} >  c_{\pmu} - n_{\pmu} - d_{\nmu,j}^{(0)}.
    \end{equation*}
    \item\label{G_main_2} \begin{equation*}
     \langle  w_j^{(t)}- w_j^{(t-1)}, \mu_1 \rangle \geq \frac{\alpha}{4n\sqrt{m}}D_{\pmu, j}^{(t-1)}\|\mu\|^2.
\end{equation*}
\end{enumerate} 
\end{restatable}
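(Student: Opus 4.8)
The plan is to fix a neuron \(j \in \cJ_{\pmu,\mP}^{20\varepsilon}\) — so \(a_j>0\), \(D_{\pmu,j}^{(0)} > n^{1/2-20\varepsilon}\), and \(d_{\nmu,j}^{(0)} < \min\{c_{\pmu},c_{\nmu}\} - 2(n_{\mu_1}+n_{-\mu_1}) - \sqrt n\) — then prove \ref{G_main_1} by induction on \(t\) and read off \ref{G_main_2}. Throughout, samples split by the sign of \(a_j y_k\): clean \(\pm\mu_1\)-samples and noisy \(\pm\mu_2\)-samples have \(a_j y_k>0\), whereas noisy \(\pm\mu_1\)-samples and clean \(\pm\mu_2\)-samples have \(a_j y_k<0\). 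The two workhorses are \cref{cor:key_properties} (a sample with \(a_j y_k>0\) that is ever active stays active; one with \(a_j y_k<0\) that is active at step \(t\) is inactive at step \(t+1\)) and the simplified one-step increments \eqref{useful_ineq_for_pmu_eq1}--\eqref{useful_ineq_for_pmu_eq2}, which are available because \(\bar x_k\in\{\pm\mu_1\}\) gives \(y_{\bar x_k}=1\).

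The heart of \ref{G_main_1} is controlling the \emph{weak} (\(\|\mu\|^2\)-scale) increments received by inactive samples. If \(\bar x_k=\pmu\) and \(\langle w_j^{(0)},x_k\rangle\le 0\), then \eqref{useful_ineq_for_pmu_eq2} gives \(\langle w_j^{(1)}-w_j^{(0)},x_k\rangle \ge \tfrac{\alpha a_j}{2n}D_{\pmu,j}^{(0)}\|\mu\|^2 - \tfrac{4\alpha C_n}{3Cn^{0.01}\sqrt{mn}}\|\mu\|^2\); since \(\varepsilon<10^{-3}/4\) the main term (of order \(\tfrac{\alpha\|\mu\|^2}{n\sqrt m}\,n^{1/2-20\varepsilon}\)) dominates both this error and the initialization bound \(|\langle w_j^{(0)},x_k\rangle|\le\tfrac{\alpha}{Cn\sqrt m}\|\mu\|^2\) from \eqref{ineq:ip_up_bound_init}, so \(\langle w_j^{(1)},x_k\rangle>0\); combining this with the persistence part of \cref{cor:key_properties} yields \(\cC_{\pmu,j}^{(t)}=\cC_{\pmu}\) for all \(t\) in range. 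Symmetrically, for a clean or noisy \(\nmu\)-sample that is inactive at step \(t\), \eqref{useful_ineq_for_pmu_eq2} gives increment \(\tfrac{\alpha a_j}{2n}D_{\nmu,j}^{(t)}\|\mu\|^2 = -\tfrac{\alpha a_j}{2n}D_{\pmu,j}^{(t)}\|\mu\|^2\) plus the same tiny error — which is \emph{negative} as long as \(D_{\pmu,j}^{(t)}\) stays of order at least \(n^{1/2-20\varepsilon}\) (true at \(t=0\) by alignment, carried forward by the last part of \ref{G_main_1}) — so it never activates, while any noisy \(\nmu\)-sample active at step \(0\) is deactivated at step \(1\) by \cref{cor:key_properties}. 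Hence \(\cC_{\nmu,j}^{(t)}=\cC_{\nmu,j}^{(0)}\) and \(\cN_{\nmu,j}^{(t)}=\emptyset\) for all \(t\) in range.

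The remaining, self-referential, part of \ref{G_main_1} is the lower bound on \(D_{\pmu,j}^{(t)}\), which is pure counting. From \(|\cC_{\pmu,j}^{(t)}|=c_{\pmu}\) and the trivial \(|\cN_{\pmu,j}^{(t)}|\le n_{\mu_1}\) (we deliberately do \emph{not} claim \(\cN_{\pmu,j}^{(t)}=\emptyset\): a noisy \(\pmu\)-sample that is slammed negative after activating may creep back over the possibly very long horizon), together with \(\cC_{\nmu,j}^{(t)}=\cC_{\nmu,j}^{(0)}\) and \(\cN_{\nmu,j}^{(t)}=\emptyset\), one gets \(D_{\pmu,j}^{(t)}=d_{\pmu,j}^{(t)}-d_{\nmu,j}^{(t)}\ge (c_{\pmu}-n_{\mu_1})-|\cC_{\nmu,j}^{(0)}| \ge c_{\pmu}-(n_{\mu_1}+n_{-\mu_1})-d_{\nmu,j}^{(0)}\), which is the stated bound; substituting the second clause of \((\mu_1,20\varepsilon)\)-alignment makes the right-hand side exceed \(\sqrt n\), which both closes the induction (it keeps \(D_{\pmu,j}^{(t)}\ge n^{1/2-20\varepsilon}\)) and supplies what the downstream steps need. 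I would organize the induction so that the set-identity arguments at step \(t\) invoke \(D_{\pmu,j}^{(t-1)}\ge \sqrt n\) from the hypothesis, and the \(D\)-bound at step \(t\) is re-derived afterward from the updated set identities.

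Finally \ref{G_main_2} is immediate from \eqref{key_property2_4.6} with \(\nu=\pmu\) (so \(y_\nu=1\), \(|a_j|=1/\sqrt m\)): \(\langle w_j^{(t)}-w_j^{(t-1)},\mu_1\rangle \ge \tfrac{\alpha}{2n\sqrt m}D_{\pmu,j}^{(t-1)}\|\mu\|^2 - \tfrac{5\alpha}{n^{3/2}\sqrt m}\|\mu\|^2\), and since \(D_{\pmu,j}^{(t-1)}\ge n^{1/2-20\varepsilon}\) (alignment at \(t=1\), \ref{G_main_1} for \(t\ge2\)) the subtracted term is a negligible fraction of the first, leaving at least half of it, i.e. \(\ge\tfrac{\alpha}{4n\sqrt m}D_{\pmu,j}^{(t-1)}\|\mu\|^2\). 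The main obstacle sits entirely inside \ref{G_main_1}: one must repeatedly certify the \emph{sign} of the weak increments that inactive samples receive, which forces \(D_{\pmu,j}^{(\cdot)}\) to remain of order \(\sqrt n\) throughout the (potentially very long) horizon \(1/(\sqrt n p\alpha)-2\); this is exactly what the second, otherwise mysterious, clause of the definition of \((\nu,\kappa)\)-alignment buys, and is why the condition \(D_{\pmu,j}^{(0)}>n^{1/2-\kappa}\) alone does not suffice.
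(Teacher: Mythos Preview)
Your proposal is correct and follows the same route as the paper: persistence/deactivation via \cref{cor:key_properties}, sign of the weak increments on inactive \(\pm\mu_1\)-samples via \eqref{useful_ineq_for_pmu_eq2}, the \(D\)-bound by counting, and \ref{G_main_2} from \eqref{key_property2_4.6}. One harmless discrepancy: your counting gives \(D_{\pmu,j}^{(t)}\ge c_{\pmu}-(n_{\pmu}+n_{\nmu})-d_{\nmu,j}^{(0)}\), slightly weaker than the stated \(c_{\pmu}-n_{\pmu}-d_{\nmu,j}^{(0)}\), because you (correctly) bound \(|\cC_{\nmu,j}^{(0)}|\le d_{\nmu,j}^{(0)}+n_{\nmu}\); the paper's own displayed step \(D_{\pmu,j}^{(1)}=c_{\pmu}-|\cN_{\pmu,j}^{(1)}|-|\cC_{\nmu,j}^{(0)}|>c_{\pmu}-n_{\pmu}-d_{\nmu,j}^{(0)}\) in fact glosses over the same \(n_{\nmu}\), so your version is the rigorously justified one. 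Either way the \(2n_{\pm\mu_1}\) slack in the alignment condition makes the bound exceed \(\sqrt n\), and every downstream use (\ref{G_main_2} here and condition \ref{B_main_4} in the generalization proof) goes through unchanged.
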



\begin{proof}
Given $j \in \cJ_{\pmu,\mP}^{20\varepsilon}$, when $t = 0$, for $ x_k \in \cC_{+\mu_1, j}^{(0)}$, we have $a_j y_k > 0$. Thus by Corollary \ref{cor:key_properties}, we have
\begin{equation}\label{lem4.8.part1}
    x_k \in \cC_{+\mu_1, j}^{(t)}, \quad 0 \leq t \leq 1/(\sqrt{n}p\alpha) - 2.
\end{equation}
Similarly we have that for $ x_k \in \cC_{-\mu_1, j}^{(0)}$,
\begin{equation}\label{lem4.8.part2}
    x_k \in \cC_{-\mu_1, j}^{(t)}, \quad 0 \leq t \leq 1/(\sqrt{n}p\alpha) - 2;
\end{equation}
and for $ x_k \in \cN_{-\mu_1, j}^{(0)}$, $x_k \notin \cN_{-\mu_1, j}^{(1)}$ since $a_j y_k < 0$.

Next for $ x_k \in \cC_{+\mu_1}\backslash\cC_{+\mu_1, j}^{(0)}$, we have
\begin{equation}\label{ineq:C11_c_N11}
        \begin{split}
            \langle w_j^{(1)} -  w_j^{(0)}, x_k \rangle &\geq \frac{\alpha a_j}{2n}D_{\pmu, j}^{(0)}\|\mu\|^2 - \frac{4\alpha C_n}{3C n^{0.01}\sqrt{mn}} \|\mu\|^2 \\
            &\geq \frac{\alpha}{2n^{20\varepsilon}\sqrt{mn}}\|\mu\|^2 - \frac{4\alpha C_n}{3C n^{0.01}\sqrt{mn}} \|\mu\|^2 \geq \frac{\alpha}{4n^{20\varepsilon}\sqrt{mn}}\|\mu\|^2,
        \end{split}
\end{equation}
where the first inequality is from \eqref{useful_ineq_for_pmu_eq2}; the second inequality uses $D_{\pmu, j}^{(0)} > n^{1/2-20\varepsilon}$, which is from $j \in \cJ_{\pmu, \mP}^{20\varepsilon}$; and the last inequality uses $40\varepsilon < 0.01$. It yields that
\begin{equation}\label{lem4.8.part3}
    \langle w_j^{(1)}, x_k \rangle \geq \langle w_j^{(1)} -  w_j^{(0)}, x_k \rangle - \|w_j^{(0)}\|\cdot \|x_k\| \geq \frac{\alpha}{4n^{20\varepsilon}\sqrt{mn}}\|\mu\|^2 - \frac{\alpha}{Cn\sqrt{m}}\|\mu\|^2 > 0,
\end{equation}
where the second inequality uses \eqref{ineq:ip_up_bound_init}. Thus we have
\begin{equation*}
    \cC_{+\mu_1}\backslash\cC_{+\mu_1, j}^{(0)} \subseteq \cC_{+\mu_1,j}^{(1)}.
\end{equation*}
Combined with \eqref{lem4.8.part1}, we obtain $\cC_{+\mu_1,j}^{(1)} = \cC_{+\mu_1}$. Then by Corollary \ref{cor:key_properties}, we have
\begin{equation*}
    \cC_{+\mu_1, j}^{(t)} = \cC_{+\mu_1}, \quad 0 \leq t \leq 1/(\sqrt{n}p\alpha) - 2.
\end{equation*}
For $ x_k \in \big(\cC_{-\mu_1}\backslash \cC_{-\mu_1, j}^{(0)}\big) \cup \big(\cN_{-\mu_1}\backslash \cN_{-\mu_1, j}^{(0)}\big)$, Following similar analysis of \eqref{lem4.8.part3}, we have
\begin{equation}\label{ineq:C1-1_c_N1-1_c}
   \langle w_j^{(1)} , x_k \rangle \leq \langle w_j^{(1)} -  w_j^{(0)}, x_k \rangle + \|w_j^{(0)}\|\cdot \|x_k\| \leq - (\frac{\alpha}{4n^{20\varepsilon}\sqrt{mn}}\|\mu\|^2 - \frac{\alpha}{Cn\sqrt{m}}\|\mu\|^2) < 0.
\end{equation}
Thus we have $\cC_{-\mu_1}\backslash \cC_{-\mu_1, j}^{(0)} \notin \cC_{-\mu_1, j}^{(1)}$, and $\cN_{-\mu_1}\backslash \cN_{-\mu_1, j}^{(0)} \notin \cN_{-\mu_1, j}^{(1)}$. Combined with \eqref{lem4.8.part2} and $\cN_{-\mu_1, j}^{(0)} \notin \cN_{-\mu_1, j}^{(1)}$, we obtain
\begin{equation*}
   \cC_{-\mu_1, j}^{(1)} = \cC_{-\mu_1, j}^{(0)} ; \quad \cN_{-\mu_1, j}^{(1)} =\emptyset.
\end{equation*}
It yields that
\begin{equation*}
    D_{\pmu, j}^{(1)} = c_{\pmu} - |\cN_{+\mu_1,j}^{(1)}| - |\cC_{-\mu_1,j}^{(0)}| > c_{\pmu} - n_{\pmu} - d_{-\mu_1,j}^{(0)} > \sqrt{n},
\end{equation*}
where the last inequality uses $d_{\pmu,j}^{(0)} < \min \{c_{\pmu}, c_{-\mu_1} \} - 2 n_{\pm\mu_1} - \sqrt{n}$ and
\begin{equation*}
    c_{\pmu} - n_{\pmu} - d_{-\mu_1,j}^{(0)} > \sqrt{n} +   d_{+\mu_1,j}^{(0)}- d_{-\mu_1,j}^{(0)} > \sqrt{n}.
\end{equation*}
Thus \ref{G_main_1} holds for $t = 1$. Then \ref{G_main_1} is proved by replicating the same analysis and employing induction.

For the inner product with the cluster mean $+\mu_1$, by \eqref{key_property2_4.6} we have
\begin{equation*}
    \langle  w_j^{(t+1)}- w_j^{(t)}, \mu_1 \rangle \geq \frac{\alpha}{2n\sqrt{m}}D_{\pmu, j}^{(t)}\|\mu\|^2 - \frac{5C_n \alpha}{n^{3/2}\sqrt{m}}\|\mu\|^2 \geq \frac{\alpha}{4n\sqrt{m}}D_{\pmu, j}^{(t)}\|\mu\|^2,
\end{equation*}
where the last inequality uses $D_{\pmu, j}^{(t)} > 0$.
\end{proof}

\subsubsection{Proof of Lemma~\ref{lem:key_lem2}}

\begin{restatable}{lemma}{LemmaPersistenceOfNmuplus}\label{lem:key_lem2}
    Suppose that Assumptions \ref{assump1}-\ref{assump6} hold. Under a good run, for any $j \in \cJ_{\pmu, \mN}^{20\varepsilon} \cup \cJ_{-\mu_1, \mN}^{20\varepsilon}$ 
    (\textcolor{black}{or equivalently,
    for any neuron \(j \in \cJ_{\mNeg}\) that is \(( \pm \mu_1, 20\varepsilon)\)-aligned}), the followings hold for $2 \leq t \leq 1/(\sqrt{n}p\alpha) - 2$.
\begin{equation}\label{eq:all_noisy_active}
    \cN_{+\mu_1, j}^{(t)}= \cN_{+\mu_1}, \cN_{-\mu_1, j}^{(t)} = \cN_{-\mu_1};
\end{equation}
\begin{equation}\label{bounds_for_sum_of_Djt}
         -n -\Delta_{\mu_1}(t-2) \leq \sum_{s=0}^t D_{\nu, j}^{(s)} \leq n + \Delta_{\mu_1}(t-2), \quad \nu \in \{\pm \mu_1\},
\end{equation}
where $\Delta_{\mu_1} := |n_{+\mu_1}-n_{-\mu_1}|+\sqrt{n}$.
\end{restatable}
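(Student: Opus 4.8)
The plan is to mirror the structure of the proof of Lemma~\ref{lem:key_lem1}, but now for negative neurons that are $(\pm\mu_1,20\varepsilon)$-aligned, where the roles of clean and noisy samples in the cluster $\{\pm\mu_1\}$ are reversed (since $a_j y_k < 0$ for clean samples $k$ with $\bar x_k \in \{\pm\mu_1\}$ and $a_j y_k > 0$ for noisy ones). First I would fix $j \in \cJ_{\pmu,\mN}^{20\varepsilon}\cup\cJ_{-\mu_1,\mN}^{20\varepsilon}$ and use Corollary~\ref{cor:key_properties}: since noisy samples $k \in \cN_{\pm\mu_1}$ satisfy $a_j y_k > 0$, by \ref{F_main_1} once such a sample activates neuron $j$ it stays active. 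For samples in $\cN_{+\mu_1}$ (resp.\ $\cN_{-\mu_1}$) not yet active at $t=0$, I would show as in \eqref{ineq:C11_c_N11}--\eqref{lem4.8.part3} of the proof of Lemma~\ref{lem:key_lem1} that $\langle w_j^{(1)}-w_j^{(0)},x_k\rangle$ has magnitude $\gtrsim \frac{\alpha}{n^{20\varepsilon}\sqrt{mn}}\|\mu\|^2$ with the sign determined by $\sgn(a_j D_{\nu,j}^{(0)})\cdot y_\nu$, using $|D_{\nu,j}^{(0)}| > n^{1/2-20\varepsilon}$ from the $(\pm\mu_1,20\varepsilon)$-aligned condition. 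The key point is that the sign of the drift for noisy samples in the $\{\pm\mu_1\}$ cluster points in the activating direction: this needs a short case analysis on whether $j$ is $(\pmu,20\varepsilon)$-aligned or $(-\mu_1,20\varepsilon)$-aligned, but in either case $D_{\pmu,j}^{(0)}$ and $D_{-\mu_1,j}^{(0)} = -D_{\pmu,j}^{(0)}$ have opposite signs, which matches the opposite clean labels $y_{\pmu}$ implicitly carried by $a_j$. After absorbing the initialization term via \eqref{ineq:ip_up_bound_init}, one step suffices to activate all of $\cN_{+\mu_1}$ and $\cN_{-\mu_1}$ by time $t=1$, and then \ref{F_main_1} propagates this for all $t \le 1/(\sqrt n p\alpha)-2$, giving \eqref{eq:all_noisy_active} (the reason for ``$2 \le t$'' rather than ``$1 \le t$'' is the need to first stabilize the activation set over one step before reading off the drift).

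Next I would prove the two-sided bound \eqref{bounds_for_sum_of_Djt} on the telescoped sum $\sum_{s=0}^t D_{\nu,j}^{(s)}$. Trivially $|D_{\nu,j}^{(s)}| \le n$ for every $s$, which already gives $|\sum_{s=0}^t D_{\nu,j}^{(s)}| \le (t+1)n$ — too weak. The improvement comes from the fact that for $s \ge 2$ the activation set from $\{\pm\mu_1\}$ stabilizes to exactly the noisy samples plus those clean samples still active, and by \ref{F_main_3} a clean sample $k$ with $a_j y_k < 0$ that is active at step $s$ becomes inactive at step $s+1$; combined with the fact that clean samples only \emph{leave} the active set (by \ref{F_main_2}--\ref{F_main_3}, once inactive a clean sample's inner product stays bounded by $\frac{\alpha}{\sqrt m}\|\mu\|^2$ and cannot become positive again since the drift \eqref{key_property1_4.6} for $a_j y_k<0$ is negative once active), this forces $d_{\pmu,j}^{(s)}$ and $d_{-\mu_1,j}^{(s)}$ to each collapse to $-|\cN_{\pm\mu_1}\text{-active}|$-type quantities rapidly. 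Concretely, for $s \ge 2$ we have $\calC{\pmu}{j}{s} = \emptyset$ (all clean positive-$\mu_1$ samples have been deactivated because $a_j y_k<0$), so $d_{\pmu,j}^{(s)} = -|\calN{\pmu}{j}{s}| = -n_{+\mu_1}$ by \eqref{eq:all_noisy_active}, and similarly $d_{-\mu_1,j}^{(s)} = -n_{-\mu_1}$; hence $D_{\pmu,j}^{(s)} = d_{\pmu,j}^{(s)} - d_{-\mu_1,j}^{(s)} = -(n_{+\mu_1}-n_{-\mu_1})$ for all $s \ge 2$. Summing, $\sum_{s=0}^t D_{\nu,j}^{(s)} = D_{\nu,j}^{(0)} + D_{\nu,j}^{(1)} + (t-1)\cdot(\pm(n_{+\mu_1}-n_{-\mu_1}))$, and bounding the first two terms by $n$ each in absolute value and $(t-1)|n_{+\mu_1}-n_{-\mu_1}| \le (t-2)|n_{+\mu_1}-n_{-\mu_1}| + |n_{+\mu_1}-n_{-\mu_1}| \le (t-2)\Delta_{\mu_1} + n$ wait — I would instead organize the constant bookkeeping so that $|D_{\nu,j}^{(0)}| + |D_{\nu,j}^{(1)}| + |n_{+\mu_1}-n_{-\mu_1}| \le n$ (feasible since these are $O(\sqrt n)$-order quantities plus the crude $n/4$-scale bounds can be sharpened via \ref{E_main_3}) and the remaining $(t-2)$ terms each contribute exactly $\pm(n_{+\mu_1}-n_{-\mu_1})$, so the sum lies in $[-n - \Delta_{\mu_1}(t-2), \, n + \Delta_{\mu_1}(t-2)]$ as claimed.

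The main obstacle I anticipate is establishing that $\calC{\pmu}{j}{s} = \emptyset$ for $s \ge 2$, i.e.\ that \emph{all} clean $\{\pm\mu_1\}$-samples get deactivated within two steps for negative neurons — at $t=0$ a clean sample could be barely active with $\langle w_j^{(0)},x_k\rangle$ as small as a tiny positive number, and \ref{F_main_3} deactivates it in one step, but I need to rule out samples that were \emph{inactive} at $t=0$ ever becoming active later. This is handled by showing the one-step drift \eqref{useful_ineq_for_pmu_eq2} for an inactive clean sample (where $a_j y_k < 0$, so the $D_{\bar x_k,j}^{(t)}\|\mu\|^2$ term, after multiplying by $a_j$, is negative once $D$ has stabilized to the right sign) keeps it inactive — but there is a subtlety during the transient steps $s = 0, 1$ before $D_{\nu,j}^{(s)}$ has stabilized, where the sign of the drift could momentarily be wrong. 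Resolving this requires carefully tracking that even during the transient, $D_{\pmu,j}^{(s)}$ stays positive (resp.\ negative) throughout $s \in \{0,1\}$ for a $(\pmu,20\varepsilon)$-aligned (resp.\ $(-\mu_1,20\varepsilon)$-aligned) neuron — which follows from the second clause of the $(\nu,\kappa)$-aligned definition, \eqref{equation:definition:nu-kappa-aligned-neurons}, bounding $d_{-\nu,j}^{(0)}$ away from $\min\{c_\nu,c_{-\nu}\} - 2n_{\pm\nu}$, exactly the technical role that clause was designed to play. I would also need the analogue of \eqref{key_property2_4.6} is not required here, but the clean-sample deactivation argument does consume most of the work; the rest is bookkeeping against Lemma~\ref{lem:train_set_good_run}\ref{E_main_3}.
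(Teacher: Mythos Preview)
Your proposal contains two genuine gaps that would cause the argument to fail.

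\textbf{First}, the claim that ``one step suffices to activate all of $\cN_{+\mu_1}$ and $\cN_{-\mu_1}$ by time $t=1$'' is incorrect. For an inactive noisy sample $x_k \in \cN_{\nu}$ (so $\bar x_k = \nu$, $y_{\bar x_k} = 1$), the one-step drift from \eqref{useful_ineq_for_pmu_eq2} is $\frac{\alpha a_j y_{\bar x_k}}{2n} D_{\nu,j}^{(0)}\|\mu\|^2 = -\frac{\alpha}{2n\sqrt m} D_{\nu,j}^{(0)}\|\mu\|^2$. If $j$ is $(\nu,20\varepsilon)$-aligned then $D_{\nu,j}^{(0)} > 0$, so this drift is \emph{negative}: these noisy samples become \emph{more} inactive at $t=1$, not activated. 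The paper shows that at $t=1$ one gets $\cN_{\nu,j}^{(1)} = \cN_{\nu,j}^{(0)}$ (only those already active) while $\cN_{-\nu,j}^{(1)} = \cN_{-\nu}$. Activation of the remaining $\cN_{\nu}$ samples requires \emph{two} steps: one must first establish $D_{\nu,j}^{(0)} + D_{\nu,j}^{(1)} < -\sqrt n$ (using the second clause of \eqref{equation:definition:nu-kappa-aligned-neurons}), so that the cumulative drift over $s=0,1$ pushes $\langle w_j^{(2)}, x_k\rangle > 0$. This is precisely why the lemma is stated for $t \ge 2$.

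\textbf{Second}, and more seriously, the claim that $\calC{+\mu_1}{j}{s} = \calC{-\mu_1}{j}{s} = \emptyset$ for all $s \ge 2$ is not only wrong but internally inconsistent. For an inactive clean sample $x_k \in \cC_{+\mu_1}$ the drift is $-\tfrac{\alpha}{2n\sqrt m} D_{+\mu_1,j}^{(t)}\|\mu\|^2$, while for $x_k \in \cC_{-\mu_1}$ it is $+\tfrac{\alpha}{2n\sqrt m} D_{+\mu_1,j}^{(t)}\|\mu\|^2$. These have \emph{opposite} signs, so whatever the sign of $D_{+\mu_1,j}^{(t)}$, one of the two clean clusters is being pushed \emph{toward} activation. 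The paper confirms this: at $t=2$ one has $\cC_{\nu,j}^{(2)} = \cC_\nu \setminus \cC_{\nu,j}^{(0)} \neq \emptyset$. Consequently your formula $D_{\nu,j}^{(s)} = -(n_{+\mu_1}-n_{-\mu_1})$ for $s\ge 2$ is false, and the simple telescoping argument collapses.

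The paper's actual proof of \eqref{bounds_for_sum_of_Djt} handles the oscillation of clean samples via a ``spike-and-cancel'' argument: one defines $\cT = \{t \ge 3 : D_{\nu,j}^{(t)} > \Delta_{\mu_1}\} = \{t_1 < t_2 < \cdots\}$ and shows that (i) immediately after each spike $t_i$, all of $\cC_{\nu}$ deactivates (so $\cC_{\nu,j}^{(t_i+1)} = \emptyset$ and $D_{\nu,j}^{(t_i+1)} \le |n_{+\mu_1}-n_{-\mu_1}|$), and (ii) the cumulative sum $\sum_{t=t_i}^{t_{i+1}-1} D_{\nu,j}^{(t)}$ is bounded by $\Delta_{\mu_1}(t_{i+1}-t_i)$, by tracking the first time $t_i^\star$ a clean $\cC_\nu$ sample reactivates and using that reactivation forces the telescoped drift (hence $\sum_{t=t_i}^{t_i^\star-1} D_{\nu,j}^{(t)}$) to be small. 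This interval-by-interval control is the missing idea in your sketch.
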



\begin{proof}
 For a given $\nu \in \{\pm \mu_1\}$, suppose $j \in \cJ_{\nu, \mN}^{20\varepsilon}$. Then we have
 \begin{equation}\label{property_of_Jnu_N}
     a_j < 0; \quad D_{\nu,j}^{(0)} > n^{1/2 - 20\varepsilon}; \quad d_{\nu,j}^{(0)} \leq \min \{c_{\nu} , c_{-\nu} - 2 n_{\pm \nu} - \sqrt{n}\}
 \end{equation}
 according to the definition \eqref{def_of_Jnu}.
Note that we study the same data as in Lemma \ref{lem:key_lem1} and only $\operatorname{sgn}(a_j)$ is flipped in the trajectory analysis compared to the setting in Lemma \ref{lem:key_lem1}, our analysis in the first two iterations follows similar procedures in Lemma \ref{lem:key_lem1}. For $ x_k \in \cC_{\nu, j}^{(0)} \cup \cC_{-\nu, j}^{(0)}$, $a_j y_k < 0$, by Corollary \ref{cor:key_properties}, we have
\begin{equation}\label{ineq:C11_n_a_j}
        \langle w_j^{(1)}, x_k \rangle < 0.
\end{equation}
For $ x_k \in \cN_{\nu, j}^{(0)}\cup\cN_{-\nu, j}^{(0)},$ $a_j y_k > 0$, by Corollary \ref{cor:key_properties}, we have
\begin{equation}\label{ineq:N1-1_naj}
     \langle w_j^{(t)} , x_k \rangle > 0
\end{equation}
for any $t \leq 1/(\sqrt{n}p\alpha)-2$.
For $ x_k \in \big(\cC_{\nu}\backslash\cC_{\nu, j}^{(0)}\big) \cup \big(\cN_{\nu}\backslash \cN_{\nu, j}^{(0)}\big)$, similar to \eqref{ineq:C11_c_N11}, we have
\begin{equation*}
    \langle w_j^{(1)} -  w_j^{(0)}, x_k \rangle  \leq -\big( \frac{\alpha a_j}{2n}D_{\pmu, j}^{(0)}\|\mu\|^2 - \frac{4\alpha C_n}{3C n^{0.01}\sqrt{mn}} \|\mu\|^2 \big) \leq - \frac{\alpha}{4n^{20\varepsilon}\sqrt{mn}}\|\mu\|^2 < 0,
\end{equation*}
then similar to (\ref{lem4.8.part3}), we have
\begin{equation}\label{ineq:C11_c_N11_n_a_j}
\langle w_j^{(1)}, x_k \rangle \leq -\langle w_j^{(1)} -  w_j^{(0)}, x_k \rangle + \|w_j^{(0)}\|\cdot \|x_k\| \leq -\frac{\alpha}{4n^{20\varepsilon}\sqrt{mn}}\|\mu\|^2 + \frac{\alpha}{Cn\sqrt{m}}\|\mu\|^2 < 0.
\end{equation}
For $ x_k \in \big(\cC_{-\nu}\backslash \cC_{-\nu, j}^{(0)}\big) \cup \big(\cN_{-\nu}\backslash \cN_{-\nu, j}^{(0)}\big)$, similar to (\ref{ineq:C1-1_c_N1-1_c}), we have
\begin{equation}\label{ineq:C1-1_c_N1-1_c_n_aj}
    \langle w_j^{(1)} , x_k \rangle \geq \langle w_j^{(1)} -  w_j^{(0)}, x_k \rangle - \|w_j^{(0)}\|\cdot \|x_k\| 
    \geq \frac{\alpha}{4n^{20\varepsilon}\sqrt{mn}}\|\mu\|^2 - \frac{\alpha}{Cn\sqrt{m}}\|\mu\|^2>0.
\end{equation}

Combining \eqref{ineq:C11_n_a_j}-\eqref{ineq:C1-1_c_N1-1_c_n_aj}, we have
\begin{equation}\label{eq:t=1n_aj}
    \cC_{\nu,j}^{(1)} = \emptyset; \quad \cC_{-\nu,j}^{(1)} = \cC_{-\nu}\backslash \cC_{-\nu, j}^{(0)}; \quad \cN_{\nu,j}^{(1)}= \cN_{\nu, j}^{(0)}; \quad \cN_{-\nu, j}^{(1)} = \cN_{-\nu}.
\end{equation}
Thus by the definition of $D_{\nu, j}^{(1)}$, we have
\begin{equation}\label{4.9.1_tmp1}
    D_{\nu, j}^{(1)} = -|\cN_{\nu, j}^{(0)}| - c_{-\nu}+|\cC_{-\nu, j}^{(0)}|+n_{-\nu} \leq -|\cN_{\nu, j}^{(0)}| - c_{-\nu}+d_{-\nu, j}^{(0)}+2n_{-\nu}.
\end{equation}
It further yields that
\begin{equation*}
    D_{\nu,j}^{(1)}+D_{\nu, j}^{(0)} \leq -|\cN_{\nu, j}^{(0)}| - c_{-\nu} + 2n_{-\nu} + d_{\nu, j}^{(0)} \leq - c_{-\nu} + 2n_{-\nu} + d_{\nu, j}^{(0)} < - \sqrt{n},
\end{equation*}
where the first inequality uses \eqref{4.9.1_tmp1} and the definition of $D_{\nu, j}^{(0)}$, and the third inequality uses \eqref{property_of_Jnu_N}.

After the second iteration, for $ x_k \in \cN_{\nu}\backslash\cN_{\nu, j}^{(1)}$, $\langle w_j^{(0)}, x_k\rangle <0, \langle w_j^{(1)}, x_k\rangle <0$. Then we have
\begin{equation*}
    \begin{split}
        \langle w_j^{(2)} -  w_j^{(0)}, x_k \rangle &\geq -\frac{\alpha}{2n\sqrt{m}}(D_{\nu,j}^{(0)}+D_{\nu,j}^{(1)})\|\mu\|^2 - \frac{4\alpha C_n}{3Cn^{0.01}\sqrt{mn}}\|\mu\|^2 \\
        &> \frac{\alpha}{2\sqrt{mn}}\|\mu\|^2 - \frac{4\alpha C_n}{3Cn^{0.01}\sqrt{mn}}\|\mu\|^2,
    \end{split}
\end{equation*}
where the first inequality uses \eqref{useful_ineq_for_pmu_eq2}, and the second inequality uses $D_{\nu,j}^{(1)}+D_{\nu, j}^{(0)} < - \sqrt{n}$. It further yields that
\begin{equation}\label{4.9.1_tmp2}
    \langle w_j^{(2)}, x_k \rangle \geq \langle w_j^{(2)} -  w_j^{(0)}, x_k \rangle - \|w_j^{(0)}\|\cdot \|x_k\| \geq \frac{\alpha}{2\sqrt{mn}}\|\mu\|^2  - \frac{4\alpha C_n}{3Cn^{0.01}\sqrt{mn}}\|\mu\|^2 - \frac{\alpha}{Cn\sqrt{m}}\|\mu\|^2 > 0.
\end{equation}
For $ x_k \in \cN_{\nu, j}^{(1)} \cup \cN_{-\nu}$, note that $a_j y_k > 0$. Then by Corollary \ref{cor:key_properties}, we have $\langle w_j^{(2)}, x_k\rangle > 0$. Combined with \eqref{4.9.1_tmp2}, we obtain $\cN_{\nu,j}^{(2)} = \cN_{\nu}, \cN_{-\nu,j}^{(2)} = \cN_{-\nu}$. Again by Corollary \ref{cor:key_properties}, we have that for $2 \leq t \leq 1/(\sqrt{n}p\alpha)-2$,
\begin{equation}\label{4.9.1_tmp3}
    \cN_{\nu,j}^{(t)} = \cN_{\nu}, \quad \cN_{-\nu,j}^{(t)} = \cN_{-\nu},
\end{equation}
i.e. for $t \geq 2$, neurons with $j \in \cJ_{\nu, \mN}^{20\varepsilon}\cup \cJ_{-\nu, \mN}^{20\varepsilon}$ are active for all noisy points in $\cN_{\pm\mu_1}$, which proves (\ref{eq:all_noisy_active}).

For $ x_k \in \cC_{-\nu, j}^{(1)}$, note that $a_j  y_k < 0$ and $\langle w_j^{(1)}, x_k\rangle > 0$. Then by Corollary \ref{cor:key_properties}, we have $\langle w_j^{(2)}, x_k\rangle < 0$.
For $ x_k \in \cC_{-\nu}\backslash\cC_{-\nu, j}^{(1)}$, by \eqref{eq:t=1n_aj} we have $\langle w_j^{(0)}, x_k\rangle > 0, \langle w_j^{(1)}, x_k\rangle < 0$. It yields that
\begin{equation*}
   \langle w_j^{(2)} -  w_j^{(0)}, x_k \rangle \leq -\frac{\alpha}{2n\sqrt{m}}(p + D_{\nu,j}^{(1)}\|\mu\|^2) + \frac{4\alpha p}{n^{5/2}\sqrt{m}} + \frac{\alpha}{2\sqrt{m}}\|\mu\|^2 + \frac{4\alpha C_n}{3 C n^{0.01}\sqrt{m n}} \|\mu\|^2 \leq  -\frac{\alpha p}{4n\sqrt{m}},
\end{equation*}
where the first inequality uses \eqref{useful_ineq_for_pmu_eq1} and \eqref{useful_ineq_for_pmu_eq2}, and the second inequality uses Assumption \ref{assump2}. It further yields that
\begin{equation}
    \langle w_j^{(2)}, x_k \rangle < \langle w_j^{(2)} -  w_j^{(0)}, x_k \rangle + \|w_j^{(0)}\|\cdot \|x_k\| \leq -\frac{\alpha p}{4n\sqrt{m}} + \frac{\alpha}{Cn\sqrt{m}}\|\mu\|^2 < 0
\end{equation}
by Assumption \ref{assump2}. Thus we have $\cC_{-\nu, j}^{(2)} = \emptyset.$

For $ x_k \in \cC_{\nu, j}^{(0)}$, $\langle w_j^{(0)}, x_k\rangle > 0, \langle w_j^{(1)}, x_k\rangle < 0$, which is similar to the setting of $\cC_{-\nu}\backslash\cC_{-\nu, j}^{(1)}$. Repeating the analysis above, we have
\begin{equation*}
    \langle w_j^{(2)}, x_k \rangle < 0.
\end{equation*}
For $ x_k \in \cC_{\nu}\backslash\cC_{\nu, j}^{(0)}$, note that $\langle w_j^{(0)}, x_k\rangle < 0, \langle w_j^{(1)}, x_k\rangle < 0$, then we have
\begin{align*}
    \langle w_j^{(2)} -  w_j^{(0)}, x_k \rangle &\geq -\frac{\alpha}{2n\sqrt{m}}(D_{\nu,j}^{(0)}+D_{\nu,j}^{(1)})\|\mu\|^2 - \frac{4\alpha C_n}{3 C n^{0.01}\sqrt{mn}}\|\mu\|^2 \\
    &> \frac{\alpha}{2\sqrt{mn}}\|\mu\|^2 - \frac{4\alpha C_n}{3 C n^{0.01}\sqrt{mn}}\|\mu\|^2 > 0,
\end{align*}
where the first inequality uses \eqref{useful_ineq_for_pmu_eq2} and the second inequality uses \eqref{4.9.1_tmp1}. Combining the inequalities above, we obtain
\begin{equation}\label{eq:t=2all_noisy_active}
    \cC_{\nu, j}^{(2)} = \cC_{\nu}\backslash\cC_{\nu, j}^{(0)} ;\quad  \cC_{-\nu, j}^{(2)} = \emptyset;\quad  \cN_{\nu, j}^{(2)}= \cN_{\nu}; \quad \cN_{-\nu, j}^{(2)} = \cN_{-\nu}.
\end{equation}
Combining (\ref{eq:t=1n_aj}) and (\ref{eq:t=2all_noisy_active}), we have
\begin{equation*}
    \sum_{s=0}^2 D_{\nu,j}^{(s)} = c_{\nu} - c_{-\nu} - n_{\nu} + 3n_{-\nu} - 2|\cN_{\nu}^{(0)}|,
\end{equation*}
and it yields that
\begin{equation*}
    c_{\nu}-c_{-\nu} - 3 n_{\nu} + 3n_{-\nu} \leq \sum_{s=0}^2 D_{\nu,j}^{(s)} \leq c_{\nu}-c_{-\nu} + 3n_{-\nu} - n_{\nu}.
\end{equation*}
It remains to prove (\ref{bounds_for_sum_of_Djt}). It suffices to prove
\[
c_{\nu}-2c_{-\nu} - 4 n_{\nu} + 3n_{-\nu} -\Delta_{\mu_1}(t-2) \leq \sum_{s=0}^t D_{\nu, j}^{(s)} \leq (2c_{\nu}-c_{-\nu} + 4n_{-\nu} - n_{\nu}) + \Delta_{\mu_1}(t-2),  \nu \in \{\pm \mu_1\},
\]
since $2c_{\nu}-c_{-\nu} + 4n_{-\nu} - n_{\nu} \leq n$ and $c_{\nu}-2c_{-\nu} - 4 n_{\nu} + 3n_{-\nu} \geq -n$ by Lemma \ref{lem:train_set_good_run}.
Without loss of generality, below we only show the proof of the right-hand side. Denote $\cT = \{t\in [T], t \geq 3, D_{\nu, j}^{(t)}  > \Delta_{\mu_1} \} = \{t_i\}_{i=1}^{K}, t_1 < t_2 < \cdots < t_K$.
To prove the right-hand side of (\ref{bounds_for_sum_of_Djt}), it suffices to show that the followings hold
\begin{equation}\label{sum_dj_part1}
    \sum_{t=t_i}^s D_{\nu, j}^{(t)} \leq c_{\nu} + n_{-\nu} + \Delta_{\mu_1}(s-t_i);
\end{equation}
\begin{equation}\label{sum_dj_part2}
    \sum_{t=t_i}^{t_{i+1}-1} D_{\nu, j}^{(t)} \leq \Delta_{\mu_1}(t_{i+1}-t_i)
\end{equation}
for any $i \in [K]$ and all $s \in [t_i,t_{i+1}-2]$.  (\ref{sum_dj_part1}) directly follows from the definition of the set $\cT$ and the fact that $D_{\nu, j}^{(t)} \leq c_{\nu}+n_{-\nu}$ for any $j,t.$ For a given $t_i,t_i \in \cT$, we have $D_{\nu,j}^{(t_i)} > \Delta_{\mu_1} \geq \sqrt{n}$. By \eqref{useful_ineq_for_pmu_eq2}, we have that for any $ x_k \in \cC_{\nu}\backslash \cC_{\nu}^{(t_i)}(j)$,
\begin{align}
    \langle  w_j^{(t_i+1)},  x_k\rangle &\leq \langle  w_j^{(t_i+1)}- w_j^{(t_i)},  x_k\rangle \leq -\frac{\alpha}{2n\sqrt{m}} D_{\nu,j}^{(t_i)} \|\mu\|^2 + \frac{4\alpha C_n}{3Cn^{0.01}\sqrt{mn}}\|\mu\|^2 \nonumber\\
    &\leq -\frac{\alpha}{4n\sqrt{m}} D_{\nu,j}^{(t_i)} \|\mu\|^2< 0,\label{ineq:not_activate1}
\end{align}
which implies that $ w_j^{(t_i+1)}$ is still inactive for those $ x_k$ that didn't activate $ w_j^{(t_i)}$. For any $ x_k \in \cC_{\nu, j}^{(t_i)}$, since $a_j y_k < 0$, by Corollary \ref{cor:key_properties}, we have
\begin{equation*}
    \langle  w_j^{(t_i)},  x_k\rangle \leq \frac{\alpha\|\mu\|^2}{\sqrt{m}}.
\end{equation*}
Combined with \eqref{useful_ineq_for_pmu_eq1}, we have
\begin{equation}\label{ineq:not_activate2}
    \begin{split}
        \langle  w_j^{(t_i+1)},  x_k\rangle &= \langle  w_j^{(t_i+1)}- w_j^{(t_i)},  x_k\rangle + \langle  w_j^{(t_i)},  x_k\rangle \\
        &\leq -\frac{\alpha p}{2n\sqrt{m}} + \frac{4\alpha p}{n^{5/2}\sqrt{m}} + \frac{3\alpha}{2\sqrt{m}}\|\mu\|^2  \leq -\frac{\alpha p}{4n\sqrt{m}} <0
    \end{split}
\end{equation}
where the second inequality uses Assumption \ref{assump2}. Combining (\ref{ineq:not_activate1}) and (\ref{ineq:not_activate2}), we have $\cC_{\nu,j}^{(t_i+1)} = \emptyset$, and
\begin{equation}\label{ineq:not_activate3}
    \langle  w_j^{(t_i+1)},  x_k\rangle \leq -\frac{\alpha}{2n\sqrt{m}} D_{\nu,j}^{(t_i)} \|\mu\|^2 + \frac{4\alpha C_n}{3Cn^{0.01}\sqrt{mn}}\|\mu\|^2
\end{equation}
for all $ x_k \in \cC_{\nu}$. It yields that
\begin{equation*}
    D_{\nu,j}^{(t_i+1)}= |\cC_{\nu,j}^{(t_i+1)}| - |\cC_{-\nu,j}^{(t_i+1)}| + n_{-\nu} - n_{\nu} = - |\cC_{-\nu,j}^{(t_i+1)}| + n_{-\nu} - n_{\nu} \leq |n_{\pmu}-n_{-\mu_1}|,
\end{equation*}
where the first equation uses (\ref{eq:all_noisy_active}). It implies that $t_{i+1}-t_{i} > 1$.
Let $t_i^{\star} = \min\{t\in \mathbb{N}: t_i+1 < t \leq t_{i+1}, \cC_{\nu}^{(t)}(j) \neq \emptyset\}$. We claim that $t_i^{\star}$ is well-defined for each $i$, because $\cC_{\nu}^{(t_{i+1})}(j) \neq \emptyset$. Otherwise we have $D_{\nu,j}^{(t_{i+1})} \leq |n_{+\mu_1}-n_{-\mu_1}|<\Delta_{\mu_1}$, which contradicts to the definition of the set $\cT$. Thus $t_i^{\star}$ always exists. Choose one point from the set $\cC_{\nu,j}^{(t_i^{\star})}$ and denote it as $ x_k^{\star}$. Note that for any $t \in [t_{i}+1,t_i^{\star}-1]$, we have $\cC_{\nu}^{(t)}(j) = \emptyset$, $D_{\nu,j}^{(t)} \leq |n_{\pmu} - n_{-\mu_1}|$, and by \eqref{useful_ineq_for_pmu_eq2},
\begin{equation*}
    \langle  w_j^{(t+1)}- w_j^{(t)},  x_k^{\star}\rangle \leq -\frac{\alpha}{2n\sqrt{m}}D_{\nu, j}^{(t)} \|\mu\|^2 + \frac{4\alpha C_n}{3Cn^{0.01}\sqrt{mn}}\|\mu\|^2.
\end{equation*}
Combined with (\ref{ineq:not_activate3}), it yields that
\begin{equation*}
    \begin{split}
        0 &\leq \langle  w_j^{(t_i^{\star})},  x_k^{\star}\rangle = \sum_{t=t_i+1}^{t_i^{\star}-1}\langle  w_j^{(t+1)}- w_j^{(t)},  x_k^{\star}\rangle + \langle  w_j^{(t_i+1)},  x_k^{\star}\rangle\\
        &\leq
     -\frac{\alpha\|\mu\|^2}{2n\sqrt{m}}\big(D_{\nu,j}^{(t_i)}+\sum_{t=t_i+1}^{t_i^{\star}-1}D_{\nu, j}^{(t)}-\frac{4\sqrt{n}C_n}{3C n^{0.01}}(t_i^{\star}-t_i)\big).
    \end{split}
\end{equation*}
It further yields that 
\begin{equation*}
\sum_{t=t_i}^{t_i^{\star}-1}D_{\nu, j}^{(t)} \leq \frac{4\sqrt{n}C_n}{3C n^{0.01}}(t_i^{\star}-t_i) \leq \sqrt{n}(t_i^{\star}-t_i).
\end{equation*}
If $t_i^{\star} = t_{i+1}$, then we've proved (\ref{sum_dj_part2}). If $t_i^{\star} < t_{i+1}$, then we have
\begin{equation*}
    \sum_{t = t_i}^{t_{i+1}-1}D_{\nu, j}^{(t)} = \sum_{t = t_i}^{t_i^{\star}-1}D_{\nu, j}^{(t)} + \sum_{t = t^{\star}}^{t_{i+1}-1}D_{\nu, j}^{(t)} \leq \sqrt{n}(t^{\star}-t_i) + \Delta_{\mu_1}(t_{i+1}-t^{\star}) \leq \Delta_{\mu_1}(t_{i+1}-t_i),
\end{equation*}
which proves the right side. For the left side, similarly we denote $\cT_{-} = \{t\in [T], t \geq 3, D_{\nu, j}^{(t)}  < -\Delta_{\mu_1} \} = \{t_i\}_{i=1}^{K}, t_1 < t_2 < \cdots < t_K$. Following the same analysis, we can prove that the followings hold
\begin{equation*}
    \sum_{t=t_i}^s D_{\nu, j}^{(t)} \geq -c_{-\nu} - n_{\nu}  -\Delta_{\mu_1}(s-t_i);\quad \sum_{t=t_i}^{t_{i+1}-1} D_{\nu, j}^{(t)} \geq -\Delta_{\mu_1}(t_{i+1}-t_i)
\end{equation*}
for any $i \in [K]$ and all $s \in [t_i,t_{i+1}-2]$. It proves the left-hand side of (\ref{bounds_for_sum_of_Djt}).
\end{proof}


\subsection{Proof of the Main Theorem}\label{sec:proof_main_theorem}
We rigorously prove \cref{thm:generalization} in this section. The upper bound of $t$ in the theorems below is $1/(\sqrt{n}p\alpha)-2$, which by Assumption \ref{assump3}, is larger than $\sqrt{n}$, the upper bound of $t$ in \cref{thm:generalization}.
\subsubsection{Proof of Theorem~\ref{theorem:overfitting}: 1-step Overfitting}
\label{subsec:proof_of_overfit}
\begin{restatable}{theorem}{TheoremOverfitting}\label{theorem:overfitting}
Suppose that Assumptions \ref{assump1}-\ref{assump6} hold. Under a good run, the classifier $\operatorname{sgn}(f( x,W^{(t)}))$ can correctly classify all training datapoints for $1 \leq t \leq 1/(\sqrt{n}p\alpha) - 2$. 
\end{restatable}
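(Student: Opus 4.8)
The plan is to show, for each fixed $k \in [n]$ and each $t$ with $1 \le t \le 1/(\sqrt n p\alpha) - 2$, that $y_k f(x_k; W^{(t)}) > 0$; since the entire argument takes place deterministically under a good run, no further union bound over $k$ is needed. Using that $a_j = \pm 1/\sqrt m$ and $\phi \ge 0$, I would decompose
\[
y_k f(x_k; W^{(t)}) = \sum_{j:\, a_j y_k > 0} \tfrac{1}{\sqrt m}\, \phi(\langle w_j^{(t)}, x_k\rangle) \;-\; \sum_{j:\, a_j y_k < 0} \tfrac{1}{\sqrt m}\, \phi(\langle w_j^{(t)}, x_k\rangle),
\]
and then bound the first (``aligned'') sum from below and the second (``anti-aligned'') sum from above.

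For the anti-aligned sum, I would invoke Corollary~\ref{cor:key_properties}\ref{F_main_2}: whenever $a_j y_k < 0$ we have $\langle w_j^{(t)}, x_k\rangle \le \tfrac{\alpha}{\sqrt m}\|\mu\|^2$, hence $\phi(\langle w_j^{(t)}, x_k\rangle) \le \tfrac{\alpha}{\sqrt m}\|\mu\|^2$, so each of the at most $m$ such terms contributes at most $\tfrac{\alpha\|\mu\|^2}{m}$ and the anti-aligned sum is at most $\alpha\|\mu\|^2$.

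For the aligned sum, the point is to exhibit $\Omega(m)$ aligned neurons that are strongly activated by $x_k$ at time $t$. By condition~\ref{B_main_1} of Lemma~\ref{lem:initialization_good_run}, at least $m/7$ neurons $j$ with $\operatorname{sgn}(a_j) = y_k$ (equivalently $a_j y_k > 0$) satisfy $\langle w_j^{(0)}, x_k\rangle > 0$. For each such neuron, Corollary~\ref{cor:key_properties}\ref{F_main_1} guarantees $\langle w_j^{(\tau)}, x_k\rangle > 0$ for all $\tau$ in the allowed range, so the per-step increment estimate $\langle w_j^{(\tau+1)} - w_j^{(\tau)}, x_k\rangle \ge \tfrac{\alpha p}{4n\sqrt m}$ from~\eqref{cor:key_ineq} (a consequence of Lemma~\ref{lem:4.6}) applies at \emph{every} step $\tau \le t$; telescoping gives $\langle w_j^{(t)}, x_k\rangle \ge \langle w_j^{(0)}, x_k\rangle + t\,\tfrac{\alpha p}{4n\sqrt m}$. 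Combining with the initialization bound $|\langle w_j^{(0)}, x_k\rangle| \le \tfrac{\alpha}{Cn\sqrt m}\|\mu\|^2$ from~\eqref{ineq:ip_up_bound_init} (which uses Assumption~\ref{assump5}) and $p \ge Cn^2\|\mu\|^2$ (Assumption~\ref{assump2}), one obtains $\langle w_j^{(t)}, x_k\rangle \ge \tfrac{\alpha p}{8n\sqrt m}$ for every $t \ge 1$, so the aligned sum is at least $\tfrac{m}{7}\cdot\tfrac{1}{\sqrt m}\cdot\tfrac{\alpha p}{8n\sqrt m} = \tfrac{\alpha p}{56 n}$.

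Combining the two estimates gives $y_k f(x_k; W^{(t)}) \ge \tfrac{\alpha p}{56 n} - \alpha\|\mu\|^2 > 0$, since Assumption~\ref{assump2} forces $p \ge Cn^2\|\mu\|^2 > 56 n\|\mu\|^2$ for $C$ large; hence $\operatorname{sgn}(f(x_k; W^{(t)})) = y_k$ for all $k \in [n]$ and all $t$ in the stated range. The main thing to be careful about is the bookkeeping in the aligned sum: one must check that the signal term $\tfrac{\alpha p}{2n\sqrt m}$ in the increment estimate of Lemma~\ref{lem:4.6} dominates both the $D_{\bar x_k, j}^{(\tau)}\|\mu\|^2$ contribution (bounded using $|D_{\bar x_k,j}^{(\tau)}| \le n$ and $p \ge Cn^2\|\mu\|^2$) and the Gaussian cross-term error, \emph{uniformly} over all steps $\tau \le t$, so that the increments genuinely telescope to a bound linear in $t$. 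Everything else is a routine chase of constants.
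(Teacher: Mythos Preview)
Your proposal is correct and follows essentially the same approach as the paper: decompose $y_k f(x_k;W^{(t)})$ into aligned and anti-aligned neurons, use \ref{B_main_1} plus Corollary~\ref{cor:key_properties}\ref{F_main_1} and the increment bound~\eqref{cor:key_ineq} to lower bound the aligned sum by $\Omega(\alpha p/n)$, and use Corollary~\ref{cor:key_properties}\ref{F_main_2} to upper bound the anti-aligned sum by $\alpha\|\mu\|^2$. The only cosmetic differences are that the paper keeps the factor $t$ in the aligned bound (getting $\tfrac{\alpha p t}{28n}$ rather than $\tfrac{\alpha p}{56n}$) and splits into the clean/noisy cases explicitly rather than writing everything in terms of the sign of $a_j y_k$.
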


\begin{proof}
Without loss of generality, we only consider datapoints in the cluster $\cC_{+\mu_1} \cup \cN_{+\mu_1}$.
According to \ref{B_main_1} in Lemma \ref{lem:initialization_good_run}, we have that under a good run, $|\cJ_{\mP}^{i,(0)}| \geq m/7, |\cJ_{\mN}^{i,(0)}| \geq m/7$ for each $i \in [n]$. For $ x_k \in \cC_{+\mu_1}$, by Corollary \ref{cor:key_properties}, we have
\begin{equation*}
    \langle w_j^{(s)}, x_k \rangle > 0
\end{equation*}
for all $j \in \cJ_{\mP}^{k,(0)}$ and $0 \leq s \leq 1/(\sqrt{n}p\alpha) - 2$; and 
\begin{equation*}
    \langle w_j^{(s)}, x_k \rangle \leq \frac{\alpha}{\sqrt{m}}\|\mu\|^2
\end{equation*}
for all $j \in \cJ_{\mN}$ and $0 \leq s \leq 1/(\sqrt{n}p\alpha) - 2$.
Then for $1 \leq t \leq 1/(\sqrt{n}p\alpha) - 2$, we have
\begin{equation*}
    \begin{split}
        \sum_{j=1}^m a_j\phi(\langle  w_j^{(t)},  x_k \rangle) 
        &\geq \sum_{j\in \cJ_{\mP}^{k,(0)}}\frac{1}{\sqrt{m}}\phi(\langle  w_j^{(t)},  x_k \rangle) - \sum_{j:a_j < 0}\frac{1}{\sqrt{m}} \phi(\langle  w_j^{(t)},  x_k \rangle)\\      
        &\geq \sum_{j\in \cJ_{\mP}^{k,(0)}} \sum_{s=0}^{t-1} \frac{1}{\sqrt{m}} \langle  w_j^{(s+1)}- w_j^{(s)},  x_k \rangle - \sum_{j:a_j < 0} \frac{\alpha}{m}\|\mu\|^2 \\
        &\geq  \frac{\alpha p  t}{4nm}|\cJ_{\mP}^{k,(0)}| -  \frac{\alpha |\cJ_{\mN}|}{m}\|\mu\|^2 
        \\
        &\geq \frac{\alpha p  t}{28 n} -  \alpha \|\mu\|^2 > 0,
    \end{split}
\end{equation*}
where the first inequality uses $\phi(x) \geq 0, \forall x$; the second inequality uses the definition of $\cJ_{\mP}^{k,(0)}$ and \ref{F_main_2} in Corollary \ref{cor:key_properties}; the third inequality uses \eqref{cor:key_ineq} in Corollary \ref{cor:key_properties}; and the last inequality is from Assumption \ref{assump2}.
For $ x_k \in \cN_{+\mu_1}$, similarly we have
\begin{equation*}
    \begin{split}
        \sum_{j=1}^m a_j\phi(\langle  w_j^{(t)},  x_k \rangle) &\leq -\sum_{j\in \cJ_{\mN}^{k,(0)}}\frac{1}{\sqrt{m}}\phi(\langle  w_j^{(t)},  x_k \rangle) + \sum_{j:a_j > 0}\frac{1}{\sqrt{m}} \phi(\langle  w_j^{(t)},  x_k \rangle)\\
        & \leq -\sum_{j\in \cJ_{\mN}^{k,(0)}}\sum_{s=1}^t\frac{1}{\sqrt{m}}\langle  w_j^{(s)}- w_j^{(s-1)},  x_k \rangle + \sum_{j:a_j > 0}\frac{\alpha}{\sqrt{m}}\|\mu\|^2 \\
        &\leq -(\frac{\alpha p  t}{28 n} -  \alpha \|\mu\|^2) < 0.
    \end{split}
\end{equation*}
 Thus our classifier can correctly classify all training datapoints for $1 \leq t \leq 1/(\sqrt{n}p\alpha) - 2$. 
\end{proof}

\subsubsection{Proof of Theorem~\ref{thm:generalization_bound}: Generalization}
\label{subsec:proof_of_ge}

Before proceeding with the proof of Theorem~\ref{thm:generalization_bound}, we first state a technical lemma: 
\begin{lemma}\label{lem:risk_upperbound}
Suppose that $\|W\| > 0$. Then there exists a constant $c>0$ such that
$$
\mathbb{P}_{(x, \tilde y) \sim P_{\text{clean}}}(\tilde y \neq \operatorname{sgn}(f(x ; W))) \leq \max_{\nu \in \centroids} 2 \exp \left(-c \left(\frac{\E_{x \sim N(\nu,I_p)}[ f(x ; W)]}{\|W\|_F}\right)^2\right).
$$
\end{lemma}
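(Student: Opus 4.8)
The plan is to exploit that $x\mapsto f(x;W)$ is a Lipschitz function of its input, apply Gaussian concentration of Lipschitz functions around the mean to each of the four Gaussian components of $P_{\text{clean}}$, and then bound the resulting average of four tail probabilities by their maximum. The first and only slightly delicate step is the Lipschitz estimate with the right constant. Since the ReLU $\phi$ is $1$-Lipschitz, $\big|\phi(\langle w_j,x\rangle)-\phi(\langle w_j,x'\rangle)\big|\le|\langle w_j,x-x'\rangle|$ for every $j$, so by Cauchy--Schwarz
\[
|f(x;W)-f(x';W)| \;\le\; \Big(\textstyle\sum_{j}a_j^2\Big)^{1/2}\Big(\textstyle\sum_{j}\langle w_j,x-x'\rangle^2\Big)^{1/2} \;=\; \|a\|\,\|W^\top(x-x')\| \;\le\; \|a\|\,\|W\|\,\|x-x'\|.
\]
Under the parametrization $a_j\in\{\pm 1/\sqrt m\}$ one has $\|a\|^2=\sum_{j=1}^m 1/m=1$, and $\|W\|\le\|W\|_F$, so $f(\cdot;W)$ is $\|W\|_F$-Lipschitz. (It is exactly the unit norm of $a$, used inside Cauchy--Schwarz, that lets us avoid the much weaker naive constant $\sum_j|a_j|\,\|w_j\|$.)

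Next I would fix $\nu\in\centroids$, write $x=\nu+z$ with $z\sim N(0,I_p)$, and note that $z\mapsto f(\nu+z;W)$ is again $\|W\|_F$-Lipschitz. Standard Gaussian concentration of Lipschitz functions then gives, with $m_\nu:=\E_{x\sim N(\nu,I_p)}[f(x;W)]$ and any $t>0$,
\[
\mathbb{P}_{x\sim N(\nu,I_p)}\big(|f(x;W)-m_\nu|>t\big) \;\le\; 2\exp\!\Big(-\tfrac{t^2}{2\|W\|_F^2}\Big).
\]
Writing $y_\nu$ for the clean label of the cluster centered at $\nu$ (so $y_\nu=+1$ for $\nu\in\{\pm\mu_1\}$ and $y_\nu=-1$ for $\nu\in\{\pm\mu_2\}$), whenever $\operatorname{sgn}(m_\nu)=y_\nu$ a misclassification event $\{y_\nu\neq\operatorname{sgn}(f(x;W))\}$ is contained in $\{|f(x;W)-m_\nu|\ge|m_\nu|\}$; taking $t=|m_\nu|$ yields $\mathbb{P}_{x\sim N(\nu,I_p)}\big(y_\nu\neq\operatorname{sgn}(f(x;W))\big)\le 2\exp\!\big(-m_\nu^2/(2\|W\|_F^2)\big)$. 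If $m_\nu$ happened to have the wrong sign — which does not occur for the weight matrices $W^{(t)}$ to which this lemma will be applied, since there the feature-learning analysis already gives $y_\nu m_\nu>0$ for all four cluster means — the right-hand side $2\exp(-c\,(m_\nu/\|W\|_F)^2)$ is only non-vacuous once $|m_\nu|$ somewhat exceeds $\|W\|_F$, and otherwise trivially exceeds the left-hand probability.

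Finally, since a sample from $P_{\text{clean}}$ is, with probability $\tfrac14$ each, of the form $(x,y_\nu)$ with $x\sim N(\nu,I_p)$, the law of total probability gives
\[
\mathbb{P}_{(x,\tilde y)\sim P_{\text{clean}}}\big(\tilde y\neq\operatorname{sgn}(f(x;W))\big) \;=\; \tfrac14\sum_{\nu\in\centroids}\mathbb{P}_{x\sim N(\nu,I_p)}\big(y_\nu\neq\operatorname{sgn}(f(x;W))\big) \;\le\; \max_{\nu\in\centroids}\,2\exp\!\Big(-\tfrac12\Big(\tfrac{m_\nu}{\|W\|_F}\Big)^2\Big),
\]
using that an average of nonnegative numbers is at most their maximum; this proves the claim with $c=\tfrac12$. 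The main obstacle is simply getting the input-Lipschitz constant sharp (the Cauchy--Schwarz step together with $\|a\|=1$); once that is in place the rest is a routine application of Gaussian concentration, and the only remaining care is the bookkeeping that places the misclassification event inside a deviation event, which is what fixes the sign convention on $m_\nu$.
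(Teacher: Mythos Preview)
Your proposal is correct and follows essentially the same approach as the paper: reduce to per-cluster tail bounds via the law of total probability, then use that $f(\cdot;W)$ is $\|W\|_F$-Lipschitz together with Gaussian (log-concave) concentration. The paper simply cites the Lipschitz-concentration step as Lemma~4.1 of \cite{frei22} for $1$-strongly log-concave measures, whereas you spell it out directly; your caveat about the sign of $m_\nu$ is also implicit in the paper's argument and is indeed irrelevant in the application, where $y_\nu\,\E_{x\sim N(\nu,I_p)}[f(x;W^{(t)})]>0$ is established beforehand.
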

\begin{proof}
It suffices to prove that for each $\nu \in \centroids$,
\begin{equation}\label{ineq:ledoux}
    \mathbb{P}_{x \sim N(\nu,I_p)}(y_{\nu} f(x ; W) < 0) \leq 2 \exp \left(-c \left(\frac{\E_{x \sim N(\nu,I_p)}[ f(x ; W)]}{\|W\|_F}\right)^2\right).
\end{equation}
Then applying the law of total expectation, we have
\begin{equation*}
   \begin{split}
        \mathbb{P}_{(x, \tilde y) \sim P_{\text{clean}}}(\tilde y \neq \operatorname{sgn}(f(x ; W))) &= \frac{1}{4}\sum_{\nu \in \centroids}\mathbb{P}_{x \sim N(\nu, I_p)}(y_{\nu} \neq \operatorname{sgn}(f(x ; W)) )\\
        & \leq \frac{1}{2} \sum_{\nu \in \centroids} \exp \left(-c \left(\frac{\E_{x \sim N(\nu,I_p)}[ f(x ; W)]}{\|W\|_F}\right)^2\right)\\
        & \leq \max_{\nu \in \centroids} 2 \exp \left(-c \left(\frac{\E_{x \sim N(\nu,I_p)}[ f(x ; W)]}{\|W\|_F}\right)^2\right).
   \end{split}
\end{equation*}
Since for each $\nu$, $N(\nu,I_p)$ is $1$-strongly log-concave, we plug in $\lambda = 1$ in the proof of Lemma 4.1 in \cite{frei22}. Then \eqref{ineq:ledoux} is obtained.

\end{proof}

Our next theorem shows that the generalization risk is small for large $t$. Recall the definition of $\cJ_1$ and $\cJ_2$, we equivalently write them as
\begin{equation*}
    \begin{split}
        \cJ_1 &= \cJ_{\pmu,\mP}^{20\varepsilon} = \{j \in [m]: a_j > 0, D_{\pmu,j}^{(0)} > n^{1/2 - 20\varepsilon},  d_{\pmu,j}^{(0)} < \min \{c_{\pmu}, c_{-\mu_1} \} - 2 n_{\pm\mu_1} - \sqrt{n}\}; \\ \cJ_2 &= \cJ_{\pmu, \mN}^{20\varepsilon} \cup \cJ_{-\mu_1, \mN}^{20\varepsilon} = \{j \in [m]: a_j < 0, D_{\nu,j}^{(0)} > n^{1/2 - 20\varepsilon}, \\  &\quad \quad \quad \quad \quad \quad \quad \quad \quad \quad \quad \quad \quad \quad d_{\nu,j}^{(0)} < \min \{c_{\nu}, c_{-\nu} \} - 2 n_{\pm\mu_1} - \sqrt{n}, \nu \in \{\pm \mu_1\}\}.
    \end{split}
\end{equation*}
Here $\cJ_{\pmu,\mP}^{20\varepsilon}, \cJ_{\pmu, \mN}^{20\varepsilon}$, and $\cJ_{-\mu_1, \mN}^{20\varepsilon}$ are defined in \eqref{def_of_Jnu}. By Lemma \ref{lem:initialization_good_run}, we know that under a good run,
\begin{equation}\label{ineq:prepare_for_thm2}
   |\cJ_1| \geq \frac{m}{n^{10\varepsilon}}, \quad |\cJ_2| \geq (1-\frac{10}{n^{20\varepsilon}})|\cJ_{\mN}|.
\end{equation}

\TheoremGenBound*

\begin{proof}
    Without loss of generality, we consider $ x$ follows $N(+\mu_1, I_p)$. Then we have
\begin{equation}\label{ineq:risk_lowerbound}
    \begin{split}
        \E_{ x}[yf( x, W^{(t)})] &=  \sum_{j=1}^m a_j\E_{ x}[\phi(\langle  w_j^{(t)},  x \rangle)] \\
        &\geq \frac{1}{\sqrt{m}}\Big[\sum_{j:a_j>0}\phi\big(\langle w_j^{(t)} , \E[ x]\rangle\big) -  \sum_{j:a_j < 0}\E_{ x}[\phi(\langle  w_j^{(t)},  x \rangle)\Big]\\
        &\geq \frac{1}{\sqrt{m}}\sum_{j: j\in \cJ_1} \phi\big(\langle w_j^{(t)} , \mu_1\rangle\big) - \frac{1}{\sqrt{m}}\sum_{j:a_j < 0}\E_{ x}[\phi(\langle  w_j^{(t)},  x \rangle)],
    \end{split}
\end{equation}
where the first inequality uses Jensen's inequality. By Lemma \ref{lem:key_lem1}, we have that for $j \in \cJ_1$,
\begin{equation}\label{ineq:wjt_mu1_pos}
    \begin{split}
        \langle  w_j^{(t)}, \mu_1 \rangle &= \sum_{s=0}^{t-1} \langle  w_j^{(s+1)}- w_j^{(s)}, \mu_1 \rangle + \langle  w_j^{(0)}, \mu_1 \rangle \\
        &\geq \frac{\alpha}{4n\sqrt{m}}\sum_{s=0}^{t-1}D_{\pmu,j}^{(s)}\|\mu\|^2 - \omega_{\text{init}}\sqrt{3mp/2}\|\mu\|\\
        &\geq \frac{\alpha \|\mu\|^2}{4n\sqrt{m}}\big[n^{1/2-20\varepsilon} + (c_{\pmu} - n_{\pmu} - d_{-\mu_1,j}^{(0)})(t-1) \big] - \omega_{\text{init}}\sqrt{3mp/2}\|\mu\|\\
        &\geq \frac{\alpha \|\mu\|^2}{4n\sqrt{m}}(c_{\pmu} - n_{\pmu} - d_{\nmu,j}^{(0)})(t-1)>0,
    \end{split}
\end{equation}
where the first inequality is from Lemma \ref{lem:key_lem1} and \ref{D_main_1} in Lemma \ref{lem:inital_weight_matrix}; the second inequality uses the property that for $j \in \cJ_1$, $D_{\pmu, j}^{(s)} \geq c_{+\mu_1} - n_{\pmu} - d_{-\mu_1}^{(0)}(j), s \geq 1$, which is also from Lemma \ref{lem:key_lem1}; and the third inequality uses Assumption \ref{assump5}. It yields that
\begin{equation}\label{ineq:intuitive_genalize1}
    \sum_{j: j\in \cJ_1} \phi\big(\langle w_j^{(t)} , \mu_1\rangle\big) \geq \frac{\alpha\|\mu\|^2(t-1)}{4n\sqrt{m}}\sum_{j\in \cJ_1} \big(c_{+\mu_1}-d_{-\mu_1}^{(0)}(j)-n_{+\mu_1} \big) \geq \frac{\alpha\|\mu\|^2(t-1)}{40\sqrt{m}}|\cJ_{1}|,
\end{equation}
where the last inequality uses \ref{B_main_4} in Lemma \ref{lem:initialization_good_run}.
For the second term in (\ref{ineq:risk_lowerbound}), note that we have $\phi(\lambda x) = \lambda \phi(x), \forall \lambda > 0$, and by Jensen's inequality, $\phi(x_1 + x_2) \leq \phi(x_1) + \phi(x_2),\forall x_1,x_2 \in \mathbb{R}$. Then we have
\begin{equation}\label{naj_part1}
    \E_{ x}[\phi(\langle w,  x\rangle)] \leq \phi(\langle w, \mu_1 \rangle)+ \E_{ x}[\phi(\langle w,  x-\mu_1\rangle)] = \phi(\langle w,\mu_1\rangle) + \sqrt{\frac{1}{2\pi}}\| w\|,
\end{equation}
where the last equation uses the expectation of half-normal distribution. By Lemma \ref{lem:bound_of_diff_log_unhinge}, we have $g_i^{(t)} \leq 1$, and
\begin{equation*}
        \begin{split}
            &\|w_j^{(t+1)} - w_j^{(t)}\|= \big\| \frac{\alpha a_j}{n}\sum_{i=1}^n g_i^{(t)}\phi^{\prime}(\langle w_j^{(t)}, x_i\rangle)y_i x_i \big\| \\
            &\leq \frac{\alpha}{n\sqrt{m}} \max_{i \in [n]}g_i^{(t)}\sqrt{\sum_{i=1}^n \|x_i\|^2 + \sum_{i \neq j}|\langle x_i, x_j\rangle|} \leq  \frac{2\alpha \sqrt{p}}{\sqrt{mn}},\quad 0 \leq t \leq 1/(\sqrt{n}p\alpha) - 2,
        \end{split}
\end{equation*}
where the last inequality uses $\|x_i\|^2 \leq 2p$, $|\langle x_i, x_j\rangle| \leq 2\mu^2$, which comes from Lemma \ref{lem:train_set_good_run}, and Assumption \ref{assump2}. It yields that for each $j \in [m]$,
\begin{equation}\label{ineq:W_2norm}
    \|w_j^{(t)} \| \leq \sum_{\tau = 0}^{t-1}  \|w_j^{(\tau +1)} - w_j^{(\tau)}\| + \|w_j^{(0)}\| \leq  \frac{2\alpha \sqrt{p} t }{\sqrt{nm}} + \|w_j^{(0)}\| \leq \frac{3\alpha \sqrt{p}t}{\sqrt{mn}},
\end{equation}
where the last inequality uses Lemma \ref{lem:inital_weight_matrix}.
Then we consider the decomposition of $\sum_{j:a_j<0}\phi(\langle  w_j^{(t)},\mu_1 \rangle)$:
\begin{equation*}
        \sum_{j:a_j<0}\phi(\langle  w_j^{(t)},\mu_1 \rangle) = \sum_{j\in \cJ_2}\phi(\langle  w_j^{(t)},\mu_1 \rangle) + \sum_{j \in \cJ_{\mN}, j\notin \cJ_2}\phi(\langle  w_j^{(t)},\mu_1 \rangle).
\end{equation*}
For the first term, we have
\begin{equation}\label{naj_part2}
   \begin{split}
       &\sum_{j\in \cJ_2}\phi(\langle  w_j^{(t)},\mu_1 \rangle) \leq \sum_{j\in \cJ_2}|\langle  w_j^{(t)},\mu_1 \rangle| \\
       &\leq  \sum_{j\in \cJ_2}\Big[ \Big|\sum_{s=0}^{t-1}\langle  w_j^{(s+1)} - w_j^{(s)},\mu_1 \rangle\Big| + |\langle  w_j^{(0)},\mu_1 \rangle| \Big]\\ 
       &\leq \sum_{j \in \cJ_2}\Big[ \Big|\sum_{s=0}^{t-1}\big(\frac{\alpha \|\mu\|^2}{2n\sqrt{m}}D_{\pmu,j}^{(s)} + \frac{5\alpha \|\mu\|^2}{n\sqrt{mn}}\big)\Big| + \omega_{\text{init}}\sqrt{3mp/2}\|\mu\|\Big] \\
       &\leq \sum_{j \in \cJ_2}\Big[ \frac{\alpha \|\mu\|^2}{2n\sqrt{m}}(n + \Delta_{\mu_1}(t-2))  + \frac{5\alpha \|\mu\|^2 t}{n\sqrt{mn}} + \omega_{\text{init}}\sqrt{3mp/2}\|\mu\|\Big]\\
       &= \sum_{j \in \cJ_2} \frac{\alpha \|\mu\|^2}{2n\sqrt{m}}[n+1+(\Delta_{\mu_1}+1)(t-2)] \leq \frac{\alpha \|\mu\|^2}{2n\sqrt{m}}[n+1+(\Delta_{\mu_1}+1)(t-2)]|\cJ_{2}|,
   \end{split}
\end{equation}
where the third inequality uses \eqref{key_property2_4.6} in Lemma \ref{lem:4.6}; the fourth inequality uses Lemma \ref{lem:key_lem2}; and the fiveth inequality uses Assumptions \ref{assump1} and \ref{assump5}. For the second term, we have
\begin{equation}\label{naj_part3}
    \begin{split}
        &\sum_{j \in \cJ_{\mN}, j\notin \cJ_2}\phi(\langle  w_j^{(t)},\mu_1 \rangle) \\
        &\leq \sum_{j \in \cJ_{\mN}, j\notin \cJ_2}\Big[ \sum_{s=0}^{t-1}|\langle  w_j^{(s+1)} - w_j^{(s)},\mu_1 \rangle| + |\langle  w_j^{(0)},\mu_1 \rangle| \Big]\\
        &\leq \sum_{j \in \cJ_{\mN}, j\notin \cJ_2}\Big[ \sum_{s=0}^{t-1}\big(\frac{\alpha \|\mu\|^2}{2n\sqrt{m}}|D_{\pmu,j}^{(s)}| + \frac{5\alpha \|\mu\|^2}{n\sqrt{mn}}\big) + \omega_{\text{init}}\sqrt{3mp/2}\|\mu\|\Big]\\
        &\leq \sum_{j \in \cJ_{\mN}, j\notin \cJ_2} \frac{\alpha t(\max_{\nu \in \{\pm \mu_1\}}\{c_{\nu}+n_{-\nu}\}+1)\|\mu\|^2}{n\sqrt{m}} \\
        &\leq \frac{\alpha tn\|\mu\|^2}{n\sqrt{m}}(|\cJ_{\mN}|-|\cJ_{2}|) \leq \frac{10\alpha t\|\mu\|^2}{n^{20\varepsilon}\sqrt{m}}|\cJ_{\mN}|,
    \end{split}
\end{equation}
where the second inequality uses \eqref{key_property2_4.6} in Lemma \ref{lem:4.6}; the third inequality uses Assumption \ref{assump5} and $|D_{\nu,j}^{(t)}| \leq \max\{c_{\nu}+n_{-\nu},c_{-\nu}+n_{\nu}\}$, which comes from the definition of $D_{\nu,j}^{(t)}$; the fourth inequality uses $c_{\nu}+n_{-\nu}+1 \leq n$ for all $\nu \in \centroids$, and the last inequality uses \eqref{ineq:prepare_for_thm2}.
Combining \eqref{naj_part1}, \eqref{ineq:W_2norm}, \eqref{naj_part2}, and \eqref{naj_part3}, we have
\begin{equation*}
    \begin{split}
        \sum_{j:a_j < 0}\E_{ x}[\phi(\langle  w_j^{(t)},  x \rangle)]& \leq  \sum_{j:a_j < 0} \phi(\langle  w_j^{(t)}, \mu_1 \rangle) + \sqrt{\frac{1}{2\pi}}\sum_{j:a_j<0}\| w_j^{(t)}\|\\
        &= \sum_{j \in  \cJ_2}\phi(\langle  w_j^{(t)}, \mu_1 \rangle)+ \sum_{j \in \cJ_{\mN}, j\notin \cJ_2}\phi(\langle  w_j^{(t)}, \mu_1 \rangle) + \sqrt{\frac{1}{2\pi}}\sum_{j:a_j<0}\| w_j^{(t)}\|\\
        &\leq \frac{\alpha\|\mu\|^2 t \sqrt{m}}{2n}\big[\frac{n+1}{t} + (\Delta_{\mu_1}+1) + \frac{20n}{n^{20\varepsilon}} + \frac{3\sqrt{2np}}{\sqrt{\pi}\|\mu\|^2}\big].
    \end{split}
\end{equation*}
It follows that
\begin{equation}
    \begin{split}
        &\E_{ x\sim N(+\mu_1,I_p)}[yf( x, W^{(t)})] \\
        &\geq \frac{\alpha\|\mu\|^2(t-1)}{40 m}|\cJ_{1}| - \frac{\alpha\|\mu\|^2 t }{2n}\big[\frac{n+1}{t} + (\Delta_{\mu_1}+1) + \frac{20n}{n^{20\varepsilon}} + \frac{3\sqrt{2np}}{\sqrt{\pi}\|\mu\|^2}\big]\\
        &\geq \frac{\alpha\|\mu\|^2 t}{2}\Big[\frac{1}{20 n^{10\varepsilon}}(1 - \frac{1}{t}) - \frac{2}{t} - \frac{\Delta_{\mu_1}+1}{n} -\frac{20}{n^{20\varepsilon}} - \frac{6\sqrt{p}}{\sqrt{2\pi n}\|\mu\|^2}\Big]\\
        &\geq \frac{\alpha\|\mu\|^2 t}{2}\Big[\frac{1}{20 n^{10\varepsilon}}(1 - \frac{1}{t}) - \frac{2}{t} - \frac{2\eta \sqrt{n\varepsilon\log(n)}+1}{n} -\frac{20}{n^{20\varepsilon}} - \frac{6}{\sqrt{2\pi }Cn}\Big] \geq \frac{\alpha \|\mu\|^2 t}{80n^{10\varepsilon}}
    \end{split}
\end{equation}
for $t \geq Cn^{10\varepsilon}$ when $C$ is large enough. Here the second inequality uses $|\cJ_{1}| \geq mn^{-10\varepsilon}$; the third inequality uses \ref{E_main_3} in Lemma \ref{lem:train_set_good_run} and Assumption \ref{assump1}; and the last inequality uses $\varepsilon<0.01$.
By (\ref{ineq:W_2norm}), it follows that $\|W^{(t)}\|_{F} \leq 3\alpha t \sqrt{p/n}$.
Thus we have
\begin{equation*}
    \frac{\E_{ x\sim N(+\mu_1,I_p)}[yf( x, W^{(t)})]}{\|W^{(t)}\|_{F}} \geq \frac{\sqrt{n}\|\mu\|^2}{240\sqrt{p}n^{10\varepsilon}}.
\end{equation*}
This lower bound for the normalized margin can be easily extended to the other $\nu$'s.
Applying Lemma \ref{lem:risk_upperbound}, we have
\begin{equation*}
    \mathbb{P}_{( x, y) \sim P_{\text{clean}}}(y \neq \operatorname{sgn}(f( x ; W^{(t)}))) \leq 2 \exp \left(- \frac{c n^{1-20\varepsilon}\|\mu\|^4}{240^2 p}\right) = \exp\big(-\Omega(\frac{n^{1-20\varepsilon}\|\mu\|^4}{p}) \big).
\end{equation*}
\end{proof}

\LemIntuitionOfHowGeneralize*
\begin{proof}
This lemma is essentially implied by the proof of Lemma \ref{thm:generalization_bound}. By \eqref{ineq:wjt_mu1_pos}, we know that for all $j \in \cJ_1$,
\[
\langle w_j^{(t)}, \pmu\rangle > 0.
\]
Then note that $\langle w_j^{(t)}, \pmu\rangle = \phi(\langle w_j^{(t)}, \pmu\rangle)$. From this we have
    \begin{equation*}
        \frac{1}{|\cJ_1|}\sum_{j:j\in \cJ_1}\langle w_j^{(t)}, +\mu_1 \rangle  = \frac{1}{|\cJ_1|}\sum_{j:j\in \cJ_1} \phi(\langle w_j^{(t)}, +\mu_1 \rangle) \geq \frac{\alpha \|\mu\|^2(t-1)}{40\sqrt{m}} = \Omega\Big( \frac{\alpha \|\mu\|^2 t}{\sqrt{m}} \Big),
    \end{equation*}
where the first inequality comes from \eqref{ineq:intuitive_genalize1}. 
Recall that in Lemma \ref{lem:key_lem2}, $\Delta_{\mu_1}$ is defined as $|n_{\pmu} - n_{\nmu}| + \sqrt{n}$. Applying \ref{E_main_3} in Lemma \ref{lem:train_set_good_run}, we have
\begin{equation*}
    \begin{split}
        |n_{\pmu} - n_{\nmu}| \leq & |n_{\pmu} - \eta(n_{\pmu}+c_{\pmu})| + |\eta(n_{\pmu}+c_{\pmu} - n/4)| \\
    &+ |\eta(n_{\nmu}+c_{\nmu} - n/4)| + |n_{\nmu} - \eta(n_{\nmu}+c_{\nmu})| \\
    \leq & 4\sqrt{\varepsilon n \log(n)}  .
    \end{split}
\end{equation*}
Then $\Delta_{\mu_1}$ is upper bounded by
\[
\Delta_{\mu_1} \leq \sqrt{n} + 4\sqrt{\varepsilon n \log(n)} = O(\sqrt{n\log(n)}).
\]
Combining the inequality above with equation \eqref{naj_part2}, we have
    \begin{equation*}
        \frac{1}{|\cJ_2|}\sum_{j\in \cJ_2} |\langle  w_j^{(t)},\mu_1 \rangle| \leq \frac{\alpha \|\mu\|^2}{2n\sqrt{m}}[n+1+(\Delta_{\mu_1}+1)(t-2)] = O\Big(\frac{\alpha \|\mu\|^2}{\sqrt{m}}+ \frac{\alpha \|\mu\|^2 \sqrt{\log(n)}t}{\sqrt{mn}}  \Big).
\end{equation*}

\end{proof}

\subsubsection{Proof of Theorem~\ref{lem:test_error_at_1}: 1-step Test Accuracy}
\label{subsec:proof_of_delay_ge}

Before stating the proof, we begin with the necessary definitions and a preliminary result.
Recall that $h_i^{(t)} = g_i^{(t)} - 1/2$ and the decomposition \eqref{eq:GD_update_newloss_decom}. When $t=0$, we denote
\begin{equation}
    w_{j,\mT}^{(1)} := w_j^{(0)} + \frac{\alpha a_j}{2n}\sum_{i=1}^n \phi^{\prime}(\langle  w_j^{(0)},  x_i \rangle)y_i  x_i, \quad j \in [m]
    \label{equation:unhinge-loss-WT}
\end{equation}
and $W_{\mT}^{(1)} := [w_{1,\mT}^{(1)}, \cdots, w_{m,\mT}^{(1)}]^{\top}$. Next lemma shows that $W_{\mT}^{(1)}$ is a good approximation of $W^{(1)}$ with a large probability.
\begin{lemma}\label{lem:W1_W1T_diff}
Suppose Assumptions \ref{assump1} and \ref{assump2} hold. Given $\{x_i\}\in \cG_{\text{data}}$ and $W^{(0)} \in \cG_W$ , we have
\begin{equation*}
    |h_i^{(0)}| \leq p\omega_{\text{init}}\sqrt{3m}/2;
\end{equation*}
\begin{equation*}
\| W_{\mT}^{(1)} - W^{(1)}\|_F  = \sqrt{\sum_{j=1}^m \|w_{j,\mT}^{(1)} - w_j^{(1)}\|^2} \leq  \frac{\alpha \omega_{\text{init}}p^{3/2}\sqrt{3m}}{\sqrt{n}}.
\end{equation*}
\end{lemma}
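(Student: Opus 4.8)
The statement splits into two bounds, both of which follow from the ``good run'' estimates already in place. For the bound on $|h_i^{(0)}|$: recall that $h_i^{(0)} = g_i^{(0)} - 1/2$ with $g_i^{(0)} = -\ell'(y_i f(x_i;W^{(0)})) = 1/(1+\exp(y_i f(x_i;W^{(0)})))$, so the elementary inequality $|1/(1+\exp(z)) - 1/2| \le |z|/2$ (already invoked in the proof of Lemma~\ref{lem:bound_of_diff_log_unhinge}) gives $|h_i^{(0)}| \le \tfrac12 |f(x_i;W^{(0)})|$. Then I would bound, using $|a_j| = 1/\sqrt m$ and Cauchy--Schwarz, $|f(x_i;W^{(0)})| \le \tfrac{1}{\sqrt m}\sum_{j=1}^m \phi(\langle w_j^{(0)},x_i\rangle) \le \tfrac{1}{\sqrt m}\|x_i\|\sum_{j=1}^m \|w_j^{(0)}\| \le \|x_i\|\,\|W^{(0)}\|_F$, and conclude with $\|x_i\|^2 \le 2p$ (the consequence of \ref{E_main_1} recorded at the end of the proof of Lemma~\ref{lem:train_set_good_run}) together with $\|W^{(0)}\|_F^2 \le \tfrac32\omega_{\text{init}}^2 mp$ from \ref{D_main_1}. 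This produces $|f(x_i;W^{(0)})| \le \sqrt{3}\,\omega_{\text{init}} p\sqrt m$, hence $|h_i^{(0)}| \le \tfrac12 p\omega_{\text{init}}\sqrt{3m}$, as claimed.

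For the second bound, subtract \eqref{equation:unhinge-loss-WT} from the gradient-descent update \eqref{eq:GD_update_newloss_decom} specialized to $t=0$ to obtain $w_{j,\mT}^{(1)} - w_j^{(1)} = -\tfrac{\alpha a_j}{n}\sum_{i=1}^n h_i^{(0)}\phi'(\langle w_j^{(0)},x_i\rangle)y_i x_i = -\tfrac{\alpha a_j}{n}\sum_{i\in\mathcal A_j^{(0)}} h_i^{(0)} y_i x_i$. Writing $H := p\omega_{\text{init}}\sqrt{3m}/2$ for the bound just established, I would expand the squared norm as $\bigl\|\sum_{i\in\mathcal A_j^{(0)}} h_i^{(0)} y_i x_i\bigr\|^2 = \sum_{i,k\in\mathcal A_j^{(0)}} h_i^{(0)}h_k^{(0)} y_iy_k\langle x_i,x_k\rangle \le H^2\bigl(\sum_{i}\|x_i\|^2 + \sum_{i\ne k}|\langle x_i,x_k\rangle|\bigr) \le H^2(2np + 2n^2\|\mu\|^2) \le 4H^2 np$, where I used $\|x_i\|^2 \le 2p$ and $|\langle x_i,x_k\rangle| \le 2\|\mu\|^2$ for $i\ne k$ (both from \ref{E_main_1}--\ref{E_main_2}, the latter since $\|\mu\|^2 \ge 10\sqrt{p\log n}$ by Assumption~\ref{assump1}) and $n^2\|\mu\|^2 \le p$ from Assumption~\ref{assump2}. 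With $|a_j| = 1/\sqrt m$ this yields $\|w_{j,\mT}^{(1)} - w_j^{(1)}\| \le \tfrac{\alpha}{n\sqrt m}\cdot 2H\sqrt{np} = \sqrt{3}\,\alpha\omega_{\text{init}} p^{3/2}/\sqrt n$, and summing the squares over the $m$ neurons gives $\|W_{\mT}^{(1)} - W^{(1)}\|_F \le \sqrt m\cdot \sqrt{3}\,\alpha\omega_{\text{init}} p^{3/2}/\sqrt n = \alpha\omega_{\text{init}} p^{3/2}\sqrt{3m}/\sqrt n$.

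The only step demanding any care is the estimate of $\bigl\|\sum_{i\in\mathcal A_j^{(0)}} h_i^{(0)} y_i x_i\bigr\|$: the crude triangle-inequality bound $\le nH\sqrt{2p}$ loses a factor $\sqrt n$ and would not reproduce the $1/\sqrt n$ appearing in the claim, so one must go through the squared norm and use the near-orthogonality $|\langle x_i,x_k\rangle| \le 2\|\mu\|^2 \ll p/n$ (Assumption~\ref{assump2}, cf.\ Corollary~\ref{corollary:near-orthogonality-of-train}) to absorb the off-diagonal contributions into the diagonal term. Everything else is routine bookkeeping.
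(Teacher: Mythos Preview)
Your proposal is correct and follows essentially the same approach as the paper: both bound $|h_i^{(0)}|$ via $|1/(1+e^z)-1/2|\le|z|/2$ together with $|f(x_i;W^{(0)})|\le\|x_i\|\,\|W^{(0)}\|_F$, and both control $\|w_{j,\mT}^{(1)}-w_j^{(1)}\|$ by expanding the squared norm and using near-orthogonality ($\|x_i\|^2\le 2p$, $|\langle x_i,x_k\rangle|\le 2\|\mu\|^2$, and $n^2\|\mu\|^2\le p$) to get the $\sqrt{np}$ rather than $n\sqrt p$ scaling. Your remark that the crude triangle-inequality bound would lose a $\sqrt n$ is exactly the point.
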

\begin{proof}
Let $z_i^{(t)} = y_if(x_i;W^{(t)})$.
    Note that $\ell^{\prime}(z) = -1/(1+\exp(z))$, we have $|-\ell^{\prime}(z) - 1/2| \leq |z|/2$. It yields that
    \begin{equation}\label{hi_diff}
        \begin{split}
            |h_i^{(0)}| &\leq \frac{1}{2}|z_i^{(0)}| \leq \frac{1}{2}\sum_{j=1}|a_j \langle w_j^{(0)}, x_i\rangle| \leq \frac{1}{2}\sqrt{\sum_{j=1}^m a_j^2 \sum_{j=1}^m \| w_j^{(0)}\|^2\cdot\| x\|^2} \\
        &= \frac{1}{2}\|W^{(0)}\|_F \cdot\| x_i \| \leq  \frac{1}{2}p\omega_{\text{init}}\sqrt{3m},
        \end{split}
    \end{equation}
    where the first inequality uses $h_i^{(t)} = g_i^{(t)} - 1/2$ and  $g_i^{(t)} := -\ell^{\prime}(z_i^{(t)})$; the second inequality uses triangle inequality; the third inequality uses Cauchy-Schwarz inequality; and the last inequality uses \ref{E_main_1} in Lemma \ref{lem:train_set_good_run} and \ref{D_main_1} in Lemma \ref{lem:inital_weight_matrix}. Denote $h_{\max} = \max_{i \in [n]} |h_i^{(0)}|$. Then we have
    \begin{equation*}
        \begin{split}
            \|w_{j,\mT}^{(1)} - w_j^{(1)}\| &= \frac{\alpha}{n\sqrt{m}}\|\sum_{i=1}^n h_i^{\zr}\phi^{\prime}(\langle w_j^{(0)}, x_i\rangle)y_ix_i \| \\
            &\leq \frac{\alpha h_{\max}}{n\sqrt{m}}\sqrt{\sum_{i=1}^n \|x_i\|^2 + n(n-1)\max_{i \neq j}|x_i^{\top}x_j|} \\
            &\leq \frac{\alpha h_{\max}}{n\sqrt{m}}\sqrt{4np} \leq \frac{\sqrt{3}\alpha \omega_{\text{init}}p^{3/2}}{\sqrt{n}},
        \end{split}
    \end{equation*}
where the second inequality uses $\|x_i\|^2 \leq 2p$ and $p \geq Cn^2\|\mu\|^2$, which come from \ref{E_main_1} and \ref{E_main_2} in Lemma \ref{lem:train_set_good_run} and Assumption \ref{assump2} respectively, and the third inequality uses \eqref{hi_diff}. Further we have 
\begin{equation*}
\| W_{\mT}^{(1)} - W^{(1)}\|_F  = \sqrt{\sum_{j=1}^m \|w_{j,\mT}^{(1)} - w_j^{(1)}\|^2} \leq  \frac{\alpha \omega_{\text{init}}p^{3/2}\sqrt{3m}}{\sqrt{n}}.
\end{equation*}
\end{proof}

\begin{lemma}\label{lem:prepare_berry_essen}
Suppose that Assumptions \ref{assump1}-\ref{assump6} hold. Given $X \in \cG_{\text{data}}$, for each $j \in [m]$, we have
    \begin{equation*}
        n/24 \leq \text{Var}(D_{+\mu_1,j}^{(0)}) \leq n/2;
    \end{equation*}
    \begin{equation*}
        \E\big[ \big | D_{+\mu_1,j}^{(0)}) - \E[D_{+\mu_1,j}^{(0)})] \big|^3\big] \leq n^{3/2}.
    \end{equation*}
\end{lemma}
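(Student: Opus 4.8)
The plan is to compare $D := D_{+\mu_1,j}^{(0)}$ with the analogous quantity built from genuinely independent Gaussians — for which both moments can be computed exactly — and then transfer the bounds using the near-orthogonality of the data. Recall from \eqref{eq:rewrite-Dj0} that, with $\cA_1 := \cC_{+\mu_1}\cup\cN_{-\mu_1}$, $\cA_2 := \cC_{-\mu_1}\cup\cN_{+\mu_1}$, and $z_i := \langle w_j^{(0)},x_i\rangle/\|x_i\|$, one has $D = \sum_{i\in\cA_1}\mathbb I(z_i>0) - \sum_{i\in\cA_2}\mathbb I(z_i>0)$, where $Z := (z_i)_{i\in\cA_1\cup\cA_2}\sim N(0,\Sigma)$ with $\Sigma_{ii}=1$ and $|\Sigma_{ik}|\le 2/(Cn^2)$ for $i\ne k$ by \cref{corollary:near-orthogonality-of-train}. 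Since each $z_i$ is marginally standard Gaussian, $\E D = \tfrac12(|\cA_1|-|\cA_2|) =: m$; and $N := |\cA_1|+|\cA_2| = (c_{+\mu_1}+n_{+\mu_1})+(c_{-\mu_1}+n_{-\mu_1})\le n$, with $N = \tfrac n2 + O(\sqrt{n\log n})$ by \ref{E_main_3} in \cref{lem:train_set_good_run} (the sets $\cC_{\pm\mu_1},\cN_{\pm\mu_1}$ being pairwise disjoint).

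Let $Z' := (z_i')_{i\in\cA_1\cup\cA_2}\sim N(0,I_N)$ be independent standard Gaussians and $D' := \sum_{i\in\cA_1}\mathbb I(z_i'>0) - \sum_{i\in\cA_2}\mathbb I(z_i'>0)$, a difference of independent binomials with the same mean $m$. Writing $D'-m = \sum_{i\in\cA_1}(\mathbb I(z_i'>0)-\tfrac12) - \sum_{i\in\cA_2}(\mathbb I(z_i'>0)-\tfrac12)$ as a sum of $N$ independent mean-zero terms, each of magnitude $\tfrac12$ (hence with second moment $\tfrac14$ and fourth moment $\tfrac1{16}$), one computes $\E[(D'-m)^2] = \tfrac N4$ and $\E[(D'-m)^4] = \tfrac N{16} + \tfrac{3N(N-1)}{16}\le \tfrac{3N^2}{16}\le\tfrac{3n^2}{16}$. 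For the transfer step I invoke \cref{joint_prob_dependent_gaussian} together with its upper-bound counterpart (already used in the proof of \cref{lem:initialization_good_run}): the probability of any fixed sign pattern of $Z$ lies in $[\gamma^N,\gamma_2^N]$ with $\gamma = \tfrac12-\tfrac4{Cn}$ and $\gamma_2 = \tfrac12+\tfrac4{Cn}$. Summing over the $2^N$ atoms, for any nonnegative function $h$ of the sign pattern we get $(2\gamma)^N\E[h(Z')]\le \E[h(Z)]\le (2\gamma_2)^N\E[h(Z')]$, and since $N\le n$ and $C$ is a large universal constant, $(2\gamma)^N\ge 1-\tfrac8C\ge\tfrac12$ and $(2\gamma_2)^N\le e^{8/C}\le 2$. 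Applying this with $h = (D-m)^2$ and $h=(D-m)^4$ (both functions of the sign pattern of $Z$, since $D$ is and $m$ is a constant) yields $\text{Var}(D)\in[\tfrac12\cdot\tfrac N4,\, 2\cdot\tfrac N4] = [\tfrac N8,\tfrac N2]$ and $\E[(D-m)^4]\le 2\cdot\tfrac{3n^2}{16} = \tfrac{3n^2}8$.

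From $\tfrac N8 \le \text{Var}(D)\le\tfrac N2$, $N\le n$, and $N\ge n/2 - O(\sqrt{n\log n})\ge n/3$ for $n$ large, we conclude $n/24\le\text{Var}(D)\le n/2$. From $\E[(D-m)^4]\le\tfrac{3n^2}8\le n^2$ and Cauchy--Schwarz, $\E[|D-m|^3] = \E\big[|D-m|\cdot(D-m)^2\big]\le\sqrt{\text{Var}(D)\cdot\E[(D-m)^4]}\le\sqrt{(n/2)\cdot n^2} = n^{3/2}/\sqrt2\le n^{3/2}$, which is the claim (the argument is identical for every $j\in[m]$, and for the direction $-\mu_1$ in place of $+\mu_1$, with $\cA_1,\cA_2$ swapped). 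There is no genuinely hard step here: the only content is the sign-pattern comparison, which is precisely \cref{joint_prob_dependent_gaussian}; the one point needing care is verifying that the universal constant $C$ is large enough for the factors $(2\gamma)^{\pm N}$ to be absorbed into the stated numerical constants $\tfrac1{24}$ and $\tfrac12$ (equivalently, $C\ge 16$ suffices).
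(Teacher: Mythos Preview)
Your proof is correct and follows essentially the same approach as the paper: represent $D$ via sign indicators, transfer second and fourth centered moments to the independent case via the sign-pattern comparison lemma (the paper packages this as \cref{function_of_joint_dependent_gaussian}), and then use Cauchy--Schwarz to bound the third absolute moment. The only cosmetic difference is that you compute $\E[(D'-m)^4]$ directly as a sum of $N$ i.i.d.\ $\pm\tfrac12$ terms, whereas the paper splits via $(a+b)^4\le 8(a^4+b^4)$ into two binomial pieces; both yield the required $O(n^2)$ bound.
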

\begin{proof}
Recall that $\cA_1 = \cC_{+\mu_1}\cup \cN_{-\mu_1}$, $\cA_2 = \cC_{-\mu_1}\cup \cN_{+\mu_1}$. According to equation \eqref{eq:rewrite-Dj0}, we have
\begin{equation}
    D_{+\mu_1,j}^{(0)} = \sum_{i \in \cA_1} \mathbb I(z_i > 0) - \sum_{i \in \cA_2} \mathbb I(z_i > 0).
\end{equation}
According to Lemma \ref{function_of_joint_dependent_gaussian}, we have
\begin{equation*}
       \begin{split}
            \text{Var}(D_{+\mu_1,j}^{(0)}) &= \E_{B}[f_1(b_1,\cdots,b_n)] \geq \frac{1}{2} \E_{B'}[f_1(b_1',\cdots,b_n')] \\
            &= \frac{1}{2}\text{Var}_{B'}(\sum_{i \in \cA_1}b_i' - \sum_{i \in \cA_2}b_i') = \frac{|\cA_1|+|\cA_2|}{8} \geq \frac{n}{24},
       \end{split}
\end{equation*}
where $f_1(b_1,\cdots,b_n) := (\sum_{i \in \cA_1}b_i - \sum_{i \in \cA_2}b_i - (|\cA_1|-|\cA_2|)/2)^2 \geq 0$, and $b_i'$ are i.i.d Bernoulli random variables defined in Lemma \ref{function_of_joint_dependent_gaussian}, and the last inequality is from \eqref{lem:b2.0.6}. On the other side, similarly we have
\begin{equation}\label{2_c_moment_upper_bound}
    \text{Var}(D_{+\mu_1,j}^{(0)}) \leq 2\E_{B'}[f_1(b_1',\cdots,b_n')] = (|\cA_1|+|\cA_2|)/2 \leq n/2,
\end{equation}
where the last inequality is from \ref{E_main_3} in Lemma \ref{lem:train_set_good_run}.
Denote $f_2(b_1,\cdots,b_n) := (\sum_{i \in \cA_1}b_i - \sum_{i \in \cA_2}b_i - (|\cA_1|-|\cA_2|)/2)^4 \geq 0$, then we have
\begin{equation}\label{4_c_moment_upper_bound}
   \begin{split}
        \E[|D_{+\mu_1,j}^{(0)} - \E[D_{+\mu_1,j}^{(0)}]|^4] &= \E_{B}[f_2(b_1,\cdots,b_n)] \leq 2 \E_{B'}[f_2(b_1',\cdots,b_n')]\\
        &=2\E_{B'}\big[\big[\sum_{i \in \cA_1}(b_i'-\frac{1}{2}) - \sum_{i \in \cA_2}(b_i'-\frac{1}{2})  \big]^4\big]\\
        &\leq 16\E_{B'}\big[\big[\sum_{i \in \cA_1}(b_i'-\frac{1}{2})\big]^4 + \big[\sum_{i \in \cA_2}(b_i'-\frac{1}{2})  \big]^4\big] \\
        &\leq 4(|\cA_1|^2 + |\cA_2|^2) \leq n^2,
   \end{split}
\end{equation}
where the first inequality uses Lemma \ref{function_of_joint_dependent_gaussian}; the second inequality uses $(a+b)^4 \leq 8(a^4 + b^4)$; the third inequality uses the formula of the fourth central moment of a binomial distribution with parameter equal to $1/2$, i.e. $\mu_4(\text{B}(n,1/2)) = n(1 + (3n-6)/4)/4 \leq n^2/4$; and the last inequality is from \ref{E_main_3} in Lemma \ref{lem:train_set_good_run}. Combining \eqref{2_c_moment_upper_bound} and \eqref{4_c_moment_upper_bound}, we have
\begin{equation*}
    \E\big[ \big | D_{+\mu_1,j}^{(0)}) - \E[D_{+\mu_1,j}^{(0)})] \big|^3\big] \leq \sqrt{\text{Var}(D_{+\mu_1,j}^{(0)})\E[|D_{+\mu_1,j}^{(0)} - \E[D_{+\mu_1,j}^{(0)}]|^4]} \leq n^{3/2}
\end{equation*}
by applying the Cauchy-Schwarz inequality.

\end{proof}

\begin{lemma}\label{lem:two_concentrations_of_Dj}
Suppose that Assumptions \ref{assump1}-\ref{assump6} hold. Given $X=[x_1,\cdots,x_n]^{\top} \in \cG_{\text{data}}$, we have
    \begin{equation*}
        \P(\Big|\sum_{j=1}^m a_j \phi(a_jD_{+\mu_1,j}^{(0)}) - \frac{1}{2}\E[D_{+\mu_1,j}^{(0)}] \Big| > t) \leq 2\bar \Phi\big(\frac{t \sqrt{m}}{3C_n\sqrt{n \varepsilon}} \big) + \frac{C}{\sqrt{m}};
    \end{equation*}
    \begin{equation*}
        \P(\Big|\sum_{j=1}^m a_j |a_jD_{+\mu_1,j}^{(0)}| \Big| > t) \leq 2\bar \Phi\big(\frac{t \sqrt{m}}{3C_n\sqrt{n \varepsilon}} \big) + \frac{C}{\sqrt{m}}.
    \end{equation*}
\end{lemma}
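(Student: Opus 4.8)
The plan is to reduce each of the two bounds to a one-dimensional Berry--Esseen estimate for a sum of $m$ i.i.d.\ summands. Abbreviate $D_j := D_{+\mu_1,j}^{(0)}$ and $\epsilon_j := \sqrt m\, a_j \in \{\pm1\}$. Conditionally on $X$, the variables $D_1,\dots,D_m$ are i.i.d.\ (each $D_j$ is a fixed measurable function of $w_j^{(0)}$, and the $w_j^{(0)}$ are i.i.d.), the $\epsilon_j$ are i.i.d.\ Rademacher, and the two families are independent. By positive homogeneity of $\phi$ together with $\phi(z)=\tfrac12(z+|z|)$, one checks the elementary identities $a_j\phi(a_jD_j)=\tfrac{1}{2m}\bigl(D_j+\epsilon_j|D_j|\bigr)$ and $a_j|a_jD_j|=\tfrac1m\epsilon_j|D_j|$. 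Since the $D_j$ are identically distributed, $\sum_{j}\E[a_j\phi(a_jD_j)]=\tfrac12\E[D_{+\mu_1,1}^{(0)}]$, so the first target quantity is $S_m:=\sum_{j=1}^m Y_j$ with $Y_j:=\tfrac{1}{2m}\bigl(D_j-\E D_1+\epsilon_j|D_j|\bigr)$, and the second is $T_m:=\sum_{j=1}^m W_j$ with $W_j:=\tfrac1m\epsilon_j|D_j|$; conditionally on $X$, each of $\{Y_j\}$ and $\{W_j\}$ is i.i.d.\ with mean zero.

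Next I would record the moments of a single summand. Using $\epsilon_1\perp D_1$, $\E\epsilon_1=0$, $\E\epsilon_1^2=1$, a short computation gives $\mathrm{Var}(Y_1)=\tfrac{1}{4m^2}\bigl(2\mathrm{Var}(D_1)+(\E D_1)^2\bigr)$ and $\mathrm{Var}(W_1)=\tfrac{1}{m^2}\bigl(\mathrm{Var}(D_1)+(\E D_1)^2\bigr)$. Lemma~\ref{lem:prepare_berry_essen} provides $n/24\le\mathrm{Var}(D_1)\le n/2$ and $\E|D_1-\E D_1|^3\le n^{3/2}$, and Condition~\ref{E_main_3} (exactly as in \eqref{lem:b2.0.5}) gives $|\E D_1|=\tfrac12\bigl||\cA_1|-|\cA_2|\bigr|\le\tfrac12(1+\eta)\sqrt{\varepsilon n\log n}$, where $\cA_1:=\cC_{+\mu_1}\cup\cN_{-\mu_1}$ and $\cA_2:=\cC_{-\mu_1}\cup\cN_{+\mu_1}$, whence $(\E D_1)^2\le\varepsilon n\log n$. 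Consequently, for $n$ large enough, $\tau_m^2:=m\mathrm{Var}(Y_1)$ and $m\mathrm{Var}(W_1)$ are both at most $\tfrac1m\bigl(\tfrac n2+\varepsilon n\log n\bigr)\le \tfrac{9C_n^2\varepsilon n}{m}=\bigl(\tfrac{3C_n\sqrt{\varepsilon n}}{\sqrt m}\bigr)^2$ (recall $C_n=10\sqrt{\log n}$), and both are at least $n/(48m)>0$.

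The decisive step is controlling the third-moment ratio that appears in the Berry--Esseen remainder, and showing it is bounded by a \emph{universal} constant — this is what keeps the error at scale $C/\sqrt m$ rather than $\mathrm{polylog}(n)/\sqrt m$. Set $Z:=D_1-\E D_1+\epsilon_1|D_1|$, so that $\mathrm{Var}(Z)=2\mathrm{Var}(D_1)+(\E D_1)^2$ and $Y_1=Z/(2m)$. From $|Z|\le 2|D_1-\E D_1|+|\E D_1|$ we get $\E|Z|^3\le 32\,\E|D_1-\E D_1|^3+4|\E D_1|^3$; combining $\E|D_1-\E D_1|^3\le n^{3/2}\le 24^{3/2}\mathrm{Var}(D_1)^{3/2}\le 24^{3/2}\mathrm{Var}(Z)^{3/2}$ with $|\E D_1|^3\le\mathrm{Var}(Z)^{3/2}$ yields $\E|Z|^3\le C_0\,\mathrm{Var}(Z)^{3/2}$ for a universal $C_0$, hence $\E|Y_1|^3/\mathrm{Var}(Y_1)^{3/2}=\E|Z|^3/\mathrm{Var}(Z)^{3/2}\le C_0$. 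The identical argument with $Z$ replaced by $\epsilon_1|D_1|$ and $\mathrm{Var}(Z)$ by $\E[D_1^2]=\mathrm{Var}(D_1)+(\E D_1)^2$ gives $\E|W_1|^3/\mathrm{Var}(W_1)^{3/2}\le C_0$ as well (after enlarging $C_0$).

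Finally I would apply the Berry--Esseen theorem (Lemma~\ref{lem:BE}) to the i.i.d.\ sum $S_m$: with $\tau_m^2=m\mathrm{Var}(Y_1)$, $\sup_x\bigl|\P(S_m/\tau_m\le x)-\Phi(x)\bigr|\le C_{\text{BE}}\,\E|Y_1|^3/(\sqrt m\,\mathrm{Var}(Y_1)^{3/2})\le C_{\text{BE}}C_0/\sqrt m$, so $\P(|S_m|>t)\le 2\bar\Phi(t/\tau_m)+2C_{\text{BE}}C_0/\sqrt m$; since $\tau_m\le 3C_n\sqrt{\varepsilon n}/\sqrt m$ and $\bar\Phi$ is nonincreasing, $\bar\Phi(t/\tau_m)\le\bar\Phi\bigl(t\sqrt m/(3C_n\sqrt{\varepsilon n})\bigr)$, which is the first claimed inequality with $C:=2C_{\text{BE}}C_0$. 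Running the same three steps on $T_m=\sum_j W_j$ gives the second. The only point needing genuine care is the third-moment bound in the preceding paragraph: one must exploit the ratio structure — using that $\mathrm{Var}(Z)$ simultaneously dominates $\mathrm{Var}(D_1)$ and $(\E D_1)^2$ — rather than plugging in crude separate bounds on numerator and denominator, which would leak a $\mathrm{polylog}(n)$ factor into the remainder; everything else is bookkeeping with the estimates already furnished by Lemma~\ref{lem:prepare_berry_essen} and Condition~\ref{E_main_3}.
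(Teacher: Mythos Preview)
Your proposal is correct and follows essentially the same route as the paper: i.i.d.\ structure conditional on $X$, the moment estimates of Lemma~\ref{lem:prepare_berry_essen} together with the bound on $|\E D_1|$ from condition~\ref{E_main_3}, and then Berry--Esseen with a third-moment-to-variance ratio bounded by a universal constant. The only cosmetic difference is that you unpack $a_j\phi(a_jD_j)=\tfrac{1}{2m}(D_j+\epsilon_j|D_j|)$ directly via $\phi(z)=\tfrac12(z+|z|)$ and compute the moments by hand, whereas the paper routes the same computation through the helper Lemma~\ref{lem:for_berry_esseen}; the resulting bounds and the final Berry--Esseen step are identical.
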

\begin{proof}
In this proof, by convention all $\P(\cdot), \E[\cdot], \text{Var}(\cdot) ,\rho(\cdot)$ are implicitly conditioned on 
a fixed $X$. Denote the expectation of $D_{+\mu_1,j}^{(0)}$ by $e_{+\mu_1}$. Note that conditioning on $X$, $\{a_j \phi(a_jD_{+\mu_1,j}^{(0)})\}_{j \geq 1}$ are i.i.d, and the expectation of $D_{+\mu_1,j}^{(0)}$ is
\begin{equation}\label{ineq:4.11.tmp}
    e_{+\mu_1} = (c_{+\mu_1} - n_{+\mu_1} - c_{-\mu_1} + n_{-\mu_1})/2 \leq 2C_n\sqrt{n \varepsilon},
\end{equation}
where the inequality uses \ref{E_main_3} in Lemma \ref{lem:train_set_good_run}.
By Lemma \ref{lem:prepare_berry_essen}, we have
\begin{equation}\label{eq:prepare_BE}
    \frac{n}{24} \leq \text{Var}\big( D_{+\mu_1,j}^{(0)} \big) \leq \frac{n}{2};\quad  \rho (D_{+\mu_1,j}^{(0)} ) \leq n^{3/2}.
\end{equation}
Denote
\begin{equation*}
    \sigma_{+\mu_1}^2 = \text{Var}\big( m a_j \phi(a_jD_{+\mu_1,j}^{(0)}) \big); \quad \rho_{+\mu_1} = \rho ( m a_j \phi(a_jD_{+\mu_1,j}^{(0)}) ).
\end{equation*}
Combining \eqref{eq:prepare_BE} and results in Lemma \ref{lem:for_berry_esseen}, we have 
\begin{equation}\label{eq:prepare_BE_2}
   \E[m a_j \phi(a_jD_{+\mu_1,j}^{(0)}) ] = \frac{e_{+\mu_1}}{2};\quad \max\{\frac{n}{48}, \frac{e_{+\mu_1}^2}{4}\} \leq \sigma_{+\mu_1}^2 \leq \max\{\frac{n}{2}, \frac{e_{+\mu_1}^2}{2}\};\quad  \rho_{+\mu_1} \leq 32 \max\{n^{3/2}, |e_{+\mu_1}|^3 \}.
\end{equation}
Applying Berry-Esseen theorem, we have
\begin{equation*}
    \P(\Big|\sum_{j=1}^m a_j \phi(a_jD_{+\mu_1,j}^{(0)}) - \frac{1}{2}e_{+\mu_1} \Big| > t) \leq 2\bar \Phi\big(\frac{t \sqrt{m}}{\sigma_{+\mu_1}} \big) + \frac{C_{\text{BE}} \rho_{+\mu_1} }{\sigma_{+\mu_1}^3 \sqrt{m}} \leq 2\bar \Phi\big(\frac{t \sqrt{m}}{\sqrt{n}+2C_n\sqrt{n \varepsilon}} \big) + \frac{C}{\sqrt{m}}
\end{equation*}
for some universal constant $C>0$. Here the second inequality uses $\sigma_{+\mu_1}^2 \leq (\sqrt{n}+|e_{+\mu_1}|)^2$, which comes from \eqref{eq:prepare_BE_2}, and the last inequality uses \eqref{ineq:4.11.tmp}. By the symmetry of $a_j$, we have  
\begin{equation*}
    \E[m a_j |a_jD_{+\mu_1,j}^{(0)}| ] = 0; \quad \text{Var}(m a_j |a_jD_{+\mu_1,j}^{(0)}|) = \E[(D_{+\mu_1,j}^{(0)})^2]; \quad \rho(m a_j |a_jD_{+\mu_1,j}^{(0)}|) = \E[|D_{+\mu_1,j}^{(0)}|^3].
\end{equation*}
By \eqref{eq:prepare_BE}, we have
\begin{equation}
\frac{n}{24} + e_{+\mu_1}^2 \leq \E[(D_{+\mu_1,j}^{(0)})^2] \leq \frac{n}{2} + e_{+\mu_1}^2; \quad \E[|D_{+\mu_1,j}^{(0)}|^3] \leq 8(\rho (D_{+\mu_1,j}^{(0)} ) + |e_{+\mu_1}|^3) \leq 8(n^{3/2}+|e_{+\mu_1}|^3).
\end{equation}
Similarly, applying Berry-Esseen theorem, we have
\begin{equation*}
        \P(\Big|\sum_{j=1}^m a_j |a_jD_{+\mu_1,j}^{(0)}| \Big| > t) \leq 2\bar \Phi\big(\frac{t \sqrt{m}}{\sqrt{n}+2C_n\sqrt{n \varepsilon}} \big) + \frac{C}{\sqrt{m}},
\end{equation*}
where the inequality uses $\text{Var}(m a_j |a_jD_{+\mu_1,j}^{(0)}|) \leq (\sqrt{n}+|e_{+\mu_1}|)^2$ and \eqref{ineq:4.11.tmp}. Then the results of this lemma are proved by noting that $C_n \sqrt{\varepsilon} \geq 1$ for large enough $n$.
\end{proof}

\begin{restatable}{theorem}{TheoremDelayedGeneralization}\label{lem:test_error_at_1}
    Suppose that Assumptions \ref{assump1}-\ref{assump6} hold. With probability at least $1 - O(1/\sqrt{m}) - O(n^{-\varepsilon})$ over the initialization of the weights and the generation of training data, after one iteration, the classifier $\operatorname{sgn}(f( x,W^{(1)}))$ exhibits a generalization risk with the following bounds:
   \begin{equation*}
    \tfrac{1}{2}(1-n^{-\varepsilon}) \leq \P_{(x,y) \sim P_{\text{clean}}}( y \neq \operatorname{sgn}(f( x;W^{(1)}))) \leq  \tfrac{1}{2}(1+n^{-\varepsilon}).
\end{equation*}
\end{restatable}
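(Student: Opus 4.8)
The plan is to show that, on a good run, the classifier $\operatorname{sgn}(f(\cdot;W^{(1)}))$ outputs a fixed label on each of the four clusters and that the labels it assigns to the two clusters $\{+\nu,-\nu\}$ that make up a single class are \emph{opposite}, so that exactly half the test mass is misclassified. First I would pass from $W^{(1)}$ to the linearized iterate $W_{\mT}^{(1)}$ of \eqref{equation:unhinge-loss-WT} using \cref{lem:W1_W1T_diff}: this changes $f$ by at most $\|W_{\mT}^{(1)}-W^{(1)}\|_F\cdot\|x\|$, negligible by \ref{assump5}. For a test point $x\sim N(\nu,I_p)$ write $x=\nu+\xi$ and, for each neuron, decompose $\langle w_{j,\mT}^{(1)},x\rangle=\tfrac{\alpha a_j}{2n}\bigl(y_\nu D_{\nu,j}^{(0)}\|\mu\|^2+\sum_i s_{ij}y_i\langle x_i,\xi\rangle\bigr)+(\text{l.o.t.})$, where $s_{ij}=\mathbb I(\langle w_j^{(0)},x_i\rangle>0)$; the first (``intrinsic'') term is the estimate \eqref{key_property2_4.6} of \cref{lem:4.6} once the $\langle x_i-\bar x_i,\nu\rangle$ corrections and $\langle w_j^{(0)},\nu\rangle$ are absorbed, and the second is a centered Gaussian in $\xi$. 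By the signal-to-noise assumption \ref{assump1} and near-orthogonality (\cref{corollary:near-orthogonality-of-train}), for all but an $O(n^{-1/2})$-fraction of neurons the intrinsic term dominates the Gaussian one, so those ReLUs act affinely and positive homogeneity gives $f(x;W^{(1)})=F_\nu+\Xi_\nu(\xi)+(\text{error})$, with $F_\nu:=\tfrac{\alpha\|\mu\|^2}{2n}\sum_j a_j\phi(a_j y_\nu D_{\nu,j}^{(0)})$ constant on the cluster and $\Xi_\nu$ a centered Gaussian of standard deviation $\sigma_\nu=\Theta(\alpha\sqrt p/\sqrt n)$; assumption \ref{assump1} also yields $|F_\nu|/\sigma_\nu\gtrsim n^{0.01}$ whenever $|F_\nu|$ is not atypically small.

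Consequently the test risk on cluster $\nu$ equals $\bar\Phi(y_\nu F_\nu/\sigma_\nu)$ up to the error term. The key algebraic fact is that the two intrinsic values of one class are nearly negatives of each other: since $D_{-\nu,j}^{(0)}=-D_{\nu,j}^{(0)}$ and $y_{-\nu}=y_\nu$, one has $\phi(a_j y_\nu D_{\nu,j}^{(0)})+\phi(a_j y_\nu D_{-\nu,j}^{(0)})=|a_j D_{\nu,j}^{(0)}|$, so that $F_\nu+F_{-\nu}=\tfrac{\alpha\|\mu\|^2}{2n}\sum_j a_j\,|a_j D_{\nu,j}^{(0)}|$ --- exactly the sum controlled by \cref{lem:two_concentrations_of_Dj} --- while $F_\nu-F_{-\nu}=\tfrac{\alpha\|\mu\|^2 y_\nu}{2nm}\sum_j D_{\nu,j}^{(0)}$, so the goal reduces to $|F_\nu-F_{-\nu}|>3|F_\nu+F_{-\nu}|$. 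Feeding in \cref{lem:two_concentrations_of_Dj}, the variance bound of \cref{lem:prepare_berry_essen}, a Chebyshev concentration of $\sum_j D_{\nu,j}^{(0)}$ about $m e_\nu$ with $e_\nu=\tfrac12(c_\nu-n_\nu-c_{-\nu}+n_{-\nu})$, and an anti-concentration bound $|e_\nu|\gtrsim\sqrt{n/m}\,(n^\varepsilon+\operatorname{polylog} n)$ of the same Berry--Esseen flavor as condition \ref{E_main_4}, one obtains this inequality. It forces $\operatorname{sgn}(F_\nu)=-\operatorname{sgn}(F_{-\nu})$ while keeping $|F_\nu|,|F_{-\nu}|\gg\sqrt{\log n}\,\sigma_\nu$, on an event of probability $1-O(n^{-\varepsilon})-O(1/\sqrt m)$ (the $1/\sqrt m$ coming from the Berry--Esseen remainder in \cref{lem:two_concentrations_of_Dj} and from $|e_\nu|$ being too small).

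Finally, on this event fix a class $\{+\nu,-\nu\}$ with clean label $y_\nu$. Since $y_\nu F_\nu$ and $y_\nu F_{-\nu}$ have opposite signs with $|y_\nu F_{\pm\nu}|/\sigma_{\pm\nu}\ge\sqrt{8\varepsilon\log n}$, the identity $\bar\Phi(t)+\bar\Phi(-t)=1$ together with $\bar\Phi(\tfrac12\sqrt{8\varepsilon\log n})\le n^{-\varepsilon}$ gives
\[
\mathbb P_{x\sim N(+\nu,I_p)}\bigl(y_\nu\ne\operatorname{sgn}(f(x;W^{(1)}))\bigr)+\mathbb P_{x\sim N(-\nu,I_p)}\bigl(y_\nu\ne\operatorname{sgn}(f(x;W^{(1)}))\bigr)\in[\,1-n^{-\varepsilon},\,1+n^{-\varepsilon}\,].
\]
Averaging over the two classes by the law of total probability yields $\tfrac12(1-n^{-\varepsilon})\le\mathbb P_{(x,y)\sim P_{\text{clean}}}(y\ne\operatorname{sgn}(f(x;W^{(1)})))\le\tfrac12(1+n^{-\varepsilon})$. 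The hard part is the quantitative bookkeeping in the middle paragraph: one must (i) make precise the per-neuron ``intrinsic beats Gaussian'' dichotomy for all but a negligible fraction of neurons \emph{and} of test points (this is where \ref{assump1}, \ref{assump2} and \cref{cor:key_properties} enter), and (ii) show that the residual $(\text{error})$ --- collecting the $W_{\mT}^{(1)}$ discrepancy, the $\langle x_i-\bar x_i,\nu\rangle$ corrections, and the contribution of the $O(n^{-1/2})$-fraction of neurons with $D_{\nu,j}^{(0)}=0$ --- is, with high probability over $\xi$, a vanishing fraction of $|F_\nu|$ and so cannot change $\operatorname{sgn}(f(x;W^{(1)}))$ on the cluster; both reductions ultimately rest on the signal-to-noise margin built into \ref{assump1}.
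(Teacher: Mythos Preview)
Your proposal is correct and shares the paper's overall architecture---linearize via $W_{\mT}^{(1)}$, reduce $f(x;W^{(1)})$ on cluster $\nu$ to the intrinsic scalar $F_\nu=\tfrac{\alpha\|\mu\|^2}{2n}\sum_j a_j\phi(a_j y_\nu D_{\nu,j}^{(0)})$, then use $F_{-\nu}\approx-F_\nu$ to see that exactly one of the two clusters per class is misclassified---but you take a harder route in two places.

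First, to pass from $f(x;W^{(1)})$ to $F_\nu$ you run a per-neuron ``intrinsic beats Gaussian'' dichotomy, keep $\Xi_\nu(\xi)$ as an explicit Gaussian, and bound its tail. The paper sidesteps this entirely: it conditions on a high-probability test event $\cF_{\te}$ (same shape as \ref{E_main_1}--\ref{E_main_2}, so $|\langle x,x_i\rangle-\langle\nu,\bar x_i\rangle|\le C_n\sqrt p$) and then uses only the $1$-Lipschitzness of $\phi$ to get the deterministic bound $\bigl|f(x;W_{\mT}^{(1)}-W^{(0)})-F_\nu\bigr|\le\alpha C_n\sqrt p$ uniformly on $\cF_{\te}$. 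No per-neuron analysis, no $D_{\nu,j}^{(0)}=0$ exception set, no $\Xi_\nu$; the $n^{-\varepsilon}$ in the final risk bound comes solely from the mass of $\cF_{\te}^c$, not from a Gaussian tail. This dissolves both of your ``hard parts'' (i) and (ii).

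Second, you establish $F_{-\nu}\approx-F_\nu$ by separately controlling $F_\nu+F_{-\nu}$ (via the second bound in \cref{lem:two_concentrations_of_Dj}) and $F_\nu-F_{-\nu}$ (via Chebyshev on $\sum_j D_{\nu,j}^{(0)}$). The paper uses the \emph{first} bound of \cref{lem:two_concentrations_of_Dj} directly, obtaining $F_\nu\approx\tfrac{\alpha\|\mu\|^2}{4n}y_\nu e_\nu$ in one step; then $F_{-\nu}\approx-F_\nu$ is immediate from $e_{-\nu}=-e_\nu$, and the anti-concentration $|e_\nu|\gtrsim n^{1/2-\varepsilon}$ is already packaged into the good-run event through \ref{E_main_4}. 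Your sum/difference decomposition is equivalent but circuitous.
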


\begin{proof}
For any given training data $X \in \cG_{\text{data}}$, denote the expectation of $D_{\nu,j}^{(0)}$ by $e_{\nu}$, i.e.
\begin{equation}
    e_{\nu} := \E[D_{\nu,j}^{(0)}|X] = (c_{\nu} - n_{\nu} - c_{-\nu} + n_{-\nu})/2, \quad \nu \in \{\pm \mu_1, \pm \mu_2\},
\end{equation}
and a set of parameters $\cG_{X}$:
\begin{equation*}
    \begin{split}
        \cG_{X} := \big\{(a,W^{(0)}):& |\sum_{j=1}^m a_j \phi(a_j D_{\nu,j}^{(0)}) - e_{\nu}/2| \leq 3C_n \sqrt{n \varepsilon/m}\log(m), \\
        & \big|\sum_{j=1}^m a_j |a_j D_{\nu,j}^{(0)}|\big| \leq 3C_n \sqrt{n \varepsilon/m}\log(m), a \in \cG_{A}, W^{(0)} \in \cG_{W} \big\}.
    \end{split}
\end{equation*}
Applying the union bound, we have
\begin{equation*}
    \P(\cG_{X}|X \in \cG_{\text{data}}) \geq 1 - \exp(-\Omega(\log^2(m))) - \frac{2C}{\sqrt{m}} - n^{-\varepsilon}
\end{equation*}
by Lemma \ref{lem:two_concentrations_of_Dj} and \ref{lem:inital_weight_matrix}. Further we have
\begin{equation*}
    \begin{split}
        \P((a,W^{(0)}) \in \cG_{X}, X \in \cG_{\text{data}}) &\geq \P(\cG_{X}|X \in \cG_{\text{data}})\P(X \in \cG_{\text{data}})\\ &\geq 1 - \exp(-\log^2(m)/2) - \frac{2C}{\sqrt{m}} - 2n^{-\varepsilon}\\
        & \geq 1 - \frac{3C}{\sqrt{m}} - 2n^{-\varepsilon}.
    \end{split}
\end{equation*}
Define events $\cF_{\te,\nu}$ for test data:
    \begin{equation*}
        \begin{split}
            \cF_{\te, \nu} = \{ x \in \mathbb{R}^p: & |\|x\|^2 - p - \|\mu\|^2| \leq 10\sqrt{p\log(n)};\\
            & |\langle x, x_i\rangle - \langle \nu, \bar x_i\rangle| \leq 10\sqrt{p\log(n)} \text{ for all }  i \in [n]\}, \quad \nu \in \{\pm \mu_1, \pm \mu_2\}.
        \end{split}
    \end{equation*}
Treat $\{x\}\cup\{x_i\}_{i=1}^n$ as a new `training' set with $n+1$ datapoints. Following the proof procedure in Lemma \ref{lem:train_set_good_run}, we can show that $\P_{ x\sim N(\nu, I_p)}( x \in \cF_{\te}|X\in \cG_{\data}) \geq 1 - n^{-\varepsilon}$, where $\cF_{\te} := \cup_{\nu \in \{\pm \mu_1, \pm \mu_2\}} \cF_{\te, \nu}$. And $\cF_{\te}$ is a symmetric set for $x$, i.e., if $x \in \cF_{\te}$, then $-x$ also belongs to $\cF_{\te}$. 
In the remaining proof, by convention all probabilities and expectations are implicitly conditioned on 
fixed $X \in \cG_{\text{data}}$ and \(a, W^{(0)} \in \cG_{X}\).
Therefore, to simplify notation, we write $\P(\, \cdot \,)$ and $\E[\, \cdot \,]$ to denote $\P(\, \cdot \,|a, W^{(0)},\{x_i\})$ and $\E[\, \cdot \,|a, W^{(0)},\{x_i\}]$, respectively.
In other words, the randomness is over the test data \((x,y)\), conditioned on a fixed initialization and training data. We first look at the clusters centered at $\pm\mu_1$, i.e. $ x \sim N(\pm \mu_1, I_p), y = 1$. Then we have
\begin{equation}\label{eq:4.9.1}
        \begin{split}
            \P&_{ x\sim N(\pm\mu_1, I_p)}(y \neq \operatorname{sgn}(f( x,W^{(1)}))) = \P_{ x\sim N(\pm \mu_1, I_p)}( f( x,W^{(1)})\leq 0)\\
            &=\frac{1}{2}\P_{ x\sim N(\mu_1, I_p)}( f( x,W^{(1)})\leq 0) + \frac{1}{2}\P_{ x\sim N(\mu_1, I_p)}( f(- x,W^{(1)})\leq 0).
        \end{split}
    \end{equation}
Note that given $W^{(0)}$ and $X$, we have with probability $1$ that
\begin{equation}\label{eq:4.9.3}
    \begin{split}
        |f(x; W^{(1)}) - f(x; W^{(1)} - W^{(0)})| &= \Big|\sum_{j=1}^m a_j[\phi(\langle w_j^{(1)}, x\rangle) - \phi(\langle w_{j}^{(1)} - w_{j}^{(0)}, x\rangle)]\Big|\\
        &\leq \sum_{j=1}^m |a_j \langle w_{j}^{(0)}, x\rangle| \leq \sqrt{\sum_{j=1}^m a_j^2 \sum_{j=1}^m \| w_{j}^{(0)}\|^2\cdot\| x\|^2} \\
        &= \|W^{(0)}\|_F \cdot\| x\| \leq \omega_{\text{init}}\sqrt{3mp/2} \|x\|,
    \end{split}
\end{equation}
where the first inequality comes from the $1$-Lipschitz continuity of $\phi(\cdot)$; the second inequality uses Cauchy-Schwarz inequality; and the last inequality uses Lemma \ref{lem:inital_weight_matrix}. 
Next, recall that \(W_{\mT}\) is defined as in
\eqref{equation:unhinge-loss-WT}.
By the same argument above, we have
\begin{align}
        &|f(x; W^{(1)}-W^{(0)}) - f(x; W_{\mT}^{(1)}-W^{(0)})| \nonumber\\
        &= \Big|\sum_{j=1}^m a_j[\phi(\langle w_j^{(1)}-w_j^{(0)}, x\rangle) - \phi(\langle w_{j,\mT}^{(1)}-w_j^{(0)}, x\rangle)]\Big| \nonumber\\
        &\leq \sum_{j=1}^m |a_j \langle w_j^{(1)}-w_{j,\mT}^{(1)}, x\rangle| \leq \sqrt{\sum_{j=1}^m a_j^2 \sum_{j=1}^m \| w_j^{(1)} - w_{j,\mT}^{(1)}\|^2\cdot\| x\|^2} = \|W^{(1)} - W_{\mT}^{(1)}\|_F \cdot\| x\| \nonumber\\
        &\leq \alpha\omega_{\text{init}}p\sqrt{3mp/n} \|x\| \leq \omega_{\text{init}}\sqrt{3mp/n} \|x\|,
        \label{eq:4.9.4}
\end{align}
where the first inequality comes from the $1$-Lipschitz continuity of $\phi(\cdot)$; the second inequality uses Cauchy-Schwarz inequality; the third inequality uses Lemma \ref{lem:W1_W1T_diff}; and the last inequality uses Assumption \ref{assump4}. Using \eqref{eq:4.9.3} and \eqref{eq:4.9.4}, we have by the triangle inequality that 
\begin{equation}\label{ineq:4.10part1}
    |f(x; W^{(1)}) - f(x; W_{\mT}^{(1)}-W^{(0)})| \leq 2\omega_{\text{init}}\sqrt{mp} \|x\| =: \epsilon_x,
    \quad 
    \mbox{
that for any $x \in \mathbb R^p$.
    }
\end{equation}
Recall that
\begin{equation*}
    \langle  w_{j,\mT}^{(1)}- w_j^{(0)}, x \rangle = \frac{\alpha a_j}{2n}\sum_{i=1}^n \phi^{\prime}(\langle  w_j^{(0)},  x_i \rangle) \langle y_i x_i,  x\rangle.
\end{equation*}
Then under a good run, for $x \in \cF_{\te}$, we have that with probability $1$,
\begin{equation*}
    \Big|\langle  w_{j,\mT}^{(1)}- w_j^{(0)}, x \rangle - \frac{\alpha a_j}{2n} D_{+\mu_1,j}^{(0)}\|\mu\|^2 \Big| \leq \frac{\alpha}{\sqrt{m}}C_{n}\sqrt{p},
\end{equation*}
where $C_n = 10\sqrt{\log(n)}$ and the inequality uses the definition of $\cF_{\te}$. It yields that
\begin{equation}\label{ineq:4.10part2}
    \Big| f(x; W_{\mT}^{(1)} - W^{(0)}) - \sum_{j=1}^m \frac{\alpha a_j}{2n}\phi(a_j D_{+\mu_1,j}^{(0)})\|\mu\|^2 \Big| \leq \alpha C_{n}\sqrt{p}.
\end{equation}
According to the definition of $\cG_X$, we have
\begin{equation}\label{ineq:4.10part3}
    \Big|\sum_{j=1}^m \frac{\alpha a_j}{2n}\phi(a_j D_{+\mu_1,j}^{(0)})\|\mu\|^2 - \frac{\alpha \|\mu\|^2}{4n}e_{+\mu_1}\Big| \leq \frac{3 \alpha C_n \sqrt{\varepsilon}\log(m)}{2\sqrt{mn}}\|\mu\|^2.
\end{equation}
Combining \eqref{ineq:4.10part1}-\eqref{ineq:4.10part3}, we have
\begin{equation}\label{ineq:4.10part4}
    \Big| f(x; W^{(1)}) - \frac{\alpha \|\mu\|^2}{4n}e_{+\mu_1} \Big| \leq \epsilon_x + \alpha C_{n}\sqrt{p} + \frac{3 \alpha C_n \sqrt{\varepsilon}\log(m)}{2\sqrt{mn}}\|\mu\|^2.
\end{equation}
The above inequality immediately implies that
\begin{equation}\label{ineq:4.10part5}
    \P( f( x;W^{(1)})\leq 0 | \cF_{\te}) \geq \P(  \frac{\alpha \|\mu\|^2}{2n}e_{+\mu_1}\leq - \epsilon_x - \alpha C_{n}\sqrt{p} - \frac{3 \alpha C_n \sqrt{\varepsilon}\log(m)}{2\sqrt{mn}}\|\mu\|^2 | \cF_{\te}).
\end{equation}
Similar to \eqref{ineq:4.10part4}, for $-x \sim N(-\mu_1, I_p)$, we have
\begin{equation*}
    \Big| f(-x; W^{(1)}) - \frac{\alpha \|\mu\|^2}{2n}e_{-\mu_1} \Big| \leq \epsilon_x + \alpha C_{n}\sqrt{p} + \frac{3 \alpha C_n \sqrt{\varepsilon}\log(m)}{2\sqrt{mn}}\|\mu\|^2.
\end{equation*}
Note that by definition, $e_{-\mu_1} = -e_{+\mu_1}$, the above inequality immediately implies that
\begin{equation}\label{ineq:4.10part6}
    \P( f( -x;W^{(1)})\leq 0 | \cF_{\te}) \geq \P(  \frac{\alpha \|\mu\|^2}{2n}e_{+\mu_1} \geq \epsilon_x + \alpha C_{n}\sqrt{p} + \frac{3 \alpha C_n \sqrt{\varepsilon}\log(m)}{2\sqrt{mn}}\|\mu\|^2 | \cF_{\te}).
\end{equation}
According to the definition of $ \cG_{\te}$, we have $\epsilon_x \leq 4 \omega_{\text{init}}\sqrt{m}p^{3/2}$. According to the definition of $ \cG_{\data}$, we have 
\begin{equation*}
    \begin{split}
        |c_{\nu}-n_{\nu}-c_{-\nu}+n_{-\nu}| &\geq |c_{\nu}-c_{-\nu}| - |n_{\nu}-n_{-\nu}| \geq |c_{\nu}+n_{\nu}-c_{-\nu}-n_{-\nu}| - 2|n_{\nu}-n_{-\nu}|\\
        &\geq (1 - 2\eta)n^{1/2-\varepsilon} \geq n^{1/2-\varepsilon}/2.
    \end{split}
\end{equation*}
Thus we have $|e_{+\mu_1}| \geq n^{1/2 - \varepsilon}/4$. It yields that
\begin{equation}\label{ineq:4.10part7}
    \begin{split}
        &\frac{\alpha \|\mu\|^2}{2n}|e_{+\mu_1}| - \epsilon_x - \alpha C_{n}\sqrt{p} - \frac{3 \alpha C_n \sqrt{\varepsilon}\log(m)}{2\sqrt{mn}}\|\mu\|^2 \\
        &\geq \frac{\alpha \|\mu\|^2}{\sqrt{n}}\Big(\frac{1}{8 n^{\varepsilon}} - 4 \sqrt{mn}p^{3/2}\frac{\omega_{\text{init}}}{\alpha \|\mu\|^2} - C_n\sqrt{\frac{np}{\|\mu\|^4}} -\frac{3C_n\sqrt{\varepsilon}\log(m)}{2\sqrt{m}}\Big) \\
        &\geq \frac{\alpha \|\mu\|^2}{\sqrt{n}}\Big(\frac{1}{8 n^{\varepsilon}} - \frac{2}{m \sqrt{n}} - \frac{C_n}{3Cn^{0.01}} -\frac{3C_n}{2\sqrt{C}n^{0.01}}\Big) > 0, 
    \end{split}
\end{equation}
where the first inequality uses $|e_{+\mu_1}| \geq n^{1/2 - \varepsilon}/4$ and $\epsilon_x \leq 4 \omega_{\text{init}}\sqrt{m}p^{3/2}$; the second inequality uses Assumption \ref{assump5}, \ref{assump1} and \ref{assump6}; and the last inequality uses $n$ is large enough.
Combining \eqref{ineq:4.10part5}-\eqref{ineq:4.10part7}, we have
\begin{equation}\label{ineq:4.10part8}
    \begin{split}
     &\P( f( x;W^{(1)})\leq 0 | \cF_{\te}) + \P( f( -x;W^{(1)})\leq 0 | \cF_{\te})\\
        \geq&\P(  \frac{\alpha \|\mu\|^2}{2n}|e_{+\mu_1}| \geq \epsilon_x + \alpha C_{n}\sqrt{p} + \frac{3 \alpha C_n \sqrt{\varepsilon}\log(m)}{2\sqrt{mn}}\|\mu\|^2 | \cF_{\te}) = 1,
    \end{split}
\end{equation}
where the inequality uses $\epsilon_x \geq 0$.
Following a similar procedure, for the other side, we have
\begin{equation}\label{ineq:4.10part9}
    \begin{split}
     &\P( f( x;W^{(1)})\leq 0 | \cF_{\te}) + \P( f( -x;W^{(1)})\leq 0 | \cF_{\te})\\
        \leq&\P(  \frac{\alpha \|\mu\|^2}{2n}|e_{+\mu_1}| \geq -\epsilon_x - \alpha C_{n}\sqrt{p} - \frac{3 \alpha C_n \sqrt{\varepsilon}\log(m)}{2\sqrt{mn}}\|\mu\|^2 | \cF_{\te}) = 1.
    \end{split}
\end{equation}
Combining \eqref{ineq:4.10part8} and \eqref{ineq:4.10part9}, we have
\begin{equation*}
    \P( f( x;W^{(1)})\leq 0 | \cF_{\te}) + \P( f( -x;W^{(1)})\leq 0 | \cF_{\te}) = 1.
\end{equation*}
Following the same procedure, we have that for any $\nu \in \{\pm \mu_1, \pm \mu_2\}$,
\begin{equation*}
    \P_{x \sim N(\nu, I_p)}( yf( x;W^{(1)})\leq 0 | \cF_{\te}) + \P_{x \sim N(\nu, I_p)}( yf( -x;W^{(1)})\leq 0 | \cF_{\te}) = 1.
\end{equation*}
Then for $(x,y) \sim P_{\text{clean}}$, we have
\begin{equation*}
    \P_{(x,y) \sim P_{\text{clean}}}( yf( x;W^{(1)})\leq 0) \geq \P( yf( x;W^{(1)})\leq 0 | \cF_{\te}) \P(\cF_{\te}) \geq \frac{1}{2}(1-n^{-\varepsilon});
\end{equation*}
\begin{equation*}
    \P_{(x,y) \sim P_{\text{clean}}}( yf( x;W^{(1)})\leq 0) \leq \P( yf( x;W^{(1)})\leq 0 | \cF_{\te}) \P(\cF_{\te}) + \P(\cF_{\te}^c) \leq \frac{1}{2}(1+n^{-\varepsilon}).
\end{equation*}
\end{proof}

\subsection{Probability Lemmas}

\begin{lemma}\label{lem:for_berry_esseen}
    Suppose we have a random variable $g$ that has finite $L_3$ norm and a Rademacher variable $a$ that is independent with $g$. Then we have
    \begin{equation}\label{ineq:var_l_bound}
        \max\{\frac{1}{2}\text{Var}(g),\frac{1}{4}(\E[g])^2\} \leq \text{Var}(a\phi(ag)) \leq \max\{\text{Var}(g),\frac{1}{2}(\E[g])^2\};
    \end{equation}
    \begin{equation}\label{ineq:3moment_u_bound}
        \E\big[\big|a\phi(ag)- \E[a\phi(ag))]\big|^3\big] \leq 32 \max \{\E[|g-\E[g]|^3], |\E[g]|^3\}.
    \end{equation}
\end{lemma}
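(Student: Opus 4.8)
The plan is to prove the two-sided bounds in \eqref{ineq:var_l_bound} and the third-moment bound \eqref{ineq:3moment_u_bound} by exploiting the identity $x = \phi(x) - \phi(-x)$ together with $\phi(x)\phi(-x) = 0$, and by conditioning on $g$ so that the only remaining randomness is the sign $a \in \{\pm 1\}$, each with probability $1/2$. Write $Y := a\phi(ag)$. Conditioned on $g$, $Y$ takes the value $\phi(g)$ with probability $1/2$ and $-\phi(-g)$ with probability $1/2$, so $\E[Y \mid g] = \tfrac12(\phi(g) - \phi(-g)) = \tfrac12 g$, which recovers $\E[Y] = \tfrac12 \E[g]$ (the content of Lemma~\ref{lem:relu-lln}). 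Similarly $\E[Y^2 \mid g] = \tfrac12(\phi(g)^2 + \phi(-g)^2) = \tfrac12 g^2$, using $\phi(g)^2 + \phi(-g)^2 = g^2$ since exactly one of $\phi(g),\phi(-g)$ is zero. Hence $\E[Y^2] = \tfrac12 \E[g^2]$ and
\[
\mathrm{Var}(Y) = \tfrac12 \E[g^2] - \tfrac14 (\E[g])^2 = \tfrac12 \mathrm{Var}(g) + \tfrac14 (\E[g])^2.
\]
The bracketed quantity $\tfrac12\mathrm{Var}(g) + \tfrac14(\E[g])^2$ lies between $\max\{\tfrac12\mathrm{Var}(g), \tfrac14(\E[g])^2\}$ and $\mathrm{Var}(g) + \tfrac12(\E[g])^2 \le \max\{\mathrm{Var}(g), \tfrac12(\E[g])^2\}\cdot 2$; to get the clean stated form I would note $a + b \le 2\max\{a,b\}$ is too lossy, so instead I would use $\tfrac12\mathrm{Var}(g) + \tfrac14(\E[g])^2 \le \max\{\mathrm{Var}(g), \tfrac12(\E[g])^2\}$ directly by checking the two cases ($\mathrm{Var}(g)$ vs.\ $\tfrac12(\E[g])^2$ dominant); in each case the sum of the two halves is at most the stated max. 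This gives \eqref{ineq:var_l_bound}.

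For \eqref{ineq:3moment_u_bound}, set $\mu_g := \E[g]$ and $m_Y := \E[Y] = \tfrac12\mu_g$. I would bound $\E[|Y - m_Y|^3]$ by first applying the triangle inequality $|Y - m_Y| \le |Y| + |m_Y|$ and $(x+y)^3 \le 4(x^3 + y^3)$ for $x,y \ge 0$, giving $\E[|Y-m_Y|^3] \le 4\E[|Y|^3] + 4|m_Y|^3$. Conditioning on $g$, $\E[|Y|^3 \mid g] = \tfrac12(\phi(g)^3 + \phi(-g)^3) = \tfrac12 |g|^3$ (again since one term vanishes), so $\E[|Y|^3] = \tfrac12 \E[|g|^3]$ and $|m_Y|^3 = \tfrac18 |\mu_g|^3$. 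Then I would relate $\E[|g|^3]$ to $\E[|g - \mu_g|^3]$ via $|g|^3 \le (|g - \mu_g| + |\mu_g|)^3 \le 4|g-\mu_g|^3 + 4|\mu_g|^3$, yielding
\[
\E[|Y - m_Y|^3] \le 2\E[|g|^3] + \tfrac12|\mu_g|^3 \le 8\E[|g-\mu_g|^3] + 8|\mu_g|^3 + \tfrac12|\mu_g|^3 \le 32\max\{\E[|g-\mu_g|^3], |\mu_g|^3\},
\]
after absorbing constants using $a + b \le 2\max\{a,b\}$. Tracking the constants carefully (the $4$'s from the cube expansions and the $\tfrac12$'s from the conditioning) lands comfortably within the stated factor $32$.

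I do not anticipate a serious obstacle here: the lemma is elementary and every step is a short conditioning-and-triangle-inequality computation. The only mild care needed is the bookkeeping of the numerical constants so that the final bounds match the exact forms $\max\{\mathrm{Var}(g), \tfrac12(\E[g])^2\}$ and $32\max\{\cdot,\cdot\}$ stated in the lemma; I would verify the two dominant-term cases separately for the variance bound rather than crudely using $a+b\le 2\max\{a,b\}$, which would otherwise lose a factor. The key structural fact being used throughout — $\phi(g)$ and $\phi(-g)$ have disjoint supports, so $\phi(g)^k + \phi(-g)^k = |g|^k \cdot \mathbb{I}(\text{sign picks the right branch})$, more precisely $\phi(g)^k + \phi(-g)^k = |g|^k$ for $k\ge 1$ when exactly one is nonzero, and $= 0$ only when $g = 0$ — is exactly what makes all the conditional moments collapse to $\tfrac12$ times the corresponding moment of $g$.
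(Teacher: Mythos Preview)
Your proposal is correct and follows essentially the same route as the paper. Both arguments compute $\mathrm{Var}(Y)=\tfrac12\mathrm{Var}(g)+\tfrac14(\E[g])^2$ exactly via $\phi(x)-\phi(-x)=x$ and $\phi(x)^2+\phi(-x)^2=x^2$, and both handle the third moment by combining $\E[|Y|^3]=\tfrac12\E[|g|^3]$ (from $\phi(x)^3+\phi(-x)^3=|x|^3$) with two applications of the triangle inequality and $(x+y)^3\le 4(x^3+y^3)$; the only cosmetic difference is that the paper first bounds $\|Y-\E Y\|_3\le 2\|Y\|_3$ and then $\|g\|_3\le\|g-\E g\|_3+|\E g|$, whereas you expand $|Y-m_Y|\le|Y|+|m_Y|$ first, leading to the slightly sharper constant $17$ rather than $32$. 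Your case analysis for the variance upper bound is fine but unnecessary: since $\tfrac12\mathrm{Var}(g)+\tfrac14(\E[g])^2=\tfrac12\big(\mathrm{Var}(g)+\tfrac12(\E[g])^2\big)$, the bound $\tfrac{A+B}{2}\le\max\{A,B\}$ already gives the stated max directly.
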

\begin{proof}
    The expectation of the random variable $a\phi(ag)$ is 
    \begin{equation}\label{eq:var_1}
        \E[a\phi(ag)] = \frac{1}{2}\E[\phi(g)-\phi(-g)] = \frac{1}{2}\E[g],
    \end{equation}
    where the first equation uses the law of expectation, and the second equation uses $\phi(x) - \phi(-x) = x.$ The second moment of $a\phi(ag)$ is 
    \begin{equation}\label{eq:var_2}
        \E[(a\phi(ag))^2] = \E[\phi(ag)^2] = \frac{1}{2}\E[\phi(g)^2 + \phi(-g)^2] = \frac{1}{2}\E[g^2],
    \end{equation}
    where the last equation uses $\phi(x)^2+\phi(-x)^2 = x^2.$ Combining (\ref{eq:var_1}) and (\ref{eq:var_2}), we have
    \begin{equation*}
        \text{Var}(a\phi(ag)) = \frac{1}{2}\E[g^2] - \frac{1}{4}(\E[g])^2 = \frac{1}{2}\text{Var}(g) + \frac{1}{4}(\E[g])^2,
    \end{equation*}
    which implies (\ref{ineq:var_l_bound}). Moreover, for a random variable $X$ that has finite $L_3$ norm, we have
    \begin{equation*}
        \|X-\E[X]\|_3 \leq \|X\|_3 + \|\E[X]\|_3 \leq \|X\|_3 + \E[|X|] \leq 2\|X\|_3,
    \end{equation*}
    where the second inequality is due to $\|\E[X]\|_3 = |\E[X]|$ and the last inequality is due to $\|X\|_1 \leq \|X\|_3$. Thus we have
    \begin{equation*}
        \E\big[\big|a\phi(ag) - \frac{1}{2}\E[g] \big|^3\big] \leq 8\E[|a\phi(ag)|^3] = 4\E[\phi(g)^3 + \phi(-g)^3] = 4\E[|g|^3],
    \end{equation*}
    where the last equation is due to $\phi(x)^3+\phi(-x)^3 = |x|^3.$ Then by $\|g\|_3 \leq \|g-\E[g]\|_3 + |\E[g]|$, we have
    \begin{equation*}
        \E\big[\big|a\phi(ag) - \frac{1}{2}\E[g] \big|^3\big] \leq 4 \big(\|g-\E[g]\|_3 + |\E[g]| \big)^3 \leq 32 \max \{\E[|g-\E[g]|^3], |\E[g]|^3\}.
    \end{equation*}
\end{proof}

\begin{lemma}\label{function_of_joint_dependent_gaussian}
Suppose $Z = [z_1, \cdots, z_n]^{\top} \sim N(0,\Sigma)$, where $\Sigma_{ii} = 1, \text{ and } |\Sigma_{ij}| \leq 1/(Cn^2), 1 \leq i\neq j \leq n.$ And $Z' = [z_1', \cdots, z_n']^{\top} \sim N(0,\mathbb I_n)$. Let $b_i = \mathbb I(z_i>0) \text{ and } b_i' = \mathbb I(z_i'>0), i \in [n]$ be Bernoulli random variables. Let $B = [b_1, \cdots, b_n]^{\top}$ and $B' = [b_1', \cdots, b_n']^{\top}$. Then we have that for any non-negative function $f: \mathbb R^n \rightarrow \mathbb R^+ \cup \{0\}$,
\begin{equation*}
    \frac{1}{2} \E_{B'}[f(b_1',\cdots,b_n')] \leq \E_{B}[f(b_1,\cdots,b_n)] \leq 2 \E_{B'}[f(b_1',\cdots,b_n')].
\end{equation*}
\end{lemma}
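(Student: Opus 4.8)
The plan is to reduce the claim to a pointwise comparison of orthant probabilities for $Z$ against the reference product measure, and then to control those probabilities by a change of measure combined with a truncation to a ball of radius of order $\sqrt n$.

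First, since $B$ and $B'$ are supported on $\{0,1\}^n$ and $f\ge 0$, it suffices to show that $\tfrac12\,2^{-n}\le \P(B=v)\le 2\cdot 2^{-n}$ for every $v\in\{0,1\}^n$: multiplying by $f(v)$ and summing, together with $\P(B'=v)=2^{-n}$, then yields both inequalities of the lemma. Moreover, replacing $z_i$ by $(2v_i-1)z_i$ changes $\Sigma_{ij}$ into $(2v_i-1)(2v_j-1)\Sigma_{ij}$, which keeps the diagonal equal to $1$ and preserves $|\Sigma_{ij}|\le 1/(Cn^2)$; after this reflection the event $\{B=v\}$ becomes the positive orthant $\mathcal{O}:=\{z:z_i>0\ \forall i\}$, so it is enough to bound $\P(Z\in\mathcal{O})$ against $\P(Z'\in\mathcal{O})=2^{-n}$.

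Write $\Sigma=I+E$, so $E$ has zero diagonal and $\|E\|\le\|E\|_F\le\sqrt{n(n-1)}/(Cn^2)\le 1/(Cn)$; hence $\Sigma\succ 0$, $\|\Sigma^{-1}-I\|=\|(I+E)^{-1}E\|\le 2/(Cn)$, and since $\operatorname{tr}E=0$ and $\sum_i\lambda_i(E)^2=\|E\|_F^2\le 1/(C^2n^2)$ one gets $|\log\det\Sigma|\le 1/(C^2n^2)$, i.e.\ $(\det\Sigma)^{-1/2}=1+o(1)$. By Radon--Nikodym,
\[
\P(Z\in\mathcal{O})=\E_{Z'}\!\Big[\tfrac{p_\Sigma(Z')}{p_I(Z')}\,\mathbb{I}(Z'\in\mathcal{O})\Big],\qquad \tfrac{p_\Sigma(x)}{p_I(x)}=(\det\Sigma)^{-1/2}\exp\!\big(-\tfrac12 x^\top(\Sigma^{-1}-I)x\big),
\]
and I would split this expectation according to whether $\|Z'\|^2\le 6n$ or not. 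On $\{\|x\|^2\le 6n\}$ we have $\tfrac12|x^\top(\Sigma^{-1}-I)x|\le \tfrac12\cdot\tfrac{2}{Cn}\cdot 6n=6/C$, so the likelihood ratio lies in $[(1-o(1))e^{-6/C},(1+o(1))e^{6/C}]$, and this part therefore contributes a quantity in $[(1-o(1))e^{-6/C}(2^{-n}-e^{-n}),\,(1+o(1))e^{6/C}2^{-n}]$, using the standard $\chi^2_n$ tail (Laurent--Massart with deviation parameter $n$) to get $\P(\|Z'\|^2>6n)\le e^{-n}$. For the tail part, changing measure back to $Z$ turns it into $\P(Z\in\mathcal{O},\ \|Z\|^2>6n)\le \P(\|Z\|^2>6n)\le \P(\|Z'\|^2>5n)\le e^{-n}$, where the middle step uses $\|Z\|^2\le\|\Sigma\|\,\|Z'\|^2\le(1+1/(Cn))\|Z'\|^2$. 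Since $e^{-n}=o(2^{-n})$, adding the two pieces gives $\tfrac12\,2^{-n}\le\P(Z\in\mathcal{O})\le 2\cdot 2^{-n}$ for $C$ a large enough universal constant and $n$ large (in fact one even gets $\P(Z\in\mathcal{O})=(1+O(1/C))2^{-n}$).

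The main obstacle is that the likelihood ratio $p_\Sigma/p_I$ is \emph{not} bounded by absolute constants: its logarithm is of order $\tfrac1{Cn}\|x\|^2$ and blows up on the tails. The resolution is the choice of truncation radius: a ball of squared radius of order $n$ is simultaneously large enough that the discarded Gaussian mass is $e^{-\Omega(n)}=o(2^{-n})$ — so it cannot affect a comparison with $2^{-n}$ up to multiplicative constants — and small enough that on it the log-likelihood-ratio is only $O(1/C)$, hence negligible once $C$ is a large constant. Quantitatively one only needs the $\chi^2_n$ tail exponent at radius $\Theta(n)$ to exceed $n\log 2$, which holds comfortably. (An alternative route is Gaussian interpolation along $\Sigma_t=I+tE$ together with Price's differentiation identity and an induction on the dimension, but tracking the off-diagonal bound for the Schur-complement covariances under conditioning is more delicate, so I would prefer the change-of-measure argument above.)
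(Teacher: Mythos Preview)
Your proof is correct and takes a genuinely different route from the paper.

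Both arguments share the same first step: reduce to showing $\tfrac12\,2^{-n}\le \P(B=v)\le 2\cdot 2^{-n}$ for every $v\in\{0,1\}^n$, and by sign-flipping it suffices to handle the positive orthant $\mathcal{O}$. From there the paper proceeds by the chain rule
\[
\P(z_1>0,\ldots,z_n>0)=\prod_{k=1}^n \P(z_k>0\mid z_{k-1}>0,\ldots,z_1>0),
\]
using the explicit conditional law of $z_k\mid Z_{k-1}$ (which is Gaussian with mean $\epsilon_k^\top\Sigma_{k-1}^{-1}Z_{k-1}$ and near-unit variance), truncating $\|\Sigma_{k-1}^{-1/2}Z_{k-1}\|$ to radius $2\sqrt{n}$, and showing each conditional factor lies in $[\tfrac12-\tfrac{2}{Cn},\,\tfrac12+\tfrac{2}{Cn}]$; taking the $n$-fold product gives $(1-\tfrac{4}{Cn})^n\ge\tfrac12$ and $(1+\tfrac{4}{Cn})^n\le 2$. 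Your argument instead attacks $\P(Z\in\mathcal{O})$ globally via the likelihood ratio $p_\Sigma/p_I$: on $\{\|x\|^2\le 6n\}$ you show $\log(p_\Sigma/p_I)=O(1/C)$ uniformly, and the remaining $\chi^2$ tail is $e^{-n}=o(2^{-n})$. Your approach is slicker and immediately gives the sharper quantitative statement $\P(Z\in\mathcal{O})=(1+O(1/C))\,2^{-n}$; the paper's factor-by-factor method is more elementary (no determinant or spectral estimates) but requires tracking the conditional covariance at each step. Both rely on the same truncation idea---confine the Gaussian to a ball of squared radius $\Theta(n)$---just applied at different granularities.
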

\begin{proof}
Note that for any fixed value $(b_1,\cdots,b_n) \in \{0,1\}^n$, $\P_{B'}(b_1',\cdots,b_n') = (1/2)^n$. Then we have
\begin{equation}\label{ineq:lowerbound_4.19.1}
    \begin{split}
        \E_{B}[f(b_1,\cdots,b_n)] &= \sum_{b_1,\cdots,b_n} f(b_1,\cdots,b_n) \P_{B}(b_1,\cdots,b_n)\\
        &\geq (2\gamma_1)^n \sum_{b_1,\cdots,b_n} f(b_1,\cdots,b_n) \P_{B'}(b_1,\cdots,b_n)\\
        &= (2\gamma_1)^n \E_{B'}[f(b_1,\cdots,b_n)],
    \end{split}
\end{equation}
where the inequality comes from Lemma \ref{joint_prob_dependent_gaussian}. On the other side, similarly we have
\begin{equation}\label{ineq:lowerbound_4.19.2}
    \E_{B}[f(b_1,\cdots,b_n)] \leq (2\gamma_2)^n \E_{B'}[f(b_1,\cdots,b_n)].
\end{equation}
By $C > 8$, we have $(2\gamma_1)^n = (1 -4/(Cn))^n \geq 1 - 4/(Cn) \geq 1/2$ and $(2\gamma_2)^n = (1 + 4/(Cn))^n \leq \exp(4/C) \leq \exp(1/2) \leq 2$. Combining these results with \eqref{ineq:lowerbound_4.19.1} and \eqref{ineq:lowerbound_4.19.2}, we have
\begin{equation*}
    \frac{1}{2} \E_{B'}[f(b_1',\cdots,b_n')] \leq \E_{B}[f(b_1,\cdots,b_n)] \leq 2 \E_{B'}[f(b_1',\cdots,b_n')].
\end{equation*}
\end{proof}

\begin{lemma}\label{joint_prob_dependent_gaussian}
Suppose $Z = [z_1, \cdots, z_n]^{\top} \sim N(0,\Sigma)$, where $\Sigma_{ii} = 1, \text{ and } |\Sigma_{ij}| \leq 1/(Cn^2), 1 \leq i\neq j \leq n.$ Then we have that for any subset $\cA \subseteq [n]$,
\begin{equation*}
    \gamma_1^n \leq \E[\prod_{i \in \cA}\mathbb I(z_i>0)\cdot \prod_{i \in [n] \backslash \cA}\mathbb I(z_i<0)] \leq \gamma_2^n
\end{equation*}
for $\gamma_1 = 1/2 - 2/(Cn)$ and $\gamma_2 = 1/2 + 2/(Cn)$.
\end{lemma}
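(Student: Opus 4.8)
The plan is to first reduce to the case $\cA = [n]$ and then prove the two–sided bound by induction on $n$ via a Gaussian interpolation identity.

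\emph{Reduction.} Applying the change of variables $z_i \mapsto s_i z_i$ with $s_i = +1$ for $i \in \cA$ and $s_i = -1$ otherwise turns $\E\bigl[\prod_{i\in\cA}\mathbb{I}(z_i>0)\prod_{i\notin\cA}\mathbb{I}(z_i<0)\bigr]$ into $\psi_n(\Sigma') := \P_{N(0,\Sigma')}(z_i > 0 \text{ for all } i \in [n])$, where $\Sigma'_{ij} = s_i s_j \Sigma_{ij}$ still has unit diagonal and off–diagonal entries of magnitude at most $1/(Cn^2)$. So it suffices to prove $\gamma_1^n \le \psi_n(\Sigma) \le \gamma_2^n$ for every PSD $\Sigma$ with unit diagonal and $\max_{i\ne j}|\Sigma_{ij}| \le 1/(Cn^2)$, which I would do by induction on $n$.

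\emph{Induction.} The base cases $n \in \{0,1\}$ are immediate since $\psi_0 = 1$ and $\psi_1 = 1/2$. For $n \ge 2$ I would interpolate along $\Sigma(t) := (1-t)I_n + t\Sigma$, $t \in [0,1]$: each $\Sigma(t)$ is PSD with unit diagonal and off–diagonal entries $t\Sigma_{ij}$ of magnitude $\le 1/(Cn^2)$. Set $\varphi(t) := \psi_n(\Sigma(t))$, so $\varphi(0) = 2^{-n}$ and $\varphi(1)$ is the quantity of interest. The standard Gaussian interpolation identity (valid for the orthant indicator after mollification, as in the classical proof of Slepian's inequality) gives
\[
\varphi'(t) = \sum_{1 \le i < j \le n} \Sigma_{ij}\, q_{ij}(t)\, \theta_{ij}(t),
\]
where $q_{ij}(t) = \bigl(2\pi\sqrt{1 - t^2\Sigma_{ij}^2}\bigr)^{-1}$ is the density of $(z_i,z_j)$ at the origin under $N(0,\Sigma(t))$, hence $q_{ij}(t) \le 1/\pi$, and $\theta_{ij}(t) = \P_{N(0,\Sigma(t))}\bigl(z_k > 0 \text{ for all } k \ne i,j \,\big|\, z_i = z_j = 0\bigr) \ge 0$. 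The conditional law of $(z_k)_{k \ne i,j}$ given $z_i = z_j = 0$ is a centered $(n-2)$–dimensional Gaussian whose covariance differs from $\Sigma(t)_{k,l}$ only by terms of order $1/(Cn^2)^2$, so after rescaling the coordinates to unit variance (which does not change $\theta_{ij}(t)$) the off–diagonal entries are still $\le 1/(C(n-2)^2)$; the inductive hypothesis then yields $\theta_{ij}(t) \le \bigl(\tfrac12 + \tfrac{2}{C(n-2)}\bigr)^{n-2} \le 2\cdot 2^{-(n-2)}$ for $C$ large. Consequently
\[
|\varphi'(t)| \le \binom{n}{2} \cdot \frac{1}{Cn^2} \cdot \frac{1}{\pi} \cdot 2^{-(n-3)} \le \frac{4}{\pi C}\, 2^{-n},
\]
and integrating over $t\in[0,1]$ gives $2^{-n}\bigl(1 - \tfrac{4}{\pi C}\bigr) \le \psi_n(\Sigma) \le 2^{-n}\bigl(1 + \tfrac{4}{\pi C}\bigr)$. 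Finally, for $C$ a large enough universal constant, Bernoulli's inequality gives $\bigl(1 + \tfrac{4}{Cn}\bigr)^n \ge 1 + \tfrac{4}{C} \ge 1 + \tfrac{4}{\pi C}$, while $\bigl(1 - \tfrac{4}{Cn}\bigr)^n \le e^{-4/C} \le 1 - \tfrac{4}{C} + \tfrac{8}{C^2} \le 1 - \tfrac{4}{\pi C}$, so $(2\gamma_1)^n \le 2^n \psi_n(\Sigma) \le (2\gamma_2)^n$, i.e. $\gamma_1^n \le \psi_n(\Sigma) \le \gamma_2^n$, closing the induction.

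\emph{Main obstacle.} The delicate point is the control of $\theta_{ij}(t)$: bounding it by the trivial $1$ would only give $|\varphi(1) - 2^{-n}| = O(1/C)$, whereas we must land inside the exponentially thin window $[\gamma_1^n,\gamma_2^n]$, whose width is $\Theta(2^{-n}/C)$; it is therefore essential to feed in the $(n-2)$–dimensional estimate and to verify that conditioning two coordinates to zero perturbs the off–diagonal entries only to second order in $1/(Cn^2)$ — which is precisely why the hypothesis $|\Sigma_{ij}| = O(1/n^2)$, rather than $O(1/n)$, is used. Everything else (the mollification justifying the interpolation identity, and the elementary estimates for $C$ large) is routine. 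As an aside, the upper bound alone also follows from Slepian's inequality by comparison with the exchangeable Gaussian of correlation $1/(Cn^2)$, whose orthant probability equals $\E_{g\sim N(0,1)}[\Phi(cg)^n]$ with $c = \sqrt{1/(Cn^2-1)}$, and which can be bounded directly by expanding $\Phi(cg)^n$ in powers of $\Phi(cg) - 1/2$ and using $cn = O(1/\sqrt{C})$.
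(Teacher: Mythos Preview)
Your proof is correct and takes a genuinely different route from the paper's. The paper does not interpolate or induct: after the same sign-flip reduction to $\cA=[n]$, it writes
\[
\P(z_1>0,\ldots,z_n>0)=\prod_{k=1}^n \P(z_k>0 \mid z_1>0,\ldots,z_{k-1}>0)
\]
and bounds each factor separately. Conditioning on $Z_{k-1}=(z_1,\ldots,z_{k-1})$, the law of $z_k$ is $N(\epsilon_k^\top\Sigma_{k-1}^{-1}Z_{k-1},\,1-\epsilon_k^\top\Sigma_{k-1}^{-1}\epsilon_k)$; on the high-probability event $\|\Sigma_{k-1}^{-1/2}Z_{k-1}\|\le 2\sqrt n$ (handled via a chi-square tail) and using Gershgorin to control $\Sigma_{k-1}^{-1}$, the conditional mean has size $O(1/(Cn))$, so each conditional probability lies in $[\gamma_1,\gamma_2]$, and the product gives $[\gamma_1^n,\gamma_2^n]$.

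Your Plackett/Slepian interpolation with induction on the dimension is a more classical Gaussian-comparison argument and reaches the same conclusion. The paper's approach is shorter and more elementary --- no mollification, no induction, and no need to track how conditioning two coordinates to zero perturbs the remaining $(n-2)\times(n-2)$ covariance --- while your approach makes the comparison with the independent case $\Sigma=I_n$ explicit and isolates exactly where the $|\Sigma_{ij}|=O(1/n^2)$ hypothesis (as opposed to $O(1/n)$) is used: it is what keeps the $\binom{n}{2}$ pairwise contributions from swamping the $2^{-n}$ scale. Your Slepian-comparison aside for the upper bound is also a clean alternative the paper does not mention.
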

\begin{proof}
We first prove the result for $\cA = [n]$. Note that
    \begin{equation}\label{4.19.1}
        \P(z_1>0,\cdots,z_n>0) = \P(z_1>0)\prod_{k=2}^n\P(z_k>0|z_{k-1}>0,\cdots,z_1>0).
    \end{equation}
Let $Z_{k-1} = [z_1, \cdots, z_{k-1}]^{\top}$ and denote the covariance matrix of $[z_1,\cdots,z_k]$ as 
\begin{equation*}
    \begin{split}
\left[\begin{array}{cc}
\Sigma_{k-1} & \epsilon_k \\
\epsilon_k^{\top} & 1
\end{array}\right],
    \end{split}
\end{equation*}
where $\Sigma_{k-1} =\text{Cov}(Z_{k-1})$ and $\epsilon_k = \text{Cov}(Z_{k-1}, z_k)$.
Then $|\epsilon_{kj}| \leq 1/(Cn^2)$ for $j\in [k-1]$, and the conditional distribution of $z_k|Z_{k-1}$ is $N(\epsilon_k^{\top} \Sigma_{k-1}^{-1} Z_{k-1}, 1 - \epsilon_k^{\top}\Sigma_{k-1}^{-1}\epsilon_k)$. By Gershgorin circle theorem, we have
\begin{equation*}
    1 - \frac{1}{Cn} \leq \lambda_{\min}(\Sigma_{k-1}) \leq \lambda_{\max}(\Sigma_{k-1}) \leq 1 + \frac{1}{Cn}.
\end{equation*}
Denote $f_{k-1}(\cdot)$ as the density function of $Z_{k-1}$. Then we have
\begin{equation}\label{4.19.2}
    \begin{split}
        \P(z_k>0|z_{k-1}>0,\cdots,z_1>0) &= \int_{0}^{\infty}\cdots \int_{0}^{\infty} f_{k-1}(Z_{k-1})\bar\Phi\Big(\frac{-\epsilon_k^{\top} \Sigma_{k-1}^{-1} Z_{k-1}}{\sqrt{1 - \epsilon_k^{\top}\Sigma_{k-1}^{-1}\epsilon_k}}\Big)d z_1\cdots d z_{k-1}\\
        &\geq \int_{\|\Sigma_{k-1}^{-1/2}Z_{k-1}\|\leq 2\sqrt{n}}f_{k-1}(Z_{k-1})\bar\Phi\Big(\frac{-\epsilon_k \Sigma_{k-1}^{-1} Z_{k-1}}{\sqrt{1 - \epsilon_k^{\top}\Sigma_{k-1}^{-1}\epsilon_k}}\Big)d z_1\cdots d z_{k-1}\\
        &\geq \Big(\frac{1}{2} - \frac{\|\Sigma_{k-1}^{-1/2}\epsilon_k\|\cdot 2\sqrt{n}}{\sqrt{2\pi(1 - \epsilon_k^{\top}\Sigma_{k-1}^{-1}\epsilon_k)}} \Big)\P(\|\Sigma_{k-1}^{-1/2}Z_{k-1}\|\leq 2\sqrt{n})\\
        &\geq \big(\frac{1}{2} - \frac{2\sqrt{2}}{nC\sqrt{\pi}} \big)\P(\|\Sigma_{k-1}^{-1/2}Z_{k-1}\|\leq 2\sqrt{n})\\
        &\geq \big(\frac{1}{2} - \frac{2\sqrt{2}}{nC\sqrt{\pi}} \big) (1 - \exp(-n)) \geq \frac{1}{2} - \frac{2}{Cn}
    \end{split}
\end{equation}
for sufficiently large $n$. Here the second inequality uses $|\Phi(x) - \Phi(0)| \leq \Phi'(0)|x|$ and Cauchy-Schwarz inequality; the third inequality uses $\sigma_{\min}(\Sigma_{k-1}) = \lambda_{\min}(\Sigma_{k-1}) \geq 1/2$ and $\| \Sigma_{k-1}^{-1/2} \epsilon_k\| \leq \sqrt{2}\|\epsilon_k\| \leq \sqrt{2}n^{-3/2}/C$; and the fourth inequality uses the concentration inequality for chi-square random variables in Lemma \ref{basic_concentrations}. Then the result is proved by combining \eqref{4.19.1} and \eqref{4.19.2}. On the other side, we have
\begin{equation*}
    \begin{split}
        \P(z_k>0|z_{k-1}>0,\cdots,z_1>0) &\leq \int_{\|\Sigma_{k-1}^{-1/2}Z_{k-1}\|\leq 2\sqrt{n}}f_{k-1}(Z_{k-1})\bar\Phi\Big(\frac{-\epsilon_k \Sigma_{k-1}^{-1} Z_{k-1}}{\sqrt{1 - \epsilon_k^{\top}\Sigma_{k-1}^{-1}\epsilon_k}}\Big)d z_1\cdots d z_{k-1} \\
        &+ \P(\|\Sigma_{k-1}^{-1/2}Z_{k-1}\|> 2\sqrt{n})\\
        &\leq \Big(\frac{1}{2} + \frac{\|\Sigma_{k-1}^{-1/2}\epsilon_k\|\cdot 2\sqrt{n}}{\sqrt{2\pi(1 - \epsilon_k^{\top}\Sigma_{k-1}^{-1}\epsilon_k)}} \Big) + \P(\|\Sigma_{k-1}^{-1/2}Z_{k-1}\|> 2\sqrt{n})\\
        &\leq \frac{1}{2} + \frac{2\sqrt{2}}{nC\sqrt{\pi}} + \exp(-n) \leq \frac{1}{2} + \frac{2}{Cn}.
    \end{split}
\end{equation*}

Note that our proof does not use any information related to $\cA$, thus we can extend the result for any subset $\cA \subseteq [n]$.
\end{proof}

\begin{lemma}\label{basic_concentrations}
    For $X_k \text{ i.i.d}\sim N(0, \sigma^2), 1\leq k \leq n$, we have
    \begin{equation*}
        \Phi'(t)/t \leq \P(|X_1| \geq t\sigma) \leq \exp(-t^2/2), \quad \forall t \geq 1;
    \end{equation*}
    \begin{equation*}
        \P(\big|\frac{1}{n\sigma^2}\sum_{k=1}^n X_k^2 - 1 \big| \geq t) \leq 2 \exp(-nt^2/8), \quad \forall t \in (0,1).
    \end{equation*}
\end{lemma}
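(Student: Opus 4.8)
The plan is to reduce both displays to one-dimensional facts after rescaling. For the first display, by homogeneity it suffices to take $\sigma=1$, so that $X_1\sim N(0,1)$ and $\P(|X_1|\ge t)=2\bar\Phi(t)$ with $\bar\Phi(t)=\int_t^\infty\Phi'(x)\,dx$. For the upper bound I would use the Mills-ratio estimate $\bar\Phi(t)\le \Phi'(t)/t$, which follows in one line from $\int_t^\infty\Phi'(x)\,dx\le\int_t^\infty (x/t)\Phi'(x)\,dx=\Phi'(t)/t$; this gives $2\bar\Phi(t)\le \tfrac{2}{t\sqrt{2\pi}}e^{-t^2/2}\le e^{-t^2/2}$ because $2/\sqrt{2\pi}<1$ and $t\ge 1$. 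For the lower bound I would use the complementary Mills-ratio estimate $\bar\Phi(t)\ge \tfrac{t}{1+t^2}\Phi'(t)$, verified by noting that $g(t):=\bar\Phi(t)-\tfrac{t}{1+t^2}\Phi'(t)$ satisfies $g(t)\to 0$ as $t\to\infty$ and $g'(t)=-\tfrac{2}{(1+t^2)^2}\Phi'(t)<0$, hence $g\ge 0$ everywhere. Then for $t\ge 1$ we have $1+t^2\le 2t^2$, so $\bar\Phi(t)\ge\tfrac{1}{2t}\Phi'(t)$ and therefore $2\bar\Phi(t)\ge\Phi'(t)/t$, as required.

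For the second display, rescale again so $\sigma=1$; then $S_n:=\sum_{k=1}^n X_k^2\sim\chi^2_n$, with moment generating function $\E[e^{\lambda S_n}]=(1-2\lambda)^{-n/2}$ for $\lambda<1/2$ and $\E[e^{-\lambda S_n}]=(1+2\lambda)^{-n/2}$ for $\lambda>0$. I would handle the two tails separately by Chernoff bounds. For the upper tail, optimizing over $\lambda=t/(2(1+t))$ gives $\P(S_n\ge n(1+t))\le\exp\!\big(\tfrac n2(\log(1+t)-t)\big)$, and combining with the elementary bound $\log(1+t)-t\le -t^2/4$ on $(0,1)$ yields $\P(S_n\ge n(1+t))\le e^{-nt^2/8}$. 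For the lower tail, optimizing over $\lambda=t/(2(1-t))$ gives $\P(S_n\le n(1-t))\le\exp\!\big(\tfrac n2(t+\log(1-t))\big)\le e^{-nt^2/4}\le e^{-nt^2/8}$, using $t+\log(1-t)\le -t^2/2$. A union bound over the two events then produces the claimed factor $2e^{-nt^2/8}$.

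All of the computations are routine and there is no genuine obstacle; the only point needing a little care is the inequality $\log(1+t)-t\le -t^2/4$ for $t\in(0,1)$, since the crude Taylor estimate $\log(1+t)\le t-t^2/2+t^3/3$ only produces exponent $t^2/6$, which is not strong enough for the stated constant. I would instead prove it by observing that $\psi(t):=\log(1+t)-t+t^2/4$ has $\psi(0)=0$ and $\psi'(t)=\tfrac{t(t-1)}{2(1+t)}\le 0$ on $(0,1)$, hence $\psi\le 0$ there.
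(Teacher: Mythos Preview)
Your proof is correct and essentially matches the paper's. For the first display both you and the paper derive the same Mills-ratio bounds $\tfrac{t}{1+t^2}\Phi'(t)\le\bar\Phi(t)\le\tfrac{1}{t}\Phi'(t)$; you verify the lower one by differentiating $g(t)=\bar\Phi(t)-\tfrac{t}{1+t^2}\Phi'(t)$, while the paper obtains it via the antiderivative identity $\int_t^\infty \tfrac{1+x^2}{x^2}e^{-x^2/2}\,dx = \tfrac{1}{t}e^{-t^2/2}$, but the content is identical. For the second display the paper simply cites Example~2.11 of Wainwright, so your self-contained Chernoff argument (including the sharp check that $\log(1+t)-t\le -t^2/4$ on $(0,1)$ via $\psi'(t)=\tfrac{t(t-1)}{2(1+t)}\le 0$) is more than sufficient.
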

\begin{proof}
For the first inequality, we note that
\[
\bar \Phi(t) = \int_{t}^{+\infty} \frac{x}{\sqrt{2\pi}x}\exp(-\frac{1}{2}x^2) dx \le \int_{t}^{+\infty} \frac{1}{2\sqrt{2\pi}t}\exp(-\frac{1}{2}x^2) d x^2 = \frac{\Phi'(t)}{t}.
\]
It yields that for any $t \geq 1$,
\begin{equation*}
        \P(|X_1| \geq t\sigma) = 2\bar \Phi(t) \leq 2 \Phi'(t)/t \leq \exp(-t^2/2).
\end{equation*}
On the other side, we have
\begin{equation*}
    \bar \Phi(t) \ge \int_{t}^{+\infty} \frac{\frac{1+x^2}{x^2}}{\sqrt{2\pi}\frac{1+t^2}{t^2}}\exp(-\frac{1}{2}x^2) dx = \frac{1}{\sqrt{2\pi}}\frac{t^2}{1+t^2}\Big(-\frac{\exp(-\frac{x^2}{2})}{x}\Big)\Big|_{x=t}^{+\infty} = \frac{t}{1+t^2}\Phi'(t).
\end{equation*}
When $t \geq 1$, it further yields that $\bar \Phi(t) \ge \Phi'(t)/(2t)$. Thus we have
\begin{equation*}
        \P(|X_1| \geq t\sigma) = 2\bar \Phi(t) \geq \Phi'(t)/t.
\end{equation*}
The second inequality is Example 2.11 in \cite{wainwright_2019}
\end{proof}

\begin{lemma}[Hoeffding's inequality, Equation (2.11) in \cite{wainwright_2019}]
Let $X_k, 1\leq k \leq n$ be a series of independent random variables with $X_k\in [a,b]$. Then
    \begin{equation*}
        \P(\sum_{k=1}^n (X_k-\E[X_k]) \geq t) \leq \exp\Big(-\frac{2t^2}{n(b-a)^2}\Big), \quad \forall t \geq 0.
    \end{equation*}
\end{lemma}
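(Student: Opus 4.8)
The plan is to use the Chernoff (exponential moment) method, which reduces a sum tail bound to a moment generating function estimate for a single summand followed by a one-dimensional optimization. For any $\lambda > 0$, Markov's inequality applied to the nonnegative random variable $\exp\big(\lambda \sum_{k=1}^n (X_k - \E[X_k])\big)$ gives
\[
\P\Big(\sum_{k=1}^n (X_k - \E[X_k]) \geq t\Big) \leq e^{-\lambda t}\,\E\Big[\exp\Big(\lambda\sum_{k=1}^n (X_k - \E[X_k])\Big)\Big] = e^{-\lambda t}\prod_{k=1}^n \E\big[e^{\lambda(X_k - \E[X_k])}\big],
\]
where the last equality uses independence of the $X_k$. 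So it suffices to bound the moment generating function of one centered, $[a,b]$-valued random variable and then optimize over $\lambda$.

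The key step is \textbf{Hoeffding's lemma}: if $Y$ satisfies $\E[Y] = 0$ and $Y \in [c,d]$ almost surely, then $\E[e^{\lambda Y}] \leq \exp\big(\lambda^2(d-c)^2/8\big)$ for all $\lambda \in \RR$; applying it with $Y = X_k - \E[X_k]$, $c = a - \E[X_k]$, $d = b - \E[X_k]$ (so $d - c = b-a$) finishes the argument. I would prove Hoeffding's lemma via the cumulant generating function $\psi(\lambda) := \log \E[e^{\lambda Y}]$: one has $\psi(0) = 0$, $\psi'(0) = \E[Y] = 0$, and $\psi''(\lambda)$ equals the variance of $Y$ under the exponentially tilted law $d\P_\lambda \propto e^{\lambda Y}\,d\P$, which is still supported on $[c,d]$ and hence has variance at most $(d-c)^2/4$ (any random variable valued in an interval of length $\ell$ has variance at most $(\ell/2)^2$, by comparison with the interval midpoint). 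A second-order Taylor expansion of $\psi$ at $0$ then yields $\psi(\lambda) \leq \lambda^2(d-c)^2/8$. A slicker alternative, which avoids differentiating the MGF and is what I would actually write: by convexity of $z \mapsto e^{\lambda z}$ on $[c,d]$, $e^{\lambda y} \leq \tfrac{d-y}{d-c}e^{\lambda c} + \tfrac{y-c}{d-c}e^{\lambda d}$; taking expectations and using $\E[Y] = 0$ gives $\E[e^{\lambda Y}] \leq \tfrac{d}{d-c}e^{\lambda c} - \tfrac{c}{d-c}e^{\lambda d}$, and bounding the logarithm of the right-hand side by its own Taylor expansion yields the $\lambda^2(d-c)^2/8$ bound.

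Combining the two pieces, $\P\big(\sum_k (X_k - \E[X_k]) \geq t\big) \leq \exp\big(-\lambda t + n\lambda^2(b-a)^2/8\big)$ for every $\lambda > 0$. The exponent is quadratic in $\lambda$ and minimized at $\lambda^* = 4t/\big(n(b-a)^2\big) \geq 0$; substituting $\lambda^*$ makes the exponent $-2t^2/\big(n(b-a)^2\big)$, exactly the stated bound, and the case $t = 0$ is trivial.

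The main obstacle is solely the proof of Hoeffding's lemma, and within it the uniform-in-$\lambda$ control of the second derivative of the log-MGF — equivalently, the fact that the variance under any exponential tilt stays bounded by $(b-a)^2/4$. This is standard but slightly delicate; the convexity route sketched above sidesteps the tilted-measure language entirely. Everything else (the Chernoff step, the factorization across independent summands, and the final scalar optimization over $\lambda$) is routine, and in the paper one would simply cite \cite{wainwright_2019} for the whole statement.
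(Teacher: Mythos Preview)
Your proof is correct and is the standard Chernoff-method argument for Hoeffding's inequality. Note, however, that the paper does not give its own proof of this lemma at all: it simply states the inequality and cites Equation (2.11) of \cite{wainwright_2019}. So there is nothing to compare against beyond observing that your argument is exactly the textbook one (and indeed is essentially how Wainwright derives it), and in the paper's context a citation would suffice.
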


\begin{lemma}\label{lem:BE}[Berry-Esseen Theorem, Theorem 3.4.17 in \citet{durrett2019probability}]
    Let $X_1,\cdots, X_n$ are i.i.d. random variables with $\E[X_i] = 0, \text{Var}(X_i) = \sigma^2$, and $\E[|X_i|^3] = \rho < \infty$. If $F_n(x)$ is the distribution of $\sum_{i=1}^n X_i /(\sigma \sqrt{n})$, then
    \[
    |F_n(x) - \Phi(x)| \leq \frac{3\rho}{\sigma^3 \sqrt{n}}.
    \]
\end{lemma}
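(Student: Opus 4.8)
The statement in question, Lemma~\ref{lem:BE}, is the classical Berry--Esseen theorem, and the paper correctly attributes it to Durrett (Theorem~3.4.17 in \citet{durrett2019probability}); so the honest ``proof'' is an invocation of that reference. For completeness, here is the argument one would reproduce. The plan is the standard Fourier-analytic approach via Esseen's smoothing inequality. First, by rescaling $X_i \mapsto X_i/\sigma$ we may assume $\sigma = 1$, so that the target is $\sup_x |F_n(x) - \Phi(x)| \le 3\rho/\sqrt{n}$ with $\rho = \E|X_1|^3 \ge 1$. Write $\delta := \rho/\sqrt{n}$; if $\delta \ge 1/3$ the bound is trivial since any two CDFs differ by at most $1$, so assume $\delta < 1/3$. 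Let $\varphi$ be the characteristic function of $X_1$; then the characteristic function of $Y_n := n^{-1/2}\sum_i X_i$ is $\varphi(t/\sqrt{n})^n$.

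The first step is Esseen's smoothing inequality: for every $T>0$,
\[
\sup_x |F_n(x) - \Phi(x)| \;\le\; \frac{1}{\pi}\int_{-T}^{T}\left|\frac{\varphi(t/\sqrt{n})^n - e^{-t^2/2}}{t}\right|\,dt \;+\; \frac{24\,\sup_x \Phi'(x)}{\pi T},
\]
and $\sup_x\Phi'(x) = 1/\sqrt{2\pi}$. Choosing $T$ of order $\sqrt{n}/\rho$ (e.g.\ $T = 1/(4\delta)$) makes the second term $O(\rho/\sqrt{n})$. The second step is to control the integrand on $|t|\le T$. A Taylor expansion with remainder, using $\E X_1 = 0$, $\E X_1^2 = 1$, $\E|X_1|^3 = \rho$, gives $\varphi(u) = 1 - u^2/2 + r(u)$ with $|r(u)| \le \rho|u|^3/6$, together with the crude bound $|\varphi(u)-1|\le u^2/2$. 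For $|t|\le T$ the argument $u = t/\sqrt{n}$ is small enough that $|\varphi(u)| \le e^{-u^2/3}$; taking $n$-th powers and using $|a^n - b^n| \le n|a-b|\max(|a|,|b|)^{n-1}$ (equivalently, a $\log$-expansion of $\varphi(t/\sqrt n)^n$) yields $|\varphi(t/\sqrt{n})^n - e^{-t^2/2}| \le C\,\rho|t|^3 n^{-1/2} e^{-t^2/3}$ for a universal $C$. Dividing by $|t|$ and integrating over $\mathbb{R}$ (the Gaussian factor ensures convergence) bounds the first term by $C'\rho/\sqrt{n}$. Combining the two steps gives $\sup_x|F_n(x)-\Phi(x)| \le C''\rho/(\sigma^3\sqrt{n})$ after undoing the rescaling.

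The only genuinely delicate point -- and the main obstacle to a self-contained version -- is sharpening the universal constant to exactly $3$. This requires the precise form of the smoothing inequality, a careful split of the $t$-integral into a small-$t$ regime (where the cubic Taylor bound on $\varphi$ is used) and a moderate-$t$ regime $|t|\in[c, T]$ (where one instead uses $|\varphi(u)| \le 1 - c'u^2 \le e^{-c'u^2}$ to bound $\varphi(t/\sqrt n)^n$ directly, avoiding any reliance on $n$ being large), followed by an optimization over $T$. Since the downstream applications in this paper (the anti-concentration estimate for $D_{+\mu_1,j}^{(0)}$ and the concentration bounds of Lemma~\ref{lem:two_concentrations_of_Dj}) only use Berry--Esseen with some fixed constant $C_{\mathrm{BE}}$, I would not re-derive the sharp constant and would simply cite Durrett; the structural argument above is what the citation stands for.
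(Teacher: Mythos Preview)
Your proposal is correct and matches the paper exactly: the paper does not prove Lemma~\ref{lem:BE} at all but simply states it with attribution to \citet{durrett2019probability}, and you correctly identify that the honest ``proof'' is just the citation. The Fourier-analytic sketch you supply is a bonus the paper does not include, and your observation that the downstream uses only need some universal constant $C_{\mathrm{BE}}$ (not the specific value $3$) is accurate.
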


\subsection{Experimental details}\label{appendix:seciton:experiments}
In our experiments, 
dimension \(p = 40000\),
number of train/test samples \(n= 200\)
\(\mu = 2.5\sqrt{p/n}\),
number of neurons \(m= 1000\),
label noise rate \( \eta= 0.05\), and
initial weight scale  
\( \omega_{\text {init }} =10^{-15}\).
For \cref{fig:experiment-histograms}, \ref{fig:decision-boundary}, and~\ref{fig:experiment-synthetic-xor-runs}-left, the step size
\(\alpha = 10^{-12}\).
For \cref{fig:experiment-histograms-flow} and~\ref{fig:experiment-synthetic-xor-runs}-right, 
\(\alpha = 10^{-16}\).
\begin{figure}[h]
    \centering
    \includegraphics[width=1.0\textwidth]{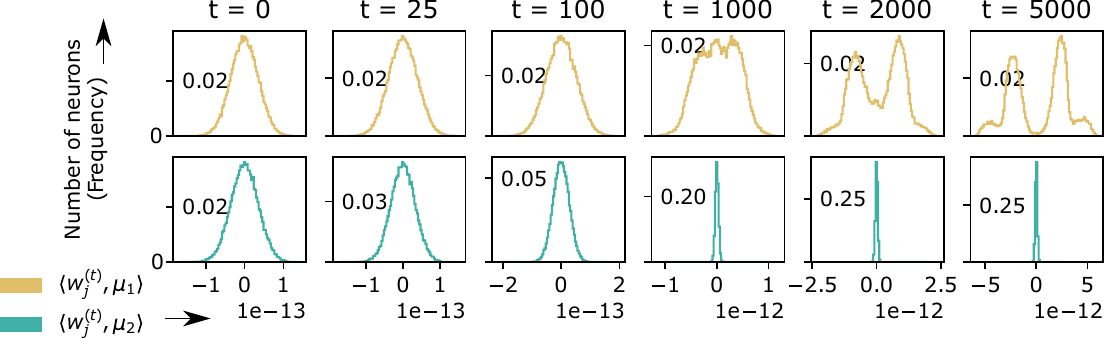}
    \caption{Histograms of inner products between positive neurons 
    and \(\mu\)'s pooled over  100 independent runs
        under the same setting as in Figure~\ref{fig:experiment-synthetic-xor-runs} but with a smaller step size. 
    \emph{Top (resp.\ bottom) row:}\ Inner products between positive neurons and \(\mu_1\) (resp.\ \(\mu_2\)).
    While the projections of positive neurons $w_j^{(t)}$ onto the $\mu_1$ and $\mu_2$ directions have nearly the same distribution when the network cannot generalize, they become much more aligned with $\pm\mu_1$ when the network can generalize. }
    \label{fig:experiment-histograms-flow}
\end{figure}

\end{document}